\let\ceil\undefined   
\let\abs\undefined    
\DeclareMathOperator{\ex}{\mathbb E}
\newtheorem{theorem}{Theorem}[section]
\newtheorem{proposition}[theorem]{Proposition}
\newtheorem{corollary}{Corollary}[theorem]
\newtheorem{lemma}[theorem]{Lemma}
\newtheorem{remark}{Remark}[section]
\DeclareMathOperator*{\argmin}{\arg\!min}
\let\inf\undefined
\let\min\undefined
\let\max\undefined
\DeclareMathOperator*{\inf}{inf\vphantom{p}} 
\DeclareMathOperator*{\min}{min\vphantom{p}} 
\DeclareMathOperator*{\max}{max\vphantom{p}} 
\newcommand{\KL}[0]{\operatorname{KL}}
\newcommand{\KLinf}[0]{\operatorname{KL_{inf}}}
\newcommand{\KLinfL}[0]{\operatorname{KL_{inf}^L}}
\newcommand{\KLinfU}[0]{\operatorname{KL_{inf}^U}}
\newcommand{\CVaR}[0]{\operatorname{CVaR}}
\newcommand{\Supp}[0]{\operatorname{Supp}}
\newcommand{\lrset}[1]{\left\{{#1}\right\}}
\newcommand{\lrp}[1]{\left({#1}\right)}
\newcommand{\ceil}[1]{\left\lceil{#1}\right\rceil}
\newcommand{\ubar}[1]{\underbar{${#1}$}}
\newcommand{\abs}[1]{\left\lvert{#1}\right\rvert}
\newcommand\numberthis{\addtocounter{equation}{1}\tag{\theequation}}
\newcommand{\Exp}[1]{\mathbb{E}\lrp{#1}}
\newcommand{\E}[2]{\mathbb{E}_{#1}\lrp{#2}}
\newcommand{\var}[0]{\operatorname{VaR}}
\newcommand{\cvar}[0]{\operatorname{CVaR}}
\newcommand{\inv}{^{\text{-}1}}
\title{Optimal Best-Arm Identification Methods for Tail-Risk Measures}
\author{%
	Shubhada Agrawal\\
	TIFR, Mumbai, India\\
	\texttt{shubhada.agrawal@tifr.res.in} \\
	\And
	Wouter M. Koolen\\
	CWI, Amsterdam\\
	\texttt{wmkoolen@cwi.nl} \\
	\And   
	Sandeep Juneja \\
	TIFR, Mumbai, India\\
	\texttt{juneja@tifr.res.in} \\
}
\begin{document}

\maketitle

\begin{abstract}
Conditional value-at-risk (CVaR) and value-at-risk (VaR) are popular tail-risk measures in finance and insurance industries as well as in highly reliable, safety-critical uncertain environments where often the underlying probability distributions are heavy-tailed. We use the multi-armed bandit best-arm identification framework and consider the problem of identifying the arm from amongst finitely many that has the smallest CVaR, VaR, or weighted sum of CVaR and mean. The latter captures the risk-return trade-off common in finance. Our main contribution is an optimal $\delta$-correct algorithm that acts on general arms, including heavy-tailed distributions, and matches the lower bound on the expected number of samples needed, asymptotically (as $ \delta$ approaches $0$). The algorithm requires solving a non-convex optimization problem in the space of probability measures, that requires delicate analysis. En-route, we develop new non-asymptotic empirical likelihood based concentration inequalities for tail-risk measures which are tighter than those for popular truncation-based empirical estimators.
\end{abstract}
  

\section{Introduction}\label{Sec:Intro}
Tail risk is a common term used to quantify the losses occuring due to rare events, and has been an important topic in finance, insurance and other safety critical uncertain environments. \cite{Markowitz} first formalized the problem of identifying optimal investment in financial assets as a multi-criteria optimization problem of maximizing the average return, while minimizing the risk (measured via variance). Since then, several other risk measures have been considered. Lately, risk-measures based on tails of the distribution, like the conditional value-at-risk \((\cvar)\) and value-at-risk \((\var)\), have gained popularity in financial regulations and risk management (see, \cite{rockafellar2000optimization, rockafellar2002conditional}), where the underlying probability distributions are mostly heavy tailed (i.e.\ having infinite moment generating function for all $\theta > 0$). Also see \cite{sarykalin2008value,rockafellar2007coherent} for applications of \(\cvar\) and \(\var\) in finance and optimization. As opposed to \(\var\), \(\cvar\) is a coherent risk-measure, and is a preferable metric (see, \cite{artzner1999coherent} for precise definition and properties of coherence). Outside finance, these tail-risk measures are being used to control risk in operations management, for example, in inventory management \cite{arikan2017risk}, in the supply chain management \cite{SAWIK201411}, etc. Recently, coherent risk measures, especially \( \cvar \), have also been used in connection with fairness in machine learning \cite{williamson2019fairness}.

The importance of these risk measures in the sequential decision making set-up has well been acknowledged (see, \cite{sani2013risk,10.1007/978-3-642-40935-6_16}). Typically in the stochastic multi-armed bandit (MAB) literature, the ``goodness'' of an arm is measured using its mean. Tight asymptotic and finite time guarantees exist for different MAB problems with performance measured via mean objective (see, \cite{garivier2016optimal,kalyanakrishnan2012pac, cappe2013kullback,bubeck2013heavytail,SAgrawal2012TS2,Auer2002,Thomp1933}). Also, see \cite{bubeck2012Survey} for a survey of the variants of stochastic MAB  problems. However, maximizing the average reward is not always the primary desirable objective. In clinical trials, for example, the treatment that is good on average might result in adverse outcomes for some patients. In finance, one is typically interested in balancing the mean return with the risk of extreme losses. Risk sensitivity has been well studied in online learning setting, where in each round, the player sees reward from every arm (see, \cite{Even-Dar-risk, Warmuth-risk}). However, there is very limited work which incorporates these risk-measures into the MAB framework.

In this paper, we provide a systematic approach for identifying the distribution (or arm) from a given finite set of distributions (or arms) with minimum tail-risk (as measured by \(\cvar\) or \(\var\), or by a conic combination of mean and CVaR, which we will henceforth refer to as the ``mean-CVaR'' objective). Adopting the best-arm identification (BAI) framework of the stochastic MAB problem, we consider algorithms that generate samples from the given arms, and are \(\delta\)-correct, i.e., identify the correct answer (arm with minimum $\var$, $\cvar$ or mean-$\cvar$) with probability at least \(1-\delta\), for some pre-specified confidence level \(\delta\). While ensuring \(\delta\)-correctness, the aim is to minimize the number of samples needed by the algorithm before its termination. This is the typical fixed-confidence setting of the BAI MAB problem (see, \cite{kaufmann2016complexityBestArm, pmlr-v117-agrawal20a}). Variants of this problem have been widely studied in the literature, where the best-arm is the one with maximum mean (see, \cite{mannor2004LB,even2006Elimination,audibert2010best,bubeck2011pure,gabillon2012best,jamieson2014lilUCB,garivier2016optimal,juneja2019partition}).

A relaxation of the pure exploration setting described above is the \((\epsilon,\delta)\)-PAC setting, where the aim is to output an \(\epsilon\)-optimal arm (for an appropriate notion of \(\epsilon\)-optimality), with probability at least \(1-\delta\), while minimizing the number of samples generated. \cite{yu2013sample,10.1007/978-3-319-46128-1_35,howard2019sequential} consider the pure exploration problem of identifying the arm with minimum risk in the \( (\epsilon,\delta) \)-PAC setting. While \cite{yu2013sample} consider both \( \var \) and \( \cvar\) as measures of risk, \cite{10.1007/978-3-319-46128-1_35,howard2019sequential} focus on the \( \var\)-problem. Recently, \cite{pmlr-v119-l-a-20a} and \cite{NIPS2019_9305} have studied the BAI MAB problem with CVaR and mean-CVaR objectives, respectively, in the closely related ``fixed-budget'' framework, in which the total number of times the algorithm is allowed to generate samples is fixed, and the aim of the algorithm is to minimize the error-probability.

{\noindent \bf Contributions:} We first solve our minimum tail risk identification problems in the simple commonplace setting of arm-distributions belonging to a canonical single parameter exponential family (SPEF) of distributions. Each distribution in this family is uniquely identified with its parameter. We show that both \(\cvar\) and \( \var\) are monotonic functions of this parameter, as is the mean. Hence, finding the best-(CVaR/VaR/mean-CVaR) arm reduces to finding the arm with  the minimum mean. Since risk-sensitive objectives are particularly important when there is a non-trivial probability of occurrence of extreme outcomes, it is important to consider arm-distributions beyond canonical SPEF, for which the above-mentioned equivalence breaks. We solve the VaR problem for arbitrary arm distributions. In contrast the \(\cvar\) problem is unlearnable in full generality: on the class of all arm distributions, any \(\delta\)-correct algorithm requires an infinite number of samples in expectation to identify the best arm amongst any finite collection of arms (Remark \ref{remark:unbounded_sc}). To avoid this, we impose a mild and standard raw $(1+\epsilon)$-moment restriction on the arm-distributions. Let \(\mathcal{P}(\Re) \) denote the collection of all the probability distributions on the reals $\Re$, and let $B$ and $\epsilon$  be positive constants. For risk measure CVaR and for the mean-CVaR objective, we restrict the class of allowed arm distributions to \[\mathcal{L} = \lrset{\eta\in\mathcal{P}(\Re): \E{\eta}{\abs{X}^{1+\epsilon}} \leq B }.\]
{\noindent \textit{Regarding the choice of $\epsilon$ and $B$: }}Firstly, BAI problems are important in simulation where the best model may need to be identified amongst many intricate models in terms of a performance measure such as CVaR or VaR, using minimal computational effort (see, \cite{hong2014monte}). Input distributions in simulation are known and may often involve heavy tails. E.g., by the use of Lyapunov-function-based techniques, bounds on moments of output random variables, $B$, can be determined. (see, e.g., \cite{glynn2008bounding} and references therein).
Secondly, consider rewards (returns) from a number of hedge funds. Each time some amount of money is invested into a fund, a random return may be revealed from that fund but not from others. To assume that these returns come from a class of parametric distributions or have known bounded support is likely a substantially inaccurate simplification. Typically, from historical analysis, it is known that the distribution of securities have a particular tail index, say, $(1+\epsilon)$. For stock returns, extensive research suggests that $(1+\epsilon) \in [2, 5]$. For daily exchange rates and income and wealth distributions we may have $(1+\epsilon) \in (1, 2]$. Extreme value theory, under reasonable dependence structure amongst underlying securities, shows that a portfolio (a weighted sum) will also have the same tail index of $(1+\epsilon)$ (see, \cite{de2007extreme}). So the key approximation needed is in arriving at $B$. It is easy to arrive at distributions $\eta$ and $\kappa$ whose $(1+\epsilon)^{th}$ moments are arbitrarily far while the KL distance between them is arbitrarily close to zero. This makes it difficult to infer $B$ from a given sample of data without further restriction on the two distributions. One may take a pragmatic view and approximate $B$ by estimating the $(1+\epsilon)^{th}$ moment from observed samples and padding it up with a reasonably large factor. A further set of distributional assumptions would be needed to justify the above procedure to arrive at $B$. Again, verifying those assumptions will entail similar problems. In practice, one may live with the above approximation even though in rare settings it may be inaccurate and lead to sub-optimal allocations in our algorithm. One accepts this risk as one often accepts the assumption that the distributions of the random samples from each arm are time stationary or are independent, even though these may only approximately be correct.

\textit{Key contribution:} We simplify the lower bound characterizations on sample complexity of any \(\delta \)-correct algorithm, and use them to develop a \(\delta \)-correct algorithm whose sample complexity exactly matches the lower bound as $\delta \rightarrow 0$, for CVaR, mean-CVaR, and VaR problems. The mean-CVaR problem is conceptually and technically similar to the CVaR problem. Hence, for simplicity of presentation, we primarily focus on the CVaR setting in the main text and give details of the mean-CVaR setting in Appendix \ref{app:Mean-CVaR}. We also spell out the somewhat  analogous analysis for the VaR setting towards the end (Section \ref{SubSec:var}), with details deferred to Appendix \ref{app:var}.

As is well known in the BAI MAB literature, the lower bound problem takes underlying arm distributions as inputs and solves for optimal weights that determine the proportion of samples that should be ideally allocated to each arm. The proposed algorithm  uses a plugin strategy that at each sequential stage, modulo mild forced exploration, uses the generated empirical distributions as a proxy for the true distributions and arrives at weights that guide the sequential sampling strategy.

Given \(\eta_1, \eta_2 \) in \(\mathcal P(\Re)\), let \(\KL(\eta_1,\eta_2) \) denote the KL-divergence between them, i.e., \(\KL(\eta_1,\eta_2) := \int\log\frac{d\eta_1}{d\eta_2}(y) d\eta_1(y) \). Furthermore, let \(c_\pi(\eta)\) and \(x_\pi(\eta)\) denote the \(\cvar\) and \(\var\) at the given confidence level \(\pi\in (0,1)\), for the probability measure \(\eta\) (see Section \ref{Sec:Background} for definitions of these risk measures).

{\noindent \bf CVaR problem:} Given \(\eta \in \mathcal{P}(\Re)\) and \(x\in\Re\),  define functionals \(\KLinfU : \mathcal{P}(\Re)\times\Re \longrightarrow \Re^+ \), and \(\KLinfL: \mathcal{P}(\Re) \times \Re \longrightarrow \Re^+\), where \(\Re^+ \) denotes the non-negative reals, as
\[\KLinfU(\eta,x) := \min\limits_{\kappa\in\mathcal{L}: ~ c_\pi(\kappa) \geq x}~ \KL(\eta,\kappa) \quad \text{ and } \quad \KLinfL(\eta,x) := \min\limits_{\substack{\kappa\in\mathcal{L}:~ c_\pi(\kappa) \leq x}}~ \KL(\eta,\kappa). \numberthis \label{eq:KLinf} \]	
See, \cite{pmlr-v117-agrawal20a,honda2015SemiBounded,BURNETAS1996} for related quantities. These projection functionals appear in the lower bound (Section \ref{Sec:LB}), and are central to our 
plugin algorithm.

Unlike in the mean case, \(\KLinfU\) and \( \KLinfL \) in (\ref{eq:KLinf}) are not symmetric, and need to be studied separately. In particular, \( \KLinfU \) is a convex optimization problem, while \( \KLinfL \) is not. This is because \(c_\pi(.)\) is a concave function, whence, the \(\cvar\) constraint in the \(\KLinfL\) problem in (\ref{eq:KLinf}) renders the feasible region non-convex (see Section \ref{Sec:Background}). \(\cvar\) can be expressed as the optimal value of a minimization problem. This helped in re-expressing  \( \KLinfL \) as minimization over 2 variables, fixing one of which resulted in convex optimization over the other (see Section \ref{Sec:LB}).

For proving \(\delta\)-correctness, we develop a new concentration inequality for weighted sums of these functionals (Proposition \ref{prop:Deviations_of_sums}). Dual representations of these suggest natural candidates for super-martingales, whose mixtures help us in proving the concentration result. Similar inequalities were developed in \cite{kaufmann2018mixture,de2004self,victor1999general} in different settings. See \cite[Chapter 20]{lattimore2020bandit} for an overview of the method of mixtures. We also propose $\KLinfU$- and $\KLinfL$-based tight anytime-valid confidence intervals for CVaR for heavy-tailed distributions, and show that our intervals typically have smaller width compared to those for the popular truncation-based empirical estimators (see Section \ref{Sec:ComparisonWithP}).

Since the distributions are no longer characterized by parameters, we work in the space of probability measures instead of in the Euclidean space. A key and non-trivial requirement for the proof of asymptotic optimality of the algorithm is the joint continuity of \(\KLinfL\) and \(\KLinfU\) in a well-chosen metric, which should generate a topology that is sufficiently fine to ensure this continuity, but coarse to ensure fast convergence of the empirical distributions to the true-arm distributions. We endow $\mathcal{P}(\Re)$ with the topology of weak convergence, or equivalently, with the L\'evy metric (see Section \ref{Sec:Background} for definitions). Another nuance in our analysis is that the empirical distributions may not lie in \(\cal L\). This is handled by projecting them on to \( \cal L \) under a  suitable metric.

Our proposed algorithm is a plugin strategy that involves solving the lower bound problem  using the empirical distributions as a proxy for the actual arm distributions. This can be computationally demanding especially as
the underlying samples in the empirical distribution become large.
To ease the numerical burden we propose modifications that require solving the lower bound only order $\log n$ times till stage $n$ of the algorithm (where $n$ samples are generated).  This modification substantially reduces the computation burden.
We show that it is  optimal upto a constant (Appendix \ref{app:BatchedAlgo}). 

{\noindent \bf VaR problem:}  Our algorithm for \(\cvar\), with \(\KLinfL \) and \(\KLinfU \) replaced by the corresponding functionals with the \(\var \) constraints instead, is asymptotically optimal for this problem in complete generality (Section \ref{SubSec:var}). Here, \( \KLinfU \) and \( \KLinfL \) have closed form representations. However, they are no longer jointly-continuous in the L\'evy metric, which introduces new technical challenges in the analysis of the algorithm.


\section{Background}\label{Sec:Background}
For \(K \geq 2\), let \(\mathcal{M} = \mathcal{L}^K\) denote the collection of all \(K\)-vectors of distributions, \(\nu = \lrp{\nu_1,\dots,\nu_K} \), such that for all \(i\), \(\nu_i \) belongs to \(\mathcal{L}\). Let \(\mu\in \mathcal{M}\) be the given bandit problem, and \(\pi\in(0,1)\) denote the fixed confidence level. For \(\eta \in \mathcal{P}(\Re) \), let \(F_\eta(y) = \eta((-\infty, y])\) denote the CDF function for \(\eta\), and let \(m(\eta)\) denote mean of measure \(\eta\).

{\noindent \bf \textbf{VaR, CVaR:}} With the above notation, \(\var\) at level \(\pi\) for the distribution \(\eta \), denoted as \(x_\pi(\eta) \), equals \( \min\lrset{z\in\Re: F_\eta(z) \geq \pi }. \) Since \(F_\eta(.)\) is a non-decreasing and right-continuous function, the minimum in the expression of \(\var\) is always attained. Define \(\cvar\) at level \(\pi\), \(c_\pi(\eta)\), as

\begin{center}
	\vspace{-1em}
	$\displaystyle
	c_\pi(\eta)
	~=~ \frac{F_\eta(x_\pi(\eta)) - \pi}{1-\pi}x_\pi(\eta) + \frac{1}{1-\pi}\int_{x_\pi(\eta)}^{\infty} y dF_\eta(y).$
	\qquad
	\begin{tikzpicture}[
	baseline=0.3cm,
	font=\scriptsize,
	help lines/.append style={dashed}
	]
	\renewcommand{\l}{-2}
	\renewcommand{\r}{2}
	\renewcommand{\v}{.5}
	\newcommand{\h}{-.25}
	\newcommand{\z}{-1}
	
	\path [draw, fill=green!20!white, thin]
	(\z,\v) --
	(\h,\v) --
	(\h,.7) --
	(1,.7) --
	(1,.8) --
	(1.5,.8) --
	(1.5,1) --
	(\r,1) --
	(\z,1) -- cycle;
	
	\path [draw, fill=blue!20!white, thin]
	(\z,\v) --
	(\h,\v) --
	(\h,.7) --
	(\z,0.7) -- cycle;
	
	\draw[thick] (\l,0) rectangle (\r,1);
	
	\draw (\r,0) node[right] {$0$};
	\draw (\r,1) node[right] {$1$};
	\draw [help lines] (\l,\v) -- (\r,\v);
	\draw (\r,\v) node[right] {$\pi$};
	\draw [help lines] (\h,0) -- (\h,1);
	\draw (\h,0) node[below] {$x_\pi(\eta)$};
	\draw [help lines] (\z,0) -- (\z,1);
	\draw (\z,0) node[below] {$0$};
	
	\draw[red, thick] plot coordinates {
		(\l,0)
		(-1.25,0)
		(-1.25,.2)
		(-.4,.2)
		(-.4,.3)
		(\h,.3)
		(\h,.7)
		(1,.7)
		(1,.8)
		(1.5,.8)
		(1.5,1)
		(\r,1)};
	\node at (-1.5,0) [pin={[pin distance=2ex, pin edge={<-, black, solid, thick}]above:$F_\eta(x)$}] {};
	\end{tikzpicture}
	\vspace{-1em}
\end{center}
If \(\eta \) has a density in a neighbourhood around $x_{\pi}$, then \(c_\pi(\eta) = \E{\eta}{X|X\geq x_\pi(\eta)}, \) i.e., it measures the average loss conditioned on the event that losses are larger than the \(\var\). 

In the figure above, the total shaded area (green and blue regions, together) divided by \(1-\pi\) denotes the \(\cvar\) of the measure whose CDF function is displayed in red. To see this, observe that the first term in the expression above, scaled by \((1-\pi)\), equals the blue region. The integral in the second term when simplified using integration by parts can be seen to  equal the green region. There are alternative formulations of \(\cvar\), which we state without proofs.
\begin{align}
c_\pi(\eta)
&~~=~~ \frac{1}{1-\pi}\int\nolimits_{p\in[\pi,1]} x_p(\eta) dp \quad ~~=~~\min_{x_0 \in \Re}~ \lrset{x_0 + \frac{1}{1-\pi} \E{\eta}{(X-x_0)_+}} \label{eq:cvar_min&eq:cvar_acerbi}\\
&~~=~~ \max_{v \in {M}^+(\Re)}~ \frac{1}{1-\pi}\int_\Re  y dv(y)~~  \text{ s.t. }~~\forall y,~ dv(y) \leq d\eta(y)~ \text{ and } \int_\Re dv(y) = 1-\pi,\label{eq:cvar_max}
\end{align}
where \((x)_+ \) denotes \(\max\lrset{0,x}\) and \(M^{+}(\Re) \) denotes collection of all non-negative measures on \(\Re\). 

From (\ref{eq:cvar_min&eq:cvar_acerbi}), since \(c_\pi(\eta) \) is a minimum of linear functions of \(\eta\), it is a concave function of \( \eta \). Thus, the \(\KLinfU \) problem in (\ref{eq:KLinf}) is a convex optimization problem, while the \(\KLinfL \) problem is not, since the \( c_\pi(.) \) constraint makes the feasible region non-convex. See, \cite{sarykalin2008value} for a comprehensive tutorial on the two tail-risk measures, and their properties.

{\noindent \bf Parametric case: } Using the definition of VaR, it can be argued that \(x_\pi(\eta_\theta)\) is a monotonically increasing function of \(\theta \) when \(\eta_\theta \) belongs to a canonical SPEF with  parameter \(\theta \), as is the mean. The first formulation in \ref{eq:cvar_min&eq:cvar_acerbi} then gives that $c_\pi(\eta_\theta)$ is also monotonically increasing. Thus, the problem of identifying the best-(CVaR/VaR/mean-CVaR) arm is equivalent to identifying that with minimum mean. See Appendix \ref{Sec:SPEFmeanEquivalence} for details. 

{\noindent \bf General case:} For \(\eta\) in class \(\cal L\), the moment-constraint limits the minimum and maximum possible values of \(\var\) and \(\cvar\), as discussed in the following lemma (proof in Appendix \ref{app:boundOnVarAndCVarForL}). 

\begin{lemma}\label{lem:boundOnVarAndCVarForL} 
	For \(\eta\in\cal L \), 
	$c_\pi(\eta) \in D \triangleq [-B^\frac{1}{1+\epsilon}, B^\frac{1}{1+\epsilon}(1-\pi)^\frac{-1}{1+\epsilon}]$ and $x_\pi(\eta) \in C \triangleq [-{B^\frac{1}{1+\epsilon}\pi^\frac{-1}{1+\epsilon}}, {B^\frac{1}{1+\epsilon}\lrp{1-\pi}^\frac{-1}{1+\epsilon}}]$.
\end{lemma}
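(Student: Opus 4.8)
\emph{Proof plan.} The idea is to prove the four one-sided bounds separately, each time using only the raw $(1+\epsilon)$-moment constraint $\E{\eta}{\abs{X}^{1+\epsilon}}\le B$ together with one of the variational descriptions of $\cvar$ recorded above; note first that this constraint already gives $\E{\eta}{\abs X}\le B^{\frac1{1+\epsilon}}<\infty$ (Jensen), so $m(\eta)$, $x_\pi(\eta)$ and $c_\pi(\eta)$ are all finite and the cited representations apply. Write $t=B^{\frac1{1+\epsilon}}(1-\pi)^{-\frac1{1+\epsilon}}$ and $s=B^{\frac1{1+\epsilon}}\pi^{-\frac1{1+\epsilon}}$, both strictly positive, with $t^{1+\epsilon}=B/(1-\pi)$ and $s^{1+\epsilon}=B/\pi$.

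\emph{The $\var$ bounds via Markov.} For the upper bound, apply Markov's inequality to $\abs X^{1+\epsilon}$: $1-F_\eta(t)=\eta(X>t)\le\eta(\abs X\ge t)\le B\,t^{-(1+\epsilon)}=1-\pi$, so $F_\eta(t)\ge\pi$ and hence $x_\pi(\eta)=\min\lrset{z:F_\eta(z)\ge\pi}\le t$. For the lower bound, fix any $z<-s$; since $z<0$ the set $\{X\le z\}$ is contained in $\{\abs X\ge -z\}$ with $-z>s$, so $B\ge\int_{\{y\le z\}}\abs y^{1+\epsilon}\,d\eta(y)\ge(-z)^{1+\epsilon}\eta(X\le z)>s^{1+\epsilon}\eta(X\le z)=(B/\pi)\,\eta(X\le z)$, which forces $F_\eta(z)=\eta(X\le z)<\pi$. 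As this holds for every $z<-s$, the defining set of $x_\pi(\eta)$ lies in $[-s,\infty)$, so $x_\pi(\eta)\ge -s$.

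\emph{The $\cvar$ upper bound via the dual representation.} Evaluating the minimization representation of $\cvar$ at a single $x_0$ only gives the right shape up to a constant, so instead use the dual representation $(\ref{eq:cvar_max})$. Any feasible $v$ there satisfies $v\ll\eta$ with density $f=\mathrm dv/\mathrm d\eta\in[0,1]$ and $\int f\,d\eta=1-\pi$; then $\int y\,dv(y)\le\int\abs y\,f(y)\,d\eta(y)$, and Hölder's inequality with conjugate exponents $1+\epsilon$ and $\tfrac{1+\epsilon}{\epsilon}$, together with $f^{\frac{1+\epsilon}{\epsilon}}\le f$ pointwise (as $0\le f\le1$ and $\tfrac{1+\epsilon}{\epsilon}\ge1$), yields
\[
  \int\abs y\,f(y)\,d\eta(y)\;\le\;\lrp{\E{\eta}{\abs X^{1+\epsilon}}}^{\frac1{1+\epsilon}}\lrp{\int f(y)^{\frac{1+\epsilon}{\epsilon}}\,d\eta(y)}^{\frac{\epsilon}{1+\epsilon}}\;\le\;B^{\frac1{1+\epsilon}}(1-\pi)^{\frac{\epsilon}{1+\epsilon}}.
\]
Taking the supremum over feasible $v$ and dividing by $1-\pi$ gives $c_\pi(\eta)\le B^{\frac1{1+\epsilon}}(1-\pi)^{-\frac1{1+\epsilon}}=t$; since $c_\pi(\eta)\ge x_\pi(\eta)$ (take $x_0=x_\pi(\eta)$ in the minimization representation), this also re-derives the $\var$ upper bound.

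\emph{The $\cvar$ lower bound.} Use $(\ref{eq:cvar_max})$ once more, now as a lower bound, by inserting the single feasible measure $\mathrm dv=(1-\pi)\,\mathrm d\eta$ (legitimate since $1-\pi\le1$ and its total mass is $1-\pi$): this gives $c_\pi(\eta)\ge\frac1{1-\pi}\int y\,(1-\pi)\,d\eta(y)=m(\eta)$, the standard domination of $\cvar$ over the mean. Finally $m(\eta)\ge-\E{\eta}{\abs X}\ge-\lrp{\E{\eta}{\abs X^{1+\epsilon}}}^{\frac1{1+\epsilon}}\ge-B^{\frac1{1+\epsilon}}$ by Jensen's inequality (concavity of $r\mapsto r^{1/(1+\epsilon)}$), which closes the proof. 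The only genuinely delicate point is obtaining the \emph{exact} constant $B^{\frac1{1+\epsilon}}(1-\pi)^{-\frac1{1+\epsilon}}$ in the $\cvar$ upper bound: integrating the quantile bound $x_p(\eta)\le B^{\frac1{1+\epsilon}}(1-p)^{-\frac1{1+\epsilon}}$ against $c_\pi(\eta)=\frac1{1-\pi}\int_\pi^1 x_p(\eta)\,dp$, or using the minimization representation at $x_0=0$ or $x_0=t$, each loses a multiplicative factor ($\tfrac{1+\epsilon}{\epsilon}$ or $2$); what makes the dual argument tight is precisely the inequality $f^{(1+\epsilon)/\epsilon}\le f$ for densities valued in $[0,1]$, letting the mass budget $1-\pi$ be charged against the $L^{(1+\epsilon)/\epsilon}$-norm of the density rather than its $L^1$-norm.
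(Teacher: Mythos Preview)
Your proof is correct. The $\var$ bounds are essentially the paper's argument (Markov's inequality on $\abs{X}^{1+\epsilon}$), but your two $\cvar$ bounds take a genuinely different route.

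For the \emph{upper} $\cvar$ bound, the paper works from the integral definition of $\cvar$ and applies conditional Jensen to $f(x)=\abs{x}^{1+\epsilon}$, obtaining $B\ge(1-\pi)\,\abs{c_\pi(\eta)}^{1+\epsilon}$ directly. You instead go through the dual representation \eqref{eq:cvar_max} and use H\"older together with the pointwise inequality $f^{(1+\epsilon)/\epsilon}\le f$ for densities in $[0,1]$. Both hit the sharp constant; your argument is a nice alternative in that it makes transparent \emph{why} the constant is tight (the density $f=\mathbbm 1_{\{X\ge x_\pi\}}$ of the extremal measure saturates both H\"older and the $f^{(1+\epsilon)/\epsilon}\le f$ step simultaneously), whereas the paper's Jensen argument makes tightness visible via the extremal distribution.

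For the \emph{lower} $\cvar$ bound, the paper proceeds by explicitly solving $\min_{\eta\in\mathcal L}c_\pi(\eta)$: it argues that the minimiser must be a point mass and computes the value $-B^{1/(1+\epsilon)}$. Your argument is considerably more elementary: plugging $dv=(1-\pi)\,d\eta$ into \eqref{eq:cvar_max} gives $c_\pi(\eta)\ge m(\eta)$, and then $m(\eta)\ge -\E{\eta}{\abs X}\ge -B^{1/(1+\epsilon)}$ by Jensen. This avoids the extremal-distribution analysis altogether; the price is that it does not identify the minimiser (which the paper needs elsewhere, e.g.\ Remark~\ref{rem:uniqueness_of_extreme_cvar}), but for the bare bound your route is cleaner.
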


{\noindent \bf Topology of weak convergence and the L\'evy metric:} Let \( \phi \) be a bounded and continuous function on \( \Re \), \(\delta > 0\), and \(x\in\Re\). Consider the topology on \( \mathcal{P}(\Re) \), generated by the base sets of the form 
$ \mathcal{U}(\phi,x,\delta) =  \lrset{\eta\in \mathcal{P}(\Re) : \lvert{\int_{\Re} \phi(y)d\eta(y) - x}\rvert < \delta}. $ Weak convergence of sequence \( \kappa_n \) to \( \kappa \), denoted as \( \kappa_n \xRightarrow{D} \kappa  \), is convergence in this topology (see, \cite[Section D.2]{dembo2010large}). It is equivalent to that in the L\'evy metric on \( \mathcal{P}(\Re) \), (denoted by \(d_L\)), defined next (see, \cite[Theorem 6.8]{billingsley2013convergence}, \cite[Theorem D.8]{dembo2010large}). For \(\eta , \kappa \in \mathcal{P}(\Re) \), $d_L(\eta,\kappa)$ equals $ \inf\lrset{\delta > 0 : F_\eta(x-\delta)-\delta \leq F_\kappa(x) \leq F_\eta(x+\delta) + \delta, ~ \forall x\in\Re }.$
Additionally, the metric space \( \lrp{\mathcal{P}(\Re), d_L} \) is complete and separable. 

\section{Lower bound}\label{Sec:LB}
We consider \(\delta\)-correct algorithms for identifying the arm with minimum \(\cvar\), acting on bandit problems in \(\mathcal{M}\). While ensuring \(\delta\)-correctness, the aim is to minimize the sample complexity, i.e., expected number of samples generated by the algorithm before it terminates. As is well known, the \(\delta\)-correctness property imposes a lower bound on the sample complexity of such algorithms.

Let \(\mu\in \mathcal{M}\) denote the given bandit problem. Henceforth, for ease of notation, we assume without loss of generality that the best-\(\cvar\) arm in \(\mu\) is arm \(1\). Let \(\Sigma_K\) denote the probability simplex in \(\Re^K\), \(\mathcal{A}_j\) denote the collection of all bandit problems in \(\mathcal{M}\) which have arm \(j\) as the best-\(\cvar\) arm, \(\tau_\delta\) be the stopping time for the \(\delta\)-correct algorithm, \(N_a(\tau)\) denote the number of times arm \(a\) has been sampled by the algorithm, and for a set $S$, let $S^o$ denote its interior. 
It is easy to deduce using standard arguments (see, e.g., \cite{lattimore2020bandit}) that for a \(\delta\)-correct algorithm acting on \(\mu \in \mathcal{A}_1\),
\[
  \Exp{\tau_\delta} \geq V(\mu)\inv \log\frac{1}{4 \delta}
  ~~
  \text{where}
  ~~
V(\mu) = \sup\limits_{t\in\Sigma_K}~ \inf\limits_{\nu\in\mathcal{A}^c_1}~ \sum\limits_{a=1}^K~ t_a \KL(\mu_a,\nu_a), \text{ and } \mathcal{A}^c_j =\mathcal{M}\setminus \mathcal{A}_j. \numberthis\label{eq:LB}\]

\begin{lemma}\label{lem:LBSimplification}
	For \(\mu\in\mathcal{A}_1\), the inner minimization problem in \(V(\mu)\) equals $\min\limits_{j\neq 1}~ \inf\limits_{x\leq y}~ \lrset{  t_1\KLinfU(\mu_1,y) + t_j \KLinfL(\mu_j,x)},$ and hence 
	\[ V(\mu) = \sup\limits_{t\in\Sigma_K}~\min\limits_{j\neq 1}~ \inf\limits_{x\leq y}~ \lrset{  t_1\KLinfU(\mu_1,y) + t_j \KLinfL(\mu_j,x)}.\numberthis \label{eq:Vmu}\]
\end{lemma}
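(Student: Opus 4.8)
\emph{Proof plan.} Fix $t\in\Sigma_K$; it suffices to show that the inner problem $\inf_{\nu\in\mathcal{A}_1^c}\sum_{a=1}^K t_a\KL(\mu_a,\nu_a)$ equals $\min_{j\neq 1}\inf_{x\leq y}\lrset{t_1\KLinfU(\mu_1,y)+t_j\KLinfL(\mu_j,x)}$, since taking $\sup_{t\in\Sigma_K}$ then yields (\ref{eq:Vmu}). First I would unpack $\mathcal{A}_1^c$. Arm $1$ fails to be the (unique) best-$\cvar$ arm of a problem $\nu\in\mathcal{M}$ precisely when $c_\pi(\nu_j)\leq c_\pi(\nu_1)$ for some $j\neq 1$ (whether because some arm is strictly better or because there is a tie at the minimum), so one has the set identity $\mathcal{A}_1^c=\bigcup_{j\neq 1}\lrset{\nu\in\mathcal{M}:c_\pi(\nu_j)\leq c_\pi(\nu_1)}$. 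Consequently the infimum over $\mathcal{A}_1^c$ decomposes as $\min_{j\neq 1}$ of the infima over the individual sets $\lrset{\nu:c_\pi(\nu_j)\leq c_\pi(\nu_1)}$.

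Next, fix $j\neq 1$. Only arms $1$ and $j$ are constrained: for $a\notin\{1,j\}$ the choice $\nu_a=\mu_a\in\mathcal{L}$ is feasible and makes the $a$-th term vanish, so the infimum reduces to $\inf\lrset{t_1\KL(\mu_1,\nu_1)+t_j\KL(\mu_j,\nu_j):\nu_1,\nu_j\in\mathcal{L},\ c_\pi(\nu_j)\leq c_\pi(\nu_1)}$. The coupling constraint $c_\pi(\nu_j)\leq c_\pi(\nu_1)$ is equivalent to the existence of reals $x\leq y$ with $c_\pi(\nu_j)\leq x$ and $c_\pi(\nu_1)\geq y$ (take $x=c_\pi(\nu_j)$, $y=c_\pi(\nu_1)$ in one direction; chain the inequalities in the other). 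Pulling this existential quantifier out as an infimum over $x\leq y$ decouples the two measures:
\begin{align*}
\inf_{\substack{\nu_1,\nu_j\in\mathcal{L}\\ c_\pi(\nu_j)\leq c_\pi(\nu_1)}}\big[t_1\KL(\mu_1,\nu_1)+t_j\KL(\mu_j,\nu_j)\big]
=\inf_{x\leq y}\Big[t_1\!\!\inf_{\substack{\nu_1\in\mathcal{L}\\ c_\pi(\nu_1)\geq y}}\!\!\KL(\mu_1,\nu_1)+t_j\!\!\inf_{\substack{\nu_j\in\mathcal{L}\\ c_\pi(\nu_j)\leq x}}\!\!\KL(\mu_j,\nu_j)\Big],
\end{align*}
and the two inner infima are by definition $\KLinfU(\mu_1,y)$ and $\KLinfL(\mu_j,x)$ (with the convention that an infimum over the empty set is $+\infty$, which is harmless since, by Lemma~\ref{lem:boundOnVarAndCVarForL}, such pairs $(x,y)$ are never binding in the outer infimum). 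Substituting and minimizing over $j\neq 1$ gives the claimed expression for the inner problem, hence (\ref{eq:Vmu}).

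The manipulations are elementary; the only points that warrant care are (i) the set identity for $\mathcal{A}_1^c$, which relies on the precise meaning of ``best-$\cvar$ arm'' (the unique minimiser of $\eta\mapsto c_\pi(\eta)$ over the $K$ arms) together with $\mu_a\in\mathcal{L}$ for all $a$, and (ii) the equivalence between the single coupling constraint and the two separated constraints mediated by $x\leq y$; I expect (ii) to be the conceptual crux, though it is short once written out. No continuity or attainment properties of $\KLinfU,\KLinfL$ are needed at this stage. As a sanity check one may also observe that $\KLinfU(\mu_1,\cdot)$ is nondecreasing and $\KLinfL(\mu_j,\cdot)$ nonincreasing, so the outer $\inf_{x\leq y}$ is attained on the diagonal $x=y$; the two-variable form is retained because it is the convenient one for the later optimisation-theoretic analysis of the algorithm.
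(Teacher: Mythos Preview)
Your proof is correct and follows essentially the same route as the paper's: decompose $\mathcal{A}_1^c$ over the index $j$ of a competing arm, set $\nu_a=\mu_a$ for $a\notin\{1,j\}$, and then decouple the remaining two-measure problem via the auxiliary pair $x\le y$ to recover $\KLinfU$ and $\KLinfL$. Your decomposition $\mathcal{A}_1^c=\bigcup_{j\neq 1}\{\nu:c_\pi(\nu_j)\le c_\pi(\nu_1)\}$ is in fact slightly cleaner than the paper's $\min_{j\neq 1}\inf_{\nu\in\mathcal{A}_j}$, since it handles ties at the boundary explicitly rather than implicitly through the infimum.
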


\begin{remark}\label{remark:unbounded_sc}
	\emph{Without any restriction, i.e., if \( \mathcal L = \mathcal P(\Re) \), then for \(\cvar\)-problem, for \(y\in\Re\) and \(\eta\in\mathcal L\), \( \KLinfU(\eta,y) = 0 \). This is essentially because \(\eta\) can be perturbed in \(\KL\) only slightly by shifting an arbitrarily small mass from the lower tail to the extreme right, so that the \(\cvar\) constraint is satisfied. Thus, without any restrictions, \( V(\mu) = 0 \) (see, \cite[Lemma 1, Theorem 3]{pmlr-v117-agrawal20a} for similar results in selecting the arm with the largest mean setting). However, we later solve the \( \var \)-problem without any assumptions on the distributions, i.e., \( \cal L = \mathcal P(\Re)\). The lower bound for the VaR problem is as in (\ref{eq:LB}), with \( \KLinfL \) and \( \KLinfU \) in the representation in (\ref{eq:Vmu}) defined with \(\var\) constraints, instead.}
\end{remark}

A proof of Lemma~\ref{lem:LBSimplification}  can be found in Appendix \ref{app:proof:lem:LBSimplification}. Let \( t^*: \mathcal{M} \rightarrow  2^{\Sigma_K} \). In particular, for \(\nu\in\mathcal{M}\), let \(t^*(\nu)\) denote the set of maximizers in the \( V(\nu) \) optimization problem in (\ref{eq:Vmu}). A key nuance of our algorithm and the related analysis is that the empirical distribution may not belong to the class \(\mathcal{M} \). The algorithm first projects the empirical distribution to class \(\cal L\), then solves for the optimal \(t^* \) in (\ref{eq:Vmu}) for the projected distributions, and samples the arms in proportion to the computed \(t^*\) (Section \ref{Sec:Alg}).  For appropriate choices of the projection maps, the following lemmas guarantee that as the empirical distributions converge to the actual arm-distributions (in the weak topology), the \(t^*\)  
computed by the algorithm converge to the optimal weights corresponding to \(\mu \). 

\begin{lemma}\label{lem:ContKLinf}
	\(\cal L\) is a compact set in the topology of weak convergence and is a uniformly integrable collection of random variables. When restricted to \( \mathcal{L}\times D^o \), \( \KLinfL \) and \( \KLinfU \) are both jointly continuous functions of the arguments. Moreover, for fixed \(x\), \(\KLinfU(\nu,x)  \) is a convex function of $\nu$.
\end{lemma}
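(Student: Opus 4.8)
The plan is to establish the four assertions in sequence, since each builds on the previous. First, compactness of $\mathcal{L}$ in the weak topology: I would invoke Prokhorov's theorem. Tightness of $\mathcal{L}$ follows from the uniform moment bound $\E{\eta}{\abs{X}^{1+\epsilon}}\leq B$ via Markov's inequality, which gives $\eta([-M,M]^c) \leq B/M^{1+\epsilon}$ uniformly over $\eta\in\mathcal{L}$; hence $\mathcal{L}$ is relatively compact. Closedness requires showing that if $\eta_n\in\mathcal{L}$ and $\eta_n\xRightarrow{D}\eta$, then $\eta\in\mathcal{L}$; this is exactly lower semicontinuity of $\eta\mapsto\E{\eta}{\abs{X}^{1+\epsilon}}$ under weak convergence, which holds because $\abs{x}^{1+\epsilon}$ is nonnegative and continuous (apply the portmanteau theorem to the truncations $\min\lrset{\abs{x}^{1+\epsilon},M}$ and let $M\to\infty$). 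Uniform integrability of the identity random variable over $\mathcal{L}$ is immediate from $\abs{x}\cdot\mathbbm{1}\{\abs{x}>M\} \leq \abs{x}^{1+\epsilon}/M^{\epsilon}$, so $\sup_{\eta\in\mathcal{L}}\E{\eta}{\abs{X}\mathbbm{1}\{\abs{X}>M\}}\leq B/M^{\epsilon}\to 0$.

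Next, joint continuity of $\KLinfU$ and $\KLinfL$ on $\mathcal{L}\times D^o$. Fix $(\eta_n,x_n)\to(\eta,x)$ with all points in $\mathcal{L}\times D^o$. I would argue upper and lower semicontinuity separately. For upper semicontinuity of $\KLinfU$, take a near-optimal $\kappa$ for $(\eta,x)$; since $x\in D^o$, there is slack in the constraint $c_\pi(\kappa)\geq x$, and one can perturb $\kappa$ slightly (e.g.\ mix with a point mass far to the right, or shift the relevant quantile upward) to produce $\kappa_n$ feasible for $(\eta_n,x_n)$ with $\KL(\eta_n,\kappa_n)$ close to $\KL(\eta,\kappa)$; here I need continuity-type control of $\KL(\eta_n,\cdot)$ as $\eta_n\to\eta$, which is where the construction must be done carefully (possibly reducing to $\eta_n$ that dominate $\eta$ suitably, or using the dual/variational form of $\KLinf$ from \eqref{eq:cvar_max} and the formulations in \eqref{eq:cvar_min&eq:cvar_acerbi}). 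For lower semicontinuity, take optimizers $\kappa_n$ for $(\eta_n,x_n)$; by the compactness of $\mathcal{L}$ (already proved) extract a weakly convergent subsequence $\kappa_n\xRightarrow{D}\kappa_\infty$, use lower semicontinuity of $(\mu,\nu)\mapsto\KL(\mu,\nu)$ in the weak topology (a standard fact, e.g.\ from the Donsker–Varadhan variational representation), and check that the constraint passes to the limit: $c_\pi$ is continuous enough on $\mathcal{L}$ under weak convergence together with the uniform integrability (the formula $c_\pi(\eta)=\frac1{1-\pi}\int_\pi^1 x_p(\eta)\,dp$ makes $c_\pi$ continuous at $\eta$ whenever $F_\eta$ is continuous and strictly increasing at the relevant quantile; the boundary cases need the interior assumption $x\in D^o$ to avoid degeneracy). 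For $\KLinfL$ one does the same, using the two-variable reformulation mentioned after \eqref{eq:KLinf} (minimizing over the auxiliary variable $x_0$ from \eqref{eq:cvar_min&eq:cvar_acerbi}) to handle the nonconvex CVaR constraint; the inner problem is convex and the outer minimization over $x_0$ is over a compact interval, so semicontinuity is preserved.

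Finally, convexity of $\nu\mapsto\KLinfU(\nu,x)$ for fixed $x$: this is the easiest part. The feasible set $\{\kappa\in\mathcal{L}: c_\pi(\kappa)\geq x\}$ is convex because $c_\pi$ is concave (from \eqref{eq:cvar_min&eq:cvar_acerbi}) and $\mathcal{L}$ is convex (the constraint $\E{\kappa}{\abs{X}^{1+\epsilon}}\leq B$ is linear in $\kappa$); since $(\eta,\kappa)\mapsto\KL(\eta,\kappa)$ is jointly convex, minimizing a jointly convex function over $\kappa$ in a fixed convex set yields a convex function of $\eta$.

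The main obstacle will be the joint continuity claim, specifically the upper-semicontinuity direction: constructing, from a near-optimal $\kappa$ for the limit $(\eta,x)$, a feasible and nearly-as-good competitor for $(\eta_n,x_n)$ requires simultaneously (i) restoring feasibility of the CVaR constraint against the perturbed threshold $x_n$ — manageable since $x\in D^o$ gives room — and (ii) controlling how $\KL(\eta_n,\kappa_n)$ tracks $\KL(\eta,\kappa)$ despite $\eta_n$ only converging weakly, which can blow up if the constructed $\kappa_n$ fails to absolutely-continuously dominate $\eta_n$. I expect this to require a somewhat delicate explicit perturbation (and perhaps a preliminary reduction showing it suffices to consider $\eta$ with bounded support or with density), and it is presumably the part the authors flag as needing ``delicate analysis.''
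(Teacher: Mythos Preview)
Your arguments for compactness, uniform integrability, convexity of $\KLinfU(\cdot,x)$, and lower semicontinuity of both functionals are correct and match the paper's approach essentially line for line (the paper packages the lower-semicontinuity step via Berge's theorem after showing the constraint correspondence $v\mapsto\{\kappa\in\mathcal{L}:c_\pi(\kappa)\ge v\}$ is upper-hemicontinuous, which is exactly your subsequence-plus-constraint-stability argument; a key input is that $c_\pi$ is continuous on $\mathcal{L}$ under weak convergence, proved via Skorohod together with uniform integrability).

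The gap is upper semicontinuity, and you have put your finger on exactly why your primal approach stalls: for a fixed near-optimal $\kappa$, you cannot in general conclude $\limsup_n \KL(\eta_n,\kappa)\le \KL(\eta,\kappa)$ from $\eta_n\xRightarrow{D}\eta$ alone, since weak convergence gives no absolute-continuity control (e.g.\ atoms of $\eta_n$ may miss the support of $\kappa$). Mixing $\kappa$ with $\eta_n$ fixes the domination issue but wrecks the CVaR constraint in an uncontrolled way. The paper does not solve this primally. Instead it passes to the dual representation (Theorem~\ref{th:klinfDual}), writing
\[
\KLinfU(\eta,v)=\max_{\bm{\lambda}\in\hat S(v)}\E{\eta}{\log g^U(X,\bm{\lambda},v)},
\qquad
\KLinfL(\eta,v)=\min_{x_0}\max_{\bm{\gamma}\in\mathcal R_2(x_0,v)}\E{\eta}{\log g^L(X,\bm{\gamma},v,x_0)},
\]
and proves upper semicontinuity of the map $(\eta,v,\bm{\lambda})\mapsto\E{\eta}{\log g^U(X,\bm{\lambda},v)}$ directly: by Skorohod one reduces to almost-sure convergence, and the key point is that $\log g^U(X,\bm{\lambda},v)$ is dominated above by an affine function of $|X|$ (since $\log(1+z)\lesssim z^{1/(1+\epsilon)}$ and $g^U$ grows like $|x|^{1+\epsilon}$), so uniform integrability of $\mathcal{L}$ lets Fatou run on the difference. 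Upper-hemicontinuity of the compact dual regions $\hat S(v)$ and $\mathcal R_2(x_0,v)$ (which needs $v\in D^o$) then gives upper semicontinuity of the max, and for $\KLinfL$ lower-hemicontinuity of $x_0\mapsto C_v\setminus\{v\}$ pushes it through the outer min. The interior restriction $v\in D^o$ is used precisely to keep the dual regions compact; the boundary case $v=c_\pi(\eta)$ for $\KLinfL$ is handled separately with a primal mixture $\kappa_n=w_n\delta_{\cdot}+(1-w_n)\eta_n$, which works there because the target value is zero. So your instinct to ``use the dual/variational form'' is the right one, but it is not a fallback option---it is the whole argument.
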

\begin{lemma}\label{lem:Optw}
	\(t^*\) is an upper-hemicontinuous correspondence. For \(\nu\in\mathcal{M}^o\), \(t^*(\nu)\) is a convex set. 
\end{lemma}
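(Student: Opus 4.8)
The plan is to exhibit $t^*(\nu)$ as the set of maximizers of a single scalar objective, obtain convexity from concavity of that objective in $t$, and obtain upper hemicontinuity from a Berge‑type maximum theorem once joint continuity of the objective is established.

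\textbf{Set‑up and convexity.} Write $V(\nu)=\max_{t\in\Sigma_K}F(t,\nu)$ with $F(t,\nu):=\min_{j\neq 1}g_j(t,\nu)$ and $g_j(t,\nu):=\inf_{x\leq y}\lrset{t_1\KLinfU(\nu_1,y)+t_j\KLinfL(\nu_j,x)}$, so that $t^*(\nu)=\argmax_{t\in\Sigma_K}F(t,\nu)$. For each fixed pair $(x,y)$ the map $t\mapsto t_1\KLinfU(\nu_1,y)+t_j\KLinfL(\nu_j,x)$ is affine, so $g_j(\cdot,\nu)$ is an infimum of affine functions, hence concave and upper semicontinuous on the compact simplex $\Sigma_K$; a finite minimum of such functions is again concave and upper semicontinuous, so $F(\cdot,\nu)$ is concave and upper semicontinuous on $\Sigma_K$. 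Therefore the maximum is attained, $t^*(\nu)\neq\emptyset$, and, since superlevel sets of a concave function are convex, $t^*(\nu)=\lrset{t\in\Sigma_K: F(t,\nu)=V(\nu)}$ is convex. (The statement records this only for $\nu\in\mathcal{M}^o$, which is what is used later, though the argument needs nothing more.)

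\textbf{Upper hemicontinuity.} By the maximum theorem of Berge (the constraint correspondence $\nu\mapsto\Sigma_K$ is constant, compact‑valued and continuous), it suffices to show $F$ is jointly continuous on $\Sigma_K\times\mathcal{M}$. First collapse the inner double infimum: $\KLinfU(\nu_1,\cdot)$ is non‑decreasing and $\KLinfL(\nu_j,\cdot)$ is non‑increasing (enlarging the threshold shrinks, resp.\ enlarges, the feasible set in \eqref{eq:KLinf}), so for fixed $y$ the optimal $x$ is $x=y$; moreover for $y$ outside the interval $D$ of Lemma~\ref{lem:boundOnVarAndCVarForL} one of the two terms equals $+\infty$ and the value cannot be improved, so $g_j(t,\nu)=\inf_{y\in D}\lrset{t_1\KLinfU(\nu_1,y)+t_j\KLinfL(\nu_j,y)}$. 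The integrand is affine in $t$ and, by Lemma~\ref{lem:ContKLinf}, jointly continuous in $(\nu,y)\in\mathcal{M}\times D^o$; since $\KLinfU(\nu_1,y)$ and $\KLinfL(\nu_j,y)$ blow up towards the upper and lower endpoints of $D$ respectively, the infimum over $D$ is attained in $D^o$ and is unchanged when $y$ is restricted to a fixed compact subinterval of $D^o$ near any given point. Applying Berge's theorem to this inner minimization over a compact subinterval of $D^o$ gives continuity of $g_j$, and then $F=\min_{j\neq 1}g_j$ is continuous as a finite minimum of continuous functions, completing the proof.

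\textbf{Main obstacle.} The delicate point is the last step: guaranteeing that the scalar problem defining $g_j$ genuinely lives inside $D^o$, where Lemma~\ref{lem:ContKLinf} supplies joint continuity, since $\KLinfU$ and $\KLinfL$ degenerate (take value $+\infty$, or are discontinuous) at $\partial D$. One must quantify this blow‑up near $\partial D$ and show it is locally uniform in $(t,\nu)$, so as to confine the minimizing threshold to a compact subset of $D^o$; for $\nu\in\mathcal{M}^o$ this is aided by the fact that $c_\pi(\nu_i)\in D^o$ for every arm $i$ (the moment constraint being strict there), and the boundary case $\nu\in\partial\mathcal{M}$ is then recovered by a limiting argument together with the non‑emptiness of $t^*$ established above.
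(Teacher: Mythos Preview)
Your convexity argument is correct and coincides with the paper's: $F(\cdot,\nu)$ is a finite minimum of infima of affine functions of $t$, hence concave, and the argmax of a concave function over a convex compact set is convex.

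For upper hemicontinuity your overall strategy (Berge's maximum theorem applied twice) is also the paper's, and you correctly reduce the inner $\inf_{x\le y}$ to a single scalar infimum via monotonicity. The only substantive difference is in how the ``main obstacle'' you flag is resolved. You propose to keep the scalar infimum over all of $D$ and argue, via a locally uniform blow-up of $\KLinfU$ and $\KLinfL$ near $\partial D$, that the minimizer can be confined to a fixed compact subset of $D^o$. This is unnecessarily delicate (and the blow-up is only one-sided for each functional, so you would also need to control the boundary behaviour when $t_1$ or $t_j$ is near $0$). The paper sidesteps this entirely: the same monotonicity you already invoked shows that the infimum over $\mathbb{R}$ equals the infimum over the interval $[c_\pi(\nu_1),c_\pi(\nu_j)]$, since for $y\le c_\pi(\nu_1)$ one has $\KLinfU(\nu_1,y)=0$ and for $y\ge c_\pi(\nu_j)$ one has $\KLinfL(\nu_j,y)=0$. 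For $\nu\in\mathcal{M}^o$ this interval lies in $D^o$, it is compact, and it is a continuous (both upper- and lower-hemicontinuous) correspondence of $\nu$ by continuity of $c_\pi$ on $\mathcal{L}$ (Lemma~\ref{lem:contcpiL}). Berge's theorem then applies directly to give joint continuity of $g_j$, with no blow-up analysis required.

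Finally, you need not worry about recovering $\nu\in\partial\mathcal{M}$ by a limiting argument: the paper explicitly restricts to $\mathcal{M}^o$ (equivalently, excludes the two distributions with extremal CVaR), and the lemma is only used in that regime.
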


In Lemma \ref{lem:ContKLinf}, we restrict to the interior of \(D\) as \(\KLinfU(\cdot, B^{\frac{1}{1+\epsilon}}(1-\pi)^\frac{-1}{1+\epsilon})   \) and \( \KLinfL(\cdot,-B^\frac{1}{1+\epsilon}) \) are not continuous (Remark \ref{rem:DiscontinuityOfKLinf}). In Lemma \ref{lem:Optw}, we only need to eliminate distributions with these extreme \(\cvar\)s (there are only two such distributions. See Remark \ref{rem:uniqueness_of_extreme_cvar}). Lemma \ref{lem:Optw} above and Theorem \ref{th:DeltaAndSample} (optimality and \(\delta\)-correctness of the proposed algorithm) hold for distributions with \(\cvar\) in \(D^o\). For ease of notation, we restrict \( \mu \) to lie in the interior of \(\mathcal{M}  \).

The proofs of the above two lemmas are technically challenging and involve nuanced analysis. Detailed steps are given in Appendix \ref{app:ContKLinf} and \ref{app:Optw}. We first prove joint lower- and upper-semicontinuity of the \( \KL \)-projection functionals separately. These rely on various properties of the weak convergence of probability measures in \( \cal L \), the dual representations for \( \KLinfL\) and  \(\KLinfU\) (see Theorem \ref{th:klinfDual}), properties of \( \cvar \) for probability measures in \(\cal L\), and the classical Berge's theorem (see, \cite{SundaramOpt1996}) for continuity of the optimal value and the set of optimizers for a parametric optimization problem. We then use these to prove the continuity in Lemma \ref{lem:Optw}. Convexity follows since \(t^*\) is the set of maximizers of a concave function over a convex, compact set. 

{\noindent \bf Understanding the lower bound:} Our proposed algorithm requires repeated evaluations of the lower bound in \eqref{eq:LB} at its estimates of $\mu$. To facilitate this,  we now provide a more tractable characterization of the two \(\KL\)-projection functionals, and in particular, (\ref{eq:Vmu}). We also discuss the statistical and computational implications of these alternative characterizations.

For \(\eta \in\mathcal{P}(\Re)\), let \(\Supp(\eta)\) denote the collection of points in the support of measure \(\eta\). For \(v\in D^o\), \( x_0 \in C\), \(\bm{\lambda} \in \Re^3\), \(\bm{\gamma} \in \Re^2 \), and \(X \in \Re \), set \(
g^U( X,\bm{\lambda},v ) = 1 + \lambda_1 v - \lambda_2(1-\pi) + \lambda_3 (\abs{X}^{1+\epsilon}-B) - (\lambda_1 X \lrp{1-\pi}\inv - \lambda_2 )_+,\) and \(g^L(X,\bm{\gamma},v,x_0) =   1 -\gamma_1 (v-x_0 - (X-x_0)_{+}\lrp{1-\pi}\inv) -\gamma_2(B-\abs{X}^{1+\epsilon}).\) Furthermore, define $\hat{S}(v) = \lrset{\lambda_1 \geq 0, \lambda_2\in\Re, \lambda_3\geq 0: \forall x \in\Re,~ {g^{U}(x,\bm{\lambda},v)} \geq 0}, $ and $\mathcal{R}_2(x_0,v)$ to be $\lrset{\gamma_1\geq 0, \gamma_2\geq 0: \forall y \in \Re,~ {g^L(y,\bm{\gamma},v,x_0)} \geq 0 }. $ Notice that these are convex sets.
\begin{theorem}\label{th:klinfDual}
	For \(\eta\in\mathcal{P}(\Re) \) and \(v \in D^o \),  
	\begin{enumerate}[(a),leftmargin=*]
	\item \label{th:klinfUDual} 
		$$\KLinfU(\eta,v) = \max_{\bm{\lambda} \in \hat{S}(v)}~ \E{\eta}{\log\lrp{g^U(X,\bm{\lambda},v)}} .$$ 
		The maximum in this expression is attained at a unique point \(\lambda^* \in \hat{S}(v)\). The unique probability measure \(\kappa^* \in \cal L\)  that achieves infimum in the primal problem satisfies $ \frac{d\kappa^*}{d\eta}(y) = \lrp{g^U(y,\bm{\lambda^*},v)}\inv $, for \(y\in \Supp(\eta) \). Moreover, it has mass on at most \(2\) points outside \( \Supp(\eta)\). Furthermore, for \(y'\in\lrset{\Supp(\kappa^*)\setminus\Supp(\eta)}\), \(g^U(y',\bm{\lambda^*},v) =0.\) 
		
	\item \label{th:klinfLDual} 
		$$\KLinfL(\eta,v) = \min_{ x_0\in [-\lrp{\frac{B}{\pi}}^\frac{1}{1+\epsilon},v ]}~ \max_{\bm{\gamma}\in\mathcal{R}_2(x_0,v)}~ \mathbb{E}_{\eta}\lrp{\log\lrp{g^L(X,\bm{\gamma},v,x_0)}} .  $$
		For a fixed \(x_0\), the maximum in the inner problem is attained at a unique \(\bm{\gamma}^*\) in \(\mathcal{R}_2(x_0,v)\). The unique probability measure \(\kappa^*\in\mathcal{L}\) achieving infimum in the primal problem satisfies $\frac{d \kappa^*}{d\eta}(y) = \lrp{g^L(y,\bm{\gamma^*},v,x_0)}\inv, $ for \(y\in\Supp(\eta)\). Moreover, size  of the set \( \lrset{\Supp(\kappa^*)\setminus\Supp(\eta)} \) is at most 1, and for \(y'\in \lrset{\Supp(\kappa^*)\setminus \Supp(\eta)} \), \(g^L(y'\bm{\gamma^*},v,x_0) = 0\). 
	\end{enumerate}
\end{theorem}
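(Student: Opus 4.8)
The plan is to treat each part as a convex optimization problem in the space of measures and invoke Lagrangian duality, following the strategy of \cite{honda2015SemiBounded} and \cite{pmlr-v117-agrawal20a} but with the $\cvar$ and moment constraints handled explicitly. For part \ref{th:klinfUDual}, recall from (\ref{eq:cvar_min&eq:cvar_acerbi}) that $c_\pi(\kappa) \geq v$ is equivalent to asserting that for \emph{every} $x_0 \in \Re$, $x_0 + \frac{1}{1-\pi}\E{\kappa}{(X - x_0)_+} \geq v$; however, since we want a finite-dimensional dual it is cleaner to use the representation $c_\pi(\kappa) = \min_{x_0}\{x_0 + \frac{1}{1-\pi}\E{\kappa}{(X-x_0)_+}\}$ and observe that $c_\pi(\kappa)\ge v$ forces the minimizing $x_0$ (which lies in $C$) to satisfy the pointwise inequality. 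First I would write the primal $\KLinfU(\eta,v) = \inf_{\kappa} \KL(\eta,\kappa)$ subject to $c_\pi(\kappa) \ge v$ and $\E{\kappa}{|X|^{1+\epsilon}}\le B$ and $\int d\kappa = 1$. Since $c_\pi$ is concave (by (\ref{eq:cvar_min&eq:cvar_acerbi})) the feasible set is convex, so strong duality holds under a Slater-type condition — which is available because $v \in D^o$ means the constraints are strictly feasible (Lemma \ref{lem:boundOnVarAndCVarForL} gives the relevant interior). Introducing multipliers $\lambda_1\ge0$ for the $\cvar$ constraint (written as $v - x_0 - \frac{1}{1-\pi}\E{\kappa}{(X-x_0)_+} \le 0$ after fixing the optimal $x_0 = x_\pi(\kappa)$), $\lambda_3\ge0$ for the moment constraint, and $\lambda_2 \in \Re$ for normalization, the inner minimization over $\kappa$ of $\KL(\eta,\kappa) + \lambda_1(\ldots) + \lambda_3(\ldots) - \lambda_2(\int d\kappa - (1-\pi))$ — after absorbing the $(X-x_0)_+$ term and reparametrizing — is minimized pointwise by $\frac{d\kappa}{d\eta}(y) = 1/g^U(y,\bm\lambda,v)$ wherever $g^U > 0$ on $\Supp(\eta)$, by the standard calculus-of-variations computation (differentiate $\log(d\eta/d\kappa)d\eta + \ldots$ in $d\kappa$). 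The constraint $\bm\lambda \in \hat S(v)$, i.e. $g^U(\cdot,\bm\lambda,v)\ge0$ everywhere, is exactly what makes this a valid (sub-probability, then renormalized) measure and ensures the dual objective $\E{\eta}{\log g^U(X,\bm\lambda,v)}$ is finite.

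For uniqueness of $\lambda^*$: the dual objective $\bm\lambda\mapsto\E{\eta}{\log g^U(X,\bm\lambda,v)}$ is concave in $\bm\lambda$ (composition of $\log$ with the affine-minus-$(\cdot)_+$ map, which is concave), and I would argue strict concavity on the relevant face — or alternatively, uniqueness of the primal optimal $\kappa^*$ (from strict convexity of $\KL(\eta,\cdot)$ on the convex feasible set) combined with the bijective relation $d\kappa^*/d\eta = 1/g^U$ pins down $\bm\lambda^*$ on $\Supp(\eta)$, and complementary slackness forces the rest. The statement about mass on at most $2$ points outside $\Supp(\eta)$ and the condition $g^U(y',\bm\lambda^*,v)=0$ there is the characteristically delicate piece: the optimal $\kappa^*$ will want to put mass where $g^U$ vanishes (these are the "free" points costing zero KL-density-ratio penalty in the limit), and a dimension-counting / KKT argument on the two active constraints ($\cvar$ and moment) caps the number of such atoms at $2$ — one atom is typically placed far in the right tail to satisfy the $\cvar\ge v$ constraint, and the moment budget bounds how far, giving one more degree of freedom. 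I would make this rigorous by noting $\eta$ and $\kappa^*$ agree (up to the $1/g^U$ reweighting) on $\Supp(\eta)$ and any additional atoms of $\kappa^*$ must sit at zeros of $g^U$ (else perturbing their location strictly decreases $\KL$ while preserving feasibility); then the number of distinct zeros of the piecewise-linear-in-$X$ function $g^U(\cdot,\bm\lambda^*,v)$ that can be simultaneously loaded is controlled by the number of active constraints.

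Part \ref{th:klinfLDual} reduces to part \ref{th:klinfUDual} by the trick already flagged in the paper: write $c_\pi(\kappa)\le v$ using (\ref{eq:cvar_min&eq:cvar_acerbi}) as "$\exists x_0$ such that $x_0 + \frac{1}{1-\pi}\E{\kappa}{(X-x_0)_+}\le v$", so $\KLinfL(\eta,v) = \min_{x_0}\inf_{\kappa : x_0 + \frac{1}{1-\pi}\E{\kappa}{(X-x_0)_+}\le v,\ \kappa\in\cal L}\KL(\eta,\kappa)$. For each \emph{fixed} $x_0$ the inner problem is now convex in $\kappa$ (the constraint $\E{\kappa}{(X-x_0)_+}\le (1-\pi)(v-x_0)$ is linear in $\kappa$), so the same Lagrangian computation applies with multipliers $\gamma_1\ge0$ (for this linear $\cvar$-surrogate constraint) and $\gamma_2\ge0$ (for the moment constraint, now $B - \E{\kappa}{|X|^{1+\epsilon}}\ge0$), yielding $d\kappa^*/d\eta = 1/g^L(y,\bm\gamma,v,x_0)$ and inner dual value $\max_{\bm\gamma\in\mathcal R_2(x_0,v)}\E{\eta}{\log g^L(X,\bm\gamma,v,x_0)}$. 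The restriction $x_0 \in [-(B/\pi)^{1/(1+\epsilon)}, v]$ comes from Lemma \ref{lem:boundOnVarAndCVarForL} (the optimal $x_0$ equals some $x_\pi(\kappa)\in C$ for a feasible $\kappa$, and $c_\pi(\kappa)\le v$ with $x_\pi\le c_\pi$ gives $x_0\le v$). The at-most-$1$ extra atom now follows because only one nontrivial inequality constraint ($\gamma_1$) is active alongside normalization — the feasible direction count drops by one compared to the $\KLinfU$ case. The main obstacle throughout is justifying strong duality and the pointwise form of the optimizer rigorously in the infinite-dimensional measure space — in particular verifying the Slater condition uses $v\in D^o$ essentially, and showing the optimal $\kappa^*$ is supported on $\Supp(\eta)$ plus finitely many atoms (rather than, say, having a diffuse part off $\Supp(\eta)$) requires a careful exchange-argument / extreme-point analysis of the feasible polytope of measures, which is where I would spend the bulk of the proof.
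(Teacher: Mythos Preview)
Your approach to part \ref{th:klinfLDual} is correct and matches the paper's: linearize the nonconvex $c_\pi(\kappa)\le v$ constraint via the min-formulation (\ref{eq:cvar_min&eq:cvar_acerbi}) as ``$\exists x_0$ such that the linear-in-$\kappa$ inequality holds,'' then do Lagrangian duality for each fixed $x_0$.

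The gap is in part \ref{th:klinfUDual}. You propose to handle $c_\pi(\kappa)\ge v$ by ``fixing the optimal $x_0 = x_\pi(\kappa)$'' and then dualizing the single constraint $v - x_0 - \frac{1}{1-\pi}\E{\kappa}{(X-x_0)_+}\le 0$. This is circular: $x_\pi(\kappa)$ depends on the unknown $\kappa$, so you cannot fix it before minimizing over $\kappa$. Moreover, $\min_{x_0}\{\cdot\}\ge v$ is equivalent to the semi-infinite family of constraints ``for \emph{all} $x_0$,'' whose Lagrangian would require a measure-valued multiplier over $x_0$, not a scalar. The paper takes a different route entirely: it uses the \emph{max}-formulation (\ref{eq:cvar_max}) of $\cvar$, introducing an auxiliary measure $W\in M^+(\Re)$ with $0\le dW\le d\kappa$, $\int dW = 1-\pi$, and $\frac{1}{1-\pi}\int y\,dW(y)\ge v$. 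After dualizing (including a function-valued multiplier $\lambda_5(x)\ge 0$ for the pointwise constraint $dW\le d\kappa$), minimizing over $W$, and then optimizing out $\lambda_5$ via $\lambda_5(x) = (\lambda_1 x/(1-\pi)-\lambda_2)_+$, the paper arrives at exactly $g^U$. This is where the $(\lambda_1 X/(1-\pi)-\lambda_2)_+$ term and the free-sign $\lambda_2$ (multiplier for $\int dW = 1-\pi$, not for $\int d\kappa$) come from --- neither arises from your single-$x_0$ Lagrangian.

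A second, smaller issue: your count of extra atoms (``two active constraints $\Rightarrow$ at most two atoms'' for \ref{th:klinfUDual}, ``one active $\Rightarrow$ at most one'' for \ref{th:klinfLDual}) is not the actual mechanism. In both cases the extra mass of $\kappa^*$ must sit on the zero set of $g^U(\cdot,\bm\lambda^*,v)$ or $g^L(\cdot,\bm\gamma^*,v,x_0)$ respectively. Since any $\bm\lambda\in\hat S(v)$ (resp.\ $\bm\gamma\in\mathcal R_2$) forces $g^U\ge 0$ (resp.\ $g^L\ge 0$) everywhere, each zero is a global minimizer. For $g^L$, the function $y\mapsto \gamma_2|y|^{1+\epsilon}+\frac{\gamma_1}{1-\pi}(y-x_0)_+ + \text{const}$ is convex and hence has a unique minimizer, giving at most one zero. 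For $g^U$, the function is piecewise convex with a single kink (at $y=\lambda_2(1-\pi)/\lambda_1$), so at most one zero on each piece, hence at most two. This is a geometric property of the specific $g^U,g^L$, not a generic KKT dimension count.
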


These dual formulations help in reformulating the lower bound optimization problem in (\ref{eq:Vmu}) as optimization over reals. A computationally more efficient approach for this is to consider the joint dual of the inner optimization problem in (\ref{eq:Vmu}). 

For \( \eta_1, \eta_2 \in \mathcal{P}(\Re)\), and non-negative weights \( \alpha_1, \alpha_2 \), let
\[ 
Z= \inf\nolimits_{x\leq y} \lrset{\alpha_1 \KLinfU(\eta_1, y) + \alpha_2 \KLinfL(\eta_2, x)}.\numberthis \label{eq:Z}
\]

For \(y\in \Re\), \( \bm{\lambda}\in\Re^2, \bm{\rho}\in\Re^2\), and \( \bm{\gamma}_2\in\Re \), let 
$h^L(y,\bm{\lambda,\gamma,\rho}, x_0) = 1 - \lambda_1 + \gamma_2 (\abs{y}^{1+\epsilon}-B) + \rho_1 (x_0+ (y-x_0)_+(1-\pi)\inv),$ and $h^U(y, \bm{\lambda,\gamma,\rho}) = 1 + \lambda_1 + \lambda_2 (\abs{y}^{1+\epsilon}-B) - (\rho_2 + (\rho_1 y  - \rho_2)_+(1-\pi)\inv). $ For \(x_0 \in C \), define the convex region \( \mathcal D_{x_0} \) to be collection of \(\lambda_1\in\Re \), \(\rho_2 \in \Re\), \(\lambda_2\ge 0\), \(\gamma_2\ge 0\), and \(\rho_1 \ge 0\), such that for all \( y\in\Re \), 
$ h^L(y,\bm{\lambda,\gamma,\rho}, x_0) \ge 0$ and $h^U(y,\bm{\lambda,\gamma,\rho}) \ge 0$. 

\begin{proposition}\label{prop:jointproj}
	For \( \eta_1,\eta_2 \in\mathcal{P}(\Re) \) and weights \( \alpha_1, \alpha_2 \in [0,1] \), \(Z\) equals 
	\begin{equation*}
	\min_{x_0 \in C}~~ \max_{
		\substack{(\bm{\lambda,\gamma,\rho}) \in \mathcal D_{x_0}}
	}~~~
	\begin{aligned}[t]
	&
	\alpha_1 \E{\eta_1}{
		\log \lrp{h^U(X, \bm{\lambda,\gamma,\rho})}
	}
	+
	\alpha_2 \E{\eta_2}{
		\log \lrp{h^L(X, \bm{\lambda,\gamma,\rho}, x_0  )}
	}
	\\
	& - \alpha_1 \log \alpha_1 - \alpha_2 \log \alpha_2 + (\alpha_1+\alpha_2) \log \lrp{\alpha_1+\alpha_2} - (\alpha_1+\alpha_2)\log 2.
	\end{aligned}
	\end{equation*}
\end{proposition}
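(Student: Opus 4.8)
The plan is to unfold $Z$ into a single \emph{convex} optimization over measures, parametrized by the scalar $x_0$, and then compute its Lagrangian dual: the displayed formula is exactly that dual, with $h^U,h^L$ the per-point coefficient functions, $\mathcal D_{x_0}$ the dual-feasible region, and the additive constants coming from the Legendre transform of the two $\KL$ terms.

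\textbf{Step 1: unfold into a convex program.} First I would observe that $\KLinfU(\eta_1,\cdot)$ is nondecreasing and $\KLinfL(\eta_2,\cdot)$ is nonincreasing (the $\cvar$ constraint set shrinks, resp.\ grows), so the infimum over $x\le y$ in (\ref{eq:Z}) is attained on the diagonal $x=y=:w$ and $Z=\inf_w\{\alpha_1\KLinfU(\eta_1,w)+\alpha_2\KLinfL(\eta_2,w)\}$, with $w$ effectively confined to the compact interval $D$ of Lemma~\ref{lem:boundOnVarAndCVarForL}. Next I would linearize both $\cvar$ constraints: write $c_\pi(\kappa_1)\ge w$ via the maximization form (\ref{eq:cvar_max}), i.e.\ introduce $v_1\in M^+(\Re)$ with $dv_1\le d\kappa_1$, $v_1(\Re)=1-\pi$ and $\tfrac1{1-\pi}\int y\,dv_1(y)\ge w$; and write $\KLinfL(\eta_2,w)=\inf_{x_0}$ of the same $\KL$-minimization with the stronger affine constraint $x_0+\tfrac1{1-\pi}\E{\kappa_2}{(X-x_0)_+}\le w$, using the variational representation $c_\pi(\kappa)=\min_{x_0}\{x_0+\tfrac1{1-\pi}\E{\kappa}{(X-x_0)_+}\}$. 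Since $w$ enters only through these two opposing bounds, eliminating it leaves $\tfrac1{1-\pi}\int y\,dv_1(y)\ge x_0+\tfrac1{1-\pi}\E{\kappa_2}{(X-x_0)_+}$, so that $Z=\inf_{x_0\in C}Z(x_0)$, where $Z(x_0)$ minimizes $\alpha_1\KL(\eta_1,\kappa_1)+\alpha_2\KL(\eta_2,\kappa_2)$ over $\kappa_1,\kappa_2\in\mathcal L$ and $v_1\in M^+(\Re)$ subject to the above affine constraints together with $\kappa_i(\Re)=1$ and $\E{\kappa_i}{\abs{X}^{1+\epsilon}}\le B$. Restricting $x_0$ to $C$ is justified by Lemma~\ref{lem:boundOnVarAndCVarForL}, since an optimal $x_0$ is the $\var$ of the optimal $\kappa_2$. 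Crucially, for fixed $x_0$ this is a genuine convex program: jointly convex objective, affine constraints in the triple $(\kappa_1,\kappa_2,v_1)$ --- the non-convexity of $\KLinfL$ has been pushed entirely into the outer minimization over $x_0$.

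\textbf{Step 2: Lagrangian dual.} For fixed $x_0$ I would attach multipliers $\lambda_2,\gamma_2\ge0$ to the two moment constraints, $\rho_1\ge0$ to the $\cvar$-coupling inequality, $\rho_2\in\Re$ to $v_1(\Re)=1-\pi$, a nonnegative measure to $v_1\le\kappa_1$, and scalars to $\kappa_1(\Re)=1$ and $\kappa_2(\Re)=1$, and then minimize the Lagrangian over $\kappa_1,\kappa_2$ in the cone of nonnegative measures and over $v_1$. The minimizations over $\kappa_1$ and $\kappa_2$ are the standard $\KL$--Legendre step: the optimal $\tfrac{d\kappa_i}{d\eta_i}$ equals the reciprocal of the accumulated linear coefficient, and the minimum contributes $\alpha_i\E{\eta_i}{\log(\text{coefficient})}$ plus constants $-\alpha_i\log\alpha_i$ (and $+\alpha_i$). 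The minimization over $v_1\in[0,\kappa_1]$ is linear, hence attained at a vertex, which forces the $v_1\le\kappa_1$ multiplier to equal the positive part $\bigl(\rho_1 y(1-\pi)\inv-\rho_2\bigr)_+$ and folds it into the $\kappa_1$-coefficient; this is the origin of the $-(\rho_2+(\rho_1 y-\rho_2)_+(1-\pi)\inv)$ term of $h^U$. After repackaging the two raw normalization multipliers into one free parameter $\lambda_1\in\Re$ --- permissible because the stationarity/scaling relations pin down the originals, which is exactly why $\lambda_1$ occurs with opposite sign in $h^U$ and $h^L$ --- the $\kappa_1$- and $\kappa_2$-coefficients become precisely $h^U(\cdot,\bm{\lambda,\gamma,\rho})$ and $h^L(\cdot,\bm{\lambda,\gamma,\rho},x_0)$, the joint feasibility conditions ``$h^U\ge0$ and $h^L\ge0$ pointwise'' are exactly $\mathcal D_{x_0}$, and the residual constants --- cleanest to derive by first viewing $(\kappa_1,\kappa_2)$ as a single probability law on $\Re\times\{1,2\}$, so $\alpha_1\KL+\alpha_2\KL=(\alpha_1+\alpha_2)\KL$ of the joint laws --- collapse to $-\alpha_1\log\alpha_1-\alpha_2\log\alpha_2+(\alpha_1+\alpha_2)\log(\alpha_1+\alpha_2)-(\alpha_1+\alpha_2)\log2$. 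The outer $\inf_{x_0\in C}$ passes through untouched, giving the stated expression.

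\textbf{Step 3: strong duality --- the main obstacle.} The real work is showing that $Z(x_0)$ equals its dual with no gap and that the dual $\max$ is attained, in this infinite-dimensional (space-of-measures) setting. I would establish a Slater-type constraint qualification --- exhibiting, for $\alpha_1,\alpha_2>0$ and $Z<\infty$, a feasible triple with $\kappa_i\ll\eta_i$, $\kappa_i\in\mathcal L$, and strict slack in the moment and coupling constraints --- and then invoke a minimax theorem on the concave--convex Lagrangian, one factor of which can be taken compact after the reductions; alternatively one can bootstrap the single-functional strong duality already proved in Theorem~\ref{th:klinfDual} through the now-affine coupling. Care is needed exactly where the Legendre coefficients (and $\KLinf$ itself) touch $0$ or $\infty$, so that the $\log$'s stay $\eta_i$-integrable, and for attainment. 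Finally the boundary cases $\alpha_1=0$ or $\alpha_2=0$ should be verified directly: the problem then collapses to one $\KLinf$ (covered by Theorem~\ref{th:klinfDual}), and the apparently spurious $-(\alpha_1+\alpha_2)\log2$ is cancelled by the residual scaling freedom in $h^U,h^L$ (e.g.\ with $\alpha_1=0$ one may push $h^U\to0$ and scale $h^L$ by $2$). Modulo Step~3, Steps~1--2 are bookkeeping along the lines already used for Theorem~\ref{th:klinfDual}; Step~3 is where I expect the delicate analysis to lie.
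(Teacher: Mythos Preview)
Your proposal is correct and follows essentially the same route as the paper: reformulate $Z$ as a convex program in $(\kappa_1,\kappa_2,v_1)$ parametrized by $x_0$ (the paper writes the coupling $c_\pi(\kappa_2)\le c_\pi(\kappa_1)$ directly rather than via an intermediate $w$, but this is equivalent), take the Lagrangian dual, optimize out the measures via the KL--Legendre step, simplify the dual variables, and verify strong duality by exhibiting explicit Slater points. One clarification: your joint-law heuristic on $\Re\times\{1,2\}$ accounts for the constants $-\alpha_1\log\alpha_1-\alpha_2\log\alpha_2+(\alpha_1+\alpha_2)\log(\alpha_1+\alpha_2)$ but \emph{not} for $-(\alpha_1+\alpha_2)\log 2$, which in the paper arises purely from the chosen normalization of $h^U,h^L$ --- after the dual reduces to logs with leading terms $\lambda_1$ and $1-\lambda_1$, the paper shifts $\lambda_1\mapsto\lambda_1-\tfrac12$ and rescales all dual variables so these become $1+\lambda_1$ and $1-\lambda_1$, and the factor $\tfrac12$ inside each $\log$ produces the extra $-(\alpha_1+\alpha_2)\log 2$.
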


An application of above to the empirical distributions ($\eta_a = \hat{\mu}_{a}(t)$) weighted by sample counts, i.e., $\alpha_a = N_a(t)$, for \(a\in\lrset{1,2}\), results in unweighted sums over the samples. Also observe that the representation in Proposition \ref{prop:jointproj} is \(1\) dimension smaller compared to that obtained by using Theorem \ref{th:klinfDual} in (\ref{eq:Z}), and hence, is faster to optimize numerically.

Recall that \( \KLinfU \) is a convex optimization problem, \( \KLinfL \) is not (see Section \ref{Sec:Background}). To handle this, we use the min formulation for \(\cvar\) in \ref{eq:cvar_min&eq:cvar_acerbi} to turn it into a one-dimensional family of linear constraints, which appears as the outer $\min_{x_0}$ in the expression in \ref{th:klinfLDual} in Theorem \ref{th:klinfDual}, and Proposition \ref{prop:jointproj} above, with the range constraint from Lemma~\ref{lem:boundOnVarAndCVarForL}. This renders the remanining problem as a convex optimization problem. To simplify the \(\cvar\) constraint in \( \KLinfU \), we use (\ref{eq:cvar_max}). Rest is the Lagrangian duality. Complete proofs  for Theorem \ref{th:klinfDual}, and Proposition \ref{prop:jointproj}, are given in Appendix \ref{app:Duals}. 

The equality in Proposition \ref{prop:jointproj}, and the dual formulations in Theorem \ref{th:klinfDual}, are important statistically and computationally. First, our stopping rule will threshold the $Z$ statistics to determine when to safely stop. So we need to bound the deviations of $Z$. For this, we will use the dual formulations from Theorem \ref{th:klinfDual} in (\ref{eq:Z}) to construct mixtures of super-martingales that dominate the deviations of \(Z\). Second, our sampling rule will sample according to the optimal proportions evaluated for the empirical distribution vector, \(\hat{\mu}(t)\), in (\ref{eq:Vmu}). For this, we use Proposition \ref{prop:jointproj} in our experiments to solve the inner optimization problem in (\ref{eq:Vmu}). These will be made precise in Section \ref{Sec:Alg}. 

Computing a gradient for the objective of the maximisation problem (\ref{eq:Vmu}), seen as a function of the sampling weights $\bm{t}$, takes one $Z$ evaluation per suboptimal arm. The inner maximisation over $\mathcal D_{x_0}$ is a constrained concave program, for which standard algorithms apply. The outer $\min_{x_0}$ problem requires a different approach, as it is not even quasiconvex. Empirically it does become quasiconvex after seeing enough samples, so we employ a heuristic bisection search for which we
measure the impact on the error probability (there is none). Numerical results are presented in Section \ref{Sec:Numerics}.

\section{The algorithm}\label{Sec:Alg}
Given a bandit problem \(\mu\in\mathcal{M} \), our algorithm is a specification of three things: a sampling rule, a stopping rule, and a recommendation rule.

\textbf{Sampling rule:} At each iteration, the algorithm has access to the empirical distribution vector, \(\hat{\mu}(t)\). It first projects \(\hat{\mu}(t)\) to \(\mathcal L^K\) using the projection map, \(\Pi\), defined below. It then computes \(t^*\lrp{\Pi\lrp{\hat{\mu}(t)}}\) and allocates samples using any reasonable tracking rule, like the C-tracking rule of \cite{garivier2016optimal}, or the randomized tracking of \cite{pmlr-v117-agrawal20a}. The projection map \( \Pi = ({\tilde{\Pi}, \dots, \tilde{\Pi}}) \), where \( \tilde{\Pi} : \mathcal{P}(\Re) \rightarrow \cal L \), is given by \( \tilde{\Pi}(\eta) \in \argmin\nolimits_{\kappa \in \mathcal{L}}~{ \sup\nolimits_{x\in\Re}~\abs{ F_{\eta}(x)-F_{\kappa}(x) }}.\) 
We show in the Appendix \ref{app:sec:ComputeProj} that this projection has a simple form and can be computed easily.

\textbf{Stopping rule:} We use a modification of the generalized likelihood ratio test (GLRT) (see, \cite{chernoff1959sequential}) as our stopping criteria. At any time, the empirical distribution vector, \(\hat{\mu}(n)\), suggests an arm with minimum \(\cvar \) (empirically best-\(\cvar\) arm), say arm \(i\). This is our null hypothesis, which we test against all the alternatives. Formally, the log of the GLRT statistic, denoted by $ S_i(n) $, is $\inf_{\nu' \in\mathcal{A}^c_i}~ \sum_{a=1}^K ~ N_a(n) \KL(\hat{\mu}_a(n), \nu'_a).$ This is exactly the scaled inner optimization problem in the expression of \(V(\hat{\mu}(n))\) in (\ref{eq:LB}), except that \(\hat{\mu}(n) \) may not belong to \(\mathcal{M}\) (recall the $Z$ statistic defined in (\ref{eq:Z})). Let \(Z_i(n)\) equal $\min_{a\ne i}~ \inf_{x\leq y} ~\lrset{N_i(n)\KLinfU(\hat{\mu}_i(n),y) + N_a(n) \KLinfL(\hat{\mu}_a(n),x)}.$
It equals \(S_i(n)\) when \(\hat{\mu}(n) \in \mathcal{M}\) (Lemma \ref{lem:LBSimplification}). Our stopping rule corresponds to checking 
\[Z_i(n) \geq \beta(n,\delta) \quad \text{ where }\quad \beta(n,\delta)= \log\lrp{\lrp{K-1}{\delta}\inv} + 5\log(n+1) + 2. \numberthis \label{eq:StoppingRule} \] 

\textbf{Recommendation rule:} On stopping, the algorithm outputs the empirically-minimal \(\cvar \) arm. 
\subsection{Theoretical guarantees}\label{SubSec:thG}
For a given \(\delta\), let \(\tau_\delta \) denote the stopping time for the algorithm. The algorithm makes an error if at time \(\tau_\delta \), there is an arm \(j\ne 1\) such that \(c_\pi(\hat{\mu}_j(t)) < c_\pi(\hat{\mu}_1(t)) \). Let the error event be denoted by \(\cal E\). 
\begin{theorem}\label{th:DeltaAndSample}
	For \(\delta > 0\) and \(\mu\in\mathcal{M}^o\), the proposed algorithm with \(\beta(t,\delta) \) chosen as in (\ref{eq:StoppingRule}), satisfies 	\(\mathbb{P}\lrp{\cal E} \leq \delta\), and  \(\limsup\limits_{\delta\rightarrow 0}~~ \frac{\E{\mu}{\tau_\delta}}{\log(1/\delta)} \leq \frac{1}{V(\mu)}.\)
\end{theorem}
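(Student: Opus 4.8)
The plan is to prove the two claims of Theorem \ref{th:DeltaAndSample} separately, since they rely on quite different machinery: $\delta$-correctness is a concentration/martingale argument, while the sample complexity upper bound is a convergence-of-empirical-distributions argument combined with the continuity lemmas.

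\textbf{$\delta$-correctness.} The error event $\cal E$ occurs only if at the stopping time $\tau_\delta$ the recommended arm $i$ (empirically best) differs from the true best arm $1$, and the stopping criterion $Z_i(\tau_\delta) \ge \beta(\tau_\delta,\delta)$ has fired. On $\cal E$, arm $1$ is an alternative to the recommended arm, so $Z_i(\tau_\delta) \le \inf_{x\le y}\{N_i(\tau_\delta)\KLinfU(\hat\mu_i(\tau_\delta),y) + N_1(\tau_\delta)\KLinfL(\hat\mu_1(\tau_\delta),x)\}$ evaluated with the true CVaR ordering $c_\pi(\mu_1)\le c_\pi(\mu_i)$ forces a lower bound on how well the empirical distributions of arms $1$ and $i$ can pretend the ordering is reversed; this in turn bounds $Z_i$ by a sum of two $\KLinf$-type log-likelihood-ratio statistics against the true $\mu_1,\mu_i$. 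I would then invoke Proposition \ref{prop:Deviations_of_sums} (the new deviation inequality for weighted sums of these functionals), which — built from the dual representations in Theorem \ref{th:klinfDual} and the method of mixtures — gives that $\mathbb P(\exists n:\ \text{such a weighted sum} \ge \log((K-1)/\delta) + 5\log(n+1)+2) \le \delta/(K-1)$ for a single alternative arm. A union bound over the $K-1$ candidate recommended arms $i\ne 1$ then yields $\mathbb P(\cal E)\le\delta$. The main subtlety here is purely bookkeeping: making sure the threshold $\beta(n,\delta)$ matches the inflation factor in Proposition \ref{prop:Deviations_of_sums}, and handling that $\hat\mu(n)\notin\mathcal M$ in general — but note $\delta$-correctness does not use the projection $\Pi$ at all, only the raw empirical distributions, so $Z_i(n)$ is exactly the statistic controlled by the concentration bound.

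\textbf{Sample complexity.} This is the classical Garivier–Kaufmann track-and-stop template adapted to the space of probability measures. First, by forced exploration built into the tracking rule, $N_a(n)/n$ stays bounded away from $0$, so each empirical distribution $\hat\mu_a(n)$ converges almost surely to $\mu_a$ — but in what sense? Here one uses that convergence of the empirical CDF (Glivenko–Cantelli) implies weak convergence, i.e. convergence in the L\'evy metric $d_L$; and moreover, since $\mathcal L$ is a uniformly integrable, $d_L$-compact class (Lemma \ref{lem:ContKLinf}), the projected empirical $\Pi(\hat\mu_a(n))$ also converges to $\mu_a$ (the true $\mu_a\in\mathcal L$ is a fixed point of $\tilde\Pi$, and $\tilde\Pi$ is continuous in a suitable sense, so $d_L(\Pi(\hat\mu_a(n)),\mu_a)\to 0$). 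Next, by joint continuity of $\KLinfL,\KLinfU$ on $\mathcal L\times D^o$ (Lemma \ref{lem:ContKLinf}) and upper-hemicontinuity of $t^*$ with convex values on $\mathcal M^o$ (Lemma \ref{lem:Optw}), the computed weights $t^*(\Pi(\hat\mu(n)))$ converge (as a set) to $t^*(\mu)$, and $V(\Pi(\hat\mu(n)))\to V(\mu)$. The tracking rule then guarantees $N_a(n)/n \to t^*_a(\mu)$, so $Z_1(n)/n \to V(\mu) > 0$. Consequently the stopping condition $Z_1(n)\ge \beta(n,\delta)$ fires once $n$ exceeds roughly $\frac{1}{V(\mu)}\log(1/\delta)(1+o(1))$; a standard argument (define the last time the ``good'' event fails, show its expectation is finite uniformly in $\delta$, bound $\mathbb E[\tau_\delta]$ by that plus the deterministic stopping time) gives $\limsup_{\delta\to 0}\mathbb E_\mu[\tau_\delta]/\log(1/\delta)\le 1/V(\mu)$.

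\textbf{Main obstacle.} The delicate point is the continuity/convergence chain in the second part: one must verify that the topology of weak convergence is simultaneously fine enough that $\KLinfL,\KLinfU$ (hence $t^*$ and $V$) are continuous on it — this is exactly the content of Lemmas \ref{lem:ContKLinf} and \ref{lem:Optw}, which the paper flags as technically hard — and coarse enough that the empirical CDFs converge in it almost surely, which Glivenko–Cantelli provides. The remaining friction is that empirical measures need not satisfy the moment constraint defining $\mathcal L$, so the argument must route through the projection $\tilde\Pi$ and establish that projection does not destroy convergence; once $\mathcal L$'s $d_L$-compactness and uniform integrability (Lemma \ref{lem:ContKLinf}) are in hand, this is manageable but requires care near the boundary $\partial D$, which is why $\mu$ is restricted to $\mathcal M^o$. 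The rest — the martingale deviation bound and the finite-expectation argument for $\tau_\delta$ — follows established lines.
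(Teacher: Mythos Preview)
Your proposal is correct and follows essentially the same route as the paper: for $\delta$-correctness you plug $x=c_\pi(\mu_1),\,y=c_\pi(\mu_i)$ into $Z_i$, invoke Proposition~\ref{prop:Deviations_of_sums} (proved via the dual representations of Theorem~\ref{th:klinfDual}, exp-concave mixture supermartingales, and Ville's inequality), and union-bound over $i\neq 1$; for sample complexity you run the Garivier--Kaufmann good-event argument, using C-tracking, upper-hemicontinuity of $t^*$ (Lemma~\ref{lem:Optw}), and a DKW/Glivenko--Cantelli bound on $\mathbb P(\mathcal G_T^c)$ to control the tail sum. One minor implementation detail: the paper actually carries out the sample-complexity argument in the Kolmogorov metric $d_K$ (both the projection $\tilde\Pi$ and the neighborhoods $\mathcal I_{\epsilon'}$ are defined via $d_K$, and DKW gives the needed quantitative concentration), not $d_L$ directly; since $d_K$-convergence implies weak convergence this is consistent with what you wrote, but worth noting when you fill in the details.
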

We first give a proof sketch for \( \delta \)-correctness part of the theorem. Proof ideas for sample complexity are presented later in this section. The detailed proof for Theorem \ref{th:DeltaAndSample} can be found in Appendix \ref{App:DeltaCorrectnessAndSampleComplexity}.

{\noindent \bf \texorpdfstring{\(\delta\)}{delta}-correctness:} Recall that the algorithm makes an error if at time \(\tau_\delta\), the empirically best-\(\cvar\) arm is not arm \(1\). As in the best-mean arm case, it can be argued that this probability is at most  
\[\sum\limits_{i=2}^K \mathbb{P}\lrp{\exists n : \;  N_i(n)\KLinfU(\hat{\mu}_i(n),c_\pi(\mu_i) ) + N_1(n)\KLinfL(\hat{\mu}_1(n),c_\pi(\mu_1))  \geq \beta }. \numberthis \label{eq:ErrorProbability} \]
The following proposition will be helpful in bounding each of the summands above. Setting \(j=1\), and \( x = \log\frac{K-1}{\delta} \) in Proposition \ref{prop:Deviations_of_sums}, along with \(\beta\) from (\ref{eq:StoppingRule}), we get that each summand in (\ref{eq:ErrorProbability}) is at most \( \delta/(K-1) \), proving that the proposed algorithm is \( \delta \)-correct.  


\begin{proposition} \label{prop:Deviations_of_sums}
	For \( i\in [K], j\in[K] \), \(i\ne j\), \(h(n) = 5\log(n+1)+2\), and \( x \ge 0 \),
	\[ \mathbb{P}\lrp{\exists n:  N_i(n) \KLinfU(\hat{\mu}_i(n), c_\pi(\mu_i)) + N_j(n)\KLinfL(\hat{\mu}_j(n), c_\pi(\mu_j)) -h(n) \ge x  } \le e^{-x}.\]
\end{proposition}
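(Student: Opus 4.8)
I would prove this by the method of mixtures: use the dual representations of Theorem~\ref{th:klinfDual} to exhibit a nonnegative supermartingale of initial value one that dominates the exponential of the quantity inside the probability, and then apply Ville's maximal inequality. Throughout write $v_i = c_\pi(\mu_i)$ and $v_j = c_\pi(\mu_j)$ (both in $D^o$ since $\mu\in\mathcal M^o$), and let $X_{a,1},X_{a,2},\dots$ be the i.i.d.\ $\mu_a$-samples of arm $a$. By the upper dual in Theorem~\ref{th:klinfDual}, $N_i(n)\,\KLinfU(\hat\mu_i(n),v_i) = \max_{\bm\lambda\in\hat S(v_i)}\sum_{s=1}^{N_i(n)}\log g^U(X_{i,s},\bm\lambda,v_i)$. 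For the lower projection I would \emph{not} optimise the outer $x_0$; instead I fix the deterministic value $\bar x_0 := x_\pi(\mu_j)$, which is admissible since $x_\pi(\mu_j)\le c_\pi(\mu_j)=v_j$ and $x_\pi(\mu_j)\ge -(B/\pi)^{1/(1+\epsilon)}$ by Lemma~\ref{lem:boundOnVarAndCVarForL}, so the lower dual in Theorem~\ref{th:klinfDual} yields $N_j(n)\,\KLinfL(\hat\mu_j(n),v_j)\le \max_{\bm\gamma\in\mathcal R_2(\bar x_0,v_j)}\sum_{s=1}^{N_j(n)}\log g^L(X_{j,s},\bm\gamma,v_j,\bar x_0)$. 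It therefore suffices to control, uniformly over $n$, the sum of these two maxima.

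\textbf{The supermartingale.} Fix $\bm\lambda\in\hat S(v_i)$, $\bm\gamma\in\mathcal R_2(\bar x_0,v_j)$ and set $W_n(\bm\lambda,\bm\gamma) = \prod_{s\le N_i(n)}g^U(X_{i,s},\bm\lambda,v_i)\cdot\prod_{s\le N_j(n)}g^L(X_{j,s},\bm\gamma,v_j,\bar x_0)$, which is nonnegative by definition of $\hat S$ and $\mathcal R_2$ and has $W_0=1$. The two facts making $W_n$ a supermartingale for the true bandit $\mu$ are $\E{\mu_i}{g^U(X,\bm\lambda,v_i)}\le 1$ and $\E{\mu_j}{g^L(X,\bm\gamma,v_j,\bar x_0)}\le 1$. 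The first holds because $\mu_i\in\mathcal L$ makes the $\lambda_3(|X|^{1+\epsilon}-B)$ term nonpositive in expectation, while applying the variational identity $c_\pi(\mu_i)=\min_{x_0}\{x_0+(1-\pi)\inv\E{\mu_i}{(X-x_0)_+}\}$ of \eqref{eq:cvar_min&eq:cvar_acerbi} at $x_0=\lambda_2(1-\pi)/\lambda_1$ (for $\lambda_1>0$; the case $\lambda_1=0$ is direct) gives $-\E{\mu_i}{(\lambda_1 X(1-\pi)\inv-\lambda_2)_+}\le -\lambda_1 v_i+\lambda_2(1-\pi)$, which cancels the remaining, linear-in-$\bm\lambda$ part of $g^U$. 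The second fact is precisely where $\bar x_0=x_\pi(\mu_j)$ is used: $x_\pi(\mu_j)$ attains the minimum in the same variational identity for $\mu_j$, hence $v_j-\bar x_0-(1-\pi)\inv\E{\mu_j}{(X-\bar x_0)_+}=0$ and so $\E{\mu_j}{g^L(X,\bm\gamma,v_j,\bar x_0)}=1-\gamma_2\lrp{B-\E{\mu_j}{|X|^{1+\epsilon}}}\le 1$. Since the algorithm's sampling rule is predictable (as it is for the tracking rules it uses), conditioning on the history $\mathcal F_n$ and splitting on which arm is pulled at round $n+1$ gives $\E{}{W_{n+1}\mid\mathcal F_n}\le W_n$.

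\textbf{Mixture, Ville, and the Laplace lower bound.} Fix probability measures $\Lambda$ on $\hat S(v_i)\subset\Re^3$ and $\Gamma$ on $\mathcal R_2(\bar x_0,v_j)\subset\Re^2$ (e.g.\ normalised Lebesgue measure; these sets are compact convex by the moment constraint together with Lemma~\ref{lem:boundOnVarAndCVarForL}). By Tonelli, $\bar W_n:=\iint W_n(\bm\lambda,\bm\gamma)\,d\Lambda(\bm\lambda)\,d\Gamma(\bm\gamma)$ is again a nonnegative supermartingale with $\bar W_0=1$, so Ville's inequality gives $\mathbb P(\exists n:\bar W_n\ge e^x)\le e^{-x}$. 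The proposition follows once one shows $\log\bar W_n\ge N_i(n)\,\KLinfU(\hat\mu_i(n),v_i)+N_j(n)\,\KLinfL(\hat\mu_j(n),v_j)-h(n)$ with $h(n)=5\log(n+1)+2$, because the right-hand side is at most the log of the integrand's maximum by the first paragraph, so the event in the proposition is contained in $\{\exists n:\bar W_n\ge e^x\}$. This remaining inequality is a Laplace-type estimate: the exponent $(\bm\lambda,\bm\gamma)\mapsto\sum_s\log g^U(X_{i,s},\bm\lambda,v_i)+\sum_s\log g^L(X_{j,s},\bm\gamma,v_j,\bar x_0)$ is concave with a unique maximiser $(\bm\lambda^*,\bm\gamma^*)$ by Theorem~\ref{th:klinfDual}, and one lower-bounds $\bar W_n$ by restricting the integral to a small product ball around $(\bm\lambda^*,\bm\gamma^*)$ on which the exponent stays within a constant of its maximum; since the prior mass of a radius-$r$ ball is of order $r^3$ (for $\bm\lambda$) and $r^2$ (for $\bm\gamma$), the $3+2=5$ dual parameters produce the $5\log(n+1)$ once the radius is shown to shrink only polynomially in $N_i(n),N_j(n)\le n$, with the $+2$ absorbing constants.

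\textbf{Main obstacle.} The step I expect to require the real work is this last polynomial lower bound on the admissible ball radius. Heavy tails make $\log g^U(X_{i,s},\cdot,v_i)$ only \emph{locally} Lipschitz in $\bm\lambda$, with modulus of order $(1+|X_{i,s}|^{1+\epsilon})/g^U(X_{i,s},\bm\lambda,v_i)$, and $\bm\lambda^*$ may sit on the boundary of $\hat S(v_i)$. The route around this is to exploit that the optimal $\kappa^*$ of Theorem~\ref{th:klinfDual} is a probability measure lying in $\mathcal L$ with $d\kappa^*/d\hat\mu_i(n)=g^U(\cdot,\bm\lambda^*,v_i)\inv$ on $\Supp(\hat\mu_i(n))$; this forces $\sum_{s}(1+|X_{i,s}|^{1+\epsilon})/g^U(X_{i,s},\bm\lambda^*,v_i)\le (1+B)N_i(n)=O(n)$, which is what permits the ball radius to be taken of order $1/n$ rather than depending on the worst-case realised sample, turning the Lipschitz loss into an $O(1)$ term. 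The $g^L$ factor is handled analogously using the corresponding part of Theorem~\ref{th:klinfDual}. Combining these estimates with the mixture bound above yields $h(n)=5\log(n+1)+2$ and completes the proof.
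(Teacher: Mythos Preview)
Your architecture is exactly the paper's: fix $\bar x_0=x_\pi(\mu_j)$ to upper-bound the lower projection, verify that each factor of $W_n(\bm\lambda,\bm\gamma)$ has conditional mean at most $1$ (your checks of $\E{\mu_i}{g^U(X,\bm\lambda,v_i)}\le 1$ via the variational identity \eqref{eq:cvar_min&eq:cvar_acerbi} at $x_0=\lambda_2(1-\pi)/\lambda_1$, and of $\E{\mu_j}{g^L(X,\bm\gamma,v_j,\bar x_0)}\le 1$ using that $x_\pi(\mu_j)$ attains the min, are correct), take a uniform mixture over the compact convex dual regions, and apply Ville. The only substantive difference is in the last step---comparing $\log\bar W_n$ to the maximum---and there your Lipschitz/ball plan has a real gap.

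The quantity $\sum_s(1+|X_{i,s}|^{1+\epsilon})/g^U(X_{i,s},\bm\lambda^*,v_i)\le(1+B)N_i(n)$ bounds the gradient of the concave sum $f(\bm\lambda)=\sum_s\log g^U(X_{i,s},\bm\lambda,v_i)$ only \emph{at} $\bm\lambda^*$. That is the wrong point: to lower-bound $f$ on a neighbourhood you need control of $\nabla f(\bm\lambda)$ for $\bm\lambda$ in the neighbourhood, and concavity makes $f$ fall off faster (not slower) away from the maximiser; near the boundary of $\hat S(v_i)$ some $g^U(X_{i,s},\bm\lambda,v_i)$ can approach $0$ and $\|\nabla f\|$ blows up, so no additive ball of radius $1/n$ does the job. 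The paper replaces this with the exp-concavity regret bound (Lemma~\ref{app:lem:exp_reg_bound}): since $g^U(X,\cdot,v_i)$ and $g^L(X,\cdot,v_j,\bar x_0)$ are themselves concave in the dual variables (affine minus a positive part), each $\bm\lambda\mapsto\log g^U(X_{i,s},\bm\lambda,v_i)$ is exp-concave. Integrating over the \emph{multiplicative} shrinkage set $\alpha\bm\lambda^*+(1-\alpha)\hat S(v_i)\subset\hat S(v_i)$ one has, by concavity of $g^U$ and its nonnegativity on $\hat S(v_i)$, $g^U(X,\bm\lambda,v_i)\ge\alpha\,g^U(X,\bm\lambda^*,v_i)$, hence $f(\bm\lambda)\ge f(\bm\lambda^*)+N_i(n)\log\alpha$ uniformly on a set of relative volume $(1-\alpha)^3$---with no Lipschitz or boundary analysis. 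Optimising $\alpha$ gives the $3\log(N_i(n)+1)+1$ term; the analogous bound with $d=2$ for $g^L$ gives $2\log(N_j(n)+1)+1$, and $N_i(n),N_j(n)\le n$ yields $h(n)=5\log(n+1)+2$.
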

A key step in proving Proposition \ref{prop:Deviations_of_sums} is constructing mixtures of super-martingales that dominate the exponentials of \(N_i(n)\KLinfU(\hat{\mu}_i(n), c_\pi(\mu_i) ) \) and \(N_i(n)\KLinfL(\hat{\mu}_i(n), c_\pi(\mu_i) )\). From Theorem \ref{th:klinfDual}\ref{th:klinfUDual} and (\ref{eq:cvar_min&eq:cvar_acerbi}), it can be shown that for fixed dual-variables, the objective is a sum of logs of random variables with mean at-most 1. Hence, its exponential is a non-negative candidate super-martingale. Since we want to bound the maximum over the dual parameters, we construct a mixture of these candidates, over the dual-parameters, and show that it dominates the exponential of \( N_i(n)\KLinfU(\hat{\mu}_i(n), c_\pi(\mu_i) )  \). 

{\noindent \bf Sample complexity: }Our sample complexity proof follows that of \cite{garivier2016optimal} for a parametric family. However, we work with a more general non-parametric class, in which we establish continuity of the \(\KL\)-projection functionals (Lemma \ref{lem:ContKLinf}). Our proof also differs from that in \cite{pmlr-v117-agrawal20a} in that we only have upper-hemicontinuity of the set of optimal sampling allocation (\(t^*\)) (Lemma \ref{lem:Optw}). A nuance in our analysis is that the empirical distribution may not belong to the class \(\cal L \), in which case we project the empirical distribution onto the class, and the sampling rule uses this projected distribution to compute \( t^* \). Careful choice of the projection map aids in the proof of this result.  

{\noindent \bf Computational complexity: } The computational cost of these \( \KL\)-projection functionals, and hence, that of the oracle weights, is linear in the number of samples taken (see, \cite{pmlr-v117-agrawal20a, cappe2013kullback, honda2015SemiBounded}). As a result, the overall run-time is quadratic in \( \tau_\delta \). We propose a modification in which we update the weights only at the beginning of geometrically spaced times. This modification improves the computational-cost to almost linear in $\tau_\delta$, while its sample complexity is optimal up to a multiplicative constant depending on the choice of geometrical-spacing factor, thus providing a controlled trade-off between the two costs. 
We refer the reader to Appendix  \ref{app:BatchedAlgo} for details of the algorithm and proofs for its theoretical guarantees.

{\noindent \bf Mean-CVaR problem: } We now extend the methodology for the CVaR problem to the more general mean-CVaR problem. For a distribution $\eta \in \mathcal L$ (for example, a random loss in a financial investment), the metric associated with the ``badness'' of a distribution is $\alpha_1 m(\eta) + \alpha_2 c_\pi(\eta)$, for $\alpha_1 > 0$ and $\alpha_2 > 0$, and the best-arm is the one with minimum value of this conic combination of mean and CVaR. For $\alpha_1 = 0$ this is the CVaR-problem, which we have studied in this work. For $\alpha_2 = 0$, this is the mean-problem, extensively studied in \cite{pmlr-v117-agrawal20a, garivier2016optimal}. As in (\ref{eq:KLinf}), we can define corresponding $\KL$-projection functionals, with the CVaR constraints replaced with those on the modified metric. The above theory, with this updated $\KLinfL$ and $\KLinfU$, gives the corresponding results for this setting. In particular, the lower bound on $\Exp{\tau_\delta}$ for $\delta$-correct algorithms for mean-CVaR BAI is given by $V(\mu)\inv \log\frac{1}{4\delta}$, where $V(\mu)$ is defined in (\ref{eq:Vmu}) with the updated $\KLinfU$ and $\KLinfL$.

\begin{theorem}[Informal]\label{th:mean-CVaR}
	For $\mu \in \mathcal M^o$, the proposed algorithm for CVaR, with $\KLinfU$ and $\KLinfL$ defined with mean-CVaR constraints instead, is $\delta$-correct and asymptotically optimal. 
\end{theorem}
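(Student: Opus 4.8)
The plan is to mirror, almost verbatim, the entire machinery developed for the CVaR problem, checking at each stage that the only structural facts about the functionals $\KLinfU$ and $\KLinfL$ that were used continue to hold when the CVaR constraint $c_\pi(\kappa)\gtrless x$ is replaced by the mean-CVaR constraint $\alpha_1 m(\kappa)+\alpha_2 c_\pi(\kappa)\gtrless x$. The first step is to observe that $\eta\mapsto \alpha_1 m(\eta)+\alpha_2 c_\pi(\eta)$ is still concave (a nonnegative combination of the linear functional $m$ and the concave functional $c_\pi$), so the lower-bound simplification of Lemma~\ref{lem:LBSimplification} goes through unchanged: the constraint set in the $\KLinfU$ problem is convex and in the $\KLinfL$ problem it is not, but both can be reduced via the variational formula $c_\pi(\eta)=\min_{x_0}\{x_0+\tfrac{1}{1-\pi}\E{\eta}{(X-x_0)_+}\}$ to an outer one-dimensional minimization over $x_0$ of an inner convex (in fact concave, after taking the dual) program. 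Since $\alpha_1>0$, the range of the modified metric on $\mathcal L$ is still a compact interval (combine the mean bound $|m(\eta)|\le B^{1/(1+\epsilon)}$ with Lemma~\ref{lem:boundOnVarAndCVarForL}), which plays the role of $D$, and one re-derives the analogues of Theorem~\ref{th:klinfDual} and Proposition~\ref{prop:jointproj} by introducing one extra Lagrange term for the mean part; the resulting $g^U,g^L,h^U,h^L$ acquire a linear-in-$X$ summand but retain the crucial structural property that, for fixed dual variables, they are affine-plus-$(\cdot)_+$ expressions whose $\eta$-expectation is at most $1$.

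Next I would verify the two continuity lemmas. For Lemma~\ref{lem:ContKLinf}, the key point is that $m(\cdot)$ is continuous on $\mathcal L$ in the weak topology precisely because $\mathcal L$ is uniformly integrable (which is exactly why the $(1+\epsilon)$-moment restriction was imposed), so $\eta\mapsto\alpha_1 m(\eta)+\alpha_2 c_\pi(\eta)$ is jointly continuous just as $c_\pi$ was; feeding this into Berge's theorem with the same compactness argument yields joint continuity of the modified $\KLinfL$ and $\KLinfU$ on $\mathcal L\times D^o$, and convexity of $\KLinfU(\cdot,x)$ is again convexity of a minimum over a convex constraint set. Lemma~\ref{lem:Optw} (upper hemicontinuity and convexity of $t^*$) then follows formally, since its proof only used Lemma~\ref{lem:ContKLinf} plus the Berge maximum theorem applied to the sup-inf in \eqref{eq:Vmu}. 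With these in hand, the $\delta$-correctness half of Theorem~\ref{th:DeltaAndSample} transfers: I would re-establish Proposition~\ref{prop:Deviations_of_sums} by building the same mixture-of-supermartingales $L_i(n),U_i(n)$, now using the mean-CVaR versions of $g^L,g^U$; the supermartingale property holds because each factor still has conditional mean $\le 1$, and the PAC-Bayes / Donsker–Varadhan step that lower-bounds the mixture by $\exp\{N_i\KLinf - O(\log N_i)\}$ is insensitive to the precise form of the integrand. The forced-exploration-plus-tracking sample-complexity argument of \cite{garivier2016optimal}, as adapted in the CVaR analysis using only upper hemicontinuity of $t^*$ and the projection map $\tilde\Pi$, is then literally unchanged.

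The step I expect to be the main obstacle is re-deriving the dual representations (the analogues of Theorem~\ref{th:klinfDual} and Proposition~\ref{prop:jointproj}) with the extra mean term, and in particular re-checking the support and uniqueness statements: one must confirm that the optimal dual maximizer is still unique, that $\tfrac{d\kappa^*}{d\eta}=(g^{U/L})^{-1}$ on $\Supp(\eta)$, and that $\kappa^*$ adds mass at only boundedly many points outside $\Supp(\eta)$ (the bound may change from $2$ to $3$, or from $1$ to $2$, because of the additional linear constraint coming from the mean — this is a KKT/complementary-slackness bookkeeping exercise, analogous to how an extra moment constraint adds one atom). A secondary, more cosmetic point is pinning down the correct compact interval $D$ for the modified metric and verifying that, as in Remark~\ref{rem:DiscontinuityOfKLinf}, continuity fails only at its two endpoints, so that restricting $\mu$ to $\mathcal M^o$ suffices; this is where one checks that $\alpha_1>0$ (strictly) is genuinely needed for the metric to have a two-sided bound. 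None of this requires new ideas beyond those already deployed for CVaR, which is exactly why the theorem is stated informally and its proof relegated to Appendix~\ref{app:Mean-CVaR}; the content is a careful but routine re-run of the CVaR development with one additional Lagrange multiplier throughout.
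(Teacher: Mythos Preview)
Your proposal is correct and tracks the paper's own treatment in Appendix~\ref{app:Mean-CVaR} almost exactly: establish the range of the new metric $o(\eta)=\alpha_1 m(\eta)+\alpha_2 c_\pi(\eta)$ on $\mathcal L$, re-derive the dual representations of $\KLinfL$ and $\KLinfU$, and then observe that compactness of the dual regions, joint continuity (via uniform integrability of $\mathcal L$, so $m(\cdot)$ is weakly continuous), the mixture-supermartingale concentration, and the tracking sample-complexity argument all port over verbatim.

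One small correction to your expected obstacle: you anticipate ``one additional Lagrange multiplier throughout'' and a possible increase in the number of atoms of $\kappa^*$ outside $\Supp(\eta)$. In fact the paper keeps the same multiplier count as in the pure-CVaR case. The constraint is on the \emph{single} scalar $\alpha_1 m(\kappa)+\alpha_2 c_\pi(\kappa)$, so after inserting the min/max formulation of $c_\pi$ the combined mean-plus-CVaR constraint is still one linear (in $\kappa$) inequality per fixed $x_0$, and the resulting $g^L$ and $g^U$ have two and three dual variables respectively, just as before (see the explicit forms of $\mathcal S_2$ and $\mathcal S_5$ in Appendix~\ref{app:Mean-CVaR}). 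Consequently the atom counts do not increase. This does not affect the correctness of your plan, only the bookkeeping you flagged as the main obstacle.
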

The proof of this theorem parallels that for CVaR above. We give the formal statement with proof-details in Appendix \ref{app:Mean-CVaR}.

\subsection{The VaR problem}\label{SubSec:var}
In this section we present the main ideas for an analogous approach for the optimum VaR-problem. Here, we will not impose any conditions on the arm-distributions, as the VaR lower bound is defined without it, i.e., $\mathcal L = \mathcal P(\Re)$. For a probability measure \( \eta \), let \( F_\eta(y)\), denote its CDF evaluated at \(y\) and $F^-_\eta(y) = \lim_{z \uparrow y} F_\eta(z)$ denote the left limit of the CDF. Moreover, for  \(r,q\in(0,1)\) let \( d_2(r,q) \) denote the \( \KL \) divergence between the Bernoulli random variables with mean \(r\) and \(q\). For \( y\in\Re \), $\KLinfL(\eta,y)$ and $\KLinfU(\eta,y)$ be defined as in (\ref{eq:KLinf}), with VaR constraints, instead. 

\begin{lemma}\label{lem:DualVarKLinfs}
	$\KLinfL(\eta,y) = d_2(\min\lrset{F_\eta(y), \pi},\pi )$ and $\KLinfU(\eta,y)  = d_2(\max\lrset{F^-_\eta(y), \pi},\pi).  $
\end{lemma}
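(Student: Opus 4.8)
The plan is to unwind the definitions of $\KLinfL$ and $\KLinfU$ in~(\ref{eq:KLinf}) (now with VaR constraints and $\mathcal L = \mathcal P(\Re)$) and reduce each to a one-dimensional Bernoulli KL-projection. First I would recall that $x_\pi(\kappa) = \min\{z : F_\kappa(z) \ge \pi\}$, so the constraint ``$x_\pi(\kappa) \le y$'' is equivalent to ``$F_\kappa(y) \ge \pi$'' (if the CDF has reached level $\pi$ by $y$, the VaR is at most $y$, and conversely), and ``$x_\pi(\kappa) \ge y$'' is equivalent to ``$F_\kappa(z) < \pi$ for all $z < y$'', i.e.\ ``$F^-_\kappa(y) \le \pi$''. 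Thus
\[
\KLinfL(\eta,y) = \inf_{\kappa : F_\kappa(y) \ge \pi} \KL(\eta,\kappa), \qquad
\KLinfU(\eta,y) = \inf_{\kappa : F^-_\kappa(y) \le \pi} \KL(\eta,\kappa).
\]

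Next I would use the standard data-processing / contraction argument for KL under the binary partition of $\Re$ determined by $y$. For $\KLinfL$: partition $\Re$ into $A = (-\infty, y]$ and $A^c = (y,\infty)$. Pushing $\eta$ and $\kappa$ forward through the map $\mathbf 1\{X \le y\}$ gives Bernoulli laws with means $F_\eta(y)$ and $F_\kappa(y)$ respectively, and by the data-processing inequality $\KL(\eta,\kappa) \ge d_2(F_\eta(y), F_\kappa(y))$. Since the feasible $\kappa$'s are exactly those with $F_\kappa(y) \ge \pi$, minimizing the right side over $q := F_\kappa(y) \in [\pi, 1]$ gives $d_2(\min\{F_\eta(y),\pi\}, \pi)$ — because $q \mapsto d_2(F_\eta(y), q)$ is decreasing on $[F_\eta(y),1]$ (hence minimized at $q = \pi$ when $F_\eta(y) \le \pi$) and is $0$ already at $q = F_\eta(y)$ when $F_\eta(y) \ge \pi$ (feasible, so the inf is $0 = d_2(\pi,\pi)$ reading $\min\{F_\eta(y),\pi\} = \pi$). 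For the matching upper bound on $\KLinfL$, I would exhibit a feasible $\kappa$ achieving $d_2(\min\{F_\eta(y),\pi\},\pi)$: take $\kappa$ to be $\eta$ conditioned on each side of $y$ but with the masses of $A$ and $A^c$ reweighted to $(\pi, 1-\pi)$ (when $F_\eta(y) < \pi$; otherwise $\kappa = \eta$ works). A direct computation of $\KL(\eta,\kappa)$ for this tilting gives exactly the Bernoulli KL between the side-masses, i.e.\ $d_2(F_\eta(y),\pi)$, with the two absolutely-continuous pieces contributing nothing. The $\KLinfU$ case is symmetric using the partition $A = (-\infty, y)$, $A^c = [y,\infty)$, whose $\eta$-mass is $F^-_\eta(y)$; the constraint is $F_\kappa^-(y) \le \pi$, i.e.\ the $A$-mass is at most $\pi$, so we minimize $q \mapsto d_2(F^-_\eta(y), q)$ over $q \in [0,\pi]$, yielding $d_2(\max\{F^-_\eta(y),\pi\},\pi)$.

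The main obstacle I anticipate is not the inequality direction (data processing is immediate) but the \emph{attainability}/tightness direction: one must check that the two-sided reweighting of $\eta$ really is a valid probability measure with the required VaR, that $\KL(\eta,\kappa)$ is finite and collapses exactly to the Bernoulli term (the conditional pieces of $\eta$ and $\kappa$ agree on each half, so $d\kappa/d\eta$ is piecewise constant and the integral is elementary), and — a genuine subtlety — that the left-limit constraint ``$F^-_\kappa(y)\le\pi$'' is handled correctly at the boundary, since $x_\pi$ uses a $\min$ and right-continuity of CDFs. In the edge case $F^-_\eta(y) \le \pi \le F_\eta(y)$ for the $\KLinfU$ problem, $\eta$ itself is feasible (or feasible after moving an infinitesimal mass), so $\KLinfU(\eta,y) = 0 = d_2(\pi,\pi)$, consistent with $\max\{F^-_\eta(y),\pi\} = \pi$; I would make sure the reweighting construction degenerates gracefully there. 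These are all routine once set up, so the proof is short, with the care concentrated in the boundary bookkeeping for the half-open versus closed intervals.
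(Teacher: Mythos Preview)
Your proposal is correct and takes essentially the same approach as the paper's proof: the paper also rewrites the VaR constraints as $F_\kappa(y)\ge\pi$ and $F^-_\kappa(y)\le\pi$, obtains the lower bound $\KL(\eta,\kappa)\ge d_2(F_\eta(y),F_\kappa(y))$ by splitting the KL integral at $y$ and applying Jensen's inequality (which is exactly the data-processing inequality you invoke, unpacked), and then exhibits the same piecewise-reweighted $\kappa^*$ (scaling $\eta$ on each side of $y$ to hit masses $\pi$ and $1-\pi$) to show attainment. The only cosmetic difference is that the paper writes out the Jensen/conditioning step explicitly rather than citing data processing, and it spells out the construction of $\kappa^*$ including the degenerate case where one side of the partition carries no $\eta$-mass.
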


Unlike in the CVaR-problem, we show that \( \KLinfL \) and \( \KLinfU \) for the \(\var\) problem are not jointly continuous functionals (see Remark \ref{rem:discontinuityForVarKLinf}). The discontinuity occurs at \(y\) being the jump points of \(F_\eta\) in Lemma \ref{lem:DualVarKLinfs} above. However, we prove in Appendix \ref{app:var} (Corollary \ref{cor:cont_inner_inf}) that the set of optimal proportions, \(t^*\), is still upper-hemicontinuous and convex. 

The algorithm for CVaR with $\KLinfU$ and $\KLinfL$ replaced by those in the lemma above, and setting $\beta(t,\delta) = 6\log\lrp{1+\log \frac{t}{2}} + \log \frac{K-1}{\delta} + 8 \log\lrp{1+\log\frac{K-1}{\delta}},$ we get an algorithm for VaR-problem.  

\begin{theorem}[Informal]\label{th:var}
	The proposed algorithm for the VaR-problem is \( \delta\)-correct and asymptotically optimal. 	
\end{theorem}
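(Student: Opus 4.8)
The plan is to mirror the structure of the proof of Theorem~\ref{th:DeltaAndSample} for CVaR, using the closed-form expressions for $\KLinfL$ and $\KLinfU$ from Lemma~\ref{lem:DualVarKLinfs} in place of the duals from Theorem~\ref{th:klinfDual}, and then patching the two places where the VaR setting genuinely differs: the failure of joint continuity of the projection functionals, and the need for a different deviation bound. For $\delta$-correctness, I would first reduce, exactly as in \eqref{eq:ErrorProbability}, the error probability to $\sum_{i\ge 2}\mathbb{P}\big(\exists n: N_i(n)\KLinfU(\hat\mu_i(n),x_\pi(\mu_i)) + N_1(n)\KLinfL(\hat\mu_1(n),x_\pi(\mu_1)) \ge \beta(n,\delta)\big)$. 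By Lemma~\ref{lem:DualVarKLinfs} each summand is a sum of two Bernoulli-KL terms evaluated at the empirical CDFs at the true VaR thresholds; these are precisely the quantities controlled by the standard self-normalized / method-of-mixtures deviation inequalities for Bernoulli observations (e.g.\ the $d_2$ concentration results of \cite{kaufmann2016complexityBestArm, garivier2016optimal}, or Lemma via the peeling argument). With the stated threshold $\beta(t,\delta) = 6\log(1+\log\frac t2) + \log\frac{K-1}{\delta} + 8\log(1+\log\frac{K-1}{\delta})$, a union bound over the $K-1$ suboptimal arms gives total error at most $\delta$. The only subtlety here is that $\KLinfU$ uses the left limit $F^-_\eta$ while $\KLinfL$ uses $F_\eta$, so one must track the empirical CDF at the threshold from both sides; since the true $\mu_i$ are fixed this causes no real difficulty.

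For asymptotic optimality I would follow the $\cite{garivier2016optimal}$ template as adapted in Section~\ref{SubSec:thG}: show that under forced exploration every arm is sampled infinitely often, so $\hat\mu_a(n)\xRightarrow{D}\mu_a$ a.s.; then use upper-hemicontinuity and convexity of $t^*$ (here the VaR analogue, Corollary~\ref{cor:cont_inner_inf}) together with the tracking rule to conclude that the empirical sampling proportions converge to $t^*(\mu)$; then show $\liminf_n Z_i(n)/n \ge V(\mu)$ on the event that arm $1$ is eventually identified as empirically best, which forces $\tau_\delta \le \beta(\tau_\delta,\delta)/V(\mu)(1+o(1))$ and hence $\limsup_{\delta\to0}\Exp{\tau_\delta}/\log(1/\delta)\le 1/V(\mu)$. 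The matching lower bound is \eqref{eq:LB} with the VaR-constrained functionals (Remark~\ref{remark:unbounded_sc}), so optimality follows.

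The main obstacle is that, as noted after Lemma~\ref{lem:DualVarKLinfs}, $\KLinfL(\eta,y)$ and $\KLinfU(\eta,y)$ are \emph{not} jointly continuous in $(\eta,y)$ under the L\'evy metric — the discontinuities occur exactly at the atoms of $F_\eta$. Consequently I cannot invoke Berge's theorem directly as in Lemma~\ref{lem:ContKLinf}/\ref{lem:Optw}; instead the argument must show that the relevant thresholds appearing in the inner infimum $\inf_{x\le y}$ in $Z_i(n)$ stay away from the atoms of the limiting distributions $\mu_a$, or that one-sided (semi-)continuity of $\KLinfU$ (upper semicontinuity, using $F^-$) and $\KLinfL$ (lower semicontinuity, using $F$) is exactly enough to push the limit through the $\sup_t\min_{j}\inf_{x\le y}$ structure. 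This is what Corollary~\ref{cor:cont_inner_inf} is designed to deliver: that the \emph{value} $V(\cdot)$ and the optimizer correspondence $t^*$ remain upper-hemicontinuous and convex-valued despite the pointwise discontinuity of the integrands. Granting that corollary, the rest of the sample-complexity proof is the same bookkeeping as in Appendix~\ref{App:DeltaCorrectnessAndSampleComplexity}; the real work — and the place I would spend most of the effort — is establishing that the semicontinuity of the $d_2$-based functionals survives the two nested infima and the projection/tracking steps. Full details are deferred to Appendix~\ref{app:var}.
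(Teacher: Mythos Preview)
Your proposal is correct and follows essentially the same architecture as the paper's proof in Appendix~\ref{app:var}: reduce $\delta$-correctness to Bernoulli-KL deviations via Lemma~\ref{lem:DualVarKLinfs} and invoke self-normalised concentration (the paper cites \cite{kaufmann2018mixture}), then obtain asymptotic optimality by the Garivier--Kaufmann template using upper-hemicontinuity of $t^*$ from Corollary~\ref{cor:cont_inner_inf}.

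One technical point worth flagging: your sketch of how to salvage continuity---either by keeping thresholds away from atoms, or by combining one-sided semicontinuities of $\KLinfU$ and $\KLinfL$---is not quite the mechanism the paper uses. The paper works in the \emph{Kolmogorov} metric (not L\'evy), and observes that the map $(\mu,t)\mapsto g_b(\mu,t,y)$ is \emph{uniformly} continuous in $(\mu,t)$, uniformly over $y$, because $d_2(\cdot,\pi)$ is uniformly continuous on $[0,1]$ and $|F_{\mu}(y)-F_{\mu'}(y)|\le d_K(\mu,\mu')$ for every $y$. Combined with lower-semicontinuity in $y$ for fixed $(\mu,t)$ (Lemma~\ref{lem:app_lsc}), an elementary lemma then gives joint continuity of $\inf_y g_b(\mu,t,y)$ directly, without needing to reason about atoms or match semicontinuity directions. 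This also means no projection $\Pi$ is needed here, since $\mathcal L=\mathcal P(\Re)$. Otherwise your plan is on target.
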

We refer the reader to Appendix \ref{app:var} for a detailed discussion of the \(\var \)-problem and proofs.

\subsection{Tight $\KLinf$-based confidence intervals for CVaR}\label{Sec:ComparisonWithP}
We now present tight anytime-valid confidence interval for CVaR of distributions in $\mathcal L$. Let $\hat{\eta}_n$ denote the empirical distribution corresponding to $n$ samples from $\eta\in\cal L$. Our proposed upper \((U_n)\) and lower \( (L_n) \) confidence intervals for $c_\pi(\eta)$ are of the form \(U_n =\max\lrset{x\in\Re:~ n\KLinfU(\hat{\eta}_n,x) \le C}\) and \(L_n = \min\lrset{x\in\Re: ~ n\KLinfL(\hat{\eta}_n, x) \le C},\) for appropriate threshold, $C\approx \log\delta\inv + 3\log n$. 
Let $\hat{x}_{\pi,n}$ denote the $\pi^{th}$ quantile for $\hat{\eta}$. Recall that the popular truncation based estimator for $c_\pi(\eta)$ is given by $\hat{c}_{\pi,n} = n\inv(1-\pi)\inv \sum_i X_i\mathbbm{1}( \hat{x}_{\pi,n}\le X_i \le u_n) $, for appropriately chosen truncation levels, $u_n$ (see, \cite{pmlr-v119-l-a-20a}). Observe that there are 2 sources of error in this estimator, first, the estimation of the quantile, and second, the estimation of the tail-expectation. On the other hand, our confidence intervals do not rely on estimation of the true quantile, $x_\pi$. In Appendix \ref{app:confidence_intervals}, we show that even given the correct estimation of $\hat{x}_{\pi,n}$, classical confidence intervals for $\hat{c}_{\pi,n}$ are typically wider than those based on $\KLinfU$ and $\KLinfL$.


\subsection{Numerical Results}\label{Sec:Numerics}
This is only a brief teaser section on the experiments, which are detailed in Appendix~\ref{sec:experiment.details}. We are interested in the question whether the asymptotic sample complexity result of Theorem~\ref{th:DeltaAndSample} is representative at reasonable confidence levels $\delta$. Whether this is the case or not differs greatly between pure exploration setups. \cite{garivier2016optimal} see state-of-the-art numerical results in Bernoulli arms for Track-and-Stop with $\delta=0.1$, while \cite{DBLP:conf/nips/DegenneKM19} present a Minimum Threshold problem instance where the Track-and-Stop asymptotics have not kicked in yet at $\delta = 10^{-20}$. Our experiments confirm that our approach is indeed practical at moderate confidence $\delta$.

In our experiments we implement a version of Track-and-Stop including C-tracking and forced exploration and apply it to Fisher-Tippett ($F(\mu,\sigma,\gamma)$), Pareto ($P(\mu,\sigma,\gamma)$), and mixtures of Fisher-Tippett arms (these heavy tailed distributions arise in extreme value theory).

\begin{wrapfigure}[19]{r}{.6\textwidth}
	\centering
	\includegraphics[width=.55\textwidth]{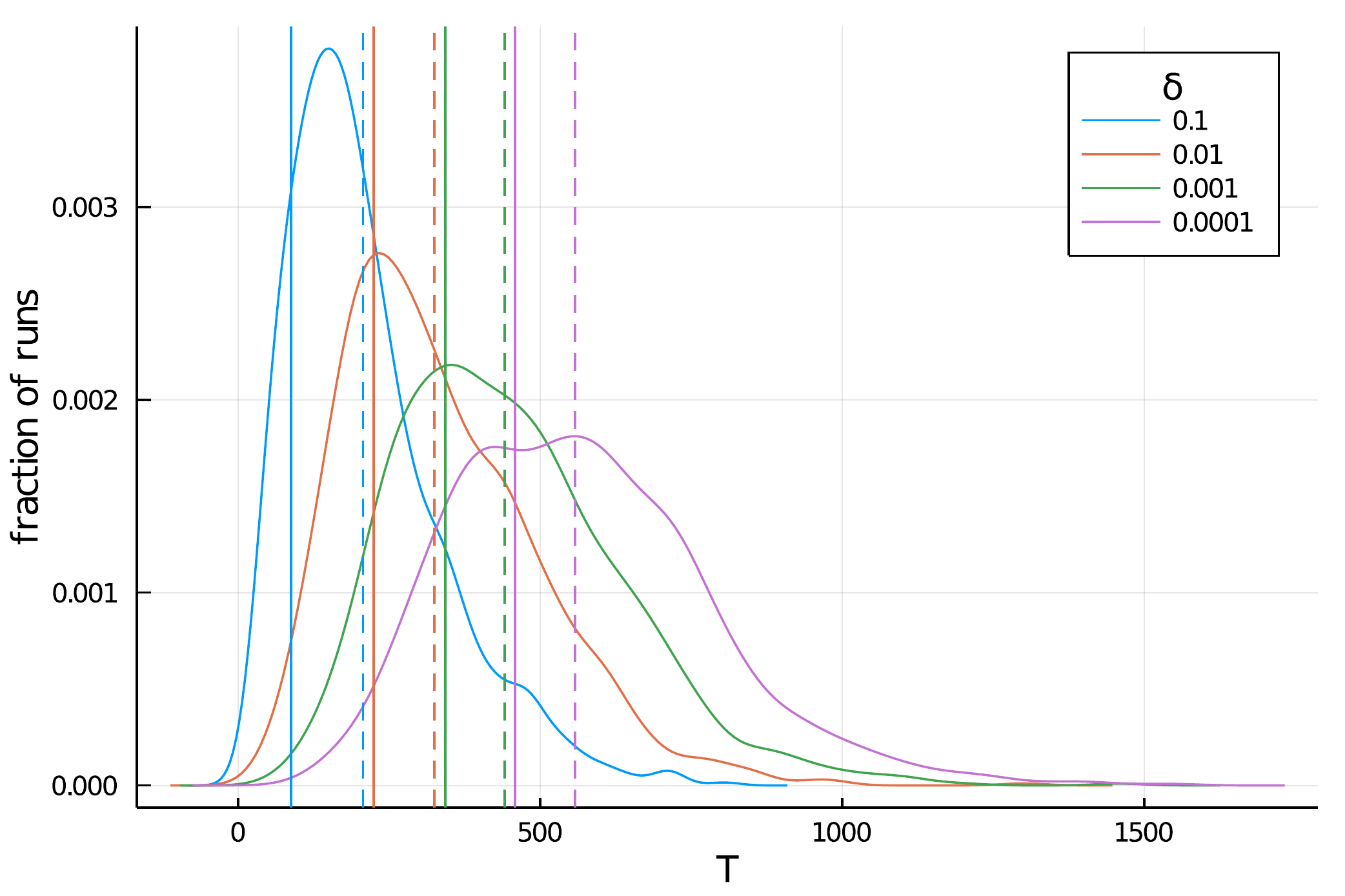}
	\caption{Histogram of stopping times based among 1000 runs on 3 Fisher-Tippett arms, as a function of confidence $\delta$. Vertical bars indicate the lower bound \eqref{eq:LB} (solid), and a version adjusted to our stopping threshold \eqref{eq:StoppingRule} (dashed).}\label{fig:hists}
\end{wrapfigure}

Figure~\ref{fig:hists} shows the distribution of the stopping time as a function of $\delta$ in an experiment with three arms: arm 1 is a uniform mixture of $F(-1, 0.5, 0.4)$ and $F(-3, 0.5, -0.4)$, arm 2 is $P(0,0.2,0.55)$ and arm 3 is $F(-0.5, 1, 0.1)$ with respective CVaRs at quantile $0.6$ being $-0.1428$, $0.974$ and $1.547$. We select $\epsilon = 0.7$ and $B=4.5$. This is a moderately hard problem of complexity $V\inv(\mu)^* = 49.7$. We conclude that even at moderate $\delta$ the average sample complexity closely matches the lower bound, especially after adjusting it for the lower-order terms in the employed stopping threshold $\beta(n, \delta)$. This demonstrates that our asymptotic optimality is in fact indicative of the performance in practice.

We do additional experiments to show the dependence of performance of the algorithm on number of arms and input parameter, $B$. We see that the average stopping time of our algorithm increases linearly in number of arms. Moreover, the sample complexity is also sensitive to $B$, indicating importance of correctly estimating it. We refer the reader to Appendix \ref{sec:experiment.details} for details of these experiments.

{\noindent \bf Conclusion: }
We developed asymptotically optimal algorithms that identify the arm with the minimum risk, measured in terms of \(\cvar \), \(\var\), or a conic combination of mean and CVaR. Our algorithms operate in non-parametric settings with possibly heavy-tailed distributions. Although similar plug-and-play algorithms have been developed in simpler settings, our algorithms for tail-risk measures require more nuanced analysis. The techniques developed may be generalizable to a much broader class of problems. 

\bibliography{BibTex}

\newpage
\appendix

\section{Equivalence in canonical SPEF setting}\label{Sec:SPEFmeanEquivalence}
In this section, we will show that \(x_\pi(\eta_\theta)\) and \(c_\pi(\eta_\theta) \) are monotonic functions of \(\theta \) when \(\eta_\theta \) belongs to a canonical SPEF with  parameter \(\theta \), as is the mean. Thus, the problem of identifying the best-(CVaR/VaR/mean-CVaR) arm in this setting, is equivalent to identifying the arm with minimum mean. 

Let \(\nu\) be the reference measure on \(\Re\) for the SPEF, and \(\Theta\subset \Re\) be the parameter space, i.e., \(\Theta =\{ \theta\in\Re: \int_\Re \exp\lrp{\theta y} d\nu(y) < \infty \}\). Then, for \(\eta_\theta \) in the SPEF, for \(y\in\Re\), $\lrp{d\eta_\theta/d\nu}(y) = \exp\{\theta y - A(\theta)\}$, where \(A(\theta)\) is the normalizing factor. By direct computation, it can be verified that \(A{'}(\theta)\) equals the mean, and \(A{''}(\theta)\) equals the variance of \(\eta_\theta\). Hence, mean is an increasing function of \(\theta\) (\cite{cappe2013kullback}). 

Next, for \(a\in\Re\), define \(\bar{F}_{\theta}(a) = \int_{y\geq a} d\eta_\theta(y) \) to be the tail-CDF of \(\eta_\theta\). Clearly, \(\bar{F}_\theta(a)\) increases on increasing \( \theta \) since 
${d \bar{F}_\theta(a)}/{d \theta} = \bar{F}_\theta(a)\lrp{ \E{\eta_\theta}{X | X\geq a} - m(\eta_\theta)},$ which is positive. The fact that \(x_\pi(\eta_\theta) \) is a non-decreasing function of \(\theta \) now
follows from its definition.  From (\ref{eq:cvar_min&eq:cvar_acerbi}), we see that \(c_\pi(\eta_\theta)\) is non-decreasing in \(\theta\). 

\section{Bounds on \texorpdfstring{\(\cvar\)}{CVaR} and \texorpdfstring{\(\var\)}{VaR} for distributions in \texorpdfstring{\(\cal L \)}{L}: Proof of Lemma \ref{lem:boundOnVarAndCVarForL}}\label{app:boundOnVarAndCVarForL}
We first recall the definitions and different representations of \( \cvar\), which will be useful in this section.  Given a probability measure \(\kappa\), let  \(x_\pi(\kappa) \) and \(c_\pi(\kappa)\) denote its \(\var\) and \(\cvar \) at level \(\pi \). Then, recall that 
\begin{align*} 
c_\pi(\kappa) &= \frac{F(x_\pi(\kappa)) - \pi}{1-\pi} x_\pi(\kappa) + \frac{1}{1-\pi} \int\nolimits_{\Re} \lrp{y-x_\pi(\kappa)}_+ d\kappa(y)\\
&=\min\limits_{z\in\Re} ~ \lrset{ z + \frac{1}{1-\pi} \E{\kappa}{(X-z)_+}},
\end{align*}
where the infimum of the set of minimizers in the second representation, is \(\var \) at level \(\pi\) for \(\kappa \). 

Consider a probability measure \(\eta \in \cal L\). Recall that for \(\epsilon > 0\), \(\mathcal L = \lrset{\eta\in \mathcal P(\Re) : \E{\eta}{f(X) \le B} }\), where \(f(y) = \abs{y}^{1+\epsilon}\), and  
$$C = \left[ -f\inv(B\pi\inv), f\inv(B(1-\pi)\inv) \right] \quad \text{ and } \quad D = \left[-f\inv(B), f\inv(B(1-\pi)\inv)\right].$$
Let \(x^-_\pi(\eta) < x_\pi(\eta)\). Then for \(x_\pi(\eta) < 0\),  
\[ \pi \leq \int\nolimits_{-\infty}^{x^-_\pi(\eta)} d\eta(y)  = \int\nolimits_{-\infty}^{x^-_\pi(\eta)} \frac{f({y})}{f({y})} d\eta(y) \leq \int\nolimits_{-\infty}^{x^-_\pi(\eta)} \frac{f({y})}{f({x_\pi(\eta)})} d\eta(y) \leq  \frac{B}{f({x_\pi(\eta)})}, \]
and for \(x_\pi(\eta) \geq 0\),
\[ 1-\pi \leq \int\nolimits_{x^-_\pi(\eta)}^{\infty} d\eta(y) = \int\nolimits_{x^-_\pi(\eta)}^{\infty} \frac{f({y})}{f({y})} d\eta(y) \leq \int\nolimits_{x^-_\pi(\eta)}^{\infty} \frac{f({y})}{f({x_\pi(\eta)})} d\eta(y)  \leq \frac{B}{f({x_\pi(\eta)})}. \]

Combinig the two, we  get $-f\inv(B{\pi}\inv) \leq x_\pi(\eta) \leq f\inv(B\lrp{1-\pi}\inv),$ where \( f\inv(c)\) is defined as \(\max\lrset{y: f(y) = c}\), which equals $c^{\frac{1}{1+\epsilon}}$. To get a bound on \(c_\pi(\eta), \) consider the following inequalities. 
\begin{align*} 
B \geq \E{\eta}{f\lrp{X}} &\geq \lrp{F(x_\pi(\eta)) - \pi} f\lrp{x_\pi(\eta)} + \int\limits_{x_\pi(\eta)}^{\infty} f\lrp{y}d\eta(y)   \\
&= (1-\pi)\lrp{\frac{F(x_\pi(\eta)) - \pi}{1-\pi} f({x_\pi(\eta)}) + \frac{1}{1-\pi}\int\limits_{x_\pi(\eta)}^{\infty} f\lrp{y} d\eta(y) },
\end{align*}
where the first inequality follows since \(\eta\) is in \(\cal L\), and the second follows since \(f\) is non-negative. Furthermore, since \(f \) is convex, using conditional Jensen's inequality, the above can be bounded from below by 
\[ (1-\pi) f\lrp{{ \frac{F(x_\pi(\eta)) - \pi}{1-\pi} x_\pi(\eta) + \frac{1}{1-\pi}\int\limits_{x_\pi(\eta)}^{\infty} yd\eta(y) }} , \]
which is \((1-\pi)f\lrp{c_\pi(\eta)}. \) Thus we have, $-f\inv({B}\lrp{1-\pi}\inv) \leq c_\pi(\eta) \leq f\inv(B\lrp{1-\pi}\inv).$ 
However, the lower bound for \(c_\pi(\eta)\) obtained above can be further tightened. Recall that 
\[ \min\limits_{\eta\in\cal L} ~ c_\pi(\eta) ~~ =  \min\limits_{z\in C}~ \min\limits_{\eta\in\cal L}~ \lrset{z + \frac{1}{1-\pi} \E{\eta}{(X-z)_+}} .\]
In the inner minimization problem in r.h.s. above, the objective is minimizing expectation under \(\eta \) of convex functions of \(X\), under the constraint that expectation under \(\eta \) of a convex function being smaller that \(B\). Thus, the minimizer concentrates at a single point, i.e., the minimizer \(\eta = \delta_x \), for some \(x\in\Re \) such that \(f\lrp{x}\leq B \). Thus, the above problem equals
\[ \min\limits_{z\in C}~~ \min\limits_{x\in \left[-f\inv\lrp{B},f\inv\lrp{B}  \right]}~~ \lrset{z + \frac{1}{1-\pi} (x-z)_+},\]
which is increasing in \(x\). Thus, at optimal \(x= -f\inv(B)\), it equals
$$ \min\limits_{z\in C} \max\lrset{ z , \frac{-f\inv\lrp{B}-\pi z}{1-\pi}} . $$
Clearly, the minimum is attained at \(z =- f\inv(B) \), with the optimal value being \( -f\inv(B) \). Combining with the previous bounds on \(c_\pi(\eta)\), we have that for \(\eta\in\cal L\), \(c_\pi(\eta) \in D \).

\section{Details of proofs in Section \ref{Sec:LB}}\label{app:LB}
We first review some notation that will be useful in this section. Recall that for a non-negative constant \(B\), arm distributions belong to class $ \cal L$ which equals $ \lrset{\eta\in\mathcal{P}(\Re) : \E{\eta}{f\lrp{{X}}} \leq B }, $
where \(f(x) = \abs{x}^{1+\epsilon}\), for some \(\epsilon>0\). Define \(f\inv(c) = \max\lrset{y : f(y) = c} = c^{\frac{1}{1+\epsilon}}\). 

We denote by \(\mathcal{M}\) the collection of all \(K\)-vectors of distributions, each belonging to \(\cal L\) and by \(\mathcal{A}_j\) the collection of vectors in \(\mathcal{M}\) with arm \(j\) having the minimum \(\cvar\). Furthermore, for \(\eta\in\mathcal{P}(\Re)\) and \(\pi\in(0,1)\), \(c_\pi(\eta) \) and \(x_\pi(\eta) \) denote the \(\cvar \) and \(\var \) at confidence level \(\pi \), for measure \(\eta \). Also, for \(x\in\Re\), we define the \(\KL\) projection functionals  
\[ \KLinfU(\eta,x) := \inf\limits_{\substack{\kappa\in\mathcal L:~ c_\pi(\kappa) \geq x }} \KL(\eta,\kappa) \quad \text{ and } \quad \KLinfL(\eta,x) = \inf\limits_{\substack{\kappa\in\mathcal L:~ c_\pi(\kappa) \leq x}} \KL(\eta,\kappa).  \]
Furthermore, recall that $$ D = \left[ -f\inv(B), f\inv\lrp{B(1-\pi)\inv} \right] \quad \text{ and }\quad C = \left[ -f\inv(B\pi\inv), f\inv\lrp{B(1-\pi)\inv} \right],$$
and \(D^o\) and \(C^o\) denote the interior of sets \(D\) and \(C\), respectively. For \(v\in D^o\), \( x_0 \in C \), \(\bm{\lambda} \in \Re^3\), \(\bm{\gamma} \in \Re^2 \), and \(X\in\Re\),
\[ g^U( X,\bm{\lambda},v ) := { 1 + \lambda_1 v - \lambda_2(1-\pi) + \lambda_3\lrp{f({X})-B} - \lrp{ \frac{\lambda_1 X}{1-\pi} - \lambda_2 }_+  }, \numberthis \label{appB:eq:gu} \]
and 
\[ g^L(X,\bm{\gamma},v,x_0) :=  { 1 -\gamma_1 \lrp{v-x_0 - \frac{(X-x_0)_{+}}{1-\pi}} -\gamma_2(B-f\lrp{{X}})}.  \numberthis \label{appB:eq:gl}\]
Furthermore,
\[ \hat{S}(v) := \lrset{\lambda_1 \geq 0, \lambda_2\in\Re, \lambda_3\geq 0: \forall x \in\Re,~ {g^{U}(x,\bm{\lambda},v)} \geq 0},  \numberthis \label{appB:eq:shat} \]
and 
\[ \mathcal{R}_2(x_0,v) := \lrset{\gamma_1\geq 0, \gamma_2 \geq 0, \forall y\in \Re,~ {g^L(y,\bm{\gamma},x_0,v)} \geq 0 }. \numberthis \label{appB:eq:r2} \]
Later, in Theorem \ref{th:klinfDual} we show that for \(\eta \in \mathcal{P}(\Re)\),
\[ \KLinfU(\eta,v) = \max\limits_{\bm{\lambda} \in \hat{S}(v)} \E{\eta}{\log\lrp{ g^{U}(X,\bm{\lambda},v)}}\] 
and 
\[\KLinfL(\eta,v) = \min\limits_{x_0\in C}~\max\limits_{\bm{\gamma}\in\mathcal{R}_2(x_0,v)} \E{\eta}{\log\lrp{g^L(X,\bm{\gamma},x_0,v)}}.\]

\subsection{Proof of Lemma \ref{lem:LBSimplification}}\label{app:proof:lem:LBSimplification}
Recall that arm \(1\) is the arm with minimum \( \cvar\) in \(\mu\), and \(V(\mu) = \sup\limits_{t\in\Sigma_K}\inf\limits_{\nu\in\mathcal{A}^c_1} \sum\limits_{i=1}^K t_i \KL(\mu_i,\nu_i),\) where \(\mathcal{A}^c_1 = \mathcal{M} \setminus \mathcal{A}_1 \). Clearly, the inner optimization problem satisfies
\[ \inf\limits_{\nu\in\mathcal{A}^c_1} \sum\limits_{i=1}^K t_i \KL(\mu_i,\nu_i) = \min\limits_{j\ne 1} \inf\limits_{\nu\in\mathcal{A}_j} \sum\limits_{i=1}^K t_i \KL(\mu_i,\nu_i). \numberthis \label{eq:LBIntermediate1} \]
Next, for \(\mu\in\mathcal{M}\) the  infimum in the expression in r.h.s. above is attained by \( \nu \in \mathcal{A}_j \) such that \(\nu_i = \mu_i \) for all arms \(i\) not in \(\lrset{1,j} \), as otherwise, the value of the summation can be decreased by setting them equal to \(\mu_i\).  Thus, 
\[ \inf\limits_{\nu\in\mathcal{A}_j} \sum\limits_{i=1}^K t_i \KL(\mu_i,\nu_i) =  \inf\limits_{\substack{ \nu_1,\nu_j\in\mathcal L, ~ x\le y, \\ c_\pi(\nu_j) \leq x,~ c_\pi(\nu_1) \geq y}} \lrset{t_1\KL(\mu_1,\nu_1) + t_j\KL(\mu_j,\nu_j)}.  \] 
Now, from the definition of \(\KLinfL\) and \(\KLinfU \), the r.h.s. in above equation equals 
\[ \inf\limits_{x\leq y} \lrset{ t_1 \KLinfU(\mu_1,y) + t_j \KLinfL(\mu_j,x)}. \]
Combining this with (\ref{eq:LBIntermediate1}) gives the desired result. \(\square\)

\subsection{Towards proving Lemma \ref{lem:ContKLinf}: Continuity of the KL-projection functionals}\label{app:ContKLinf}
We first establish the properties of \( \cal L \) stated in the Lemma. 
\paragraph{Uniform integrability of $\cal L$:}Since the probability measures in \(\cal L  \) have a uniformly bounded \(p^{th}\) moment for a fixed \( p> 1 \), it is a uniformy integrable collection (\cite{williams1991probability}). 

\paragraph{Compactness of $\cal L$: } It is sufficient to show that \( \cal L \) is closed and tight. Prohorov's Theorem then gives that it is a compact set in the topology of weak convergence (\cite{billingsley2013convergence}). We first show that it is a closed set. Towards this, consider a sequence \( \eta_n \) of probability measures in \( \cal L \), converging weakly to \(\eta \in \mathcal{P}(\Re) \). By Skorohod's Representation Theorem (see, \cite{billingsley2013convergence}), there exist random variables \(Y_n, Y \) defined on a common probability space, say \((\Omega, \mathcal{F}, q) \), such that \( Y_n \sim \eta_n \), \(Y \sim  \eta\), and \(Y_n \xrightarrow{a.s.} Y \). Then, by Fatou's Lemma, 
\[ \E{\eta}{\abs{X}^{1+\epsilon}} = \E{q}{\abs{Y}^{1+\epsilon}} = \E{q}{\liminf\limits_{n\rightarrow \infty}\abs{Y_n}^{1+\epsilon}} \leq \liminf\limits_{n\rightarrow \infty}\E{q}{\abs{Y_n}^{1+\epsilon}} \leq B. \]
Hence, \(\eta\) is in \(\cal L \) and the class is closed in the weak topology. To see that it is tight, consider  
$ K_\epsilon := \left[ -\lrp{2B{\epsilon}\inv}^{\frac{1}{1+\epsilon}}, \lrp{2B{\epsilon}\inv}^{\frac{1}{1+\epsilon}} \right]. $ For \(\eta \in \cal L\), \( \eta\lrp{K^c_\epsilon} \leq \epsilon \).

\paragraph{Convexity of \( \KLinfU(.,x) \):} Consider two measures, \( \eta_1,\eta_2\) in \(\cal L \), and let \( \lambda\in (0,1) \). Let \( \kappa_1 \) be such that \( \KLinfU(\eta_1,x) = \KL(\eta_1, \kappa_1) \). Existence of \( \kappa_1 \) is guaranteed by continuity of \( \KLinfU \) in its arguments, and compactness of the domain of optimization, which follows from Lemma \ref{lem:compactFeasibleRegion} below. Similarly, let \( \kappa_2 \) satisfy \( \KLinf(\eta_2, x) = \KL(\eta_2,\kappa_2) \). Let 
$$\eta_{12} = \lambda \eta_1 + (1-\lambda)\eta_2,\quad\text{ and }\quad \kappa_{12} = \lambda \kappa_1 + (1-\lambda) \kappa_2.$$ 
Clearly, \( \kappa_{12} \) is in \(  \cal L\). Moreover, by concavity of \( c_\pi(.) \), \( c_\pi(\kappa_{12}) \ge \lambda c_\pi(\kappa_1)+(1-\lambda)c_\pi(\kappa_2). \) Then, $ \KLinfU(\eta_{12},x)$ is at most \(\KL(\eta_{12}, \kappa_{12}) \), which, by joint convexity of \(\KL\), is bounded by $\lambda\KL(\eta_1,\kappa_1)+(1-\lambda)\KL(\eta_2, \kappa_2).$
This bound then equals $\lambda\KLinfU(\eta_1, x)+(1-\lambda)\KLinf(\eta_2,x). $

\paragraph{Joint continuity of \(\KLinfU\) and \( \KLinfL\) :  } We show upper- and lower-semicontinuity separately for the \(\KL\) projection functionals restricted to \(\cal L\) (see Lemmas \ref{lem:lscklinf}, \ref{lem:USCLEQ}, and \ref{lem:USCGEQ}). The following results will assist in the proofs of these.

\begin{lemma}\label{lem:contcpiL}
	For \(\eta_n\) and  \(\eta \in \cal L \),  \(c_\pi(\eta_n) \longrightarrow c_\pi(\eta) \) whenever \(\eta_n \xRightarrow{D} \eta \).
\end{lemma}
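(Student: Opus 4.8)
The plan is to reduce the claim to a statement about convergence of expectations of Lipschitz, linearly-growing functions, and then exploit the uniform integrability of $\cal L$ (already established above). Recall the minimization representation of CVaR: for any $\eta\in\mathcal P(\Re)$,
$c_\pi(\eta) = \min_{x_0\in\Re}\lrset{x_0 + \tfrac{1}{1-\pi}\E{\eta}{(X-x_0)_+}}$, with $x_\pi(\eta)$ a minimizer. Write $h(\eta,x_0) := x_0 + \tfrac{1}{1-\pi}\E{\eta}{(X-x_0)_+}$. Since $x_\pi(\eta)\in C$ for every $\eta\in\cal L$ by Lemma~\ref{lem:boundOnVarAndCVarForL} and $C$ is a fixed compact interval, we have $c_\pi(\eta)=\min_{x_0\in C}h(\eta,x_0)$ simultaneously for $\eta$ and for every $\eta_n$. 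The strategy is then: (i) show $h(\eta_n,x_0)\to h(\eta,x_0)$ pointwise in $x_0\in C$; (ii) upgrade to uniform convergence on $C$ via an $\eta$-independent Lipschitz bound; (iii) conclude that the minima converge.

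For step (i), fix $x_0\in C$. The map $x\mapsto (x-x_0)_+$ is continuous and satisfies $(x-x_0)_+\le \abs{x}+\abs{x_0}$, and since $\E{\eta}{\abs{X}^{1+\epsilon}}\le B$ with $1+\epsilon>1$ for every $\eta\in\cal L$, the de la Vallée-Poussin criterion shows the family $\lrset{(X-x_0)_+: X\sim\eta,\ \eta\in\cal L}$ is uniformly integrable. Combining this with $\eta_n\xRightarrow{D}\eta$ and Skorohod's representation theorem (exactly as in the compactness argument above), we obtain $Y_n\sim\eta_n$, $Y\sim\eta$ on a common space with $Y_n\to Y$ a.s.; then $(Y_n-x_0)_+\to (Y-x_0)_+$ a.s., and the Vitali convergence theorem yields $\E{\eta_n}{(X-x_0)_+}\to\E{\eta}{(X-x_0)_+}$, hence $h(\eta_n,x_0)\to h(\eta,x_0)$.

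For step (ii), note that $h(\eta,\cdot)$ is convex, and differentiating under the expectation gives $\partial_{x_0}h(\eta,x_0)=1-\tfrac{1}{1-\pi}\,\eta((x_0,\infty))\in[-\tfrac{\pi}{1-\pi},1]$ (with the usual one-sided modification at atoms of $\eta$), so $h(\eta,\cdot)$ is $L$-Lipschitz on $C$ with $L:=\max\lrset{1,\tfrac{\pi}{1-\pi}}$, a constant independent of $\eta$. Thus $\lrset{h(\eta_n,\cdot)}_n$ is equicontinuous on the compact set $C$, and pointwise convergence of an equicontinuous sequence on a compact set implies uniform convergence. Step (iii) is then immediate: $\abs{c_\pi(\eta_n)-c_\pi(\eta)}=\abs{\min_{x_0\in C}h(\eta_n,x_0)-\min_{x_0\in C}h(\eta,x_0)}\le\sup_{x_0\in C}\abs{h(\eta_n,x_0)-h(\eta,x_0)}\to 0$.

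The main obstacle is the unboundedness of the integrand $(X-x_0)_+$: weak convergence alone does not transfer expectations of unbounded test functions, so the $(1+\epsilon)$-moment bound defining $\cal L$ (and the resulting uniform integrability) is essential precisely at step (i). The only other point needing care is passing the outer minimization over $x_0$ to the limit, which is handled cleanly by restricting to the compact interval $C$ (legitimate by Lemma~\ref{lem:boundOnVarAndCVarForL}) together with the uniform-in-$\eta$ Lipschitz estimate.
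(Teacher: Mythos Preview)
Your proof is correct and essentially parallels the paper's, but the final step differs. Both arguments restrict the outer minimization to the compact interval $C$ via Lemma~\ref{lem:boundOnVarAndCVarForL}, invoke Skorohod's representation, and use the $(1+\epsilon)$-moment bound to pass expectations of $(X-x_0)_+$ through the limit. The paper, however, establishes \emph{joint} continuity of $g(z,\eta)=z+\tfrac{1}{1-\pi}\E{\eta}{(X-z)_+}$ on $C\times\mathcal L$ (allowing $z_n\to z$ simultaneously with $\eta_n\xRightarrow{D}\eta$, via the generalized Dominated Convergence Theorem with dominating sequence $|Y_n|+|z_n|$), and then invokes Berge's Maximum Theorem to conclude. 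You instead prove only pointwise convergence in $x_0$, then recover uniformity via the $\eta$-independent Lipschitz bound $\partial_{x_0}h(\eta,x_0)\in[-\tfrac{\pi}{1-\pi},1]$ and an equicontinuity argument. Your route is more elementary (it sidesteps Berge's theorem entirely) and exploits a nice structural observation about $h$; the paper's route yields the slightly stronger statement of joint continuity, which aligns with the repeated use of Berge's theorem elsewhere in Appendix~\ref{app:ContKLinf}.
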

\begin{proof}
	Consider a sequence \(\eta_n \in  \cal L \) weakly converging to \(\eta \in \cal L \). Then, there exist random variables \(Y_n, Y \) defined on a common probability space \((\Omega, \mathcal{F},q) \) such that \(Y_n \sim \eta_n \), \(Y \sim \eta \), and \(Y_n\xrightarrow{a.s.} Y\) (Skorohod's Theorem, see, \cite{billingsley2013convergence}). Furthermore, since \(\eta_n,\eta \) are uniformly integrable, \( \E{q}{\abs{Y_n}} \rightarrow \E{q}{\abs{Y}}\) (see, \cite[Theorem 13.7]{williams1991probability}) 
	
	Consider a sequence of real numbers \(z_n \rightarrow z \). Then, \(Y_n- z_n \xrightarrow{a.s.} Y-z \), whence \((Y_n-z_n)_+ \xrightarrow{a.s.} (Y-z)_+\). Clearly, 
	\[(Y_n-z_n)_+ \leq \abs{Y_n}+\abs{z_n}~, \quad  \abs{Y_n}+\abs{z_n} \xrightarrow{a.s.} \abs{Y} + \abs{z} ~~ \text{ and } ~~ \E{q}{\abs{Y_n}} + \abs{z_n} \rightarrow \E{q}{\abs{Y}} + \abs{z} < \infty. \]
	Then, by generalized Dominated Convergence Theorem, 
	$\E{q}{(Y_n-z_n)_+} \rightarrow \E{q}{(Y-z)_+}. $ 	
	Now, for \( \eta \in \cal L \), $ c_\pi(\eta)$ equals 
	\[\min\limits_{ z\in C } ~ g(z,\eta), \quad \text{ where }\quad g(z, \eta) = z + \frac{1}{1-\pi} \E{\eta}{(X-z)_+}  .\]
	From the above discussion, \( g(z,\eta) \) restricted to \(C \times \mathcal{L}\), is a jointly continuous function. Berge's Theorem (\cite[Maximum Theorem, Page 116]{berge1997topological}) then gives the desired result. 
\end{proof}

\begin{lemma}\label{lem:compactFeasibleRegion}
	The sets \( \mathcal{D}^L_v \triangleq \lrset{\eta \in \mathcal{L} : c_\pi(\eta) \leq v} \) and \(\mathcal{D}^U_v \triangleq \lrset{\eta \in \mathcal{L} : c_\pi(\eta) \geq v} \) are compact sets in the topology of weak convergence. 
\end{lemma}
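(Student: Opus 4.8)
The plan is to exhibit each of $\mathcal{D}^L_v$ and $\mathcal{D}^U_v$ as a closed subset of the compact space $\mathcal{L}$ and then invoke the elementary fact that a closed subset of a compact space is compact. The two ingredients I would rely on are: (i) $\mathcal{L}$ is compact in the topology of weak convergence (established just above via Prohorov's theorem, since $\mathcal{L}$ is closed and tight); and (ii) the map $\eta \mapsto c_\pi(\eta)$ is continuous on $\mathcal{L}$. For (ii), Lemma~\ref{lem:contcpiL} gives sequential continuity on $\mathcal{L}$, and since $(\mathcal{P}(\Re), d_L)$ is a metric space, sequential continuity is equivalent to continuity; hence $c_\pi \colon \mathcal{L} \to \Re$ is a continuous function.

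Next I would note that, with preimages taken relative to $c_\pi$ restricted to $\mathcal{L}$, one has $\mathcal{D}^L_v = c_\pi^{-1}\big((-\infty, v]\big)$ and $\mathcal{D}^U_v = c_\pi^{-1}\big([v,\infty)\big)$. Since $(-\infty,v]$ and $[v,\infty)$ are closed in $\Re$ and $c_\pi$ is continuous on $\mathcal{L}$, both sets are closed in $\mathcal{L}$. Spelling this out sequentially: if $\eta_n \in \mathcal{D}^L_v$ with $\eta_n \xRightarrow{D} \eta$, then $\eta \in \mathcal{L}$ (the limit stays in $\mathcal{L}$ because $\mathcal{L}$ is closed), and by Lemma~\ref{lem:contcpiL}, $c_\pi(\eta) = \lim_n c_\pi(\eta_n) \le v$, so $\eta \in \mathcal{D}^L_v$; the argument for $\mathcal{D}^U_v$ is identical with the inequality reversed. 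Being closed subsets of the compact space $\mathcal{L}$, both $\mathcal{D}^L_v$ and $\mathcal{D}^U_v$ are therefore compact in the topology of weak convergence.

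There is no real obstacle here — this lemma is a short corollary of the compactness of $\mathcal{L}$ and the continuity of $c_\pi$ on $\mathcal{L}$, both of which are already in hand. The only two places where a little care is needed are upgrading Lemma~\ref{lem:contcpiL}'s sequential-continuity statement to genuine continuity (legitimate because $d_L$ metrizes weak convergence) and being careful to work within the subspace $\mathcal{L}$ — on which $c_\pi$ is continuous — rather than on all of $\mathcal{P}(\Re)$ (on which $c_\pi$ need not be continuous), so that ``closed in $\mathcal{L}$'' combined with ``$\mathcal{L}$ compact'' yields the conclusion directly.
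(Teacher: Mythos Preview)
Your proof is correct and takes essentially the same approach as the paper: show that $\mathcal{D}^L_v$ and $\mathcal{D}^U_v$ are closed subsets of the compact set $\mathcal{L}$ by invoking the continuity of $c_\pi$ from Lemma~\ref{lem:contcpiL}. The paper's proof is in fact just a one-line pointer to Lemma~\ref{lem:contcpiL}, while you spell out the preimage and sequential-closure details explicitly.
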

\begin{proof}
	Since \(\cal L\) is compact, it is sufficient to show that the sets \(\mathcal{D}^L_v\) and \(\mathcal{D}^U_v\) are closed, which follows from Lemma \ref{lem:contcpiL}.
\end{proof}

\begin{lemma}\label{lem:lscklinf}
	For \(\eta\in \mathcal{P}(\Re)\) and \( v\in D \), the functionals \(\KLinfU(\eta,v) \) and \(\KLinfL(\eta,v) \) are jointly lower-semicontinuous in \((\eta,v) \).
\end{lemma}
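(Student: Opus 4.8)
The plan is to establish joint lower-semicontinuity of $(\eta,v) \mapsto \KLinfU(\eta,v)$ and $(\eta,v) \mapsto \KLinfL(\eta,v)$ by the standard ``liminf along a convergent sequence'' argument, exploiting the variational/dual structure and compactness. Fix a sequence $(\eta_n, v_n) \to (\eta, v)$ in $\mathcal{P}(\Re) \times D$ (with $\eta_n \xRightarrow{D} \eta$ and $v_n \to v$), and let $\ell = \liminf_n \KLinfU(\eta_n, v_n)$. We want $\KLinfU(\eta, v) \le \ell$. Passing to a subsequence, assume $\KLinfU(\eta_n,v_n) \to \ell$ and that $\ell < \infty$ (else there is nothing to prove). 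For each $n$, let $\kappa_n \in \mathcal{L}$ attain the infimum, i.e.\ $c_\pi(\kappa_n) \ge v_n$ and $\KL(\eta_n, \kappa_n) = \KLinfU(\eta_n, v_n)$; existence follows from compactness of the feasible set (Lemma~\ref{lem:compactFeasibleRegion}) together with lower-semicontinuity of $\eta' \mapsto \KL(\eta_n, \eta')$ in the weak topology.

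Next I would use compactness of $\mathcal{L}$ (established above via Prohorov) to extract a further subsequence along which $\kappa_n \xRightarrow{D} \kappa$ for some $\kappa \in \mathcal{L}$. The key continuity input is Lemma~\ref{lem:contcpiL}: since $\kappa_n, \kappa \in \mathcal{L}$ and $\kappa_n \xRightarrow{D} \kappa$, we get $c_\pi(\kappa_n) \to c_\pi(\kappa)$, so $c_\pi(\kappa) = \lim_n c_\pi(\kappa_n) \ge \lim_n v_n = v$; hence $\kappa$ is feasible for the $\KLinfU(\eta,v)$ problem. Finally, the joint lower-semicontinuity of $(\mu', \nu') \mapsto \KL(\mu', \nu')$ under weak convergence (a classical fact, e.g.\ from the Donsker--Varadhan variational representation $\KL(\mu',\nu') = \sup_{\phi \in C_b} \{\int \phi\, d\mu' - \log \int e^\phi\, d\nu'\}$, which is a supremum of weakly-continuous functionals and hence weakly l.s.c.) gives
\[
\KLinfU(\eta, v) \le \KL(\eta, \kappa) \le \liminf_n \KL(\eta_n, \kappa_n) = \ell.
\]
The argument for $\KLinfL$ is identical: replace $c_\pi(\kappa_n) \ge v_n$ by $c_\pi(\kappa_n) \le v_n$, and use $c_\pi(\kappa) = \lim c_\pi(\kappa_n) \le v$ to conclude feasibility of the limit point $\kappa$. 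No convexity is needed here, so the non-convexity of the $\KLinfL$ feasible region is not an obstacle for this particular lemma.

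The main obstacle, and the place requiring care, is the feasibility-of-the-limit step: we need the constraint $c_\pi(\kappa_n) \bowtie v_n$ to be preserved in the limit, and this genuinely relies on $c_\pi(\cdot)$ being \emph{continuous} (not merely u.s.c.\ or l.s.c.) along weakly convergent sequences \emph{within $\mathcal{L}$} --- which is exactly what Lemma~\ref{lem:contcpiL} provides, and which in turn rests on the uniform integrability of $\mathcal{L}$ (so that $(Y_n - z_n)_+$ converges in $L^1$ via generalized dominated convergence). Without the moment restriction defining $\mathcal{L}$, mass could escape to $+\infty$ and $c_\pi$ would only be l.s.c.\ for the ``$\ge$'' side / u.s.c.\ for the ``$\le$'' side, breaking the argument for one of the two functionals. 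A secondary point to be careful about is that $v_n$ ranges over $D$ and we only claim l.s.c.\ on $D$ (closed), which is fine since feasibility is an inequality and no interiority of $v$ is used; the restriction to $D^o$ enters only later for upper-semicontinuity (Remark~\ref{rem:DiscontinuityOfKLinf}), not here.
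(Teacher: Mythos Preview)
Your proof is correct and takes essentially the same approach as the paper: both rely on compactness of the feasible set (Lemma~\ref{lem:compactFeasibleRegion}), continuity of $c_\pi$ on $\mathcal{L}$ (Lemma~\ref{lem:contcpiL}) to ensure the limit $\kappa$ remains feasible, and joint lower-semicontinuity of $\KL$. The only cosmetic difference is that the paper packages the sequential argument as ``$D_v$ is a compact-valued upper-hemicontinuous correspondence, hence apply Berge's theorem \cite[Theorem 1, p.~115]{berge1997topological}'', whereas you unpack that Berge step by hand via the liminf-along-a-subsequence argument.
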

\begin{proof}
	Recall that 
	\[ \KLinfL(\eta,v) = \min\limits_{\substack{\kappa\in\mathcal L:~ c_\pi(\kappa)\leq v }}~ \KL(\eta,\kappa) \quad \text{ and } \quad \KLinfU(\eta,v) = \min\limits_{\substack{\kappa \in \mathcal L: ~ c_\pi(\kappa)\geq v }}~\KL(\eta,\kappa). \]
	For \(\eta,\kappa \in \mathcal{P}(\Re) \), \(\KL(\eta,\kappa) \) is jointly lower-semicontinuous function in the topology of weak convergence (see, \cite{posner1975random}) and a jointly lower-semicontinuous function of \((\eta,\kappa,v)\). Let \( D_v = \lrset{\kappa\in\mathcal{L}: c_\pi(\kappa)\leq v}\). Since \(D_v\) is a compact set for each \(v\) (Lemma \ref{lem:compactFeasibleRegion}), it is sufficient to show that \(D_v\) is an upper-hemicontinuous correspondence (see, \cite[Theorem 1, Page 115]{berge1997topological}). 
	
	Consider a sequence \(v_n\) in \(D\), converging to \(\tilde{v}\) in \(D \). Let \( \eta_n \in D_{v_n}  \), which exist since \( D_{v_n} \) are non-empty sets. Since \( \cal L \) is a tight, and hence relatively compact collection of probability measures, and \( \eta_n \in \cal L \), \(\eta_n \) has a weakly convergent sub-sequence, say \(\eta_{n_i} \) converging to \(\eta \in \cal L \) (since \(\cal L\) is also closed). Furthermore,
	$ c_\pi(\eta_{n_i}) \leq v_{n_i}. $ From Lemma \ref{lem:contcpiL}, \( c_\pi(\eta) = \lim_{n_i} c_\pi(\eta_{n_i}) \leq \tilde{v} \), which implies that \(\eta \in D_{\tilde{v}} \), proving upper-hemicontinuity of the set \(D_v\) in \(v\) (see, \cite[Proposition 9.8]{SundaramOpt1996} for sequential characterization of upper-hemicontinuity).
	
	Similar arguments hold for \( \KLinfL(\cdot, \cdot) \). 
\end{proof}

\begin{lemma}\label{lem:USCLEQ}
	\(\KLinfL \), viewed as a function from \( \mathcal{L}\times \left(\left. -f\inv\lrp{B},f\inv\lrp{\frac{B}{1-\pi}} \right]\right. \), is a jointly upper-semicontinuous function.
\end{lemma}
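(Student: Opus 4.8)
**Proof plan for Lemma \ref{lem:USCLEQ} (joint upper-semicontinuity of $\KLinfL$ on $\mathcal{L} \times (-f\inv(B), f\inv(B/(1-\pi))]$).**

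The plan is to show that for any sequence $(\eta_n, v_n) \to (\eta, v)$ in $\mathcal{L} \times (-f\inv(B), f\inv(B/(1-\pi))]$ we have $\limsup_n \KLinfL(\eta_n, v_n) \le \KLinfL(\eta, v)$. Fix such a sequence and let $\kappa^*$ be a minimizer achieving $\KLinfL(\eta, v) = \KL(\eta, \kappa^*)$, which exists because the feasible set $\{\kappa \in \mathcal{L}: c_\pi(\kappa) \le v\}$ is compact (Lemma \ref{lem:compactFeasibleRegion}) and $\KL(\eta, \cdot)$ is lower-semicontinuous. If $\KLinfL(\eta, v) = +\infty$ there is nothing to prove, so assume $\KL(\eta, \kappa^*) < \infty$, which in particular forces $\eta \ll \kappa^*$. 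The strategy is to construct, for each $n$, a feasible competitor $\kappa_n$ for the problem defining $\KLinfL(\eta_n, v_n)$ — i.e.\ $\kappa_n \in \mathcal{L}$ with $c_\pi(\kappa_n) \le v_n$ — such that $\KL(\eta_n, \kappa_n) \to \KL(\eta, \kappa^*)$; then $\limsup_n \KLinfL(\eta_n, v_n) \le \limsup_n \KL(\eta_n, \kappa_n) = \KLinfL(\eta, v)$ and we are done.

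The natural first guess is to take $\kappa_n = \kappa^*$ for all $n$, but this may violate the constraint $c_\pi(\kappa^*) \le v_n$ when $v_n < v$, and also $\KL(\eta_n, \kappa^*)$ need not converge to $\KL(\eta, \kappa^*)$ since $\eta_n$ converges only weakly. So the competitor needs two corrections. First, to recover convergence of the divergence, I would use the dual representation of $\KLinfL$ from Theorem \ref{th:klinfDual}\ref{th:klinfLDual}: there the optimal $\kappa^*$ has density $d\kappa^*/d\eta = (g^L(\cdot, \bm{\gamma}^*, v, x_0^*))\inv$ on $\Supp(\eta)$, plus at most one extra atom. Working on the dual side, $\KLinfL(\eta_n, v_n) \ge \E{\eta_n}{\log g^L(X, \bm{\gamma}^*, v, x_0^*)}$ is the \emph{wrong} direction; instead the cleaner route is: evaluate the dual objective at $(\bm{\gamma}^*, x_0^*)$ but with $\eta$ replaced by $\eta_n$ — however, $\KLinfL$ is a $\max$ of these, so this gives a lower bound, not an upper bound. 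Hence the dual does not directly give upper-semicontinuity; the primal competitor construction is the right approach. Second correction: to enforce $c_\pi(\kappa_n) \le v_n$, tilt $\kappa^*$ slightly toward low values — e.g.\ mix $\kappa^*$ with a point mass $\delta_{x_{\min}}$ at the left endpoint $x_{\min} = -f\inv(B)$ of $D$ with a vanishing weight $\theta_n \to 0$, chosen just large enough that $c_\pi((1-\theta_n)\kappa^* + \theta_n \delta_{x_{\min}}) \le v_n$. Since $c_\pi$ is continuous (Lemma \ref{lem:contcpiL}) and linear in this mixing direction with a strictly negative slope near $x_{\min}$ (because $c_\pi(\delta_{x_{\min}}) = x_{\min} < v$, using that $v$ is strictly above the left endpoint — this is exactly why the half-open interval is used), a $\theta_n = O(v - v_n)_+ \to 0$ suffices; and $(1-\theta_n)\kappa^* + \theta_n\delta_{x_{\min}}$ stays in $\mathcal{L}$ because the $(1+\epsilon)$-moment is a convex functional and both pieces satisfy the bound $B$.

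The remaining and main technical obstacle is showing $\KL(\eta_n, \kappa_n) \to \KL(\eta, \kappa^*)$ for the constructed competitors $\kappa_n = (1-\theta_n)\kappa^* + \theta_n \delta_{x_{\min}}$. The mixing perturbation is harmless: $\KL(\eta_n, \kappa_n) \le \KL(\eta_n, \kappa^*) - \log(1-\theta_n)$ by the standard bound on KL under mixing of the second argument (since $d\kappa_n \ge (1-\theta_n)\,d\kappa^*$), and $-\log(1-\theta_n) \to 0$. So it reduces to showing $\limsup_n \KL(\eta_n, \kappa^*) \le \KL(\eta, \kappa^*)$. Write $\KL(\eta_n, \kappa^*) = \E{\eta_n}{\log \frac{d\eta_n}{d\kappa^*}}$; this is awkward because $d\eta_n/d\kappa^*$ is not a fixed function. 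The clean fix is the variational (Donsker--Varadhan) formula: $\KL(\eta_n, \kappa^*) = \sup_{\phi}\big(\E{\eta_n}{\phi} - \log \E{\kappa^*}{e^\phi}\big)$, which is again a $\sup$ and gives a lower bound — still wrong direction. The correct tool is to exploit the explicit form of $\kappa^*$: since $\frac{d\kappa^*}{d\eta}(y) = (g^L(y,\bm{\gamma}^*,v,x_0^*))\inv$ on $\Supp(\eta)$, on that support $\log\frac{d\eta}{d\kappa^*}(y) = \log g^L(y, \bm{\gamma}^*, v, x_0^*)$, and $g^L$ is a fixed, explicit, bounded-below (indeed $\ge 0$) continuous function of $y$ with at most linear-plus-$f$ growth. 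Then I would argue $\KL(\eta_n, \kappa^*) = \E{\eta_n}{\log g^L(X,\bm{\gamma}^*, v, x_0^*)} + (\text{correction for the extra atom of }\kappa^*)$, and pass to the limit using weak convergence $\eta_n \Rightarrow \eta$ together with uniform integrability of $\{|X|^{1+\epsilon}\}$ under $\cal L$ (which dominates $\log g^L$) via a generalized dominated convergence / Skorokhod argument exactly as in the proof of Lemma \ref{lem:contcpiL}. Continuity of $g^L$ in its $x_0, v$ arguments is what lets one even swap in the true optimizers; the one genuinely delicate point is handling the atom(s) of $\kappa^*$ lying outside $\Supp(\eta)$, where $g^L = 0$ and the ratio blows up — but on those points $\eta$ puts zero mass, so they contribute nothing to $\KL(\eta, \kappa^*)$, and one must check they also contribute negligibly to $\KL(\eta_n, \kappa^*)$ in the limit, which follows since $\eta_n \Rightarrow \eta$ assigns vanishing mass to any fixed finite set of points that are $\eta$-null (after a routine continuity-point adjustment of those atoms). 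This atom-handling, combined with making the $\limsup$ argument airtight under only weak convergence, is where essentially all the work lies.
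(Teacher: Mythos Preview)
Your proposal has a genuine gap at the step you flagged as ``the remaining and main technical obstacle'': showing $\limsup_n \KL(\eta_n,\kappa^*) \le \KL(\eta,\kappa^*)$ for the fixed primal optimiser $\kappa^*$. Your claimed identity $\KL(\eta_n,\kappa^*) = \E{\eta_n}{\log g^L(X,\bm{\gamma}^*,v,x_0^*)} + (\text{atom correction})$ is not right: $\log g^L = \log \frac{d\eta}{d\kappa^*}$ on $\Supp(\eta)$, whereas $\KL(\eta_n,\kappa^*)$ involves $\log \frac{d\eta_n}{d\kappa^*}$. More seriously, $\eta_n$ need not be absolutely continuous with respect to $\kappa^*$ at all. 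By Theorem~\ref{th:klinfDual}\ref{th:klinfLDual}, $\Supp(\kappa^*) \subset \Supp(\eta)$ plus at most one atom; so if, say, $\eta$ is continuous and $\eta_n$ are discrete weak approximants (a completely generic situation under the L\'evy metric), then $\eta_n \not\ll \kappa^*$ and $\KL(\eta_n,\kappa^*) = +\infty$ for every $n$. Mixing $\kappa^*$ with $\delta_{x_{\min}}$ does not help, since that still leaves $\kappa_n$ singular with respect to $\eta_n$. In short, a competitor built from the \emph{fixed} optimal $\kappa^*$ cannot work under mere weak convergence of $\eta_n$.

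The paper sidesteps this entirely by working on the dual side, which you dismissed too quickly. The point is that $\KLinfL(\eta,v) = \min_{x_0} \max_{\bm\gamma \in \mathcal R_2(x_0,v)} \E{\eta}{\log g^L}$ has a nested $\min$-$\max$ structure, and Berge's theorem is applied twice: the inner $\max$ over the compact, upper-hemicontinuous correspondence $\mathcal R_2$ preserves upper-semicontinuity of the (jointly usc) integrand $\E{\eta}{\log g^L}$, and then the outer $\min$ over the lower-hemicontinuous correspondence $C_v \setminus \{v\}$ again preserves usc. The Skorokhod/Fatou argument with a $|X|$-dominating envelope (the one you sketched) is used, but at the level of the dual objective $h(x_0,v,\eta,\bm\gamma)$, not for $\KL(\eta_n,\kappa^*)$. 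The boundary case $v = c_\pi(\eta)$ is handled separately with an explicit competitor $\kappa_n = w_n \delta_{z'_n} + (1-w_n)\eta_n$ that mixes $\eta_n$ itself (not a fixed $\kappa^*$) with a point mass, which is why $\KL(\eta_n,\kappa_n) \le -\log(1-w_n)$ stays finite and tends to zero.
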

\begin{proof}
	Let 
	\[ \ubar{D} = \left(\left.-f\inv(B),f\inv\lrp{\frac{B}{1-\pi}}\right]\right. \quad \text{ and } \quad C_v = \left[-f\inv\lrp{\frac{B}{\pi}}, v \right]. \]
	
	We prove in Theorem \ref{th:klinfDual}\ref{th:klinfLDual} that for \(v\in \ubar{D}\), \(\KLinfL(\eta,v)  = \min_{x_0 \in C_v}~ h^*(x_0,v,\eta) \), where for \(g^L\) and \( \mathcal R_2 \) defined in (\ref{appB:eq:gl}) and (\ref{appB:eq:r2}) above, 
	\[ h^*(x_0,v, \eta)  :=  \max\limits_{\bm{\gamma}\in\mathcal{R}_2(x_0,v)} ~ \E{\eta}{\log g^L(X,\bm{\gamma},x_0,v)}.\]
	
	\paragraph{Joint upper-semicontinuity for \(v>c_\pi(\eta)\): }Observe that for \(\eta\in\cal L\) and \( v\geq c_\pi(\eta) \), \( \KLinfL(\eta,v)  = 0\). Consider a sequence $(\eta_n, v_n)$ converging to $(\eta,v)$. Then, $\exists n_0$ such that for all $n\ge n_0$, $c_\pi(\eta_n) \le v_n$. To see this, suppose not, i.e., for all $n$, \(c_\pi(\eta_n) >v_n\). Taking limits, this gives $c_\pi(\eta) \ge v$, which is a contradition. Thus, for $n\ge n_0$, $\KLinfL(\eta_n, v_n) = 0$, proving continuity in this case. 
	
	 We next prove the joint upper-semicontinuity for \( v < c_\pi(\eta) \), and handle the joint upper-semicontinuity at \( (\eta,c_\pi(\eta)) \) separately. 
	
	\paragraph{Joint upper-semicontinuity for \(v < c_\pi(\eta)\): }It can be argued that for \(\eta\in\cal L\) and \( v < c_\pi(\eta) \),
	\[ \KLinfL(\eta,v) = \min\limits_{x_0 \in C_v\setminus \lrset{v} }~ h^*\lrp{x_0,v,\eta}. \numberthis \label{eq:app:KLinfLWOv} \]
	To see this, \( v < c_\pi(\eta) \) implies that \( \eta \not\in \mathcal{P}\lrp{\Supp(-\infty,v]} \) and from Lemma \ref{lem:CompactR2} and the remark following it, \( h^*(v,v,\eta) = \infty \), giving (\ref{eq:app:KLinfLWOv}).
	
	Clearly, \(C_v\setminus \lrset{v}\) is a lower-hemicontinuous correspondence. To show that \(\KLinfL \) is jointly upper-semicontinuous, it suffices to show that \(h^*(x_0,v,\eta) \) is jointly upper-semicontinuous (\cite[Theorem 1, Page 115]{berge1997topological}).
	
	\textbf{Joint upper-semicontinuity of $h^*$: } It follows from the definition that \(\mathcal{R}_2(x_0,v) \ne \emptyset\) as \(\bm{0}\in \mathcal{R}_2(x_0,v)\), and for \(x_0 \ne v \), $\mathcal{R}_2(x_0,v)$ is compact (Lemma \ref{lem:CompactR2}). Furthermore, suppose \(\mathcal{R}_2(x_0,v)\) is jointly upper-hemicontinuous correspondence, and for \(\bm{\gamma}\in \mathcal{R}_2(x_0,v) \),  \(\E{\eta}{\log g^L(X,\bm{\gamma},x_0,v)} \) is jointly upper-semicontinuous in \((x_0,v,\eta,\bm{\gamma}) \), then \(h^*(x_0,v,\eta) \) is upper-semicontinuous (\cite[Theorem 2, Page 116]{berge1997topological}). It then suffices to prove the following:
	\begin{enumerate}
		\item For \( x_0 \ne v \), \(\bm{\gamma}\in\mathcal{R}_2(x_0,v)\), \( h(x_0,v,\eta,\bm{\gamma}) = \E{\eta}{\log g^{L}(X,\bm{\gamma}, x_0,v)} \) is a jointly upper-semicontinuous function. \label{eq:pt1}
		
		\item  For \(x_0 \in C_v\setminus\lrset{v} \) and \(v \in \ubar{D} \), \(\mathcal{R}_2(x_0,v) \) is an upper-hemicontinuous correspondence. \label{eq:pt2}
	\end{enumerate} 
	
	\textbf{\textit{Proof of (\ref{eq:pt1}): }} Consider a sequence \( (x_n, v_n,\eta_n, \bm{\gamma}_n ) \in C_{v_n} \times \ubar{D} \times \mathcal{L}\times \mathcal{R}_2(x_n, v_n) \) converging to \( (x_0,v,\eta,\bm{\gamma}) \in C_v \times \ubar{D}  \times\mathcal{L}\times \mathcal{R}_2(x_0,v)  \), where convergence is defined coordinate wise, and \(\eta_n\) converges to \( \eta \) in topology of weak convergence. It is sufficient to show that 
	\[ \limsup\limits_{n\rightarrow \infty}~~ h(x_n,v_n,\eta_n, \bm{\gamma}_n)\leq h(x_0,v,\eta,\bm{\gamma}). \]
	Since \(\eta_n\xRightarrow{D}{\eta} \), by Skorokhod's Representation Theorem (see, \cite{billingsley2013convergence}), there are random variables, \(Y_n,Y\) defined on a common probability space, \((\Omega, \mathcal{F},q) \), such that \(Y_n\xrightarrow{a.s.}{Y} \) and \(Y_n \sim \eta_n \) and \(Y \sim \eta \). Then, \( \log\lrp{ g^L(Y_n,\bm{\gamma}_n, x_n, v_n)}\xrightarrow{a.s.}{\log\lrp{ g^L(Y,\bm{\gamma}, x_0,v)}}\), and 
	\[ h(x_n,v_n, \eta_n, \bm{\gamma}_n)  = \E{q}{\log g^L(Y_n,\bm{\gamma}_n, x_n,v_n)} \quad\text{and} \quad h(x_0,v,\eta,\bm{\gamma}) = \E{q}{\log g^L(Y,\bm{\gamma}, x_0,v)} . \]
	Let \[ 0 \leq  Z_n \triangleq c_{1n}+c_{2n}\abs{Y_n} + c_{3n}\abs{Y_n}^{1+\epsilon}, \]
	where 
	\[ c_{1n} = \gamma_{1n}\lrp{v_n-x_n} + \frac{\gamma_{1n}\abs{x_n}}{1-\pi} + \gamma_{2n}B, \quad c_{2n} = \frac{\gamma_{1n}}{1-\pi}, \quad c_{3n} = \gamma_{2n}. \]
	Clearly, each \( c_{in} \) converge to \( c_i  < \infty \). With these notation, \(\log\lrp{ g^L(Y_n,\gamma_n,x_n,v_n)} \) is bounded by  \( \log(1+Z_n) \), and \(Z_n \xrightarrow{n\rightarrow \infty} Z \). Thus, there exist \( c_{0n} \xrightarrow{n\rightarrow\infty} c_0 < \infty  \) such that \( \log(1+Z_n)  \leq c_{0n} + \abs{Z_n}^{1/(1+\epsilon)} \) and using the form of \(Z_n\) from above, there also exist constants \( c_{4n} \xrightarrow{n\rightarrow \infty} c_4   \) and \( c_{5n} \xrightarrow{n\rightarrow \infty} c_5 \) such that 
	\[ \abs{Z_n}^{1/(1+\epsilon)} \leq c_{4n} + c_{5n}\abs{Y_n}.\]
	Thus, there exist constants \( c_{0n}, c_{4n}, c_{5n} \) converging to \( c_0,c_4,c_5 \) such that 
	\[\log \lrp{g^L(Y_n,\bm{\gamma}_n, x_n,v_n )} \leq c_{0n}+c_{4n}+c_{5n}\abs{Y_n} \triangleq f^L(Y_n,\bm{\gamma}_n,x_n,v_n).\]
	Furthermore, 
	\[f^L(Y_n,\bm{\gamma}_n, x_n,v_n) \xrightarrow{a.s.}{f^L(Y,\bm{\gamma},x_0,v)} ~~ \text{ and }~~ \E{q}{f^L(Y_n,\bm{\gamma}_n, x_n,v_n)} \rightarrow \E{q}{f^L(Y,\bm{\gamma},x_0,v)}, \]
	since \( \eta_n,\eta\in\cal{L} \) which is a collection of uniformly integrable measures (see, \cite{williams1991probability}). Since, \( f^L(Y_n,\bm{\gamma}_n,x_n,v_n) - \log g^L(Y_n,\bm{\gamma}_n, x_n,v_n)  \geq 0\), by Fatou's Lemma,
	\begin{align*}
	\mathbb{E}_{q}\big(\liminf\limits_{n\rightarrow \infty} (f^L(Y_n,\bm{\gamma}_n, x_n,v_n)&-\log g^L(Y_n,\bm{\gamma}_n, x_n,v_n))\big)\\ & \leq \E{q}{f^L\lrp{Y,\bm{\gamma}, x_0,v}} - \limsup\limits_{n\rightarrow \infty} \E{q}{\log g^L(Y_n,\bm{\gamma}_n, x_n,v_n)} ,
	\end{align*}
	which implies
	\begin{align*}
	h(x_0,v,\eta,\bm{\gamma}_n) = \E{q}{\limsup\limits_{n\rightarrow \infty} \log\lrp{ g^L(Y_n,\bm{\gamma}_n, x_n,v_n)} } &\geq \limsup\limits_{n\rightarrow \infty}~ \E{q}{\log \lrp{g^L(Y_n,\bm{\gamma}_n, x_n,v_n)}}\\ &= \limsup\limits_{n\rightarrow \infty}~ h(x_n,v_n,\eta_n, \bm{\gamma}_n).
	\end{align*}
	
	\textbf{\textit{Proof of (\ref{eq:pt2}): }} Clearly, \((0,0) \in \mathcal{R}_2(x,v)\) for all \(x\in C_v\) and \(v\in \ubar{D}\). Next, consider a sequence \( (x_n,v_n) \longrightarrow (x_0,v) \in C_v\times \ubar{D}\) and a sequence \( \bm{\gamma}_n \in \mathcal{R}_2(x_n,v_n) \). Since \((x_n,v_n) \longrightarrow (x_0,v) \), there exists a closed and bounded (compact) subset, K, of \( \Re\times\Re \) containing \((x_0,v)\), such that for some \(J \geq 1\), and all \(n \geq J\), \((x_n,v_n) \in K\). Since \( \min_y ~ g^L(y,\cdot,\cdot,\cdot) \) is a jointly continuous function, for \(n\geq J\), \(\bm{\gamma}_n\) also belongs to a compact subset of \(\Re\). Bolzano-Weierstrass theorem then gives a convergent subsequence \( \lrset{(x_{n_i},v_{n_i}),\bm{\gamma}_{n_i}} \) in \( \Re^3 \) with the limit \( \lrset{(x_0,v),\bm{\gamma}} \). It is then sufficient to show that \( \bm{\gamma} \in \mathcal{R}_2(x_0,v) \), which follows since
	\[  g^L(y,\bm{\gamma}_n, x_n,v_n) \geq 0 ~~\Rightarrow ~~ g^L(y,\bm{\gamma}, x_0,v) \geq 0,\]
	proving that the correspondence \( \mathcal{R}_2(\cdot,\cdot) \) is upper-hemicontinuous (see, \cite[Proposition 9.8]{SundaramOpt1996}). This completes the proof for \textit{upper-semicontinuity of \( \KLinfL(\eta,v) \) for \( v< c_\pi(\eta) \)}. 
	
	\paragraph{Joint upper-semicontinuity of \( \KLinfL(\eta,c_\pi(\eta))\):} Towards this, consider a sequence \( (\eta_n,v_n) \in \mathcal{L}\times \ubar{D} \) converging to \( (\eta,c_\pi(\eta)) \), where the convergence is defined coordinate-wise and in the first coordinate it is in the L\'evy metric. Without loss of generality, assume that \(v_n \leq c_\pi(\eta_n)\) for all \(n\). It is then sufficient to argue that \( \KLinfL(\eta_n,v_n) \xrightarrow{n\rightarrow \infty} 0. \)
	
	We demonstrate a sequence of measures \( \kappa_n \in \cal L \) which are feasible to \( \KLinfL(\eta_n, v_n) \) problem, such that \( \KL(\eta_n,\kappa_n) \xrightarrow{n\rightarrow \infty} 0\), whence \( \KLinfL(\eta_n,v_n) \xrightarrow{n\rightarrow\infty} 0 \). Define
	
	\[ w_n = \frac{\E{\eta_n}{X-z_n}_+ - (1-\pi)(v_n-z_n) }{\E{\eta_n}{X-z_n}_+ } \quad \text{ and }\quad \kappa_n = w_n\delta_{-f\inv\lrp{B}+z_n} + (1-w_n) \eta_n, \]
	where,
	$$
	z_n = 
	\begin{cases}
	x_\pi(\eta) - \frac{c_\pi(\eta) - v_n}{2}, ~&\text{ for } v_n \leq c_\pi(\eta)\\
	x_\pi(\eta),~ &\text{otherwise}.
	\end{cases}
	$$
	It is easy to check that for \(w_n \in [0,1]\), \(\kappa_n\in\cal L\). The above choice of \( z_n \) ensures that \(w_n \in [0,1]\). Furthermore, 
	\[ c_\pi(\kappa_n) \leq  z_n + \frac{1}{1-\pi} \E{\kappa_n}{X-z_n}_+ \leq v_n, \]
	where the last inequality follows from the choice of \( w_n \), whence \( \kappa_n \) are feasible.
	
	Since \(v_n\xrightarrow{n\rightarrow \infty} c_\pi(\eta)\), \(\eta_n\xRightarrow{D} \eta\), and \(\eta_n, \eta \in\cal L\), \(\E{\eta_n}{X-z_n}_+  \xrightarrow{n\rightarrow\infty} \E{\eta}{X-x_\pi(\eta)}_+  \), whence,  \( w_n\xrightarrow{n\rightarrow \infty} 0. \) With this choice of \( \kappa_n \), \( \KLinfL(\eta_n,v_n)\) is bounded from above by 
	\[ -\log\lrp{1-w_n} \xrightarrow{n\rightarrow \infty} 0 = \KLinfL(\eta,c_\pi(\eta)).  \]
\end{proof}

\begin{lemma}\label{lem:USCGEQ}
	\(\KLinfU\), viewed as a function from \(\mathcal{L}\times \left.\left[ -f\inv\lrp{B},f\inv\lrp{\frac{B}{1-\pi}} \right.\right) \), is a jointly upper-semicontinuous function. 
\end{lemma}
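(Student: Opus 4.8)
The plan is to mirror the structure of the proof of Lemma \ref{lem:USCLEQ} (joint upper-semicontinuity of $\KLinfL$), exploiting the fact that $\KLinfU(\eta,v)$ has the clean dual representation from Theorem \ref{th:klinfDual}\ref{th:klinfUDual}, namely $\KLinfU(\eta,v) = \max_{\bm\lambda\in\hat S(v)}\E{\eta}{\log g^U(X,\bm\lambda,v)}$, with no outer minimization over $x_0$ to complicate matters. As with the $\KLinfL$ case, I would split the argument into two regimes according to the position of $v$ relative to $c_\pi(\eta)$: the ``easy'' regime $v \le c_\pi(\eta)$, where $\KLinfU(\eta,v) = 0$, and the ``interior'' regime $v > c_\pi(\eta)$, where the dual representation is finite and the Berge-type machinery applies; the boundary point $v = c_\pi(\eta)$ is handled by an explicit feasible perturbation as a separate case.

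For the regime $v \le c_\pi(\eta)$: given a sequence $(\eta_n,v_n)\to(\eta,v)$ with $v\le c_\pi(\eta)$, I claim there is $n_0$ such that $c_\pi(\eta_n) \ge v_n$ for all $n\ge n_0$; if not, along a subsequence $c_\pi(\eta_n) < v_n$, and passing to the limit using joint continuity of $c_\pi$ on $\mathcal L$ (Lemma \ref{lem:contcpiL}) gives $c_\pi(\eta) \le v$ only in the non-strict sense --- so this sub-case needs the same care as in Lemma \ref{lem:USCLEQ}: strict inequality $v < c_\pi(\eta)$ forces $c_\pi(\eta_n) > v_n$ eventually, hence $\KLinfU(\eta_n,v_n)=0$, giving continuity there. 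For the regime $v > c_\pi(\eta)$, I would invoke Berge's maximum theorem: it suffices to show (i) the correspondence $\hat S(v)$ is upper-hemicontinuous and compact-valued (nonempty since $\bm 0 \in \hat S(v)$; compactness should follow along the lines of the compactness arguments used for $\mathcal R_2$ in Lemma \ref{lem:CompactR2} combined with Lemma \ref{lem:compactFeasibleRegion}), and (ii) $(\eta,v,\bm\lambda)\mapsto \E{\eta}{\log g^U(X,\bm\lambda,v)}$ is jointly upper-semicontinuous on $\mathcal L \times \ubar D \times \hat S(v)$. Both (i) and (ii) are proved exactly as the corresponding statements \eqref{eq:pt1} and \eqref{eq:pt2} in Lemma \ref{lem:USCLEQ}: for (ii) use Skorohod's representation to get $Y_n \xrightarrow{a.s.} Y$ with $Y_n\sim\eta_n$, dominate $\log g^U(Y_n,\bm\lambda_n,v_n)$ by an affine function of $|Y_n|$ (using the explicit form of $g^U$ in \eqref{appB:eq:gu}, where the $(\cdot)_+$ term only helps and $\lambda_{3n}(f(X)-B)$ contributes the $|Y_n|^{1+\epsilon}$ term that gets bounded via $\log(1+Z_n)\le c_0 + |Z_n|^{1/(1+\epsilon)}$), and conclude by uniform integrability of $\mathcal L$ and Fatou's lemma; for (i) use that $\min_y g^U(y,\cdot,\cdot)$ is jointly continuous so that a feasible $\bm\lambda_n$ sequence stays in a compact set, extract a convergent subsequence, and note $g^U(y,\bm\lambda_n,v_n)\ge 0 \Rightarrow g^U(y,\bm\lambda,v)\ge 0$.

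For the boundary case $v = c_\pi(\eta)$: here $\KLinfU(\eta,c_\pi(\eta)) = 0$, and given $(\eta_n,v_n)\to(\eta,c_\pi(\eta))$ with (WLOG) $v_n \ge c_\pi(\eta_n)$, I would exhibit an explicit feasible measure $\kappa_n$ for the $\KLinfU(\eta_n,v_n)$ problem with $\KL(\eta_n,\kappa_n)\to 0$. The natural construction is to move a small mass $w_n$ from $\eta_n$ to a large point: $\kappa_n = w_n \delta_{y_n} + (1-w_n)\eta_n$ with $y_n$ a point far enough in the right tail (but with $f(y_n)$ controlled so $\kappa_n \in \mathcal L$ --- e.g., scaling $y_n$ like $f\inv(B/w_n)^{1/(1+\epsilon)}$ or taking $y_n$ at the right endpoint of the support compatible with the moment bound) so that $c_\pi(\kappa_n) \ge v_n$; using the representation $c_\pi(\kappa) = \min_{x_0}\{x_0 + (1-\pi)\inv\E{\kappa}{(X-x_0)_+}\}$ one sees that adding mass $w_n$ at $y_n$ raises $c_\pi$ by roughly $w_n y_n/(1-\pi)$, so a choice with $w_n y_n \to 0$ but $w_n y_n/(1-\pi) \ge v_n - c_\pi(\eta_n)$ (the latter $\to 0$ by Lemma \ref{lem:contcpiL}) works; then $\KL(\eta_n,\kappa_n) \le -\log(1-w_n) \to 0$. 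The main obstacle I anticipate is precisely this boundary construction: one must simultaneously keep $\kappa_n$ inside $\mathcal L$ (moment constraint), make it feasible for the $c_\pi \ge v_n$ constraint, and drive the KL to zero --- balancing how much mass to move against how far to move it, since moving mass far out is ``cheap'' in CVaR but ``expensive'' for the moment bound. Everything else is a routine transcription of the $\KLinfL$ arguments, with $g^U, \hat S$ in place of $g^L, \mathcal R_2$ and the welcome simplification that there is no outer $\min_{x_0}$.
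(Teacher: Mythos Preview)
Your approach is correct in spirit but over-engineered. The paper's proof is considerably simpler: it applies the Berge-type argument \emph{uniformly} across the whole domain, with no case split on the sign of $v - c_\pi(\eta)$ and no separate boundary construction at all.

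The reason the uniform argument works here (and did not for $\KLinfL$) is precisely the simplification you noticed but did not fully exploit. For $\KLinfL$, the dual feasible set $\mathcal R_2(x_0,v)$ fails to be compact at $x_0 = v$ (cf.\ Lemma~\ref{lem:CompactR2} and the remark following it), which forces the paper to excise that point and handle the boundary $v = c_\pi(\eta)$ by an explicit perturbation. For $\KLinfU$, by contrast, $\hat S(v)$ is compact for \emph{every} $v \in [-f\inv(B), f\inv(B/(1-\pi)))$ (Lemma~\ref{lem:CompactShat}), and $\bm 0 \in \hat S(v)$ always. Hence: (a) $\hat S(\cdot)$ is a nonempty compact-valued upper-hemicontinuous correspondence on the whole half-open interval; (b) your argument for joint upper-semicontinuity of $(\eta,v,\bm\lambda) \mapsto \E{\eta}{\log g^U(X,\bm\lambda,v)}$ via Skorokhod, domination by an affine function of $|Y_n|$, uniform integrability, and Fatou goes through verbatim; and (c) Berge's theorem then gives joint upper-semicontinuity of the maximum directly. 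The cases $v < c_\pi(\eta)$ and $v = c_\pi(\eta)$ are automatically subsumed, since the dual value is simply $0$ there (attained at $\bm\lambda = \bm 0$) and the semicontinuity of the max handles it.

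In short: your items (i) and (ii) for the ``interior regime'' are exactly the paper's entire proof, and they already cover everything. The case split and the boundary construction you propose (which you rightly flagged as the main obstacle, since balancing the moment constraint against the CVaR constraint when $\E{\eta_n}{f(X)}$ may equal $B$ is genuinely delicate) are unnecessary detours inherited from the $\KLinfL$ argument.
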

\begin{proof}
	Proof for upper-semicontinuity of \( \KLinfU \) follows exactly as proof of the previous lemma. However, we give it for completeness. Define 
	\[ \tilde{D}= \left.\left[ -f\inv\lrp{B},f\inv\lrp{\frac{B}{1-\pi}} \right.\right). \]
	Consider the dual formulation of \(\KLinfU \) from Theorem \ref{th:klinfDual}\ref{th:klinfUDual}. Since for \(v \in  \tilde{D}\), \(\hat{S}(\eta,v)\) (defined in (\ref{appB:eq:shat})) is a compact set (see Section \ref{app:Sec:CompactRegions}), and for all \(y\in\Re\) \(g^U(y,\cdot,\cdot)\) is a jointly continuous map, \(\hat{S}(\cdot)\) can be verified to be an upper-hemicontinuous correspondence. Whence, it suffices to show that \( h(v,\eta,\bm{\lambda}) := \E{\eta}{\log\lrp{g^U(X,\bm{\lambda},v)}} \) is a jointly upper-semicontinuous map, where \(g^U\) is defined in (\ref{appB:eq:gu}) above. 
	
	Consider a sequence \((v_n, \eta_n, \bm{\lambda}_n) \in \tilde{D}\times \mathcal{L}\times \hat{S}(v_n)\) converging to \(v,\eta,\bm{\lambda} \in D\times \mathcal{L} \times\hat{S}(v)\). Notice that the convergence is defined coordinate-wise, and \(\eta_n\) converges to \(\eta\) in weak topology. It suffices to show:
	\[ \limsup\limits_{n\rightarrow \infty}~~ h(v_n,\eta_n, \bm{\lambda}_n) \leq h(v,\eta,\bm{\lambda}).  \]
	
	By Skorokhod's Theorem (see, \cite{billingsley2013convergence}), there exist random variables \(Y_n,Y \) defined on a common probability space \((\Omega, \mathcal{F},q) \) such that \(Y_n\sim \eta_n \), \(Y\sim \eta \) and \(Y_n \xrightarrow{a.s.}{Y} \). Hence, \( \log\lrp{g^U(Y_n,\bm{\lambda_n}, v_n)} \xrightarrow{a.s.}\log\lrp{g^U(Y,\bm{\lambda},v)},\) and 
	\[ h(v_n,\eta_n,\bm{\lambda}_n) =  \E{q}{\log\lrp{g^U(Y_n,\bm{\lambda}_n,v_n)}} \quad \text{ and }\quad h(v,\eta,\bm{\lambda}) = \E{q}{\log\lrp{g^U(Y,\bm{\lambda},v)}} .\]
	
	As earlier, let 
	\[ 0\leq Z_n = c_{1n} + c_{2n}\abs{Y_n} + c_{3n}\abs{Y_n}^{1+\epsilon},  \]
	where
	\[ c_{1n} = \lambda_{1n}\abs{v_n} + \abs{\lambda_{2n}(1-\pi)} + \lambda_{3n}B, \quad c_{2n} = \frac{\lambda_{1n}}{1-\pi}, \quad c_{3n} = \lambda_{3n}, \]
	and \(Z_n \xrightarrow{n\rightarrow\infty} Z\) and \( c_{in} \xrightarrow{n\rightarrow \infty}c_i < \infty \). With these notation, \( \log\lrp{g^U(Y_n,\bm{\lambda}_n, v_n)} \) is bounded from above by \( \log(1+Z_n) \), and there exist constants \( c_{0n} \xrightarrow{n\rightarrow\infty}c_0 \) such that \( \log(1+Z_n) \leq c_{0n} + (Z_n)^{1/(1+\epsilon)} \). Using the form of \(Z_n\) from above, there also exist constants \( c_{4n} \xrightarrow{n\rightarrow \infty} c_4 \) and \( c_{5n}\xrightarrow{n\rightarrow\infty} c_5 \) such that 
	\[ (Z_n)^{1/(1+\epsilon)} \leq c_{4n} + c_{5n}\abs{Y_n}. \]
	Thus as earlier, there exist constants \( c_{0n}, c_{4n}, c_{5n} \) converging to \(c_0,c_4, c_5\) such that 	
	\[ \log\lrp{g^U(Y_n,\bm{\lambda}_n,v_n)} \leq c_{0n}+c_{4n}+c_{5n}\abs{Y_n} \triangleq  f^U(Y_n,\bm{\lambda}_n,v_n) .\]
	and 
	\[f^U(Y_n,\bm{\lambda}_n,v_n) \xrightarrow{a.s.}{f^U(Y,\bm{\lambda},v)} \quad \text{ and }\quad \E{q}{f^U(Y_n,\bm{\gamma}_n, v_n)} \rightarrow \E{q}{f^U(Y,\bm{\gamma},v)}, \]
	since \( \eta_n,\eta \in \cal L \), whence \(Y_n,Y\) are uniformly integrable (see, \cite{williams1991probability}) . Since, \( f^U(Y_n,\bm{\lambda}_n,v_n) - \log\lrp{g^U(Y_n,\bm{\lambda}_n,v_n)}  \geq 0\), by Fatou's Lemma,
	\begin{align*} 
	\E{q}{\liminf\limits_{n\rightarrow \infty} \lrp{f^U(Y_n,\bm{\lambda}_n,v_n)-\log\lrp{g^U(Y_n,\bm{\lambda}_n,v_n)}}}  \leq &\E{q}{f^U\lrp{Y,\bm{\lambda},v}}\\ &- \limsup\limits_{n\rightarrow \infty} \E{q}{\log\lrp{g^U(Y_n,\bm{\lambda}_n,v_n)} },
	\end{align*}
	which implies
	\begin{align*}
	h(v,\eta,\bm{\lambda}) = \E{q}{\limsup\limits_{n\rightarrow \infty} \log\lrp{g^U(Y_n,\bm{\lambda}_n,v_n)} } &\geq \limsup\limits_{n\rightarrow \infty} \E{q}{\log\lrp{g^U(Y_n,\bm{\lambda}_n,v_n)}}\\ &= \limsup\limits_{n\rightarrow \infty} h(v_n\eta_n, \bm{\lambda}_n).
	\end{align*}
\end{proof}

\begin{remark}\label{rem:uniqueness_of_extreme_cvar}
	\emph{\( \kappa_1 = \pi\delta_0 + (1-\pi) \delta_{f\inv\lrp{\frac{B}{1-\pi}}} \) and \( \kappa_2 = \delta_{-f\inv(B)} \) are unique measures in \( \cal L \) with \(\cvar\) being \( f\inv\lrp{\frac{B}{1-\pi}} \) and \( -f\inv(B)\), respectively. Uniqueness of \( \kappa_2 \) follows from the proof of Lemma \ref{lem:boundOnVarAndCVarForL}. To see the uniqueness of \( \kappa_1 \), consider the following optimization problem, optimal value of which equals \( f\inv\lrp{\frac{B}{1-\pi}} \): 
		\[ \max_{\eta\in\cal L}~ \min_{z\in C} ~\lrset{z + \frac{1}{1-\pi} \E{\eta}{X-z}_+ }. \]
		First observe that if \(\E{\eta}{f(X)} < B\), then \(\eta\) does not belong to the set of maximizers above. Using this, it is also sufficient to restrict to 2-point distributions with \(x_\pi(\eta) = 0\) and mass on \(0\) being \(\pi\), as otherwise we can improve in the \(B\) constraint, and hence, the objective. Now, \(\kappa_1\) is the unique distribution satisfying the above requirements, with \(\cvar\) being \(f\inv\lrp{\frac{B}{1-\pi}}\).  }
\end{remark}

\begin{remark}\label{rem:DiscontinuityOfKLinf}
	\emph{Consider \( v_n = v = f\inv\lrp{\frac{B}{1-\pi}}  \), and let 
		\[ \eta_n = \lrp{\pi-\frac{1}{n}}\delta_0 + \frac{1}{n}\delta_1 + (1-\pi) \delta_{f\inv\lrp{\frac{B}{1-\pi}}}, \quad \text{ and } \quad \eta = \pi\delta_0 + (1-\pi) \delta_{f\inv\lrp{\frac{B}{1-\pi}}}. \]
		Clearly, \( d_L(\eta_n,\eta) \rightarrow 0, \) as \( n\rightarrow \infty. \) Moreover, Remark \ref{rem:uniqueness_of_extreme_cvar} argues that there is a unique \( \kappa \in \cal L \) such that \( c_\pi(\kappa) = f\inv\lrp{\frac{B}{1-\pi}} \), whence
		\[ \KLinfU(\eta_n,v_n) = \KL(\eta_n,\kappa) = \infty > 0 = \KL(\eta,\kappa) = \KLinf(\eta,v). \]
		Thus, \( \KLinfU(\eta, f\inv(B(1-\pi)\inv))  \) is not a jointly continuous function. Similar example can be constructed for \( \KLinfL(,-f\inv(B)) \). }
\end{remark}

\subsection{Proof of Lemma \ref{lem:Optw}:}\label{app:Optw}
\paragraph{Upper-hemicontinuity of \(t^*\): } Let \( \nu \) be in \( \mathcal{A}_j \cap \mathcal{M}\), i.e., the best-\(\cvar\) arm in \(\nu\) is arm \(j\), and each arm-distribution strictly satisfies the moment-constraint. Then from Lemma \ref{lem:boundOnVarAndCVarForL}, for all \(i\in[K]\),  $c_\pi(\nu_i) \in D^o $. Let \( t^*(\nu)\) be the set of maximizers in  
\[V(\nu) = \max\limits_{t\in\Sigma_K}~ \min\limits_{a\ne j}~ g_{a,j}(\nu,t),\]
where 
\[g_{a,j}(\nu,t) = \inf\limits_{x\leq y}~\lrset{ t_j \KLinfU(\nu_j,x) + t_a \KLinfL(\nu_a,y)}. \]

The infimum above is attained at a common point between the \(\CVaR\) of the two distributions, whence the above equals 

\[g_{a,j}(\nu,t) = \inf\limits_{x\in\left[ c_\pi(\nu_j),c_\pi(\nu_a) \right]}~ \lrset{ t_j \KLinfU(\nu_j,x) + t_a \KLinfL(\nu_a,x)}. \]

Using Lemma \ref{lem:contcpiL}, it is easy to verify that the set \([c_\pi(\nu_j),c_\pi(\nu_a) ]\) is both upper- and lower- hemicontinuous in \((\nu,t)\), whence continuous. Then by Berge's Theorem and joint continuity of \( \KLinfL \) and \( \KLinfU \) in arguments, when viewed as functions from \( \mathcal{L}\times D^o \) (Lemma \ref{lem:ContKLinf}), \(g_{a,j}(\nu,t)\) is jointly continuous in \( (\nu,t) \). Again by Berge's Theorem, \(V(\nu)\), as a function from \(\mathcal{M}\) to \(\Re\), is a continuous function of \( \nu \). Furthermore, the set of maximizers, \( \lrset{t^* : V(\nu) = \min_{a\ne j} g_{a,j}(\nu,t^*) } \), is an upper-hemicontinuous correspondence. 

\paragraph{Convexity of the set of maximizers: } Let \( t^{(1)} \) and \(t^{(2)} \) belong to \( t^*(\nu) \). Then, 
$$ V(\mu) = \min\limits_{a\ne j} ~ g_a(\nu,t^{(1)}) = \min\limits_{a\ne j} ~ g_a(\nu,t^{(2)}). $$
Clearly, 
$ \min\nolimits_{a\ne j}~ g_a(\nu, \lambda t^{(1)} + (1-\lambda) t^{(2)} ) \geq \lambda \min\nolimits_{a\ne j} ~g_a(\nu, t^{(1)}) + (1-\lambda) \min\nolimits_{b\ne j} ~g_b(\nu, t^{(2)}),$ which equals $V(\nu)$. Since \(t^{(1)} \) and \( t^{(2)} \) are maximizers, the above holds as an equality. Thus the set \(t^*(\nu)\) is convex.

\section{Dual formulations}\label{app:Duals}
In this section we prove the Theorem \ref{th:klinfDual}. Recall that \(\mathcal{P}(\Re)\) denotes the space of all probability measures on \(\Re\), and \(M^{+}\) denotes the collection of all finite, positive measures on \(\Re\). Let \(\eta\in\mathcal{P}(\Re)\). Then, for \(\pi\in(0,1)\), \(c_\pi(\eta)\) denotes the \(\cvar\) of \(\eta\) at the confidence level \(\pi\). Furthermore, 
\begin{align}
c_\pi(\eta)
&=\min_{x_0 \in \Re}~ \lrset{x_0 + \frac{1}{1-\pi} \E{\eta}{(X-x_0)_+}}\label{eq:app:cvar_min}
\\
&=
\max_{v \in {M}^+(\Re)}~ \frac{1}{1-\pi}\int\limits_\Re  y dv(y)~  \text{ s.t. }\forall y,~ 0 \leq dv(y) \leq d\eta(y)~ \text{ and } \int_\Re dv(y) = 1-\pi,\label{eq:app:cvar_max}
\end{align}
For \(\eta\in\mathcal{P}(\Re)\), and \( v \in D^o \),
\[\KLinfU(\eta,v) = \inf\limits_{\substack{\kappa\in\mathcal L:~ c_\pi(\kappa) \geq v }}~ \KL(\eta,\kappa) \quad\text{ and } \quad \KLinfL(\eta,v) = \inf\limits_{\substack{\kappa\in\mathcal L:~ c_\pi(\kappa) \leq v }}~ \KL(\eta,\kappa) .\]
Furthermore, extend the Kullback-Leibler Divergence to a function on \(M^+(\Re)\times M^+(\Re)\), i.e., \(\KL: M^+(\Re)\times M^+(\Re)\rightarrow \Re\) defined as:
\[\KL(\kappa_1,\kappa_2) \triangleq \int_{y\in\Re} \log\lrp{ \frac{d\kappa_1}{d\kappa_2}(y)} d\kappa_1(y). \]
Note that for \(\kappa_1 \in \mathcal{P}(\Re)\) and \(\kappa_2 \in \mathcal{P}(\Re)\), \(\KL(\kappa_1,\kappa_2) \) is the usual Kullback-Leibler Divergence between the probability measures. 

We first present the proof for the Theorem \ref{th:klinfDual}\ref{th:klinfUDual}.
\subsection{\texorpdfstring{\(\KLinfU\)}{KLinf-U} problem: towards proving Theorem \ref{th:klinfDual}\ref{th:klinfUDual}}
Consider the following optimization problem, which is equivalent to the \(\KLinfU \) problem (see, (\ref{eq:app:cvar_max})). 
\[
\min_{\substack{\kappa \in M^{+} \\ W\in M^{+}}}~ \KL\lrp{\eta,\kappa}
\qquad\text{subject to}\qquad
\begin{aligned}[t]
& \frac{1}{1-\pi} \int_{\Re} x dW(x) \ge v \\
& \int_{\Re} dW(x) = 1-\pi \\
& \int_{\Re} f({x}) d\kappa(x) \leq B\\
& \int_{\Re} d\kappa(x) = 1 \\
& \forall x: 0 \le dW(x) \le d\kappa(x)
\end{aligned}
\numberthis \label{opt:KLinfU}
\]
Introducing the dual variables \((\lambda_1\geq 0, \lambda_2 \in \Re, \lambda_3 \geq 0 , \lambda_4 \in \Re, \forall x~ \lambda_5(x) \geq 0 )\). Then, the Lagrangian, denoted as \(L(\kappa,W, \bm{\lambda} )\), equals 
\begin{align*}
&\int_\Re \log\lrp{\frac{d\eta}{d\kappa}(y)} d\eta(y)  + \lambda_1 \lrp{v-\frac{1}{1-\pi} \int_{\Re} x dW(x)} + \lambda_2\lrp{\int_{\Re} dW(x) - 1 +\pi}\\
&- \lambda_3 B + \lambda_3\int_{\Re} f({x}) d\kappa(x) + \lambda_4\lrp{\int_{\Re} d\kappa(x) -1} + \int_\Re \lambda_5(x)\lrp{dW(x) - d\kappa(x) }.  
\end{align*}
The Lagrangian dual problem is 
\[ \max\limits_{\substack{\lambda_1\geq 0, \lambda_2 \in\Re, \lambda_3 \geq 0,\\ \lambda_4\in\Re, \forall x: \lambda_5(x) \geq 0}}~~ \inf\limits_{\substack{\kappa\in M^{+}\\ W \in M^{+} }}~ L( \kappa, W, \bm{\lambda} ). \numberthis \label{eq:DualGEQ} \]
Let 
$ S = \lrp{ \lambda_1 \geq 0, \lambda_2 \in\Re, \lambda_3\geq 0,\lambda_4\in\Re, \forall x: \lambda_5(x) \geq 0 },  $  and define 
$$ S_1=  S \cap \lrset{ \bm{\lambda} : \forall x \in \Re, \lambda_4 + \lambda_3 f({x}) - \lambda_3 B -\lambda_5(x) \geq 0  }. $$

\begin{lemma}\label{lem:DualGEQ}
	The Lagrangian dual problem (\ref{eq:DualGEQ}) satisfies 
	\[ \max\limits_{\substack{\lambda_1\geq 0, \lambda_2 \in\Re, \lambda_3 \geq 0,\\ \lambda_4\in\Re, \forall x: \lambda_5(x) \geq 0}}~~ \inf\limits_{\substack{\kappa\in M^{+}\\ W \in M^{+} }}~ L( \kappa, W, \bm{\lambda} ) = \max\limits_{\bm{\lambda} \in S_1}~ \inf\limits_{\substack{\kappa\in M^{+}\\ W \in M^{+} }}~ L( \kappa, W, \bm{\lambda} ). \]
\end{lemma}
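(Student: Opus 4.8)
The plan is to exploit that $S_1\subseteq S$, which makes the inequality ``$\le$'' in the claimed identity immediate, and then to prove ``$\ge$'' by showing that every $\bm\lambda\in S\setminus S_1$ forces the inner infimum $\inf_{\kappa\in M^+,\,W\in M^+}L(\kappa,W,\bm\lambda)$ down to $-\infty$; such $\bm\lambda$ can then be discarded from the outer maximization. (Whether the common value is finite or $-\infty$ is irrelevant: if all $\bm\lambda\in S$ give $-\infty$ both maxima are $-\infty$, and otherwise the maximizer over $S$ lies in $S_1$ by the contrapositive of what we prove. In any case the outer maximum is $\le\KLinfU(\eta,v)<\infty$ for $v\in D^o$ by weak duality.)

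First I would regroup the Lagrangian by the two free measures. Collecting the terms multiplying $dW$ and those multiplying $d\kappa$, and performing the harmless relabelling of the normalization multiplier $\lambda_4\mapsto\lambda_4+\lambda_3 B$ (which just folds the stand-alone constant $-\lambda_3 B$ into the per-unit-mass cost of $\kappa$, using the dualized constraint $\int d\kappa=1$), one obtains
\[
L(\kappa,W,\bm\lambda)=\KL(\eta,\kappa)+\int_\Re\phi^{\bm\lambda}_\kappa(x)\,d\kappa(x)+\int_\Re\phi^{\bm\lambda}_W(x)\,dW(x)+c(\bm\lambda),
\]
where $\phi^{\bm\lambda}_\kappa(x)=\lambda_4+\lambda_3 f(x)-\lambda_3 B-\lambda_5(x)$ is exactly the function whose pointwise nonnegativity defines $S_1$, $\phi^{\bm\lambda}_W(x)=\lambda_2+\lambda_5(x)-\frac{\lambda_1 x}{1-\pi}$, and $c(\bm\lambda)$ gathers the remaining constants in $\bm\lambda$.

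Next, fix $\bm\lambda\in S\setminus S_1$ and pick a point $x_0$ with $\phi^{\bm\lambda}_\kappa(x_0)<0$. I would then feed in the probing family $W=0$ and $\kappa_t=\eta+t\,\delta_{x_0}$, $t\to\infty$. Since $\kappa_t\ge\eta$ as measures, $\frac{d\eta}{d\kappa_t}\le 1$ $\eta$-a.e., hence $\KL(\eta,\kappa_t)=\int\log\frac{d\eta}{d\kappa_t}\,d\eta\le 0$ for every $t$; meanwhile $\int\phi^{\bm\lambda}_\kappa\,d\kappa_t=\E{\eta}{\phi^{\bm\lambda}_\kappa(X)}+t\,\phi^{\bm\lambda}_\kappa(x_0)$, whose leading coefficient is negative. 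Therefore $L(\kappa_t,0,\bm\lambda)\to-\infty$, so $\inf_{\kappa,W}L(\kappa,W,\bm\lambda)=-\infty$ as required. (For bookkeeping, $\E{\eta}{\phi^{\bm\lambda}_\kappa(X)}$ is finite because $\E{\eta}{f(X)}\le B$; if $\lambda_5$ happens not to be $\eta$-integrable, place the extra mass uniformly on a short interval around $x_0$ on which $\phi^{\bm\lambda}_\kappa$ stays negative and bounded, which changes nothing.)

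The main obstacle will be the $\KL$-bookkeeping under this perturbation: once $\KL$ is extended to $M^+$ it is no longer bounded below, and a careless \emph{reweighting} of $\kappa$ could blow $\KL(\eta,\kappa)$ up faster than the linear term goes down. The key idea that resolves this is to \emph{enlarge} $\kappa$ (add mass, do not move it) along the single bad direction $\delta_{x_0}$: enlargement keeps $\KL(\eta,\cdot)\le 0$ while the linear functional $\int\phi^{\bm\lambda}_\kappa\,d\kappa$ is driven to $-\infty$. Everything else is routine: the regrouping, and the observation that the analogous nonnegativity requirement on $\phi^{\bm\lambda}_W$ (obtained by the same probe applied to $W$, with $\kappa$ fixed) is what the \emph{next} reduction uses rather than this one.
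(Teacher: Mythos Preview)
Your proof is correct and follows essentially the same approach as the paper's: both pick a point $x_0$ (the paper calls it $y_0$) where the $S_1$ condition fails, freeze the $W$-independent constants, and push mass into $\kappa$ at $x_0$ via $\kappa_t=\eta+t\,\delta_{x_0}$ (the paper's $\kappa_M$ is exactly this construction, written as ``$\kappa_M(y_0)=M$ and $d\eta/d\kappa_M=1$ elsewhere''), so that the linear term $t\,\phi^{\bm\lambda}_\kappa(x_0)$ drags $L$ to $-\infty$ while $\KL(\eta,\kappa_t)\le 0$. Your explicit observation that enlarging $\kappa$ keeps $\KL(\eta,\cdot)\le 0$, together with the choice $W=0$, makes the bookkeeping cleaner than the paper's presentation, but the mechanism is identical.
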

\begin{proof}
	Consider \( \bm{\lambda} \in S \text{ and } \bm{\lambda} \not\in S_1 \). Then, there exists \( y_0\in \Re \) such that 
	\[ \lambda_4 + \lambda_3 f({y_0})-\lambda_3 B - \lambda_5(y_0)  < 0. \]
	Consider the measure \(\kappa_M\in M_+\) such that \(\kappa_M(y_0) = M \) and
	\[ \frac{d\eta}{d\kappa_M}(y) = 1, \quad \text{ for } y\in\lrset{\Supp(\eta)\setminus y_0 }. \]
	Then, \(L(\kappa_M,W, \bm{\lambda} )  \) equals  
	\begin{align*}
	&\int_\Re \log\lrp{\frac{d\eta}{d\kappa_M}(y)} d\eta(y) + \int_{\Re} (\lambda_4 + \lambda_3 f({y}) - \lambda_3 B -\lambda_5(x) )d\kappa_M(x)\numberthis\label{eq:lagGEQ} \\ &+ \lambda_1 \lrp{v-\frac{1}{1-\pi} \int_{\Re} x dW(x)} + \lambda_2\lrp{\int_{\Re} dW(x) - 1 +\pi} - \lambda_4 + \int_\Re \lambda_5(x)dW(x). 
	\end{align*}
	Clearly, the first two terms in the expression above decrease to \(-\infty \) as \(M\) increases to \(\infty \). Thus, for \(\bm{\lambda} \in S \text{ and } \bm{\lambda} \not\in S_1 \), the infimum in the inner optimization problem in (\ref{eq:DualGEQ}) is \(-\infty\), and we get the desired equality.
\end{proof}
Let \(\mathcal{Z}(\bm{\lambda}) = \lrset{y\in\Re: \lambda_4+ \lambda_3f(y) - \lambda_5(y) = 0}. \)

\begin{lemma}\label{lem:OptDenGEQ}
	For \(\bm{\lambda}\in S_1 \), \(\kappa^* \) that minimizes \(L(\kappa,W, \bm{\lambda}) \) satisfies $\Supp(\kappa^*) \subset \Supp(\eta) \cup \mathcal{Z}(\bm{\lambda}). $ Furthermore, for \(y\in\Supp\lrp{\eta}\), \(\lambda_4+\lambda_3 f({y}) - \lambda_3 B - \lambda_5(y) > 0 \), and
	\[ \frac{d\kappa^*}{d\eta}(y) = \lrp{\lambda_4 + \lambda_3f({y})- \lambda_3 B - \lambda_5(y)}\inv. \numberthis\label{eq:OptDensGEQ} \] 
\end{lemma}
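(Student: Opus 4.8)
\textbf{Proof plan for Lemma~\ref{lem:OptDenGEQ}.} The plan is to compute the minimiser of the Lagrangian over $\kappa \in M^{+}$ by a pointwise/variational argument, exactly as one computes a reverse $I$-projection. First I would observe that $\kappa$ and $W$ decouple in $L(\kappa,W,\bm\lambda)$: the only term linking them, $\int_\Re \lambda_5(x)\lrp{dW(x)-d\kappa(x)}$, is separately linear in each, so minimising in $\kappa$ amounts to minimising the $\kappa$-dependent part alone. Collecting that part (with the budget constraint written against $d\kappa$, so the standalone $-\lambda_3 B$ is absorbed) gives, up to $\kappa$-independent terms,
\[
\KL(\eta,\kappa) + \int_\Re \phi_{\bm\lambda}(x)\, d\kappa(x), \qquad \phi_{\bm\lambda}(x) := \lambda_4 + \lambda_3 f(x) - \lambda_3 B - \lambda_5(x),
\]
and by the very definition of $S_1$ we have $\phi_{\bm\lambda}(x)\ge 0$ for every $x$, with $\mathcal Z(\bm\lambda)=\lrset{x:\phi_{\bm\lambda}(x)=0}$.

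Next I would take an arbitrary candidate $\kappa\in M^{+}$ with $\eta\ll\kappa$ (otherwise $\KL(\eta,\kappa)=\infty$ and $\kappa$ is not a minimiser) and use the Lebesgue decomposition with respect to $\eta$: $d\kappa = r\,d\eta + d\kappa_s$ with $r>0$ $\eta$-a.e.\ and $\kappa_s\perp\eta$. Since $\KL(\eta,\cdot)$ ignores the $\eta$-singular mass, $\KL(\eta,\kappa) = -\int_\Re \log r(y)\,d\eta(y)$, so the objective becomes
\[
\int_\Re \lrp{\phi_{\bm\lambda}(y)\,r(y) - \log r(y)}\,d\eta(y) \;+\; \int_\Re \phi_{\bm\lambda}(x)\,d\kappa_s(x).
\]
The singular integral is $\ge 0$ because $\phi_{\bm\lambda}\ge 0$, and vanishes iff $\Supp(\kappa_s)\subseteq\mathcal Z(\bm\lambda)$; hence every minimiser satisfies $\Supp(\kappa^*)\subseteq\Supp(\eta)\cup\mathcal Z(\bm\lambda)$, the first claim. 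For the absolutely continuous part I would invoke the elementary inequality $t-\log t\ge 1$ ($t>0$, equality iff $t=1$) with $t=\phi_{\bm\lambda}(y)r(y)$, valid wherever $\phi_{\bm\lambda}(y)>0$: this yields, $\eta$-a.e., $\phi_{\bm\lambda}(y)r(y)-\log r(y)\ge 1+\log\phi_{\bm\lambda}(y)$, with equality iff $r(y)=\phi_{\bm\lambda}(y)\inv$. If instead $\eta\lrp{\lrset{\phi_{\bm\lambda}\le 0}}>0$, letting $r\to\infty$ on that set drives the objective to $-\infty$, so no minimiser exists; therefore, whenever a minimiser $\kappa^*$ exists we must have $\phi_{\bm\lambda}(y)>0$ for $\eta$-a.e.\ $y$, i.e.\ on $\Supp(\eta)$ — the stated strict positivity. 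Integrating the pointwise bound (and using that the singular term is $0$ at optimality) shows the minimum equals $1+\int_\Re\log\phi_{\bm\lambda}\,d\eta$ and is attained exactly at $d\kappa^*/d\eta = \phi_{\bm\lambda}\inv$ on $\Supp(\eta)$, which is \eqref{eq:OptDensGEQ}; strict convexity of $t\mapsto t-\log t$ gives that the a.c.\ part of the minimiser is unique.

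\textbf{Expected main obstacle.} The substantive part is measure-theoretic rather than computational: justifying the Lebesgue decomposition of a general $\kappa\in M^{+}$, the fact that $\KL(\eta,\cdot)$ sees only the $\eta$-absolutely continuous piece, and measurability/integrability of $\phi_{\bm\lambda}\inv$ so that $\kappa^*$ is a genuine finite measure with finite $\KL$; the clean statement is really ``\emph{if} the infimum is attained, the minimiser has the stated form,'' the finiteness conditions ($\phi_{\bm\lambda}\inv\in L^1(\eta)$, $\log\phi_{\bm\lambda}\in L^1(\eta)$) being exactly what attainment requires — existence/uniqueness for the dual-optimal $\bm\lambda$ then being supplied later via strong duality in Theorem~\ref{th:klinfDual}. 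A minor bookkeeping point to pin down is the placement of the $-\lambda_3 B$ term so that the coefficient of $d\kappa$ is precisely $\phi_{\bm\lambda}$; once that is fixed, the decoupling of $\kappa$ from $W$, the use of $t-\log t\ge 1$, and the nonnegativity of the singular term are routine.
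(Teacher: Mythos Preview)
Your proof is correct and takes a genuinely different route from the paper's. The paper posits the candidate $\kappa^*$ and then \emph{verifies} optimality by a first-order variational condition: for any competitor $\kappa_1\in M^+$ it checks $\frac{\partial}{\partial t}L\bigl((1-t)\kappa^*+t\kappa_1,W,\bm\lambda\bigr)\big|_{t=0}\ge 0$, invoking strict convexity of $L(\cdot,W,\bm\lambda)$ for uniqueness. You instead go forward: Lebesgue-decompose an arbitrary $\kappa$ as $r\,d\eta+d\kappa_s$ and minimise the integrand pointwise via $t-\log t\ge 1$. Your route buys an explicit minimum value $1+\int\log\phi_{\bm\lambda}\,d\eta$, a transparent treatment of the $\eta$-singular part (forcing $\Supp(\kappa_s)\subseteq\mathcal Z(\bm\lambda)$), and an actual argument for the strict positivity $\phi_{\bm\lambda}>0$ $\eta$-a.e., which the paper asserts but does not justify within the proof. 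The paper's route is shorter once the candidate is guessed and avoids any discussion of whether $\phi_{\bm\lambda}^{-1}\in L^1(\eta)$. Your flagged bookkeeping about $-\lambda_3 B$ is real: as the Lagrangian is written, the coefficient of $d\kappa$ is $\lambda_4+\lambda_3 f-\lambda_5$, not $\phi_{\bm\lambda}$; the two reconcile by writing the moment constraint as $\int(f-B)\,d\kappa\le 0$ (your ``written against $d\kappa$''), equivalently by the harmless shift $\lambda_4\mapsto\lambda_4+\lambda_3 B$, which the paper performs tacitly in its display~(\ref{eq:lagGEQ}).
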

\begin{proof}
	Clearly, for \(\bm{\lambda}\in S_1 \), \(L(\bm{ \kappa,W,\bm{\lambda}}) \) is a strictly convex function of \(\kappa\) being minimized over a convex set \(M^{+} \). Thus, if there is a minimizer of \(L(\kappa, W, \bm{\lambda}) \) over \(M^{+} \), it is unique. It is then sufficient to show that \(\kappa^* \) satisfying the conditions of the Lemma minimizes \(L(\kappa,W,\bm{\lambda}) \). Let \(\kappa_1 \neq \kappa^* \) and \(\kappa_1\in M^{+} \). For \(t\in [0,1] \), define \(\kappa_{2,t} = (1-t)\kappa^* + t\kappa_1.\) Then \(\kappa_{2,t} \in M^{+}\) and it suffices to show that 
	\[ \frac{\partial L(\kappa_{2,t},W,\bm{\lambda})}{\partial t}\bigg\rvert_{t=0} \geq 0. \]
	To see this, substituting for \(\kappa_{2,t}\) in (\ref{eq:lagGEQ}),  \(L(\kappa_{2,t},W, \bm{\lambda} )\) equals 
	\begin{align*}
	&\int_{\Supp(\eta)} \log\lrp{\frac{d\eta}{d\kappa_{2,t}}(y)} d\eta(y) + \int_{\Re} (\lambda_4 + \lambda_3 f({y}) -\lambda_5(x) )d\kappa_{2,t}(x)\\ &+ \lambda_1 \lrp{v-\frac{1}{1-\pi} \int_{\Re} x dW(x)}
	-\lambda_3 B + \lambda_2\lrp{\int_{\Re} dW(x) - 1 +\pi} - \lambda_4 + \int_\Re \lambda_5(x)dW(x).  
	\end{align*}
	Differentiating with respect to \(t\) and evaluating at \(t=0 \), the derivative \(\frac{\partial L(\kappa_{2,t},W,\bm{\lambda})}{\partial t}\bigg\rvert_{t=0}\) equals
	\[ \int\limits_{\Supp(\eta)} \frac{d\eta}{d\kappa^*}(y) \lrp{d\kappa^* - d\kappa_1}(y) + \int_\Re (\lambda_4+\lambda_3 f({y})-\lambda_3 B - \lambda_5(y))\lrp{d\kappa_1-d\kappa^*}(y) .\]
	Now, using the form of \(\kappa^* \) from (\ref{eq:OptDensGEQ}), the above expression simplifies to
	\[ \int\limits_{\Re\setminus \Supp(\eta) } (\lambda_4+\lambda_3 f({y})-\lambda_3 B - \lambda_5(y))d\kappa_1(y) - \int\limits_{\Re\setminus \Supp(\eta) } (\lambda_4+\lambda_3 f({y})-\lambda_3B - \lambda_5(y))d\kappa^* \geq 0, \]
	where the inequality above follows since the integrand is \(0\) in the second term, while it is non-negative in the first term. 
\end{proof}
\subsubsection{Proof of Theorem \ref{th:klinfDual}\ref{th:klinfUDual}}
We first show that the dual problem in (\ref{eq:DualGEQ}) simplifies to the alternative expression for \(\KLinfU(\eta,v)\) in the Theorem. Then we argue that both the \(\KLinfU\) primal problem in (\ref{opt:KLinfU}) and the dual problems are feasible, and that strong duality holds. 

Using the expression for the optimizer for optimal \(\kappa^* \) (Lemma \ref{lem:OptDenGEQ}) in the Lagrangian dual in Lemma \ref{lem:DualGEQ}, (\ref{eq:DualGEQ}) equals
\begin{align*} \max\limits_{\bm{\lambda} \in S_1}~ &\inf\limits_{W \in M^{+} } \int\limits_{\Re} \log\lrp{\lambda_4 + \lambda_3 f({y})-\lambda_3 B -\lambda_5(y)} d\eta(y)\\& + \int\limits_\Re dW(x) \lrp{-\frac{\lambda_1x}{1-\pi} + \lambda_2 + \lambda_5(x) }  + 1 + \lambda_1 v - \lambda_2 (1-\pi) - \lambda_4.
\end{align*}
Since \(W\in M^{+} \), and if \(\bm{\lambda} \) are such that the integrand in the second term above is negative, then the value of the expression above will be \(-\infty \). Thus, it suffices to restrict \(\bm{\lambda} \) so that this does not happen.
Let 
\[ S_2 = S_1 \cap \lrset{\bm{\lambda}: \forall x, -\frac{\lambda_1 x}{1-\pi} + \lambda_2 + \lambda_5(x) \geq 0 }. \]
Then the dual problem simplifies to 
\begin{align*} \max\limits_{\bm{\lambda} \in S_2}~ \int\limits_{\Re} \log\lrp{\lambda_4 + \lambda_3 f({y}) -\lambda_3 B -\lambda_5(y)} d\eta(y)  + 1 + \lambda_1 v - \lambda_2 (1-\pi) - \lambda_4.
\end{align*}
Optimizing over the common scaling of the dual variables, we get 
\[\max\limits_{\bm{\lambda} \in S_2}~ \int\limits_{\Re} \log\lrp{\frac{\lambda_4 + \lambda_3 f({y})-\lambda_3B -\lambda_5(y)}{-\lambda_1 v + \lambda_2 (1-\pi) + \lambda_4}} d\eta(y)  .\]
Observe that \( -\lambda_1 v + \lambda_2 (1-\pi) + \lambda_4 \geq 0 \), for the dual optimal variables. Thus, it is sufficient to restrict the variables to satisfy this constraint. This follows from the complementary slackness condition and the restrictions in the set \(S_2\). We later show that strong duality holds. Since the problem is a convex optimization problem, the dual optimal variables satisfy the complementary slackness conditions.

Setting \(\tilde{\lambda}_4  = -\lambda_1 v + \lambda_2(1-\pi) + \lambda_4  \), and substituting in the above expression, we get 
\[\max\limits_{\bm{\lambda} \in S_3(v)}~ \int\limits_{\Re} \log\lrp{1 + \lambda_1 v - \lambda_2 (1-\pi) + \lambda_3 f({y}) -\lambda_3B -\lambda_5(y)} d\eta(y),\]
where \(S_3(v) \) is \(S_2 \) with the above modifications, and is given by intersection of the set \(S\) with the set 
\[ \lrset{\bm{\lambda} : \forall y,\; 1 + \lambda_1 v - \lambda_2 (1-\pi) + \lambda_3 f({y})-\lambda_3B -\lambda_5(y)\geq 0,\;\&\; \forall x~ \lambda_5(x )\geq \lrp{ \frac{\lambda_1 x}{1-\pi} - \lambda_2  }_+} .\] 
Further, optimizing over \(\lambda_5(x) \), the dual representation simplifies to 
\[ \max\limits_{ \bm{\lambda} \in \hat{S}(v) } \E{\eta}{\log\lrp{ 1 + \lambda_1 v - \lambda_2(1-\pi) + \lambda_3f({X})-\lambda_3 B - \lrp{  \frac{\lambda_1 X}{1-\pi} - \lambda_2 }_+  }}.\]	
Thus, it suffices to show that both the primal problem in (\ref{opt:KLinfU}) and the dual in (\ref{eq:DualGEQ}) are feasible, and strong duality holds.

Consider \(\bm{\lambda}^1 = (0,0,0,1,0)\). To show that dual is feasible, it suffices to show 
\[ \min\limits_{ \kappa \in M^+, W\in M^+ } ~ L(\kappa,W,\bm{\lambda}^1)  = \min\limits_{\kappa\in M^+, W \in M^+} \KL(\eta,\kappa) - 1 + \int_\Re d\kappa(y)  > -\infty.  \]
Let \(\tilde{\kappa} \) be the minimizer of the above expression. Then, \(\Supp(\tilde{\kappa}) = \Supp(\eta) \), as otherwise if there is a point \(y \) in \(\Supp(\eta)\setminus \Supp(\tilde{\kappa}) \), then the above expression is \( \infty \), and if there is a point in \( \Supp(\tilde{\kappa})\setminus \Supp(\eta) \), then the value of the above expression can be improved by removing that mass. Furthermore, from (\ref{eq:OptDensGEQ}), 
\[ \frac{d\tilde{\kappa}}{d\eta}(y) = 1. \]

We next argue the feasibility of primal problem, and show that strong duality holds. For \(v \leq 0\), define \( \kappa_1 := \delta_{\epsilon_1} \), where \(\epsilon_1 > v\) and \(f({\epsilon_1}) < B\). Similarly, for \(v > 0\), define \( \kappa_2 := q\delta_{f\inv\lrp{\frac{B}{1-\pi}}} + (1-q) \delta_0 \), where \(q < 1-\pi\) is chosen to satisfy $ c_\pi(\kappa_2 )  > v  \quad \text{ and }\quad \E{\kappa_2}{f\lrp{X}} < B.   $

Clearly, \( \kappa_1 \)  and \( \kappa_2  \) defined above, lie in the interior of the feasible region of the primal problem. Hence, strong duality holds if the primal is feasible. To see feasibility, define 
\[ \tilde{\kappa}_1 := p_1\eta + (1-p_1) \kappa_1 \quad\ \text{ and }\quad \tilde{\kappa}_2 := p_2\eta+ (1-p_2) \kappa_2, \]
where \( p_1 \) and \( p_2 \) are chosen to satisfy
\[ c_\pi(\tilde{\kappa}_1) > v, ~~\E{\tilde{\kappa}_1}{f\lrp{X}} < B \quad \text{ and }\quad c_\pi(\tilde{\kappa}_2) > v, ~~\E{\tilde{\kappa}_2}{f\lrp{X}} < B.  \]
It is easy to see the existence of \( p_1, p_2, \epsilon_1, \) and \(q\) satisfying the above requirement.

\subsection{\texorpdfstring{\(\KLinfL\)}{KLinf-L} problem}
For \(\eta\in\mathcal{P}(\Re)\), and \(v\in\Re\), using (\ref{eq:app:cvar_min}), the \(\KLinfL\) optimization problem is equivalent to the following optimization problem (we refer to the inner optimization problem in the following as \(\mathcal{O}_1\)). \[
\inf_{- f\inv\lrp{\frac{B}{\pi}} \leq x_0\leq v}~\min_{\kappa \in M^{+}(\Re)}~ \KL\lrp{\eta,\kappa}
\quad\text{subject to}\quad
\begin{aligned}[t]
& x_0 + \frac{1}{1-\pi} \int_{\Re} (y-x_0)_{+} d\kappa(y) \leq v \\
& \int_{\Re} f\lrp{y} d\kappa(y) \leq B \\
& \int_{\Re} d\kappa(y) = 1.
\end{aligned}
\]
We first characterize the solution to the inner optimization problem for a fixed \(x_0\), \(\mathcal{O}_1\). The proof is similar to that for the duality result in \cite{pmlr-v117-agrawal20a}. 

Let \(\bm{\gamma}=\lrp{\gamma_1,\gamma_2,\gamma_{3}}\). For \(\kappa\in M^+(\Re)\), the Lagrangian, denoted by \(L(\kappa, \bm{\gamma}, x_0)\), for the Problem \(\mathcal{O}_1\) is given by, 
\begin{align*}
\KL(\eta,\kappa) &+ \gamma_1\lrp{x_0 + \frac{1}{1-\pi}\int_{\Re} (y-x_0)_{+} d\kappa(y) - v }\\
&+ \gamma_2\lrp{\int_{\Re} f\lrp{x} d\kappa(x) - B }+\gamma_3\lrp{\int_{\Re} d\kappa(x) - 1 }. \label{lagrangian} \numberthis
\end{align*}
Define 
\begin{equation}\label{eq:minLag}
L(\bm{\gamma},x_0) := \inf\limits_{\kappa\in M^+(\Re)}~L(\kappa,\bm{\gamma},x_0).
\end{equation}
The Lagrangian dual problem corresponding to the Problem (\(\mathcal{O}_1\)) is given by \begin{equation}
\label{lagrangianDual}
\max\limits_{\substack{\gamma_1\geq 0,  \gamma_2 \geq 0, \gamma_3\in \Re }}~ \lrp{\inf\limits_{\kappa\in M^+(\Re)}~ L(\kappa,\bm{\gamma},x_0)}.
\end{equation}
Let \(\Supp(\kappa)\) denote the support of measure \(\kappa\),  
\begin{equation*}
h(y,\bm{\gamma},x_0) \triangleq \frac{\gamma_1}{1-\pi} (y-x_0)_{+} + \gamma_3 + \gamma_2 f\lrp{y} , \;\;\;\;\; \mathcal{Z}(\bm{\gamma}) = \lrset{y\in\Re: h(y,\bm{\gamma},x_0)=0},
\end{equation*}
and 
\[\mathcal{R}_3(x_0) = \lrset{\bm{\gamma}\in\Re^3: \gamma_1 \geq 0, \; \gamma_2 \geq 0, \gamma_3 \in\Re, \inf\limits_{y\in\Re} h(y,\bm{\gamma},x_0) \geq 0  }. \]
Observe that for \(\bm{\gamma}\in\mathcal{R}_3(x_0)\), there is a unique element in \(\mathcal{Z}(\bm{\gamma}) \).  

\begin{lemma}
	The Lagrangian dual problem (\ref{lagrangianDual}) is simplified as below. 
	\begin{equation*}
	\max\limits_{\substack{\gamma_{3}\in\Re,\gamma_1\geq 0, \gamma_{2}\geq 0}} \lrp{\inf\limits_{\kappa\in M^+(\Re)} L(\kappa,\bm{\gamma},x_0)} = \max\limits_{\bm{\gamma} \in \mathcal{R}_3(x_0) } \lrp{\inf\limits_{\kappa\in M^+(\Re)} L(\kappa,\bm{\gamma},x_0)} .
	\end{equation*}
\end{lemma}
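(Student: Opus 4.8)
The plan is to follow the template already established for the $\KLinfU$ problem in Lemma~\ref{lem:DualGEQ}: show that any dual point with $\gamma_1,\gamma_2\ge 0$ and $\gamma_3\in\Re$ that violates the constraint $\inf_{y\in\Re}h(y,\bm{\gamma},x_0)\ge 0$ sends the inner infimum $\inf_{\kappa\in M^+(\Re)}L(\kappa,\bm{\gamma},x_0)$ to $-\infty$, and is therefore irrelevant to the outer maximisation. The first step would be to collect in the Lagrangian~(\ref{lagrangian}) all terms that are integrals against $\kappa$, rewriting $L(\kappa,\bm{\gamma},x_0)=\KL(\eta,\kappa)+\int_\Re h(y,\bm{\gamma},x_0)\,d\kappa(y)+\gamma_1(x_0-v)-\gamma_2 B-\gamma_3$, where the last three terms do not depend on $\kappa$.

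Next, fix $\bm{\gamma}$ with $\gamma_1,\gamma_2\ge 0$ and suppose there is $y_0\in\Re$ with $h(y_0,\bm{\gamma},x_0)<0$. I would test the (unnormalised) measure $\kappa_M=\eta+M\delta_{y_0}\in M^+(\Re)$ for $M>0$. Since $\kappa_M\ge\eta$ as measures, $d\eta/d\kappa_M\le 1$ holds $\eta$-a.e., hence $\KL(\eta,\kappa_M)=\int\log\lrp{d\eta/d\kappa_M}\,d\eta\le 0$, so the $\kappa$-independent block stays bounded above by a constant depending only on $\bm{\gamma}$. On the other hand $\int_\Re h(y,\bm{\gamma},x_0)\,d\kappa_M(y)=\int_\Re h(y,\bm{\gamma},x_0)\,d\eta(y)+M\,h(y_0,\bm{\gamma},x_0)$, where the first summand is a finite constant while the second tends to $-\infty$ as $M\to\infty$. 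Therefore $L(\kappa_M,\bm{\gamma},x_0)\to-\infty$, i.e.\ $\inf_{\kappa\in M^+(\Re)}L(\kappa,\bm{\gamma},x_0)=-\infty$ for every such $\bm{\gamma}$ outside $\mathcal{R}_3(x_0)$.

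Finally I would assemble the two inequalities. The left-hand maximisation ranges over a superset of $\mathcal{R}_3(x_0)$, so its value is at least the right-hand one. Conversely, $\bm{\gamma}=(0,0,1)$ lies in $\mathcal{R}_3(x_0)$ (there $h\equiv 1\ge 0$) and yields a finite inner value, since $\inf_{\kappa\in M^+(\Re)}\lrset{\KL(\eta,\kappa)+\int_\Re d\kappa-1}$ is finite (bound it below via $\KL(\eta,\kappa)\ge-\log\kappa(\Re)$); hence the right-hand maximum is $>-\infty$. Combined with the previous paragraph, every dual point contributing more than $-\infty$ must lie in $\mathcal{R}_3(x_0)$, which gives the reverse inequality and the claimed equality. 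The only step that deserves care rather than a one-liner is the bound $\KL(\eta,\kappa_M)\le 0$ when $\eta$ has an atom or a density near $y_0$; in both cases it follows from $\kappa_M\ge\eta$, and this is exactly the same padding/truncation device used in Lemmas~\ref{lem:DualGEQ} and~\ref{lem:OptDenGEQ}. I expect this to be the only subtlety; the rest is bookkeeping.
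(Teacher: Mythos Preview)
Your proposal is correct and follows essentially the same approach as the paper: both construct a test measure $\kappa_M$ that agrees with $\eta$ and adds mass $M$ at a point $y_0$ where $h(y_0,\bm{\gamma},x_0)<0$, then observe that the Lagrangian tends to $-\infty$ as $M\to\infty$, so dual points outside $\mathcal{R}_3(x_0)$ are irrelevant. Your explicit bound $\KL(\eta,\kappa_M)\le 0$ from $\kappa_M\ge\eta$ is the same mechanism the paper uses via its $A_1$ term, and your inclusion of the feasibility check at $\bm{\gamma}=(0,0,1)$ (with the Jensen bound $\KL(\eta,\kappa)\ge-\log\kappa(\Re)$) is something the paper defers to the subsequent proof of Theorem~\ref{th:klinfDual}\ref{th:klinfLDual}, but it is a nice self-contained addition here.

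One small caveat, shared equally by your argument and the paper's: the claim that $\int_\Re h(y,\bm{\gamma},x_0)\,d\eta(y)$ is a ``finite constant'' tacitly assumes $\E{\eta}{f(X)}<\infty$ (or that the relevant tail integrals are finite). This holds for empirical $\eta$ and for $\eta\in\mathcal L$, which are the cases that matter downstream, so it is not a real gap in context.
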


\begin{proof}
	For \(\bm{\gamma}\in \Re^3\setminus \mathcal{R}_3(x_0)\), there exists \(y_0\in\Re\) such that \(h(y_0,\bm{\gamma},x_0) < 0\) and it suffices to show that \(L(\bm{\gamma},x_0) = -\infty,\) where \(L(\bm{\gamma},x_0)\) is defined in (\ref{eq:minLag}). Observe that for every \(M > 0, \text{ there exists a measure } \kappa_M\in M^+(\Re) \) satisfying \(\kappa_M(y_0) = M\) and for \(y \in\Supp(\eta)\setminus \lrset{y_0}\),
	\[\frac{d\eta}{d \kappa_M}(y) = 1.\]
	Then, (\ref{lagrangian}) can be re-written as: 
	
	\begin{align*}
	L(\kappa_M,\bm{\gamma},x_0) &= \underbrace{\int\limits_{y\in\Re} \log\lrp{\frac{d \eta}{d \kappa_M}(y)} d \eta(y)}_{\triangleq A_1} + \underbrace{\int\limits_{y\in \Re}{h(y,\bm{\gamma},x_0)  } d\kappa_M(y)}_{\triangleq A_2} +   \gamma_1(x_0-v) - \gamma_{3}  - \gamma_2 B.
	\end{align*}
	From above, it can be easily seen that \(L(\kappa_M, \bm{\lambda})\xrightarrow{M\rightarrow \infty}-\infty\), since \(A_1+A_2\rightarrow -\infty \). Thus, for \(\bm{\gamma}\in \Re^3\setminus\mathcal{R}_3(x_0) \), \(L(\bm{\gamma},x_0) = -\infty\) and we get the desired result.
\end{proof}

\begin{lemma}
	\label{lem:MinimizerOfLagrangian}
	For \(\bm{\gamma}\in \mathcal{R}_3(x_0) \), \(\kappa^*\in M^+(\Re)\) that minimizes \(L(\kappa, \bm{\gamma},x_0)\), satisfies
	\begin{equation}
	\label{eq_OptParams}
	\Supp(\kappa^*)\subset \lrset{\Supp(\eta)\cup \mathcal{Z}(\bm{\gamma})}.
	\end{equation}
	
	Furthermore, for \(y \in \Supp(\eta)\), \(h(y,\bm{\gamma},x_0) >0 \), and 
	\begin{equation}
	\label{eq_OptDist}
	\frac{d \kappa^*}{d \eta}(y) = \lrp{\frac{\gamma_1}{1-\pi} (y-x_0)_{+} + \gamma_3 + \gamma_2 f\lrp{y} }\inv.
	\end{equation}
\end{lemma}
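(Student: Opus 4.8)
The plan is to follow the template of the proof of Lemma~\ref{lem:OptDenGEQ}, the analogous statement for the $\KLinfU$ problem. After collecting the $\kappa$-dependent terms, $L(\kappa,\bm{\gamma},x_0) = \KL(\eta,\kappa) + \int_\Re h(y,\bm{\gamma},x_0)\,d\kappa(y) + \gamma_1(x_0-v) - \gamma_2 B - \gamma_3$, which is convex in $\kappa$ over the convex set $M^{+}(\Re)$, being the sum of the functional $\KL(\eta,\cdot)$ (strictly convex in $\kappa|_{\Supp(\eta)}$) and a linear one. It therefore suffices to exhibit a candidate $\kappa^*$ obeying (\ref{eq_OptParams})--(\ref{eq_OptDist}) and verify it is a minimizer; the convexity then pins down the density of any minimizer on $\Supp(\eta)$ and forces its support into $\Supp(\eta)\cup\mathcal{Z}(\bm{\gamma})$, since removing any mass that a minimizer places at some $y\notin\Supp(\eta)$ with $h(y,\bm{\gamma},x_0)>0$ would strictly decrease $L$. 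I would also record why $h(y,\bm{\gamma},x_0)>0$ for every $y\in\Supp(\eta)$, which is needed for the density in (\ref{eq_OptDist}) to make sense: if $h(y_0,\bm{\gamma},x_0)=0$ at some $y_0\in\Supp(\eta)$, then (as in the preceding lemma) placing an arbitrarily large $\kappa$-mass at $y_0$ while keeping $d\eta/d\kappa=1$ off $y_0$ leaves $\int h\,d\kappa$ unchanged but drives $\KL(\eta,\kappa)\to-\infty$, so $L$ is unbounded below and no minimizer exists --- a contradiction. Hence $\kappa^*$ given by (\ref{eq_OptDist}) on $\Supp(\eta)$, with the remainder of its mass placed on the unique point of $\mathcal{Z}(\bm{\gamma})$, is a well-defined candidate.

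To show this $\kappa^*$ is a minimizer I would verify a first-order condition along line segments. Fix any $\kappa_1\in M^{+}(\Re)$ with $\kappa_1\neq\kappa^*$, put $\kappa_{2,t}=(1-t)\kappa^*+t\kappa_1\in M^{+}(\Re)$ for $t\in[0,1]$; by convexity it is enough that $\frac{\partial}{\partial t}L(\kappa_{2,t},\bm{\gamma},x_0)\bigg\rvert_{t=0}\ge 0$. Differentiating the logarithm inside $\KL(\eta,\kappa_{2,t})$ and the linear term gives, exactly as in Lemma~\ref{lem:OptDenGEQ},
\begin{equation*}
\frac{\partial}{\partial t}L(\kappa_{2,t},\bm{\gamma},x_0)\bigg\rvert_{t=0}
= \int_{\Supp(\eta)} \frac{d\eta}{d\kappa^*}(y)\,\lrp{d\kappa^* - d\kappa_1}(y) + \int_\Re h(y,\bm{\gamma},x_0)\,\lrp{d\kappa_1 - d\kappa^*}(y).
\end{equation*}
Substituting $\frac{d\eta}{d\kappa^*}(y)=h(y,\bm{\gamma},x_0)$ for $y\in\Supp(\eta)$ from (\ref{eq_OptDist}) cancels the $\Supp(\eta)$ contribution and leaves $\int_{\Re\setminus\Supp(\eta)} h(y,\bm{\gamma},x_0)\,(d\kappa_1-d\kappa^*)(y)$. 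By (\ref{eq_OptParams}) the restriction of $\kappa^*$ to $\Re\setminus\Supp(\eta)$ sits on $\mathcal{Z}(\bm{\gamma})$, where $h$ vanishes, so its contribution is $0$; and since $\bm{\gamma}\in\mathcal{R}_3(x_0)$ forces $h(\cdot,\bm{\gamma},x_0)\ge 0$ everywhere and $\kappa_1$ is a nonnegative measure, the remaining term $\int_{\Re\setminus\Supp(\eta)} h\,d\kappa_1$ is $\ge 0$. Hence the derivative is $\ge 0$ and $\kappa^*$ minimizes $L(\cdot,\bm{\gamma},x_0)$.

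The step I expect to be the main obstacle is making the differentiation under the integral sign and the splitting of integrals rigorous for a general $\eta\in\mathcal{P}(\Re)$ (rather than an atomic empirical measure): one must justify interchanging $\frac{d}{dt}$ with $\int(\cdot)\,d\eta$ --- via monotone/dominated convergence, using that $t\mapsto\log\frac{d\eta}{d\kappa_{2,t}}$ is convex and finite at $t=0$, which is precisely where $\bm{\gamma}\in\mathcal{R}_3(x_0)$ and $h>0$ on $\Supp(\eta)$ are invoked --- and handle the non-atomic version of the $h>0$ argument by localizing the extra mass to a small $\eta$-positive neighbourhood of $y_0$ (legitimate since $h(\cdot,\bm{\gamma},x_0)$ is continuous). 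One should also note that nonnegativity of the one-sided directional derivatives at $t=0$ over all feasible directions $\kappa_1-\kappa^*$ certifies a global minimum for a proper convex function on a convex set, and verify $\kappa^*\in M^{+}(\Re)$ (finite total mass). Everything else is a direct transcription of the $\KLinfU$ argument, with $\lambda_4+\lambda_3 f(\cdot)-\lambda_3 B-\lambda_5(\cdot)$ there playing the role of $h(\cdot,\bm{\gamma},x_0)=\frac{\gamma_1}{1-\pi}(X-x_0)_{+}+\gamma_3+\gamma_2 f(X)$ here.
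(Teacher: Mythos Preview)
Your proposal is correct and follows essentially the same approach as the paper's proof: exhibit the candidate $\kappa^*$, use convexity of $L(\cdot,\bm{\gamma},x_0)$ over $M^{+}(\Re)$, and verify the first-order condition $\partial_t L(\kappa_{2,t},\bm{\gamma},x_0)|_{t=0}\ge 0$ along segments $\kappa_{2,t}=(1-t)\kappa^*+t\kappa_1$, with the same cancellation via $\frac{d\eta}{d\kappa^*}=h$ on $\Supp(\eta)$ and the same conclusion from $h\ge 0$ on $\mathcal{R}_3(x_0)$ and $h=0$ on $\mathcal{Z}(\bm{\gamma})$. Your additional remarks---that strict convexity really only pins down $\kappa^*|_{\Supp(\eta)}$, the contradiction argument for $h>0$ on $\Supp(\eta)$, and the caveats about differentiating under the integral in the non-atomic case---are extra care the paper does not spell out, but do not change the route.
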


\begin{proof}
	\label{proof_lem:MinimizerOfLagrangian}
	For \(\bm{\gamma}\in\mathcal{R}_3(x_0)\), \(L(\kappa,\bm{\gamma},x_0)\) is a strictly convex function of \(\kappa\) being minimized over a convex set. Hence, if the minimizer of \(L(\kappa,\bm{\gamma},x_0)\) exists, it is unique. It then suffices to show that \(\kappa^* \) satisfying (\ref{eq_OptParams}) and (\ref{eq_OptDist}) minimizes \(L(\kappa, \bm{\gamma},x_0)\). 
	
	Let \(\kappa_1\) be any measure in \(M^+(\Re)\) that is different from \(\kappa^*\). Since \(M^+(\Re)\) is a convex set, for \(t\in [0,1]\), \(\kappa_{2,t}\triangleq (1-t)\kappa^*+t\kappa_1\) belongs to 
	\(M^+(\Re).\) Since \(L(\kappa,\bm{\gamma},x_0)\) is convex in $\kappa$, to show that \(\kappa^*\) minimizes \(L(\kappa,\bm{\gamma},x_0)\), it suffices to show 
	\[ \frac{\partial L\lrp{\kappa_{2,t},\bm{\gamma}}}{\partial t}\bigg\rvert_{t=0} \geq 0.\]
	Substituting for \(\kappa_{2,t} \) in  (\ref{lagrangian}), $L\lrp{\kappa_{2,t}, \bm{\gamma}} $ equals 
	\begin{align*}
	\int\limits_{y\in\Supp(\eta)}\log\lrp{\frac{d\eta}{d\kappa_{2,t}}(y)} d\eta(y) +\lrp{ \gamma_1(x_0-v) - \gamma_3 -\gamma_2 B}+\int\limits_{\Re}h(y,\bm{\gamma},x_0)d\kappa_{2,t}(y).
	\end{align*}	
	Evaluating the derivative with respect to \(t\) at \(t=0\), 
	\begin{align*}
	&\frac{\partial L\lrp{\kappa_{2,t}, \bm{\gamma}}}{\partial t}\bigg\rvert_{t=0}=\int\limits_{y\in\Supp(\eta)}\frac{d\eta}{d\kappa^*}(y)(d\kappa^*-d\kappa_1)(y)+
	\int\limits_{\Re}h(y,\bm{\gamma},x_0)(d\kappa_1-d\kappa^*)(y).
	\end{align*}
	For \(y\in\Supp(\eta) \), \(\partial \eta /\partial \kappa^* =h(y,\bm{\gamma},x_0)\). Substituting this in the above expression, we get: 
	\begin{align*}
	\frac{\partial L\lrp{\kappa_{2,t}, \bm{\gamma}}}{\partial t}\bigg\rvert_{t=0}&= \int\limits_{y\in\Supp(\eta)}h(y,\bm{\gamma},x_0)(d\kappa^*-d\kappa_1)(y)- \int\limits_{\Re}h(y,\bm{\gamma},x_0)(d\kappa^*-d\kappa_1)(y)\\
	&= \int\limits_{y\in \lrset{\Re\setminus\Supp(\eta)}}h(y,\bm{\gamma},x_0)d\kappa_1(y)\quad -\int\limits_{y\in \lrset{\Re\setminus\Supp(\eta)}}h(y,\bm{\gamma},x_0)d\kappa^*(y)\\
	&\geq 0,
	\end{align*}
	where, for the last inequality, we have used the fact that for \(y\in \lrset{\Supp(\kappa^*)\setminus\Supp(\eta)}\), \(h(y,\bm{\gamma},x_0)=0\) and \(h(y,\bm{\gamma},x_0)\geq 0 \), otherwise.
\end{proof}	

\subsubsection{Proof of Theorem \ref{th:klinfDual}\ref{th:klinfLDual}}

To prove the alternative expression for \(\KLinfL\) given by this theorem, we first show that both the primal and dual problems (\(\mathcal{O}_1 \text{ and }\mathcal{O}_2, \) respectively) are feasible and that strong duality holds for the Problem \(\mathcal{O}_1\). We then show that the alternative formulation for \(\KLinfL \) is its simplified dual formulation. 

Let \(\delta_y\) denote a unit mass at point \(y\). For \(x_0 > 0 \), define \(\kappa_1 = (1-\pi) \delta_{x_0-\epsilon_0} + \pi \delta_0,\) where \(\epsilon_0 \) is chosen to satisfy \(f\lrp{x_0-\epsilon_0} (1-\pi) < B. \) Similarly, for the other case (\(x_0 \leq 0)\), for \( x_0 \ne v \text { and } x_0 > -{f\inv\lrp{\frac{B}{\pi}}}\), define \(\kappa_2 := \pi\delta_{x_0-\epsilon_1} + (1-\pi) \delta_{x_0 + \epsilon_2} \), where \(\epsilon_1\) and \(\epsilon_2  \) are chosen to satisfy 
\[ x_0 + \epsilon_2 < v \quad \text{ and } \quad \pi f \abs{x_0-\epsilon_1}+(1-\pi) f\lrp{x_0+\epsilon_2} < B. \]  
Also, for \( x_0 = -{f\inv\lrp{\frac{B}{\pi}}} \), for \( v > 0 \), define \( \kappa_3 = (1-\pi)\delta_0 + \pi \delta_{\lrp{-{f\inv\lrp{\frac{B}{\pi}}} + \epsilon_3}} \), where \( \epsilon_3 \) is chosen to satisfy  $ 0 < \epsilon_3  < \frac{1-\pi}{\pi}v.$ Similarly, for \( x_0 = -{f\inv\lrp{\frac{B}{\pi}}} \) and \( v < 0 \), define \( \kappa_4 = (\pi-\epsilon_4) \delta_{-{f\inv\lrp{\frac{B}{\pi}}} } + (1-\pi+\epsilon_4)\delta_{v-\epsilon_5}\), where \(\epsilon_4 > 0 \) and \(\epsilon_5 > 0 \) are chosen to satisfy
\[  \frac{\epsilon_4}{1-\pi+\epsilon_4}\lrp{{f\inv\lrp{\frac{B}{\pi}}} + v } ~<~ \epsilon_5 ~<~ {f\inv\lrp{\frac{B}{1-\pi+\epsilon_4}}} + v. \] 
Define 
\begin{align*} 
\kappa_0 := \kappa_1\bm{1}(x_0 > 0) &+ \kappa_2\bm{1}(x_0 \leq 0)\bm{1}\lrp{ -{f\inv\lrp{\frac{B}{\pi}}}< x_0 < v}+\delta_v \bm{1}\lrp{x_0 = v}\bm{1}\lrp{x_0\leq 0} \\
&+ \bm{1}\lrp{x_0=-{f\inv\lrp{\frac{B}{\pi}}} }\lrp{\kappa_3 \bm{1}\lrp{v > 0} + \kappa_4 \bm{1}\lrp{v \leq 0} }. \end{align*}
Clearly, \(\kappa_0 \) defined above satisfies all the inequality constraints in the primal problem strictly, whence, lies in the interior of the feasible region. 

Recall that \(c_\pi(\eta)\) is a concave function of \(\eta\) (see (\ref{eq:app:cvar_min})). It is then easy to check that there exists \( 0 < p < 1 \) such that \(\kappa' := p\eta+(1-p)\kappa_0\) is feasible for the primal problem, and \(\KL(\eta,\kappa') < \infty \). Hence, primal problem \(\mathcal{O}_1 \) is feasible. 

Next, we claim that \(\bm{\gamma^{1}}=(0,0,1) \) is a dual feasible solution. To this end, it is sufficient to show that 
\[\min_{\kappa\in M^+(\Re)}L(\kappa,(0,0,1),x_0)  > -\infty.\] 
Observe that for \(\kappa\in\mathcal{M}^{+}(\Re)\), \(\KL(\eta,\kappa)\) defined to extend the usual definition of Kullback-Leibler Divergence to include all measures in \(M^+(\Re)\), can be negative with arbitrarily large magnitude. From (\ref{lagrangian}), 
\[L(\kappa, \bm{\gamma^1},x_0) = \KL(\eta,\kappa) - 1 + \int_{\Re} d\kappa(y). \]

Let \(\tilde{\kappa} \) denote the minimizer of \(L(\kappa,\bm{\gamma^1},x_0) \). Then, as earlier, \( \Supp(\tilde{\kappa}) = \Supp(\eta) \). Furthermore, from Lemma \ref{lem:MinimizerOfLagrangian}, for \(y \text{ in } \Supp(\eta),\) the optimal measure \(\tilde{\kappa} \) must satisfy
\[\frac{d \tilde{\kappa}}{d\eta}(y) = 1. \]
Thus, \(\tilde{\kappa} = \eta \) and \(\min\nolimits_{\kappa\in M^+(\Re)} L(\kappa,\bm{\gamma^1},x_0) = 0. \) This proves the feasibility of the dual problem \(\mathcal{O}_2\).

Since both primal and dual problems are feasible, both have optimal solutions. Furthermore, \(\kappa_0\) defined earlier satisfies all the inequality constraints of \((\mathcal{O}_1)\) strictly, hence lies in the interior of the feasible region (Slater's conditions are satisfied). Thus strong duality holds for the problem \((\mathcal{O}_1)\) and there exists optimal dual variable \(\bm{\gamma}^*=( \gamma^*_1,\gamma_2^*, \gamma_3^*) \) that attains maximum in the problem \(\mathcal{O}_2 \) (see, \cite[Theorem 1, Page 224]{luenberger1969optimization}). 

Also, since the primal problem (for fixed \(x_0\)) is minimization of a strictly-convex function (which is non-negative on the feasible set) with an optimal solution over a closed and convex set, it attains its infimum within the set. Strong duality implies
\begin{equation*} 
\KLinf(\eta,v) = \min\limits_{-f\inv\lrp{\frac{B}{\pi}} \leq x_0\leq v}~\max\limits_{\bm{\gamma} \in \mathcal{R}_3(x_0)}~ \inf\limits_{\kappa\in M^+(\Re)}~ L(\kappa,\bm{\lambda},x_0 ).\end{equation*}

Let \(\kappa^* \text{ and } \bm{\gamma^*}\) denote the optimal primal and dual variables. Since strong duality holds, and the problem \((\mathcal{O}_1)\) is a convex optimization problem, KKT conditions are necessary and sufficient for \(\kappa^* \text{ and } \bm{\gamma^*}\) to be optimal variables (see, \cite[page 224]{boyd2004convex}). Hence \(\kappa^*, \gamma_{3}^*\in\Re, \gamma_1^*\geq 0, \text{ and }\gamma_2^*\geq 0 \) must satisfy the following conditions (KKT):
\begin{equation*}
\kappa^* \in M^+(\Re),\; \int_{\Re} d\kappa^*(y) = 1,\; x_0 + \frac{1}{1-\pi}\int_{x_0}^{\infty} (y-x_0) d\kappa^*(y) \leq v,\; \int_{\Re} f\lrp{y} d\kappa^*(y) \leq B, 
\end{equation*}
\begin{align*}
\int_{x_0}^{\infty} \gamma_{1}^*\frac{y-x_0}{1-\pi} d\kappa^*(y) =\gamma_{1}^*(v-x_0) &,\; \int_{\Re}\gamma_3^* d\kappa^*(y) =\gamma_3^*&,\;\int_{\Re}\gamma_2^*f\lrp{y}d\kappa^*(y)=\gamma_2^*B.\numberthis \label{eqCS}
\end{align*}
and $(\gamma^*_1,\gamma^*_2,\gamma^*_3)\in\mathcal{R}_3(x_0).$ Furthermore, \(\kappa^*\) minimizes \(L(\kappa, \bm{\gamma^*},x_0)\). From conditions (\ref{eqCS}), and Lemma \ref{lem:MinimizerOfLagrangian},
$L(\kappa^*,\bm{\gamma^*}) = \mathbb{E}_{\eta}\lrp{h(X,\bm{\gamma^*},x_0)},$ 
where \(X\) is the random variable distributed as \(\eta. \) Adding the equations in (\ref{eqCS}), and using the form of \(\kappa^*\) from Lemma \ref{lem:MinimizerOfLagrangian}, we get 
$\gamma_3^* = 1 -\gamma_1^*(v-x_0) - \gamma_2^*B.$

For \(\bm{\tilde{\gamma}}=(\tilde{\gamma}_1,\tilde{\gamma}_2) \) let,
\[ g^L(X,\bm{\tilde{\gamma}},v,x_0) :=  { 1 -\tilde{\gamma}_1 \lrp{v-x_0 - \frac{(X-x_0)_{+}}{1-\pi}} - \tilde{\gamma}_2(B-f\lrp{{X}})}. \]
and 
\[ \mathcal{R}_2(x_0,v) := \lrset{\gamma_1 \geq 0, \gamma_2 \geq 0 :  \forall y\in\Re,~ g^L(y,(\gamma_1,\gamma_2),x_0,v) \geq 0 } .\]

With this condition on \(\gamma_3^*\), the region \( \mathcal{R}_3(x_0,v)\) reduces to the region \(\mathcal{R}_2(x_0)\). Since we know that the optimal \(\bm{\gamma^*}\) in \(\mathcal{R}_3(x_0) \) with the corresponding minimizer, \(\kappa^* \), satisfies the conditions in (\ref{eqCS}) and that \(\gamma^*_3 \) has the specific form given above, the dual optimal value remains  unaffected by adding these conditions as constraints in the dual optimization problem. With these conditions, the dual reduces to 
\begin{equation*}
\max\limits_{(\gamma_1, \gamma_2)\in\mathcal{R}_2(x_0,v)}~ \mathbb{E}_{\eta}\lrp{ \log\lrp{g^L(X,\bm{\gamma},x_0,v)} },
\end{equation*}
and by strong duality, this is also the value of \(\KLinf(\eta,x) \).

\subsection{Compactness of the dual regions}\label{app:Sec:CompactRegions}
In this section we show that for valid values of \(v\) and \(x_0\), the regions \( \hat{S}(v) \) and \(\mathcal{R}_2(x_0,v)\) are closed and bounded, i.e., compact. Recall that for \(v\in D^o = \lrp{-f\inv\lrp{B},f\inv\lrp{\frac{B}{1-\pi}}}\), \( x_0 \in C= \left[-f\inv\lrp{\frac{B}{\pi}}, f\inv\lrp{\frac{B}{1-\pi}}\right] \), \(\bm{\lambda} \in \Re^3\) and \(\bm{\gamma} \in \Re^2 \), 
\begin{align*} & g^U( X,\bm{\lambda},v ) = { 1 + \lambda_1 v - \lambda_2(1-\pi) + \lambda_3\lrp{f({X})-B} - \lrp{ \frac{\lambda_1 X}{1-\pi} - \lambda_2 }_+  }, \\
&g^L(X,\bm{\gamma},v,x_0) =  { 1 -\gamma_1 \lrp{v-x_0 - \frac{(X-x_0)_{+}}{1-\pi}} -\gamma_2(B-f\lrp{{X}})},\\
&\hat{S}(v) = \lrset{\lambda_1 \geq 0, \lambda_2\in\Re, \lambda_3\geq 0: \forall x \in\Re,~ {g^{U}(x,\bm{\lambda},v)} \geq 0},\end{align*}
and 
\[ \mathcal{R}_2(x_0,v) = \lrset{\gamma_1 \geq 0, \gamma_2 \geq 0 :  \forall y\in\Re,~ g^L(y,(\gamma_1,\gamma_2),x_0,v) \geq 0 }, \numberthis \label{eq:app:R2} \]
where \( f({y}) = \abs{y}^{1+\epsilon} \) for some \( \epsilon > 0 \).

We first state a few results, which are easy to prove, and will be used later.  
\begin{lemma}\label{lem:generalmin1} For $a>0, b>0,$ and $z\in\Re$, the minimizer, $x^*$, for $a\abs{x}^{1+\epsilon} + b(x-z)_+$ satisfies
	\begin{equation*}
	x^* =
	\begin{cases}
	0, &\text{ if } z \ge 0\\
	z, &\text{ if } z < 0,~\text{ and } z \ge -\left(\frac{b}{a(1+\epsilon)} \right)^\frac{1}{\epsilon}\\
	-\left(\frac{b}{a(1+\epsilon)} \right)^\frac{1}{\epsilon}, &\text{ otherwise.}
	\end{cases}
	\end{equation*}
\end{lemma}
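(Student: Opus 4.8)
The plan is to exploit convexity. The objective $\phi(x) := a\abs{x}^{1+\epsilon} + b(x-z)_+$ is the sum of the strictly convex, coercive function $x \mapsto a\abs{x}^{1+\epsilon}$ (recall $\epsilon > 0$) and the convex function $x \mapsto b(x-z)_+$, hence is strictly convex and coercive on $\Re$ and therefore admits a unique minimizer $x^*$. It then suffices to locate the point at which $\phi$ stops decreasing, which I will do by a short case analysis on the sign and magnitude of $z$, using one-sided derivatives (the only non-smooth points of $\phi$ are $x=0$ and $x=z$).

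First I would dispose of the case $z \ge 0$: on $(-\infty, z]$ we have $(x-z)_+ = 0$, so $\phi(x) = a\abs{x}^{1+\epsilon} \ge 0$ with equality iff $x=0$, while on $(z,\infty) \subseteq (0,\infty)$ we have $\phi(x) = a x^{1+\epsilon} + b(x-z) > 0$; hence $x^* = 0$. Next, assume $z < 0$. On $(-\infty, z)$ one has $\phi(x) = a(-x)^{1+\epsilon}$, which is strictly decreasing in $x$, so $x^* \in [z,\infty)$; on $[0,\infty)$, $\phi(x) = a x^{1+\epsilon} + b(x-z)$ has derivative $a(1+\epsilon)x^{\epsilon} + b > 0$, so $\phi$ is increasing there and $x^* \in [z,0]$. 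On $[z,0]$ we have $\phi(x) = a(-x)^{1+\epsilon} + b(x-z)$ with $\phi'(x) = b - a(1+\epsilon)(-x)^{\epsilon}$, which is non-decreasing in $x$ (since $(-x)^{\epsilon}$ is non-increasing there). I then compare $\phi'(z^{+}) = b - a(1+\epsilon)(-z)^{\epsilon}$ with $0$: if $\phi'(z^+) \ge 0$, equivalently $z \ge -\bigl(b/(a(1+\epsilon))\bigr)^{1/\epsilon}$, then $\phi$ is non-decreasing on $[z,0]$, so $x^* = z$; otherwise $\phi'$ vanishes at the unique interior point $x = -\bigl(b/(a(1+\epsilon))\bigr)^{1/\epsilon} \in (z,0)$, so $x^*$ equals that point. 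Collecting the three regimes yields the displayed formula.

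The argument is entirely elementary; the only thing requiring a little care is bookkeeping — keeping track of which branch of the piecewise objective is active in each regime (the kink at $x=z$ versus the kink at $x=0$), and verifying that the threshold obtained from $\phi'(z^+) = 0$ is exactly the cutoff separating the last two cases. I do not anticipate any genuine obstacle: this is a routine helper lemma used to establish compactness of the dual regions $\hat S(v)$ and $\mathcal R_2(x_0,v)$ in Appendix \ref{app:Sec:CompactRegions}.
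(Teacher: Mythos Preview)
Your proof is correct and complete. The paper does not actually prove this lemma; it is introduced with ``We first state a few results, which are easy to prove, and will be used later,'' so your convexity-and-case-analysis argument is entirely in the spirit of what the authors had in mind and supplies the details they omitted.
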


\begin{lemma}\label{lem:generalcomp1}
	For $\epsilon > 0$, \(b\ge 0\), \(a\in\Re\), and $c\ge 0$, the set of $p\ge 0$, $q\ge 0$, satisfying 
	\[ a-bp + cq - \frac{\epsilon}{p^\frac{1}{\epsilon}}\lrp{\frac{q}{1+\epsilon}}^{1+\frac{1}{\epsilon}} \ge 0 \]
	is compact, provided $c^{1+\epsilon} \le b$. Moreover,
	\[ p\in \left[0, \frac{a}{c^{1+\epsilon}-b}\right]; \quad \quad q\in \left[0, \frac{a}{c-b^\frac{1}{1+\epsilon}} \right].   \]
\end{lemma}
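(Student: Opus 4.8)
The plan is to prove compactness of the set, call it $S$, of pairs $(p,q)\in[0,\infty)^2$ satisfying the displayed inequality by showing it is closed and bounded in $\Re^2$. Write $\phi(p,q)$ for the left-hand side of that inequality (defined for $p>0,\ q\ge 0$), extended by $\phi(0,0)=a$ and $\phi(0,q)=-\infty$ for $q>0$, which is the value of $\lim_{p\downarrow 0}\phi(p,q)$. Throughout I read the hypothesis as the strict inequality $c^{1+\epsilon}<b$, equivalently $c<b^{1/(1+\epsilon)}$ (the borderline case $b=c^{1+\epsilon}$, e.g.\ $b=c=0$, actually makes $S$ unbounded, so some strictness is needed).

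For closedness, $\phi$ is continuous on $\lrset{p>0}$, so $S\cap\lrset{p>0}$ is relatively closed there, and it only remains to treat limit points with $p=0$. If $(p_n,q_n)\in S$ and $p_n\to 0$, then from $\phi(p_n,q_n)\ge 0$ and nonnegativity of the subtracted term we get $a+cq_n\ge \frac{\epsilon}{p_n^{1/\epsilon}}\lrp{\frac{q_n}{1+\epsilon}}^{1+1/\epsilon}$; were $q_n$ bounded away from $0$ the right-hand side would diverge while the left stays bounded, so $q_n\to 0$, and the only candidate limit point is $(0,0)$, which lies in $S$ exactly when $a\ge 0$ (and if $a<0$ no such sequence exists at all). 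Hence $S$ is closed.

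Boundedness is the crux, and the idea is that $p$ is controlled by partially maximizing $\phi$ over $q$, and $q$ by partially minimizing over $p$. For fixed $p>0$, a one-variable calculus computation — differentiate in $q$, obtain the critical point $q=(1+\epsilon)c^{\epsilon}p$, and watch the fractional powers cancel via $(1+\epsilon)c^{1+\epsilon}p-\epsilon c^{1+\epsilon}p$ — gives
\[
\max_{q\ge 0}\ \lrset{cq-\frac{\epsilon}{p^{1/\epsilon}}\lrp{\frac{q}{1+\epsilon}}^{1+1/\epsilon}}\;=\;c^{1+\epsilon}p .
\]
Thus on $S$, $0\le\phi(p,q)\le a-(b-c^{1+\epsilon})p$, so $p\le a/(b-c^{1+\epsilon})$ (and $S=\emptyset$ if $a<0$). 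Symmetrically, for fixed $q\ge 0$, differentiating and simplifying gives
\[
\min_{p> 0}\ \lrset{bp+\frac{\epsilon}{p^{1/\epsilon}}\lrp{\frac{q}{1+\epsilon}}^{1+1/\epsilon}}\;=\;b^{1/(1+\epsilon)}q ,
\]
whence $0\le\phi(p,q)=(a+cq)-\lrp{bp+\frac{\epsilon}{p^{1/\epsilon}}\lrp{\frac{q}{1+\epsilon}}^{1+1/\epsilon}}\le a-(b^{1/(1+\epsilon)}-c)q$, i.e.\ $q\le a/(b^{1/(1+\epsilon)}-c)$ since $c<b^{1/(1+\epsilon)}$. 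Combining, $S$ is closed and bounded, hence compact, with $(p,q)$ confined to the asserted intervals.

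The only real obstacle is the bookkeeping in the boundedness step: recognizing which variable to partially optimize out to bound the other, and pushing through the two one-dimensional optimizations cleanly so that the awkward term $\frac{\epsilon}{p^{1/\epsilon}}\lrp{\frac{q}{1+\epsilon}}^{1+1/\epsilon}$ collapses to $c^{1+\epsilon}p$ and $b^{1/(1+\epsilon)}q$. These are precisely the kind of optimization behind Lemma~\ref{lem:generalmin1} — indeed that term is (minus) the optimal value appearing there — which serves as a consistency check on the constants.
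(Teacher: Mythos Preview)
Your proof is correct. The paper does not actually supply a proof of this lemma; it is listed among ``a few results, which are easy to prove,'' and stated without argument, so there is nothing to compare your approach against. Your strategy of partially optimizing out one variable to bound the other is the natural one, and your two one-dimensional computations are right: maximizing the $q$-part at fixed $p$ gives $c^{1+\epsilon}p$, and minimizing the $p$-part at fixed $q$ gives $b^{1/(1+\epsilon)}q$, yielding $p\le a/(b-c^{1+\epsilon})$ and $q\le a/(b^{1/(1+\epsilon)}-c)$.

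Two small remarks. First, you are right to read the hypothesis as the strict inequality $c^{1+\epsilon}<b$; with equality (e.g.\ $b=c=0$) the set is genuinely unbounded, and the paper's stated intervals in fact have the sign of the denominator reversed (they should read $b-c^{1+\epsilon}$ and $b^{1/(1+\epsilon)}-c$). Second, in your closedness step the clause ``the left stays bounded'' tacitly uses that $q_n$ is bounded; this is fine since your later $q$-bound applies pointwise to each $(p_n,q_n)$, but it would read more cleanly to establish boundedness first and then deduce closedness from continuity on the (now compact) closure.
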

\begin{lemma}\label{lem:CompactShat}
	For \( v\in \left[\left. -f\inv(B), f\inv\lrp{\frac{B}{1-\pi}}\right.\right) \), \( \hat{S}(v) \) is compact. 
\end{lemma}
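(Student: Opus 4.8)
The plan is to establish three properties of $\hat S(v)$: that it is closed, that it is convex, and — the substantive point — that it is bounded.

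Closedness and convexity come essentially for free. For each fixed $x$ the map $\bm{\lambda}\mapsto g^U(x,\bm{\lambda},v)$ is continuous and concave in $\bm{\lambda}$: it is affine except for the term $-\lrp{\lambda_1 x/(1-\pi)-\lambda_2}_+$, which is concave. Hence each set $\lrset{\bm{\lambda}:g^U(x,\bm{\lambda},v)\ge 0}$ is closed and convex, and $\hat S(v)$ — the intersection of these over all $x\in\Re$ together with the closed convex cone $\lrset{\lambda_1\ge0,\lambda_3\ge0}$ — is closed and convex, and nonempty since $\bm 0\in\hat S(v)$.

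For boundedness I would evaluate the constraint $g^U(x,\bm{\lambda},v)\ge 0$ at a handful of carefully chosen values of $x$ and combine the resulting inequalities. First, letting $x\to\pm\infty$ disposes of the slice $\lambda_3=0$: if $\lambda_3=0$ then $g^U(x,\bm{\lambda},v)\to-\infty$ as $x\to+\infty$ unless $\lambda_1=0$, and once $\lambda_1=\lambda_3=0$ the constraints at $x=0$ and $x\to+\infty$ pin $\lambda_2$ to $[-1/\pi,\,1/(1-\pi)]$. For $\lambda_3>0$, set $z=(1-\pi)\lambda_2/\lambda_1$; a direct computation (analogous in spirit to Lemma~\ref{lem:generalmin1}, but with the opposite sign) shows $x\mapsto\lambda_3\abs{x}^{1+\epsilon}-\lrp{\lambda_1 x/(1-\pi)-\lambda_2}_+$ attains its minimum at $x=0$, at $x=z$, or at the interior critical point $x_*=\lrp{\lambda_1/((1-\pi)\lambda_3(1+\epsilon))}^{1/\epsilon}$, according to the sign and magnitude of $z$. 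Evaluating $g^U\ge0$ at $x=0$ (or at $x=z$) bounds $\lambda_2$ in terms of $\lambda_1,\lambda_3$, while evaluating at $x=x_*$ and substituting the closed form of that minimum gives, in the variables $p=\lambda_3$ and $q=\lambda_1/(1-\pi)$, an inequality of exactly the shape treated in Lemma~\ref{lem:generalcomp1}. A short calculation identifies the constants there so that its hypothesis $c^{1+\epsilon}\le b$ becomes $(1-\pi)v^{1+\epsilon}\le B$, i.e.\ $v\le f\inv\lrp{\frac{B}{1-\pi}}$; this holds under the standing assumption $v<f\inv\lrp{\frac{B}{1-\pi}}$ (and $v\in D^o$ via Lemma~\ref{lem:boundOnVarAndCVarForL}), so Lemma~\ref{lem:generalcomp1} delivers explicit bounds on $p$ and $q$, hence on $\lambda_1$ and $\lambda_3$, hence — via the $x=0$ inequality — on $\lambda_2$. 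That yields boundedness and completes the proof.

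The main obstacle, I expect, is the bookkeeping in the case split over where the minimizing $x$ sits — the regimes $z\le 0$, $0<z\le x_*$, and $z>x_*$ each have to be pushed through to the same conclusion — together with the algebra that lands the threshold precisely at $v=f\inv\lrp{\frac{B}{1-\pi}}$. The strictness of that inequality is essential rather than cosmetic: at the endpoint $v=f\inv\lrp{\frac{B}{1-\pi}}$ the set $\hat S(v)$ acquires a nonzero recession direction and fails to be compact, which is exactly the mechanism behind the discontinuity of $\KLinfU(\cdot,f\inv(B(1-\pi)\inv))$ recorded in Remark~\ref{rem:DiscontinuityOfKLinf}. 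Equivalently, one may run the boundedness argument through the recession cone directly: a nonzero recession direction $\bm{\mu}$ of $\hat S(v)$ must satisfy $\mu_1 v-\mu_2(1-\pi)+\mu_3(f(x)-B)-\lrp{\mu_1 x/(1-\pi)-\mu_2}_+\ge 0$ for all $x$, and feeding the same three test points into this inequality forces $\bm{\mu}=0$ exactly when $v<f\inv\lrp{\frac{B}{1-\pi}}$.
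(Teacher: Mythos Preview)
Your approach is essentially the paper's: split the constraint $g^U\ge 0$ according to whether the $(\cdot)_+$ term is active, minimise over $x$ in each branch to sandwich $\lambda_2$ between two expressions in $(\lambda_1,\lambda_3)$, eliminate $\lambda_2$, and then bound $\lambda_1$ and $\lambda_3$. The paper does the last step by hand---optimising the resulting inequality over $\lambda_1$ to bound $\lambda_3$ and vice versa---rather than invoking Lemma~\ref{lem:generalcomp1}; your shortcut via that lemma is fine and arguably cleaner. Your explicit handling of closedness, convexity, and the $\lambda_3=0$ slice, and your recession-cone reformulation, are additions the paper omits.

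Two small points to tighten. First, evaluating at $x_*$ alone does not yet give an inequality free of $\lambda_2$; you must substitute the $\lambda_2$-bound from $x=0$ (or combine the two branch minima, as the paper does) before the inequality takes the form in Lemma~\ref{lem:generalcomp1}. Your text reads as if $x_*$ alone suffices. Second, Lemma~\ref{lem:generalcomp1} assumes $c\ge 0$, and with your identification $c$ is (a positive multiple of) $v$; for $v<0$ the hypothesis fails. This case is actually easier---the term $v\lambda_1\le 0$ only helps---but you should say so rather than route it through a lemma whose hypothesis is violated. With those two clarifications the argument goes through and matches the paper's.
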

\begin{proof}
	For \( \bm{\lambda}\in \hat{S}(v) \), 
	\[ \min_y ~~ 1 + \lambda_1 v - \lambda_2(1-\pi) + \lambda_3\lrp{f\lrp{y}-B} - \lrp{ \frac{\lambda_1 y}{1-\pi} - \lambda_2 } \geq 0,\]
	and 
	\[ \min_y ~~  1 + \lambda_1 v - \lambda_2(1-\pi) + \lambda_3\lrp{f\lrp{y}-B} \geq 0. \]
	The l.h.s. in the first inequality above is a convex function which is minimized at \(y_1 \) for which the derivative of the l.h.s. is \(0\), while the l.h.s. of the second inequality above is minimized at \(y_2 = 0\). Substituting for \(y_1\) and \(y_2\) in the above inequalities, 
	\[ \frac{1+\lambda_1 v -\lambda_3 B}{1-\pi} \geq \lambda_2 \geq \frac{\lambda_1^{1+1/\epsilon}}{\lambda_3^{1/\epsilon}} \frac{\epsilon}{(1+\epsilon)^{1+1/\epsilon}}\frac{1}{(1-\pi)^{1+1/\epsilon}} \frac{1}{\pi} - \frac{1+\lambda_1v -\lambda_3 B}{\pi}. \numberthis \label{eq:app:boundsLambda2} \]
	Eliminating \( \lambda_2 \) and simplifying,
	\[ \frac{\lambda_1^{1+1/\epsilon} \epsilon }{(1+\epsilon)^{1+1/\epsilon}} \frac{1}{\lrp{1-\pi}^{1/\epsilon}} - \lambda_3^{\frac{1}{\epsilon}} - \lambda_1 \lambda_3^{\frac{1}{\epsilon}} v + \lambda_3^{1+1/\epsilon} B \leq 0 . \numberthis \label{eq:app:int1} \]
	L.h.s. above is a convex function of \( \lambda_1 \) which is minimized at 
	\[ \lambda^*_1 = \lambda_3 v^{\epsilon} (1-\pi)(1+\epsilon). \]
	In particular, (\ref{eq:app:int1}) holds for \( \lambda^*_1 \). On substituting \( \lambda_1^* \) in (\ref{eq:app:int1}), we get 
	\[ \lambda_{3} \leq \lrp{B-v^{1+\epsilon}(1-\pi)}\inv.  \] 
	Again, observe that l.h.s. in (\ref{eq:app:int1}) is a convex function of \( \lambda_3 \) which is minimized at 
	\[ \lambda_3^* = \frac{1+\lambda_1 v}{B(1+\epsilon)}. \]
	Substituting \( \lambda_3^* \) in (\ref{eq:app:int1}), we get 
	\[ \lambda_1 \leq \lrp{ \lrp{\frac{B}{\epsilon^\epsilon (1-\pi) }}^{1+\frac{1}{\epsilon}} - v }\inv. \] 
	These bounds on \( \lambda_1 \) and \( \lambda_3 \), together with the fact that \( \lambda_1 \geq 0 \) and \( \lambda_3 \geq 0 \), and bounding \( \lambda_2  \) using (\ref{eq:app:boundsLambda2}), we get that the region specified by \( \hat{S}(v) \) is compact. 
\end{proof}

\begin{lemma}\label{lem:CompactR2}
	For \( v\in \left(\left.-f\inv(B), f\inv\lrp{\frac{B}{1-\pi}}\right]\right. \) and \(x_0 \in \left[\left.-f\inv\lrp{\frac{B}{\pi}}, v \right.\right) \), \( \mathcal{R}_2(x_0,v) \) is compact. 
\end{lemma}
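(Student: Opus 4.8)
The plan is to prove, mirroring the proof of Lemma~\ref{lem:CompactShat}, that $\mathcal{R}_2(x_0,v)$ is closed and bounded. Closedness is immediate: for each fixed $y$, $\bm{\gamma}\mapsto g^L(y,\bm{\gamma},v,x_0)$ is affine, so $\{(\gamma_1,\gamma_2):g^L(y,\bm{\gamma},v,x_0)\ge 0\}$ is a closed half-plane, and $\mathcal{R}_2(x_0,v)=\{\gamma_1\ge 0\}\cap\{\gamma_2\ge 0\}\cap\bigcap_{y\in\Re}\{g^L(y,\bm{\gamma},v,x_0)\ge 0\}$ is therefore closed.

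For boundedness I would first note that, for fixed $\bm{\gamma}$, $y\mapsto g^L(y,\bm{\gamma},v,x_0)$ is convex: it is affine in $y$ plus the nonnegative combination $\frac{\gamma_1}{1-\pi}(y-x_0)_+ + \gamma_2|y|^{1+\epsilon}$. Hence its minimum $G(\bm{\gamma}):=\min_y g^L(y,\bm{\gamma},v,x_0)$ is attained, and $\bm{\gamma}\in\mathcal{R}_2(x_0,v)$ iff $\gamma_1,\gamma_2\ge 0$ and $G(\bm{\gamma})\ge 0$. When $\gamma_2=0$ the minimum is attained at any $y\le x_0$ and $G(\bm{\gamma})=1-\gamma_1(v-x_0)$, so $G(\bm{\gamma})\ge 0$ and $v>x_0$ force $\gamma_1\le (v-x_0)^{-1}$; thus the slice $\{\gamma_2=0\}$ is bounded. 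For $\gamma_2>0$ I would invoke Lemma~\ref{lem:generalmin1} with $a=\gamma_2$, $b=\gamma_1/(1-\pi)$, $z=x_0$ to locate the minimizer $y^*$ of $\frac{\gamma_1}{1-\pi}(y-x_0)_+ + \gamma_2|y|^{1+\epsilon}$, obtaining three regimes with a closed form for $G$ in each.

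In the regime $x_0\ge 0$ one gets $y^*=0$ and $G(\bm{\gamma})=1-\gamma_1(v-x_0)-\gamma_2 B$, so $G(\bm{\gamma})\ge 0$ gives directly $\gamma_1\le(v-x_0)^{-1}$ and $\gamma_2\le B^{-1}$. In the regime $x_0<0$ with $\gamma_1\ge(1-\pi)(1+\epsilon)|x_0|^\epsilon\gamma_2$ one gets $y^*=x_0$ and $G(\bm{\gamma})=1-\gamma_1(v-x_0)-\gamma_2(B-f(x_0))$, an affine constraint which, together with $v>x_0$, the regime inequality, and the range bounds $|x_0|\le f\inv(B/\pi)$ and $v>-f\inv(B)$, again confines $(\gamma_1,\gamma_2)$ to a bounded set. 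In the remaining regime $x_0<0$ with $\gamma_1<(1-\pi)(1+\epsilon)|x_0|^\epsilon\gamma_2$ the minimizer $y^*=-\bigl(\tfrac{\gamma_1/(1-\pi)}{\gamma_2(1+\epsilon)}\bigr)^{1/\epsilon}$ is interior, and substituting it, after the change of variables $p=\gamma_2$, $q=\gamma_1/(1-\pi)$, yields
\[
G(\bm{\gamma})\;=\;1+c\,q-B\,p-\frac{\epsilon}{p^{1/\epsilon}}\Bigl(\frac{q}{1+\epsilon}\Bigr)^{1+1/\epsilon},\qquad\text{where } c:=\pi|x_0|-(1-\pi)v ,
\]
which is exactly the form treated by Lemma~\ref{lem:generalcomp1}. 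Its hypotheses $c\ge 0$ and $c^{1+\epsilon}\le B$ I would check from $|x_0|\le f\inv(B/\pi)=(B/\pi)^{1/(1+\epsilon)}$ and $v>-f\inv(B)$ (with the degenerate case $c<0$ handled trivially, since then $1+cq-Bp\ge 0$ already bounds both $p$ and $q$), so that lemma delivers compactness of the $(p,q)$-region, hence boundedness of this regime's slice. Since $\mathcal{R}_2(x_0,v)$ is contained in the union of these finitely many bounded pieces it is bounded, and combined with closedness it is compact.

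The main obstacle will be the bookkeeping in the two regimes with $x_0<0$: matching the substituted objective to the precise normalization of Lemma~\ref{lem:generalcomp1} (tracking the factors $(1-\pi)$ and $(1+\epsilon)$ through the change of variables), and confirming its side conditions from the explicit endpoints of $C$ and $D$ — in particular the corner cases where the coefficient $c$ or the quantity $B-f(x_0)$ degenerates, where one must fall back on the regime-defining inequality $\gamma_1\lessgtr(1-\pi)(1+\epsilon)|x_0|^\epsilon\gamma_2$ to control one variable in terms of the other before applying the constraint $G(\bm{\gamma})\ge 0$.
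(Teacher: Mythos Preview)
Your approach is essentially the paper's: closedness from intersecting closed half-planes, then Lemma~\ref{lem:generalmin1} to split by the location of the minimizer $y^*$, with the interior-minimizer regime handed to Lemma~\ref{lem:generalcomp1} after the substitution $p=\gamma_2$, $q=\gamma_1/(1-\pi)$, $c=\pi|x_0|-(1-\pi)v$.

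The genuine gap is your verification that $c^{1+\epsilon}\le B$. This does \emph{not} follow from $|x_0|\le (B/\pi)^{1/(1+\epsilon)}$ and $v>-B^{1/(1+\epsilon)}$ alone. Take $\epsilon=1$, $B=1$, $\pi=\tfrac12$, $v=-0.9$, $x_0=-\sqrt2$: all hypotheses of the lemma hold, yet $c=\tfrac12\sqrt2+0.45\approx 1.157$ and $c^{2}\approx 1.34>B$. In fact $\mathcal R_2(-\sqrt2,-0.9)$ is unbounded here: along the ray $(\gamma_1,\gamma_2)=(1.157\,t,\,t)$ the global minimizer of $g^L$ is $y^*=-c\in(x_0,0)$, and one computes $\min_y g^L=1+(c^{2}-B)\,t=1+0.339\,t>0$ for every $t>0$. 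The same example also defeats your middle-regime argument (where $y^*=x_0$): since $B-f(x_0)=-1<0$, combining the affine constraint with the regime inequality $\gamma_1\ge(1-\pi)(1+\epsilon)|x_0|^\epsilon\gamma_2$ bounds $\gamma_2$ only if $(1-\pi)(1+\epsilon)|x_0|^\epsilon(v-x_0)>|x_0|^{1+\epsilon}-B$, which here reads $0.727>1$. The paper's own proof carries the identical gap---its assertion that the denominator $B-\max\{0,-\pi x_0-(1-\pi)v\}^{1+\epsilon}$ is positive whenever $v>-f^{-1}(B)$ is exactly the claim $c^{1+\epsilon}<B$---so the lemma is false as stated. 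A natural repair is to add the hypothesis $(1-\pi)v+\pi x_0\ge -B^{1/(1+\epsilon)}$ (equivalently $c\le B^{1/(1+\epsilon)}$), which one can show holds automatically whenever $(x_0,v)=(x_\pi(\eta),c_\pi(\eta))$ for some $\eta\in\mathcal L$.
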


\begin{proof}
	For \( \bm{\gamma}\in \mathcal{R}_2(v,x_0) \), 
	\[ \min_y ~ 1 -\gamma_1 \lrp{v-x_0} + \gamma_1\frac{(y-x_0)_{+}}{1-\pi} -\gamma_2(B-\abs{y}^{1+\epsilon}) \geq 0. \numberthis \label{eq:app:R2inequality} \]
	Using Lemma \ref{lem:generalmin1}, the region-constraint above can be re-written as 
	\[ 0 \le 1 - \gamma_1 (v-x_0) - \gamma_2 B +  \begin{cases}
	0, & \text{ if } x_0 \ge 0, \\
	\gamma_2 \abs{x_0}^{1+\epsilon},& \text{ if } 0 \ge x_0 \ge -\lrp{\frac{\gamma_1}{\gamma_2 (1-\pi)(1+\epsilon)}}^{\frac{1}{\epsilon}},\\
	-\gamma_1\frac{x_0}{1-\pi} - \frac{\epsilon}{\gamma^\frac{1}{\epsilon}_2} \lrp{\frac{\gamma_1}{(1-\pi)(1+\epsilon)}}^{1+\frac{1}{\epsilon}}  , &\text{ if } -\lrp{\frac{\gamma_1}{\gamma_2 (1-\pi)(1+\epsilon)}}^{\frac{1}{\epsilon}} \ge x_0. 
	\end{cases} \]
	
	Let $x_0 \ne v$. In this case, for \( x_0 \geq 0 \), \[ \gamma_1 \leq \lrp{v-x_0}\inv  \quad \text{ and }\quad \gamma_2 \leq {B}\inv. \]

	For $x_0 \ne v$, in the bottom case, optimizing out  \( \gamma_1 \) and setting derivative to \(0\), together with the fact that \( \gamma_1 \geq 0 \), we get that the minimizer 
	\[ \gamma^*_1 = \max\lrset{0,\gamma_2 (1+\epsilon)(1-\pi) \lrp{ -\pi x_0 -(1-\pi) v }^\epsilon}. \]
	
	This gives 
	\[ \gamma_2 \leq \lrp{B -\max\lrset{0, -\pi x_0 -(1-\pi) v}^{1+\epsilon} }\inv. \]
	Furthermore, the case constraint gives that 
	\[ \gamma_1 \leq (-x_0)^\epsilon \gamma_2 (1-\pi)(1+\epsilon) \le \frac{(-x_0)^\epsilon(1-\pi)(1+\epsilon)}{B -\max\lrset{0, -\pi x_0 -(1-\pi) v}^{1+\epsilon}}. \]
	
	Note that for $v > -f\inv(B)$, the denominators in the bounds above is strictly positive.  
	
	Let us now consider the center case with $x_0 \ne v$. In this case, the region constraint is 
	\[ 0 \le 1 - \gamma_1(v-x_0) - \gamma_2\lrp{B-\abs{x_0}^{1+\epsilon}}. \]
	
	Consider optimizing over $\gamma_1$ to get a bound on $\gamma_2$. Since the coefficient of $\gamma_1$ is positive, optimal value of $\gamma_1$ equals 0, in which case, $\gamma_2$ is at most $(B-\abs{x_0}^{1+\epsilon})\inv$ if $\abs{x_0}^{1+\epsilon} \le B$, otherwise the maximum occurs either at the case-line, or in the bottom case. 
	
 	Similarly for $\gamma_1$, for $\abs{x_0}^{1+\epsilon}\le B$, $\gamma_1 \le (v-x_0)\inv$, and the maximum occurs either at the boundary, or in the bottom case, otherwise. 
 	
 	Thus, overall bounds for $x_0 \ne v$ are :
 	
 	\[ \gamma_1 \le (v-x_0)\inv,\quad \gamma_2 \le B\inv, \quad \text{ for } x_0 \ge 0, \]
	\[ \gamma_1 \le (v-x_0)\inv, \quad \gamma_2 \le (B-\abs{x_0}^{1+\epsilon})\inv, \quad \text{ for }  x_0 \le 0, ~~ (-x_0)^{1+\epsilon} \le B, \]
	and for $x_0 \le 0, ~ (-x_0)^{1+\epsilon} > B,$
	\[ \gamma_1 \le \frac{(-x_0)^\epsilon(1-\pi)(1+\epsilon)}{B -\max\lrset{0, -\pi x_0 -(1-\pi) v}^{1+\epsilon}}, \quad \gamma_2 \le \lrp{B -\max\lrset{0, -\pi x_0 -(1-\pi) v}^{1+\epsilon} }\inv. \]

	It is important to note that for $v > -f\inv(B)$, the bounds above blow-up only for $x_0 = v$. This case will be handled separately. Let us now look at the case when \( x_0 = v \). In this case, (\ref{eq:app:R2inequality}) simplifies to 
	\[\min_y ~~ 1 + \gamma_1 \frac{(y-x_0)_+}{1-\pi} -\gamma_2 B + \gamma_2 \abs{y}^{1+\epsilon} \geq 0. \] 
	When \( x_0 \geq 0 \), \( y=0\) is the minimizer for l.h.s. above. Substituting this, we get 
	\[ \gamma_2 \leq {B}\inv. \]
	
	\begin{remark}\label{app:remark1}\emph{ When \( x_0 = v \geq 0 \), \( \gamma_1  \) is unbounded. However, if the given probability measure, \(\eta \), is such that \(\eta(v,\infty) = 0 \), then \( \gamma_1 \) doesn't appear in the objective function. Thus, it is sufficient to restrict \(\mathcal{R}_2(x_0(=v),v) \) to the \(\gamma_2 \) axis. Hence, the modified region is again compact in this special case. }
	\end{remark}
	For the other case, i.e., when \( x_0 = v  < 0 \), l.h.s. in minimized at \( y^* \in [x_0,0] \), which is given by 
	\[ y^* = (-1) \lrp{\gamma_1{\gamma_2}\inv}^{1/\epsilon} \lrp{(1-\pi) ^{1/\epsilon} (1+\epsilon)^{1/\epsilon}}\inv. \]
	Substituting and simplifying as above, (or substituting y = \( x_0 \)), we get that 
	\[ \gamma_2 \leq \lrp{B - \abs{x_0}^{1+\epsilon}}\inv. \]
	Observe that the denominator is positive for \( x_0 = v \). 
	Since \( y^* \geq x_0  \), we get that 
	\[ \gamma_1 \leq (-x_0)^\epsilon \gamma_2 (1-\pi)(1+\epsilon) \leq \frac{(-x_0)^\epsilon  (1-\pi)(1+\epsilon)}{B - \abs{x_0}^{1+\epsilon}}. \]
	Hence, the region \( \mathcal{R}_2(x_0,v) \) is compact in this case too. 
\end{proof}
\subsection{Discussion on possibility of uniform priors on the dual feasible regions}\label{app:Sec:UniformPriors}
Consider an arm distribution \(\mu_i \) for \(i \in \lrset{1,\dots, K} \). Since \( \hat{S}(c_\pi(\mu_i)) \) is compact (see Lemma \ref{lem:CompactShat}), uniform measure on the set is well defined. For the region \(\mathcal{R}_2(x_\pi(\mu_i),c_\pi(\mu_i))  \), whenever \( x_\pi(\mu_i ) \ne c_\pi(\mu_i) \), the region \( \mathcal{R}_2(x_\pi(\mu_i), c_\pi(\mu_i)) \) is compact, and uniform prior on this set is well defined. 
When \(\mu_i \) is such that \( x_\pi(\mu_i) = c_\pi(\mu_i) \)(=v, say), then \( \Supp(\mu_i) \subset ( -\infty, v ] \). In this case \( \mathcal{R}_2(v,v) \) is unbounded along the \( \gamma_1 \) axis. However, from the Remark  \ref{app:remark1}, it is sufficient to restrict  the region along \( \gamma_2 \) axis, and this restricted region is then compact. We put uniform prior on this modified region. 

\subsection{The joint-dual problem}\label{app:Sec:JointDual}
Recall that $\mathcal L$ is the class of all probability measures on $\Re$, say \(\eta\), with moment bound, i.e. $ \E{\eta}{f(X)} \leq B $, where \( f(x) = \abs{x}^{1+\epsilon} \) for some \(\epsilon > 0\). 

In this sub-section, we look at the joint optimization problem, which appears in the lower bound as a weighted sum of \( \KLinfL \) and \( \KLinfU \) for two arms. Specifically, for \( \eta_1, \eta_2 \in \mathcal{P}(\Re)\), and non-negative weights \( \alpha_1, \alpha_2 \), we denote the inner optimization problem in (\ref{eq:Vmu}) by  
\begin{align*}
Z &= \inf\nolimits_{x\leq y} \lrset{\alpha_1 \KLinfU(\eta_1, y) + \alpha_2 \KLinfL(\eta_2, x)},	
\end{align*}
which is equivalent to the following problem: 
\begin{align*}
\text{minimise}\quad &
\alpha_1 \KL\lrp{\eta_1,\kappa_1} + \alpha_2 \KL\lrp{\eta_2, \kappa_2}
\\
\text{subject to}\quad
& \kappa_1, \kappa_2 \in \mathcal L \\
& c_\pi(\kappa_2) \le c_\pi(\kappa_1)
\end{align*}
Using the maximization form of \( \CVaR \) for \( \kappa_1 \) and the minimization form for \( \kappa_2 \) from (\ref{eq:app:cvar_max}) and (\ref{eq:app:cvar_min}), the above problem is equivalent to 
\begin{align*}
\text{minimise}\quad &
\alpha_1 \KL\lrp{\eta_1,\kappa_1} + \alpha_2 \KL\lrp{\eta_2,\kappa_2}
\\
\text{subject to}\quad
& \kappa_1, \kappa_2 \in \mathcal L, z \in \mathbb R, W \in M_+(\Re)
\\
& z + \frac{1}{1-\pi} \E{\kappa_2}{(X-z)_+}
\le
\frac{1}{1-\pi} \int_{x\in\Re}  x dW(x) 
\\
& \forall x: 0 \le dW(x) \le d\kappa_1(x)
\\
&\int_{x\in\Re} dW(x)  = 1-\pi.
\end{align*}

Introducing the dual variables \((\rho_1\geq 0, \rho_2 \in\Re, \lambda_1 \in \Re, \lambda_2 \geq 0 , \gamma_1 \in \Re,  \gamma_2 \geq 0, \forall x~ \lambda_3(x) \geq 0 )\). Then, single out the minimisation over $z$, the Lagrangian in terms of \(\kappa_1,\kappa_2, W\), denoted as \(L(\kappa_1,\kappa_2,W, \bm{\lambda},\bm{\gamma},\bm{\rho} )\), equals 
\[
\begin{aligned}[t]
& \alpha_1 \int_\Re \log\lrp{\frac{d\eta_1}{d\kappa_1}(y)}d\eta_1(y) + \alpha_2 \int_\Re\log\lrp{\frac{d\eta_2}{d\kappa_2}(y)}d\eta_2(y) + \lambda_1 \lrp{\int_\Re d\kappa_1(x) -1}\\
& + \lambda_2 \lrp{\int_\Re  f(x) d \kappa_1(x) - B}  + \gamma_1 \lrp{\int_\Re d\kappa_2(x) -1} + \gamma_2 \lrp{\int_\Re f(x) d\kappa_2(x)  - B }\\
&+ \rho_1 \lrp{
	z + \frac{1}{1-\pi} \int (x-z)_+d\kappa_2(x) - \frac{1}{1-\pi} \int_\Re x dW(x)  } \\
& + \int \lambda_3(x)\lrp{dW(x) - d\kappa_1(x)
} + \rho_2 \lrp{\int_\Re dW(x) - (1-\pi)}. 
\end{aligned}
\]

Then the Lagrangian dual problem is 
\[ \min_{z \in \mathbb R}~ \max_{\substack{
		\rho_1 \ge 0, 
		\rho_2 \in \Re, \\
		\lambda_1\in\Re,
		\lambda_2 \geq 0, \lambda_3(x) \geq 0,\\ 
		\gamma_1 \in\Re,
		\gamma_2 \geq 0. 
}}~
\min_{\substack{\kappa_1 \in  M^+  \\ \kappa_2 \in M^+ \\ W  \in M^+}}~ L(\kappa_1,\kappa_2,W, \bm{\lambda},\bm{\gamma},\bm{\rho}).  \numberthis \label{eq:app:JointLag} \]

Let $ S = \lrset{\rho_1\geq 0, \rho_2 \in\Re, \lambda_1 \in \Re, \lambda_2 \geq 0 , \gamma_1 \in \Re,  \gamma_2 \geq 0, \forall x~ \lambda_3(x) \geq 0},  $
and let \(S_1\) be the set obtained by intersection of \(S\) and set of \( \lrp{\bm{\lambda}, \bm{\gamma}, \bm{\rho}} \) such that 
\[\min_{x\in\Re}~\lambda_1 + \lambda_2 f({x}) - \lambda_2 B -\lambda_3(x) \geq 0, ~~\min_{x\in\Re}~ \gamma_1+\gamma_2f({x}) - \gamma_2B + \rho_1 \frac{(x-z)_+}{1-\pi} \geq 0  .\]
\begin{lemma}\label{lem:app:JointDual}
	The Lagrangian dual problem (\ref{eq:app:JointLag}) satisfies 
	\[ \min_{z \in \mathbb R}~ \max_{\substack{
			\rho_1 \ge 0, 
			\rho_2 \in \Re, \\
			\lambda_1\in\Re,
			\lambda_2 \geq 0,\\ 
			\lambda_3(x) \geq 0,\\ 
			\gamma_1 \in\Re,
			\gamma_2 \geq 0. 
	}}\min_{\substack{\kappa_1 \in  M^+  \\ \kappa_2 \in M^+ \\ W  \in M^+}}~ L(\kappa_1,\kappa_2,W, \bm{\lambda},\bm{\gamma},\bm{\rho}) = \min\limits_{z\in\Re} \max\limits_{\lrp{\bm{\lambda},\bm{\gamma},\bm{\rho}}\in S_1 }~ \inf\limits_{\substack{\kappa_1 \in  M^+  \\ \kappa_2 \in M^+ \\ W  \in M^+}} ~ L(\kappa_1,\kappa_2,W, \bm{\lambda},\bm{\gamma},\bm{\rho}). \]
	
\end{lemma}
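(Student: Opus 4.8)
The plan is to follow the argument of Lemma~\ref{lem:DualGEQ} essentially verbatim. Since $S_1 \subseteq S$ directly from the definitions, for each fixed $z$ we automatically have $\max_{(\bm\lambda,\bm\gamma,\bm\rho)\in S_1}\inf_{\kappa_1,\kappa_2,W} L \le \max_{(\bm\lambda,\bm\gamma,\bm\rho)\in S}\inf_{\kappa_1,\kappa_2,W} L$, so it suffices to prove that for every $(\bm\lambda,\bm\gamma,\bm\rho)\in S\setminus S_1$ the inner infimum over $\kappa_1,\kappa_2\in M^+(\Re)$ and $W\in M^+(\Re)$ equals $-\infty$; such dual points cannot then attain the outer maximum, so the maximum over $S$ coincides with that over $S_1$ for each $z$, and applying $\min_z$ to both sides yields the claimed identity.

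First I would isolate, in the Lagrangian displayed just above the lemma, the part that is linear in $d\kappa_1$, namely $\int(\lambda_1 + \lambda_2 f(x) - \lambda_3(x))\,d\kappa_1(x)$ together with the $\kappa_1$-free constant $-\lambda_1-\lambda_2 B$, and likewise the part linear in $d\kappa_2$, namely $\int\!\big(\gamma_1 + \gamma_2 f(x) + \rho_1\tfrac{(x-z)_+}{1-\pi}\big)\,d\kappa_2(x)$ together with the constant $-\gamma_1-\gamma_2 B$. Membership in $S_1$ fails exactly when either $\lambda_1+\lambda_2 f(y_0)-\lambda_2 B-\lambda_3(y_0)<0$ for some $y_0\in\Re$, or $\gamma_1+\gamma_2 f(y_0)-\gamma_2 B+\rho_1\tfrac{(y_0-z)_+}{1-\pi}<0$ for some $y_0\in\Re$. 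In the first case I fix $\kappa_2$ and $W$ at any values that keep the remaining part of $L$ finite (e.g.\ $\kappa_2=\eta_2$, $W=(1-\pi)\delta_0$) and take $\kappa_1=\kappa_M$ with $\kappa_M(\{y_0\})=M$ and $\frac{d\eta_1}{d\kappa_M}(y)=1$ for $y\in\Supp(\eta_1)\setminus\{y_0\}$, exactly as in Lemma~\ref{lem:OptDenGEQ}: then $\alpha_1\KL(\eta_1,\kappa_M)$ remains bounded above, while the $\kappa_1$-linear contribution plus the constant $-\lambda_1-\lambda_2 B$ drives $L(\kappa_M,\kappa_2,W,\cdot)\to-\infty$ as $M\to\infty$. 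The second case is symmetric, now with $\kappa_2=\kappa_M$ concentrating mass at the violating $y_0$ and $\kappa_1,W$ fixed, using that the $\kappa_2$-linear coefficient is precisely the quantity appearing in the $S_1$ constraint; this is the construction already used in Lemma~\ref{lem:MinimizerOfLagrangian}.

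The only delicate point is the bookkeeping in the point-mass construction: one has to verify that the coefficient multiplying the growing mass $M$ is exactly the expression in the definition of $S_1$ (in particular that the constants $-\lambda_2 B$ and $-\gamma_2 B$ are accounted for correctly), and that the Kullback--Leibler terms, extended to $M^+(\Re)$ and hence possibly negative, cannot offset this divergence. Both are resolved exactly as in the proofs of Lemmas~\ref{lem:DualGEQ} and~\ref{lem:MinimizerOfLagrangian}, so no new ingredient is needed beyond careful accounting.
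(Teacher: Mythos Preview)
Your proposal is correct and follows essentially the same approach as the paper: showing that for dual variables outside $S_1$ the inner infimum is $-\infty$ via a point-mass construction $\kappa_M$ at a violating point, just as in Lemma~\ref{lem:DualGEQ}. The paper treats only the $\kappa_1$-case explicitly and dismisses the $\kappa_2$-case with ``can be handled similarly'', whereas you spell out both cases and also fix $W$ explicitly; these are cosmetic differences only.
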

\begin{proof}
	Consider \( \lrp{\bm{\lambda},\bm{\gamma},\bm{\rho}} \in S \text{ and } \bm{\lambda} \not\in S_1 \). Then, there exist \( y_1\in \Re \) such that 
	\[ \lambda_1 + \lambda_2 f({y_1}) - \lambda_2 B -\lambda_3(y_1)  < 0.\]
	Set \(\kappa_2 = \eta_2\) and define \(\kappa_{1M}\in M_+\)  such that \(\kappa_{1M}(y_1)= M \) and
	\[ \frac{d\eta_1}{d\kappa_{1M}}(y) = 1, ~~\text{ for } y\in\lrset{\Supp(\eta_1)\setminus y_1 }.\]
	Then, \(L(\kappa_{1M}, \kappa_{2},W, \bm{\lambda}, \bm{\gamma},\bm{\rho} )  \) equals  
	
	\[
	\begin{aligned}[t]
	& \alpha_1 \int_\Re \log\lrp{\frac{d\eta_1}{d\kappa_{1M}}}(y)d\eta_1(y)  +  \int_\Re \lrp{\lambda_1+ \lambda_2 f\lrp{x} - \lambda_2 B - \lambda_3(x)} d\kappa_{1M}(x)\\&  + \int_\Re \lrp{\gamma_1+\gamma_2f\lrp{x} - \gamma_2 B + \frac{\rho}{1-\pi} (x-z)_+} d\kappa_2(x)\\
	& + \int_\Re \lrp{-\frac{\rho_1 x}{1-\pi} + \lambda_3(x) + \rho_2 } dW(x) -\lambda_1 - \gamma_1 + \rho_1 z -\rho_2(1-\pi). 
	\end{aligned}
	\]
	
	Clearly, the first two terms in the expression above decrease to \(-\infty \) as \(M\) increases to \(\infty \). The other cases, specifically \( \lrp{\bm{\gamma,\rho}}\not\in S_1 \) and \( \lrp{\bm{\lambda,\gamma,\rho}} \not\in S_1 \) can be handled similarly. Thus, the infimum in the inner optimization problem in (\ref{eq:app:JointLag}) is \(-\infty\), and we get the desired equality.
\end{proof}

Using arguments similar to those in Lemma \ref{lem:OptDenGEQ} and Lemma \ref{lem:MinimizerOfLagrangian}, it can be shown that the optimal \( \kappa^*_1 \) and \( \kappa^*_2 \), that solve the inner minimization problem in (\ref{eq:app:JointLag}) have the following form:
\[ \frac{d\kappa^*_1}{d\eta_1}(y) =  \frac{\alpha_1}{\lambda_1 + \lambda_2\lrp{ f\lrp{y} - B} -\lambda_3(y)} \quad\text{ for } y\in\Supp(\eta_1), \numberthis \label{eq:app:optK1Dens} \]
\[ \frac{d\kappa^*_2}{d\eta_2}(y) = \frac{\alpha_2}{\gamma_1+\gamma_2\lrp{f\lrp{y} - B }+ \rho_1 \frac{(y-z)_+}{1-\pi}} \quad\text{ for } y\in\Supp(\eta_2). \numberthis\label{eq:app:optimalK2Dens}\]
Furthermore, for \( y \in \Supp(\kappa^*_1)\setminus \Supp(\eta_1) \), \( \lambda_1 + \lambda_2\lrp{f\lrp{y}-B} - \lambda_{3}(y) = 0 \) and for \( y\in \Supp(\kappa^*_2)\setminus\Supp(\eta_2) \), \( \gamma_1 + \gamma_2 \lrp{f\lrp{y}-B}+\rho_1 \frac{(y-z)_+}{1-\pi} = 0 \).

\subsubsection{Proof of Proposition \ref{prop:jointproj}}
We first show that the dual problem in (\ref{eq:app:JointLag}) simplifies to the alternative expression given in the Theorem. Then we argue that both the primal and the dual problems are feasible, and that strong duality holds. 

Using the expressions for the optimizers \(\kappa^*_1 \) and \( \kappa^*_2 \) from above in the dual in Lemma \ref{lem:app:JointDual}, the dual in (\ref{eq:app:JointLag}) equals
\begin{align*} 
\min\limits_{z\in\Re}\max\limits_{\lrp{\bm{\lambda,\gamma,\rho}} \in S_1}~ \inf\limits_{W \in M^{+} }~ &\alpha_1\E{\eta_1}{\log\lrp{\lambda_1+\lambda_2\lrp{f\lrp{Y}-B}-\lambda_3(Y)}}-\alpha_1\log\alpha_1-\alpha_2\log\alpha_2-\lambda_1\\&+\alpha_2\E{\eta_2}{\log\lrp{\gamma_1+\gamma_2\lrp{f\lrp{Y}-B} + \rho_1 \frac{(Y-z)_+}{1-\pi} }} + \alpha_1 +\alpha_2 \\& + \int\limits_\Re dW(x) \lrp{-\frac{\rho_1x}{1-\pi} + \rho_2 + \lambda_3(x) }  -\rho_2 (1-\pi)+\rho_1z - \gamma_1.
\end{align*}
Since \(W\in M^{+} \), and if \(\lrp{\bm{\lambda,\gamma,\rho}} \) are such that the integrand in the integral above is negative, then the value of the expression above will be \(-\infty \). Thus, it suffices to restrict \(\lrp{\bm{\lambda,\gamma,\rho}} \) so that this does not happen.
Let 
\[ S_2 = S_1 \cap \lrset{\lrp{\bm{\lambda,\gamma,\rho}}: \forall x, \lambda_3(x) \geq \lrp{\frac{\rho_1 x -\rho_2(1-\pi)}{1-\pi}}_+}. \]
Then the dual problem simplifies to 
\begin{align*} 
&\min\limits_{z\in\Re}~\max\limits_{\lrp{\bm{\lambda,\gamma,\rho}} \in S_2}~  \alpha_1\E{\eta_1}{\log\lrp{\lambda_1+\lambda_2\lrp{f\lrp{Y}-B}-\lambda_3(Y)}}-\alpha_1\log\alpha_1-\alpha_2\log\alpha_2-\gamma_1\\&+\alpha_2\E{\eta_2}{\log\lrp{\gamma_1+\gamma_2\lrp{f\lrp{Y}-B} + \rho_1 \frac{(Y-z)_+}{1-\pi} }} + \alpha_1 +\alpha_2 +\rho_1z-\lambda_1-\rho_2 (1-\pi).
\end{align*}
Let \( \lrp{\bm{\lambda,\gamma,\rho}}=\lrp{\lambda_1,\lambda_2, \gamma_1,\gamma_2, \rho_1, \rho_2 }\) and 
\[ S_3 = S\cap \lrset{(\bm{\lambda,\gamma,\rho}): \min_y \lambda_1+\lambda_2\lrp{f\lrp{y}-B} \geq \lrp{\frac{\rho_1 y -\rho_2}{1-\pi}}_+ }, \]
and
\[ S_4 = S_3 \cap \lrset{(\bm{\lambda,\gamma,\rho})\in S : \min_y \gamma_1+\gamma_2\lrp{f\lrp{y}-B} + \rho_1 \frac{(y-z)_+}{1-\pi} \geq 0}. \]
Optimizing over the choice of \( \lambda_3(x) \), and renaming \( \rho_2(1-\pi) \) to \( \rho_2 \), we get 
\begin{align*} 
&\min\limits_{z\in\Re}\max\limits_{\substack{\lrp{\bm{\lambda,\gamma,\rho}} \in S_4}} \alpha_1\E{\eta_1}{\log\lrp{\lambda_1+\lambda_2\lrp{f\lrp{Y}-B}- \lrp{\frac{\rho_1 Y -\rho_2}{1-\pi}}_+ }}+ \alpha_1 +\alpha_2-\lambda_1-\gamma_1\\&+\alpha_2\E{\eta_2}{\log\lrp{\gamma_1+\gamma_2\lrp{f\lrp{Y}-B} + \rho_1 \frac{(Y-z)_+}{1-\pi} }} -\alpha_1\log\alpha_1-\alpha_2\log\alpha_2-\rho_2 +\rho_1z.\numberthis\label{eq:app:simplifiedJointDual1}
\end{align*}

Optimizing over the common scaling of the dual variables, the inner problem in (\ref{eq:app:simplifiedJointDual1}) above can be re-written as
\begin{align*}
\max\limits_{\substack{\lrp{\bm{\lambda,\gamma,\rho}} \in S_4}}~ &\alpha_1\E{\eta_1}{\log{\frac{\lambda_1+\lambda_2\lrp{f\lrp{Y}-B}- \lrp{\frac{\rho_1 Y -\rho_2}{1-\pi}}_+ }{\lambda_1+\rho_2-\rho_1z+\gamma_1}}}+(\alpha_1+\alpha_2)\log\lrp{\alpha_1+\alpha_2}\\
&+\alpha_2\E{\eta_2}{\log{\frac{\gamma_1+\gamma_2\lrp{f\lrp{Y}-B} + \rho_1 \frac{(Y-z)_+}{1-\pi}}{\lambda_1+\rho_2-\rho_1z+\gamma_1} }} -\alpha_1\log\alpha_1-\alpha_2\log\alpha_2.
\end{align*}

Observe that as earlier, \(\lambda_1 +\rho_2-\rho_1z+\gamma_1 \geq 0 \). This follows from the conditions on the dual variables in \(S_4\) and complementary slackness conditions, which hold as the problem is convex optimization and satisfies strong duality (proved later). 

Setting \(\tilde{\gamma}_1  = \lambda_1 +\rho_2-\rho_1z+\gamma_1\), and \( \lambda_1 + \rho_2 = \tilde{\lambda_1} \) and  substituting in the above expression and renaming the variables, we get 
\begin{align*} 
&\max\limits_{\substack{\lrp{\bm{\lambda,\gamma,\rho}} \in S_5}} \alpha_1\E{\eta_1}{\log\lrp{\lambda_1+\lambda_2\lrp{f\lrp{Y}-B}-\rho_2 - \lrp{\frac{\rho_1 Y -\rho_2}{1-\pi}}_+ }} -\alpha_1\log\alpha_1-\alpha_2\log\alpha_2\\
&+\alpha_2\E{\eta_2}{\log\lrp{1-\lambda_1+\gamma_2\lrp{f\lrp{Y}-B} +\rho_1z + \rho_1 \frac{(Y-z)_+}{1-\pi} }}+(\alpha_1+\alpha_2)\log\lrp{\alpha_1+\alpha_2},
\end{align*}
where \(S_5 \) is given by intersection of the set \(S\) with the sets
\[ \lrset{\lrp{\bm{\lambda,\gamma,\rho}} : \min_y \lrp{\lambda_1+\lambda_2\lrp{f\lrp{y}-B}-\rho_2 - \lrp{\frac{\rho_1 y -\rho_2}{1-\pi}}_+}\geq   0},\]
and 
\[ \lrset{\lrp{\bm{\lambda,\gamma,\rho}}: \min_y \lrp{1-\lambda_1+\gamma_2\lrp{f\lrp{Y}-B} +\rho_1z + \rho_1 \frac{(Y-z)_+}{1-\pi}}\geq 0 } .\] 

Re-parameterize again by setting \( \tilde{\lambda}_1 = \lambda_1 - 1/2 \) and scaling every dual variable by \(1/2\), we get the desired dual formulation.

It now suffices to show that both the primal and dual problems are feasible, and strong duality holds.

Consider \(\lrp{\bm{\lambda,\gamma,\rho}}^1 = (1,0,0,1,0,0,0)\). To show that dual is feasible, it suffices to show 
\begin{align*}
\min\limits_{ \substack{\kappa_1 \in M^+\\ \kappa_2 \in M^+}} L(\kappa_1,\kappa_2,,W,\lrp{\bm{\lambda,\gamma,\rho}}^1)  =  \min\limits_{\substack{\kappa_1\in M^+ \\ \kappa_2 \in M^+}}~ & \alpha_1\KL(\eta_1,\kappa_1) + \alpha_2\KL(\eta_2,\kappa_2) - 2 \\ &+ \int_\Re d\kappa_1(y) + \int_\Re d\kappa_2(y)  > -\infty.  
\end{align*}
Let \(\tilde{\kappa}_1 \) and \( \tilde{\kappa}_2 \) be the minimizers in the above expression. Then, \(\Supp(\tilde{\kappa}_1) = \Supp(\eta_1) \) and \(\Supp(\tilde{\kappa}_2) = \Supp(\eta_2) \), and from (\ref{eq:app:optK1Dens}) and 
(\ref{eq:app:optimalK2Dens}), 
\[ \frac{d\tilde{\kappa}_1}{d\eta_1}(y) = \alpha_1 \quad \text{ and }\quad \frac{d\tilde{\kappa}_2}{d\eta_2}(y) = \alpha_2. \]

We next argue the feasibility of primal problem, and show that strong duality holds. Consider \( \kappa_1 = \kappa_2 = p_1 \eta_1 + p_2 \eta_2 + (1-p_1 -p_2) \delta_0 \), where, \( p_1, p_2 \in  (0,1) \) and \( \E{\kappa_1}{f\lrp{X}} < B \). Clearly, \( \KL(\eta_1,\kappa_1) < \infty \text{ and } \KL(\eta_2, \kappa_2) < \infty \) and  are feasible for the primal problem. Furthermore, the measures \( \tilde{\kappa}_1 = (1-\epsilon)\delta_{-f\inv\lrp{B}} + \epsilon\delta_0 \) and \( \tilde{\kappa}_2 = (\pi+\epsilon) \delta_0 + (1-\pi-\epsilon) \delta_{f\inv\lrp{\frac{B}{\pi}}} \) lie in the interior of the primal region. Hence, strong duality holds.

\section{Theoretical guarantees for the algorithm: Proof for Theorem \ref{th:DeltaAndSample}}\label{App:DeltaCorrectnessAndSampleComplexity}

\subsection{Proof of Theorem \ref{th:DeltaAndSample}: \texorpdfstring{\(\delta\)}{delta}-correctness}\label{App:Proof:th:DominatingMM}
In this section, we prove that the algorithm presented is \(\delta\)-correct, i.e., the first part of Theorem \ref{th:DeltaAndSample} holds. 
Recall that the error occurs when at the stopping time \(\tau_\delta\), the arm with minimum \(\cvar\) is not arm \(1\). Let the event \(\lrset{\hat{\mu}(n) \text{ suggests arm j as answer} } \) be denoted by \(\mathcal{E}_n(j) \). Then, using the stopping rule in (\ref{eq:StoppingRule}), \(\mathcal{E} \) is contained in 
\[\lrset{\exists n: \bigcup\limits_{i\ne 1}~\lrset{\min\limits_{a\ne i}~ \inf\limits_{x\leq y}~ \lrset{ N_i(n)\KLinfU(\hat{\mu}_i(n),y) + N_a(n)\KLinfL(\hat{\mu}_a(n),x)}  \geq \beta;\; \mathcal{E}_n(i)} },\]
which is further contained in 
\[\lrset{\exists n: \bigcup\limits_{i\ne 1}~\lrset{\inf\limits_{x\leq y}~\lrset{ N_i(n)\KLinfU(\hat{\mu}_i(n),y) + N_1(n)\KLinfL(\hat{\mu}_1(n),x)}  \geq \beta;\; \mathcal{E}_n(i)}}.\numberthis\label{app:eq:errorEvent}\]

Clearly, \(x=c_{\pi}(\mu_1) \) and \(y=c_{\pi}(\mu_i) \) are feasible for the infimum problem above. Using these with the union bound, the probability of the error event is bounded by 
\[\sum\limits_{i=2}^K \mathbb{P}\lrp{\exists n : \;  N_i(n)\KLinfU(\hat{\mu}_i(n),c_\pi(\mu_i) ) + N_1(n)\KLinfL(\hat{\mu}_1(n),c_\pi(\mu_1))  \geq \beta }. \numberthis \label{app:eq:ErrorProbability} \]

Whence, it is sufficies to show that each summand in (\ref{app:eq:ErrorProbability}) is at most \(\frac{\delta}{K-1}\). 

Recall that \(f\inv(c) = \max\lrset{y : f(y) = c} = c^{\frac{1}{1+\epsilon}}\), and  for \(v\in D^o = \lrp{-f\inv\lrp{B},f\inv\lrp{\frac{B}{1-\pi}}}\), \( x_0 \in C = \left[-f\inv\lrp{\frac{B}{\pi}}, v\right] \), 
\[ \hat{S}(v) := \lrset{\lambda_1 \geq 0, \lambda_2\in\Re, \lambda_3\geq 0: \forall x \in\Re,~ {g^{U}(x,\bm{\lambda},v)} \geq 0}, \] 
and 
\[\mathcal{R}_2(x_0,v) := \lrset{\gamma_1\geq 0, \gamma_2\geq 0: \forall y \in \Re, {g^L(y,\bm{\gamma},v,x_0)} \geq 0 },\]
where, 
\[ g^U( X,\bm{\lambda},v ) = { 1 + \lambda_1 v - \lambda_2(1-\pi) + \lambda_3\lrp{f({X})-B} - \lrp{ \frac{\lambda_1 X}{1-\pi} - \lambda_2 }_+  }, \]
and 
\[ g^L(X,\bm{\gamma},v,x_0) = { 1 -\gamma_1 \lrp{v-x_0 - \frac{(X-x_0)_{+}}{1-\pi}} -\gamma_2(B-f\lrp{{X}})}. \]
Clearly, \( g^U \) and \(g^L \) are concave functions of \(\bm{\lambda}\) and \(\bm{\gamma}\). Recall from Theorems \ref{th:klinfDual} that for each arm i, 
\[N_i(n)\KLinfU(\hat{\mu}_i(n),c_\pi(\mu_i)) = \max\limits_{\bm{\lambda}\in \hat{S}(c_\pi(\mu_i)) } \sum\limits_{j=1}^{N_i(n)} \log\lrp{g^U(X^i_j,\bm{\lambda}, c_\pi(\mu_i))}, \numberthis \label{app:eq:KLinfU} \]
and 
\[N_i(n)\KLinfL(\hat{\mu}_i(n),c_\pi(\mu_i)) \leq  \max\limits_{\bm{\gamma}\in\mathcal{R}_2(x_\pi(\mu_i) , c_\pi(\mu_i))} \sum\limits_{j=1}^{N_i(n)} \log\lrp{g^L(X^i_j,\bm{\gamma},c_\pi(\mu_i),x_\pi(\mu_i))},\numberthis \label{app:eq:KLinfL}\]
where, \( X_j^i : ~ j\in\lrset{1,\dots, N_i(n)}  \) are samples from \(\mu_i \). 

The following lemma bounds the maximum of a sum of exp-concave functions, i.e., functions whose exponentials are concave. It is essentially the regret bound for the continuous exponentially weighted average predictor, which was shown for the core portfolio optimisation case by \cite{DBLP:conf/colt/BlumK97} and stated in general by \cite[Theorem 7]{HazanAK07}.
\begin{lemma}\label{app:lem:exp_reg_bound}
	Let $\Lambda \subseteq \mathbb R^d$ be a compact and convex subset and $q$ be the uniform distribution on $\Lambda$. Let $g_t: \Lambda \to \mathbb R$ be any series of exp-concave functions. Then
	\[
	\max_{\bm{\lambda} \in \Lambda}~
	\sum_{t=1}^T g_t(\bm{\lambda})
	~\le~
	\log \ex_{\bm{\lambda}\sim q}\lrp{
		e^{\sum_{t=1}^T g_t(\bm{\lambda})}
	}
	+ d\log(T+1)+1.
	\]
\end{lemma}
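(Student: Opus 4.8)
The plan is to run a ``continuous exponential weights'' argument directly. Define the potential $W_0 = 1$ (since $q$ is a probability measure) and $W_T = \ex_{\bm\lambda\sim q}\bigl(e^{\sum_{t=1}^T g_t(\bm\lambda)}\bigr)$. It suffices to lower-bound $W_T$ in terms of $\max_{\bm\lambda} \sum_t g_t(\bm\lambda)$, since the claimed inequality is exactly $\max_{\bm\lambda}\sum_t g_t(\bm\lambda) \le \log W_T + d\log(T+1) + 1$, i.e.\ $W_T \ge e^{-1}(T+1)^{-d} e^{\max_{\bm\lambda}\sum_t g_t(\bm\lambda)}$.

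First I would fix a maximiser $\bm\lambda^* \in \argmax_{\bm\lambda\in\Lambda} \sum_{t=1}^T g_t(\bm\lambda)$ (it exists by compactness of $\Lambda$ and upper-semicontinuity of each $g_t$, which follows from exp-concavity plus measurability; if one wants to be careful one works with a near-maximiser and lets the slack vanish). Then I would shrink $\Lambda$ toward $\bm\lambda^*$: set $\Lambda_\alpha = (1-\alpha)\bm\lambda^* + \alpha\Lambda = \{(1-\alpha)\bm\lambda^* + \alpha\bm\mu : \bm\mu\in\Lambda\}$ for $\alpha\in(0,1]$. Since $\Lambda\subseteq\mathbb R^d$ is convex with nonempty interior relative to its affine hull — here I would assume $\Lambda$ is full-dimensional; otherwise replace $d$ by the affine dimension and the bound only improves — the uniform measure $q$ satisfies $q(\Lambda_\alpha) = \alpha^{d}$ (the scaling map contracts $d$-dimensional volume by $\alpha^d$, and $q$ is volume-normalised). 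On $\Lambda_\alpha$, each point is $\bm\lambda = (1-\alpha)\bm\lambda^* + \alpha\bm\mu$, and by exp-concavity of $g_t$ (i.e.\ $e^{g_t}$ concave) we have $e^{g_t(\bm\lambda)} \ge (1-\alpha) e^{g_t(\bm\lambda^*)} + \alpha e^{g_t(\bm\mu)} \ge (1-\alpha) e^{g_t(\bm\lambda^*)}$, hence $g_t(\bm\lambda) \ge g_t(\bm\lambda^*) + \log(1-\alpha)$. Summing over $t$ gives, for every $\bm\lambda\in\Lambda_\alpha$,
\[
\sum_{t=1}^T g_t(\bm\lambda) ~\ge~ \sum_{t=1}^T g_t(\bm\lambda^*) + T\log(1-\alpha).
\]

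Now I would lower-bound the expectation by restricting the average to $\Lambda_\alpha$:
\[
W_T ~=~ \ex_{\bm\lambda\sim q}\Bigl(e^{\sum_t g_t(\bm\lambda)}\Bigr)
~\ge~ q(\Lambda_\alpha)\cdot e^{\sum_t g_t(\bm\lambda^*) + T\log(1-\alpha)}
~=~ \alpha^{d}(1-\alpha)^{T}\, e^{\sum_t g_t(\bm\lambda^*)}.
\]
Taking logs, $\log W_T \ge \sum_t g_t(\bm\lambda^*) + d\log\alpha + T\log(1-\alpha)$, i.e.\ $\max_{\bm\lambda}\sum_t g_t(\bm\lambda) \le \log W_T - d\log\alpha - T\log(1-\alpha)$. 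Finally I would optimise the free parameter by choosing $\alpha = \tfrac1{T+1}$, so that $-d\log\alpha = d\log(T+1)$ and $-T\log(1-\alpha) = -T\log\tfrac{T}{T+1} = T\log(1+\tfrac1T) \le T\cdot\tfrac1T = 1$. This yields $\max_{\bm\lambda}\sum_t g_t(\bm\lambda) \le \log W_T + d\log(T+1) + 1$, which is the claim.

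The main obstacle — and the only place requiring care — is the geometric step $q(\Lambda_\alpha) = \alpha^d$: it needs $\Lambda$ to be full-dimensional so that the uniform distribution is Lebesgue measure normalised by a positive finite volume, and one must check that the homothety centred at $\bm\lambda^*\in\Lambda$ maps $\Lambda$ into itself (true by convexity) and scales $d$-volume by exactly $\alpha^d$. If $\Lambda$ is lower-dimensional the statement still holds with $d$ interpreted as $\dim(\operatorname{aff}\Lambda) \le d$, so the bound as stated is valid in all cases. Everything else is the standard exp-concavity inequality and elementary estimates.
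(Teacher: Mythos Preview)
Your proof is correct and follows essentially the same route as the paper's: shrink $\Lambda$ homothetically toward the maximiser $\bm\lambda^*$, use exp-concavity to bound the per-round loss on the shrunk set, use the $\alpha^d$ volume scaling to control the measure, and optimise the shrinkage parameter. The only cosmetic differences are that the paper phrases the restriction step via the Donsker--Varadhan variational formula (taking $r$ uniform on the shrunk set) and chooses the exactly optimal shrinkage $\alpha = d/(T+d)$ (in your parametrisation), arriving at the entropy expression $(T+d)H_2\bigl(\tfrac{T}{T+d}\bigr)$ before bounding it by $d\log(T+1)+1$, whereas you go direct and pick the simpler $\alpha = 1/(T+1)$; both land on the same final bound.
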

Let \(q_{1i}\) be a uniform prior on the set \(\hat{S}(c_\pi(\mu_i)) \), and  \(q_{2i}\) be the uniform prior on the set \(\mathcal{R}_2(x_\pi(\mu_i), c_\pi(\mu_i)) \). See Sections \ref{app:Sec:CompactRegions} and \ref{app:Sec:UniformPriors} for a discussion on the possibility of having uniform priors on these sets. For samples \( X^i_j : ~ j\in \lrset{ 1, \dots, N_i(n)}\), define 
\[ L_i(n)=\E{\bm{\gamma} \sim q_{2i}}{ {\prod\limits_{j=1}^{N_i(n)} g^L( X^i_j,\bm{\gamma},c_\pi(\mu_i),x_\pi(\mu_i) )} \left\lvert X^i_1,\dots, X^i_{N_i(n)} \right.},\]
and 
\[U_i(n)=\E{\bm{\lambda} \sim q_{1i}}{{\prod\limits_{j=1}^{N_i(n)} g^U( X^i_j,\bm{\lambda},c_\pi(\mu_i))}\left\lvert X^i_1,\dots, X^i_{N_i(n)} \right.}.\]

Then, using Lemma \ref{app:lem:exp_reg_bound} with \(g_t(\bm{\lambda}) = \log g^U(X_t,\bm{\lambda},c_\pi(\mu_i))\), $d=3$, and (\ref{app:eq:KLinfU}), on each sample path, we have 
\[ N_i(n)\KLinfU(\hat{\mu}_i(n), c_\pi(\mu_i)) \leq \log U_i(n) + 3\log(N_i(n)+1)+1.  \]
Also using Lemma \ref{app:lem:exp_reg_bound} with \(g_t(\bm{\gamma}) = g^L(X_t,\bm{\gamma},c_\pi(\mu_i),x_\pi(\mu_i))\), $d=2$, (\ref{app:eq:KLinfL}) and Remark \ref{app:remark1}, on  each sample path, 
\[ N_i(n)\KLinfL(\hat{\mu}_i(n), c_\pi(\mu_i)) \leq \log L_i(n) + 2\log(N_i(n)+1)+1, \quad \text{ a.s.} \]

For each arm \(i\), let  
\[Y^L_i(n) = N_i(n)\KLinfL(\hat{\mu}_i(n), c_\pi(\mu_i)) - 2 \log{(N_i(n)+1)}-1.  \numberthis \label{app:eq:LU1t}\]
and
\[ Y^U_i(n) = N_i(n)\KLinfU(\hat{\mu}_1(n), c_\pi(\mu_i)) - 3\log\lrp{N_i(n)+1}-1  \numberthis \label{app:eq:XUat}. \]
Then we have that 
\[ e^{Y^L_i(n)} \leq L_i(n)  \quad \text{ and } \quad e^{Y^U_i(n)} \leq U_i(n), ~~a.s.\]

Furthermore, it is easy to verify that for each arm i, \(L_i(n)\) and \(U_i(n) \) are non-negative, super-martingales satisfying \(\Exp{U_i(n)} \leq 1 \) and \(\Exp{L_i(n)}\leq 1 \). Thus, \( U_i(n)L_1(n) \) is a non-negative  super-martingale with mean at most \(1\), and satisfies that the event 
\[ \lrset{ \exists n :  N_i(n)\KLinfU(\hat{\mu}_i(n), c_\pi(\mu_i)) + N_1(n)\KLinfL(\hat{\mu}_1(n), c_\pi(\mu_1)) \geq \beta(n,\delta) } \]

is contained in 
\[ \lrset{ \exists n :  L_1(n)U_i(n) \geq \frac{K-1}{\delta}  }. \] 

Using Ville's inequality, the probability of the above event is bounded by \(\frac{\delta }{K-1}\). \qed 

\subsubsection{Proof of Lemma \ref{app:lem:exp_reg_bound}}
Recall that \( q \) is uniform over \( \Lambda \). Let \(\lambda^*\) denote the maximizer for \(\max_{\lambda\in \Lambda} \sum_1^T g_t(\lambda) \). Then, for any distribution \(r\) over \(\Lambda\), Donsker-Varadhan variational form for \( \KL(r,q) \) gives that 
\[ \max\limits_{\lambda\in\Lambda} ~\sum\limits_{t=1}^T g_t(\lambda) \le  \KL(r,q) + \sum\limits_{t=1}^T \E{\lambda\sim r}{g_t(\lambda^*)-g_t(\lambda) }  + \log\E{\lambda \sim q}{e^{\sum_{1}^T g_t(\lambda) }}. \numberthis\label{eq:kl} \]
Fix \(\alpha\in (0,1)\). Define the set \( \tilde{\Lambda} = \lrset{\alpha \lambda^* + (1-\alpha) \lambda_0: \lambda_0 \in \Lambda} \), and choose \( r \) to  be uniform over \( \tilde{ \Lambda} \). Then, \( \KL(r,q) = -d\log(1-\alpha) \). Moreover, since \( e^{g_t} \) is concave,  for \( \lambda \in \tilde{ \Lambda} \) such that \( \lambda = \alpha \lambda^* + (1-\alpha) \lambda_0 \), 
\[ e^{g_t(\lambda)} \ge \alpha e^{g_t(\lambda^*)} + (1-\alpha) e^{g_t(\lambda_0)}  \ge \alpha e^{g_t(\lambda^*)}.\]
Taking negative logarithm of the above inequality, we get \( g_t(\lambda^*) - g_t(\lambda) \le -\log \alpha \), for all \( \lambda \in \tilde{\Lambda} \). Using this and the bound for \( \KL(r,q) \) in (\ref{eq:kl}), along with setting \( \alpha = \frac{T}{T+d} \), we get 
\[ \max\limits_{\lambda\in\Lambda} \sum\limits_{t=1}^T  g_t(\lambda) \le (T+d)H_2\lrp{\frac{T}{T+d}}+   \log\E{\lambda \sim q}{e^{\sum_{1}^T g_t(\lambda) }}, \]
where \( H_2(a) \) is the entropy of Bernoulli(a) random variable. The required inequality then follows by observing that \( (T+d)H_2\lrp{\frac{T}{T+d}} \le d\log(T+1)+1. \)\qed

\subsection{Sample complexity}\label{app:SampleComplexity}
We now prove that the sample complexity of the algorithm matches the lower bound asymptotically, as \(\delta\rightarrow 0 \), i.e., it satisfies 
\[ \limsup\limits_{\delta\rightarrow 0} \frac{\E{\mu}{\tau_{\delta}}}{\log\frac{1}{\delta}} \leq \frac{1}{V(\mu)}.\]

The proof works for any projection map \( \Pi \), which is continuous at \( \cal L \). However, we give proof for the map that projects onto the element in \(\cal L \) which is closest in the Kolmorogov metric, i.e., \( \Pi = \lrp{\Pi_1, \Pi_2,\dots, \Pi_K} \), where
\[ \Pi_i(\eta) \in \argmin\limits_{\kappa \in \cal L}~ d_K(\kappa,\eta),\quad \text{ and } \quad  d_K(\kappa,\eta) = \sup\limits_{x\in\Re}\abs{F_\kappa(x)-F_\eta(x)} ,\]
and \( F_{\kappa} \) and \( F_{\eta} \) denote the CDF functions for the measures \(\eta \) and \(\kappa \).

Let $\epsilon' >0$ and \(n\in\mathbb{N}\). Define 
$\mathcal{I}_{\epsilon'} \triangleq B_{\zeta}(\mu_1)\times B_{\zeta}(\mu_2)\times\ldots\times B_{\zeta}(\mu_K), $
where  $B_{\zeta}(\mu_i) = \lrset{\kappa\in\mathcal{P}(\Re): ~d_K(\kappa,\mu_i) \leq \zeta},$
and \( \zeta > 0\) is chosen to satisfy the following:  
\begin{equation*}
\mu' \in \mathcal{I}_{\epsilon'} \implies \forall t'\in t^*\lrp{\Pi(\mu')}, ~ \exists t\in t^*\lrp{\mu}\text{ s.t. } \|t'-t\|_{\infty}\leq \epsilon'.
\end{equation*}

Observe that for \(\zeta \rightarrow 0 \), probability measures in \( B_{\zeta}(\mu_i) \) weakly converge to \( \mu_i \), for all \(i\). Also, for all \(\kappa\in B_\zeta(\mu_i) \), \( d_K(\kappa, \mu_i) \leq \zeta \) which implies that \( d_K(\Pi(\kappa),\kappa ) \leq \zeta \), and hence, \( d_K(\Pi(\kappa), \mu_i) \leq 2\zeta \), where the last inequality follows from triangle inequality for \(d_K\). 

Recall that \(\mu\in\mathcal{M}\) is such that \( -f\inv\lrp{B} < c_\pi(\mu_1) < \max_{j\ne 1}c_\pi(\mu_j) < f\inv\lrp{\frac{B}{1-\pi}} \), where \(f\inv(c):= \max\lrset{y : f(y) = c}\). Existence of \(\zeta = \zeta(\epsilon) \) is then guaranteed by the upper-hemicontinuity of the set \(t^*(\mu) \) (Lemma \ref{lem:Optw}). See, \cite[Theorem 9]{NIPS2019_MultipleCorrectAns} for a proof in the parametric setting, when the optimal proportions are only upper-hemicontinuous.

For $T \in \mathbb{N}$, set \(l_0(T) = T^{1/4}\), and define 
$$\mathcal{G}_T(\epsilon') = \bigcap\limits_{n=l_0(T)}^{T}\lrset{\hat{\mu}(n)\in\mathcal{I}_{\epsilon'}} .$$

Let $\mu'$ be a vector of \(K\), \(1\)-dimensional distributions from \(\mathcal{P}(\Re)\), \([K]=\lrset{1,\dots,K}\), and let $t'\in \Sigma_K$. Define 

\begin{equation*}
g(\mu',t') \triangleq \max_{a\in[K]}\min\limits_{b \ne a}~ \inf\limits_{x\in\left[ -f\inv\lrp{B}, f\inv\lrp{\frac{B}{1-\pi}} \right]}~ \lrp{t'_a \KLinfU(\mu'_1,x)+t'_b \KLinfL(\mu'_b,x)}.
\end{equation*}

Note that, for \(\mu\in \lrp{\mathcal{P}(\Re)}^K \),  from Lemma \ref{lem:lscklinf} and Berge's Theorem (see, \cite[Theorem 2, Page 116]{berge1997topological}), \(g(\mu,t)\) is a jointly lower-semicontinuous function of $(\mu, t)$. Let \(\|.\|_{\infty}\) be the maximum norm in \(\Re^K, \) and 
\begin{equation*}
C^*_{\epsilon'}(\mu) \triangleq \inf\limits_{\substack{\mu'\in\mathcal{I}_{\epsilon'} \\ t': \inf_{t\in t^*(\mu)}\|t'-t\|_{\infty}\leq 4\epsilon'}} g(\mu',t').
\end{equation*}

From Lemma \ref{lem:app:ClosenessOfFracOfPulls}, for \( T \geq T_{\epsilon'} \), on \(\mathcal{G}_{T}(\epsilon')\), for \(t\geq T^{1/4}\), the modified log generalized likelihood ratio statistic for \(\hat{\mu}(n)\), used in the stopping rule, is given by \(Z(n) = \max\nolimits_{a}\min\nolimits_{b\ne a} Z_{a,b}(n) \), where
\begin{equation*}
Z_{a,b}(n) = n\inf\limits_{x\in\left[ -f\inv\lrp{B}, f\inv\lrp{\frac{B}{1-\pi}} \right]}~ \lrp{\frac{N_a(n)}{n} \KLinfU(\hat{\mu}_a(n),x)+\frac{N_b(n)}{n} \KLinfL(\hat{\mu}_b(n),x)}.
\end{equation*}

In particular, on \(\mathcal{G}_T(\epsilon') \), for \( T\geq T_{\epsilon'}  \) and \( n\geq l_0(T) \), 
\begin{equation*}
\begin{aligned}
Z(n) &= n ~ \max_a\min\limits_{b \ne a} \inf\limits_{x\in\left[ -f\inv\lrp{B}, f\inv\lrp{\frac{B}{1-\pi}} \right]}~  \lrp{\frac{N_a(n)}{n} \KLinfU(\hat{\mu}_a(n),x)+ \frac{N_b(n)}{n} \KLinfL(\hat{\mu}_b(n),x)}\\
&= n ~ g\lrp{\hat{\mu}(n),\lrset{\frac{N_1(n)}{n},\dots, \frac{N_K(n)}{n}} }\\
&\geq n ~ C^*_{\epsilon'}(\mu).
\end{aligned}
\end{equation*}

Furthermore, the stopping time is at most $\inf\lrset{n\geq l_0(T) : Z(n)\geq \beta(n,\delta), l\in\mathbb{N}}$. For \(T \geq  T_{\epsilon'} \), on $\mathcal{G}_{T}(\epsilon')$, 
\begin{equation}
\label{eq:minTauT}
\begin{aligned}
\min\{\tau_{\delta}, T\}
&\leq \sqrt{T} + \sum\limits_{l = \sqrt{T}+1}^{T} 
\bm{1}\lrp{t < \tau_{\delta}}\\
&\leq \sqrt{T} + \sum\limits_{l = \sqrt{T}+1}^{T} \bm{1}\lrp{Z\lrp{l}< \beta\lrp{l,\delta}}\\
&\leq \sqrt{T} +\sum\limits_{l = \sqrt{T}+1}^{T} \bm{1}\lrp{l<\frac{ \beta\lrp{l,\delta}}{C^*_{\epsilon'}(\mu)}}\\
&\leq \sqrt{T}+ \frac{\beta(T,\delta)}{C^*_{\epsilon'}(\mu)}.
\end{aligned}
\end{equation}

Define, 
$$T_0(\delta) = \inf\lrset{l \in \mathbb{N} : \sqrt{l}+ \frac{\beta(l,\delta)}{C^*_{\epsilon'}(\mu)} \leq l}.$$ 

On \(\mathcal{G}_T\), for \(T\geq \max\lrset{T_0(\delta), T_{\epsilon'}}\), from (\ref{eq:minTauT}) and definition of \(T_0(\delta)\),
\[\min\lrset{\tau_{\delta},T}\leq \sqrt{T} + \frac{\beta(T,\delta)}{C^*_{\epsilon'}(\mu)} \leq T, \]
which gives that for such a \(T,\) \(\tau_\delta \leq T \). Thus, for $T\geq {T_0(\delta)}$, we have $\mathcal{G}_{T}(\epsilon')\subset \lrset{\tau_{\delta} \leq T}$ and hence, $\mathbb{P}_{\mu}\lrp{\tau_{\delta} > T}\leq \mathbb{P}_{\mu}(\mathcal{G}^{c}_{T})$. 
Since \(\tau_{\delta}\geq 0\), 
\begin{equation}
\label{eq:ExpStopTime}
\begin{aligned}
\mathbb{E}_{\mu}(\tau_{\delta}) \leq T_0(\delta) + T_{\epsilon'} + \sum\limits_{T=T_0(\delta) +1}^{\infty}\mathbb{P}_{\mu}\lrp{\mathcal{G}^c_T(\epsilon')}.
\end{aligned}
\end{equation}

For \(\tilde{e}>0\), it can be shown that  
\[\limsup\limits_{\delta \longrightarrow 0}\frac{T_{0}(\delta)}{\log{\lrp{1/\delta}}} \leq \frac{(1+\tilde{e})}{C^*_{\epsilon'}(\mu)}. \numberthis \label{eq:boundOnT0} \]

Then, from (\ref{eq:ExpStopTime}), (\ref{eq:boundOnT0}), and Lemma \ref{ProbOfCompOfGoodSet}, 
\begin{equation*}
\limsup\limits_{\delta\rightarrow 0}\frac{\mathbb{E}_{\mu}(\tau_{\delta})}{\log\lrp{1/\delta}} \leq \frac{(1+\tilde{e})}{C^*_{\epsilon'}(\mu)}.
\end{equation*}

From lower-semicontinuity of \( g(\mu',t' ) \) in \((\mu',t')\) for \(\mu'\in \lrp{\mathcal{P}(\Re)}^K\) , it follows that \( \liminf\limits_{n\rightarrow \infty} C^*_{\epsilon'}(\mu) \geq  V(\mu).\)  First letting \(\tilde{e}\rightarrow 0 \) and then letting \(\epsilon'\rightarrow 0 \), we get

\begin{equation*}
\limsup\limits_{\delta\rightarrow 0}\frac{\mathbb{E}_{\mu}(\tau_{\delta})}{\log\lrp{1/\delta}} \leq \frac{1}{V(\mu)}.
\end{equation*}


\begin{lemma}[\cite{garivier2016optimal}, Lemma 7] \label{app:lemm:CTrack}
	For \(n\geq 1\) and \( a\in [K] \), the C-tracking rule ensures that \( N_{a}(n) \geq \sqrt{n+K^2} - 2K \) and that 
	\[ \max\limits_{a\in[K]} \abs{ N_a(n) - \sum\limits_{s=0}^{n-1} t_a(s) } \leq K(1+\sqrt{n}), \quad \text{ where }\quad t(s) \in t^*\lrp{\Pi(\hat{\mu}(s))}.  \]
\end{lemma}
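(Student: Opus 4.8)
The statement is precisely the C-tracking guarantee of \cite{garivier2016optimal}, and the plan is to observe that their proof goes through verbatim in our setting and to recall its structure. The only property of the allocation targets the argument uses is that each $t(s)\in t^*\lrp{\Pi(\hat\mu(s))}$ lies in the simplex $\Sigma_K$, which holds here because $t^*$ is simplex-valued and $\Pi$ maps into $\mathcal L^K$; in particular nothing about how $t(s)$ is computed enters the proof.

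First I would recall the rule precisely. C-tracking uses the vanishing exploration level $\epsilon_s=\tfrac12\lrp{K^2+s}^{-1/2}$: at step $s$ it replaces $t(s)$ by its $\ell_\infty$-projection $\tilde t(s)$ onto the shrunk simplex $\Sigma_K^{\epsilon_s}=\lrset{w\in\Sigma_K:w_a\ge\epsilon_s~\forall a}$ and then pulls the arm of largest cumulative deficit, $A_{s+1}\in\argmax_{a\in[K]}\lrp{\sum_{r=0}^{s-1}\tilde t_a(r)-N_a(s)}$. Two elementary facts drive the rest: (i) bringing the sub-$\epsilon_s$ coordinates up to $\epsilon_s$ redistributes at most $(K-1)\epsilon_s$ units of mass, so $\abs{t_a(s)-\tilde t_a(s)}\le(K-1)\epsilon_s$ for all $a$; and (ii) $\tilde t_a(s)\ge\epsilon_s$ for all $a$.

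Next I would establish the combinatorial tracking bound. Writing the deficits $D_a(s)=\sum_{r<s}\tilde t_a(r)-N_a(s)$, so that $\sum_a D_a(s)=0$ for every $s$, one shows by induction on $s$ that $-1\le D_a(s)\le c_K$ for all $a$, where $c_K$ is a $K$-dependent constant that a suitable calibration keeps below $\log K$ (at any rate below $K$): the lower bound is immediate from $D_{A_{s+1}}(s+1)=D_{A_{s+1}}(s)+\tilde t_{A_{s+1}}(s)-1>D_{A_{s+1}}(s)-1\ge-1$, and the upper bound follows by bounding how many consecutive steps an arm's deficit can grow before the greedy rule forces a pull. Then I would assemble. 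For the count lower bound, (ii) and $D_a(n)\le c_K$ give
\[
N_a(n)=\sum_{s=0}^{n-1}\tilde t_a(s)-D_a(n)\ge\sum_{s=0}^{n-1}\epsilon_s-c_K\ge\tfrac12\int_0^{n}\lrp{K^2+s}^{-1/2}ds-c_K=\sqrt{n+K^2}-K-c_K\ge\sqrt{n+K^2}-2K.
\]
For the tracking bound, the triangle inequality together with (i) and $\abs{D_a(n)}\le c_K$ gives
\[
\abs{N_a(n)-\sum_{s=0}^{n-1}t_a(s)}\le\abs{D_a(n)}+\sum_{s=0}^{n-1}\abs{\tilde t_a(s)-t_a(s)}\le c_K+(K-1)\sum_{s=0}^{n-1}\epsilon_s\le c_K+(K-1)\lrp{\sqrt{n+K^2}-K+\tfrac{1}{2K}},
\]
and bounding $\sqrt{n+K^2}\le\sqrt n+K$ (keeping the slack in the $\sqrt n$-term for small $K$) yields the claimed $K\lrp{1+\sqrt n}$. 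The sole role of the precise choice of $\epsilon_s$ is to make these two chains of inequalities close with exactly the stated constants, and I would cite \cite{garivier2016optimal} for that routine bookkeeping.

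The main obstacle is the combinatorial induction showing $\abs{N_a(n)-\sum_{s<n}\tilde t_a(s)}$ stays $O(1)$ in $n$ — the classical apportionment / Kuhn-tracking argument. It is standard but fiddly with the off-by-one indexing; the adaptation to our non-parametric setting adds nothing, since the argument never inspects how $t(s)$ was obtained, only that it is a simplex point.
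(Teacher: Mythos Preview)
The paper does not prove this lemma at all; it simply cites it as \cite[Lemma~7]{garivier2016optimal} and uses the result as a black box. Your sketch is a faithful recollection of the original Garivier--Kaufmann argument and is therefore strictly more than the paper itself provides; in particular, your observation that the proof depends only on each $t(s)$ lying in $\Sigma_K$---and not on how it is computed---is exactly why the citation carries over unchanged to the present non-parametric setting with the projection $\Pi$.
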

\begin{lemma}[\cite{NIPS2019_MultipleCorrectAns}, Lemma 33] \label{app:lemm:ConvexWt} 
	Let \(\epsilon > 0\) and \( A\subset \Sigma_K \) be a convex set and let \( t(1), t(2), \dots, t(n) \in \Sigma_K \) be such that for \( s\in\lrset{1,\dots,n} \), \( \inf_{t\in A} \| t(s) - t\|_\infty  \leq \epsilon' \). Then \( \inf_{t\in A} \| \frac{1}{n} \sum\nolimits_{s=1}^{n} t(s) - t \|_\infty \leq \epsilon  \). 
\end{lemma}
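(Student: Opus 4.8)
The plan is to exploit convexity of $A$ together with convexity of the norm $\|\cdot\|_\infty$. Fix $\eta > 0$. By the definition of the infimum, for each $s \in \lrset{1,\dots,n}$ there exists $t_s \in A$ with $\|t(s) - t_s\|_\infty \le \epsilon' + \eta$. Set $\bar t = \frac{1}{n}\sum_{s=1}^n t_s$; since $A$ is convex and each $t_s \in A$, the average $\bar t$ again lies in $A$.

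Next I would bound the distance of the empirical average $\frac{1}{n}\sum_{s=1}^n t(s)$ from this particular point $\bar t \in A$. By the triangle inequality (equivalently, convexity of $\|\cdot\|_\infty$),
\[
\left\| \frac{1}{n}\sum_{s=1}^n t(s) - \bar t \right\|_\infty
= \left\| \frac{1}{n}\sum_{s=1}^n \lrp{t(s) - t_s} \right\|_\infty
\le \frac{1}{n}\sum_{s=1}^n \|t(s) - t_s\|_\infty
\le \epsilon' + \eta .
\]
Since $\bar t \in A$, this yields $\inf_{t \in A}\bigl\|\tfrac{1}{n}\sum_{s=1}^n t(s) - t\bigr\|_\infty \le \epsilon' + \eta$. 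As $\eta > 0$ was arbitrary, letting $\eta \downarrow 0$ gives $\inf_{t \in A}\bigl\|\tfrac{1}{n}\sum_{s=1}^n t(s) - t\bigr\|_\infty \le \epsilon'$, which is the claimed bound (identifying $\epsilon'$ with $\epsilon$ as in the statement).

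There is essentially no obstacle here; the only point requiring care is that the infimum defining the distance to $A$ need not be attained unless $A$ is additionally closed, which is why I approximate each $t(s)$ by a near-optimal $t_s \in A$ up to slack $\eta$ and send $\eta$ to zero at the end. If one is willing to assume $A$ closed (hence the infimum attained, e.g.\ because $\Sigma_K$ is compact and $A$ closed in it), the identical argument with $\eta = 0$ works verbatim and no limiting step is needed.
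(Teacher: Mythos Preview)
Your proof is correct and is exactly the standard argument for this fact. The paper does not actually prove this lemma itself; it simply cites it from \cite{NIPS2019_MultipleCorrectAns}, so there is no ``paper's own proof'' to compare against, but your convexity-plus-triangle-inequality argument (with the $\eta$-slack to handle the case where the infimum is not attained) is precisely how such a result is established.
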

\begin{lemma}\label{lem:app:ClosenessOfFracOfPulls}
	For \(\epsilon' > 0\), there exists a constant \( T_{\epsilon}' \) such that for \( T\geq T_{\epsilon'} \), it holds that on \( \mathcal{G}_T \) for tracking rule 
	\[\forall n \geq \sqrt{T},~~ \inf\limits_{t^*\in t^*(\mu)}\max\limits_{a\in [K]}\abs{ \frac{N_a(n)}{n} - t^*_a  } \leq 3\epsilon'.   \]
\end{lemma}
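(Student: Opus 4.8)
The strategy is to chain three approximations. By the C-tracking guarantee (Lemma~\ref{app:lemm:CTrack}), $N_a(n)/n$ is within $K(1+\sqrt n)/n$ of the running average $\frac1n\sum_{s=0}^{n-1}t_a(s)$, where each $t(s)\in t^*(\Pi(\hat\mu(s)))$. Discarding the burn-in window $\{0,\dots,l_0(T)-1\}$ and renormalising changes this average by at most $2\,l_0(T)/n$ (each $t_a(s)$ lies in $[0,1]$ and there are $l_0(T)$ discarded indices). On the remaining window, $\mathcal G_T(\epsilon')$ forces $\hat\mu(s)\in\mathcal I_{\epsilon'}$ for every index $l_0(T)\le s\le n-1$ (which requires $n-1\le T$, i.e.\ the lemma is applied for $\sqrt T\le n\le T+1$, the only range used in the sample-complexity bound), so by the defining property of $\mathcal I_{\epsilon'}$ each such $t(s)$ is within $\epsilon'$ of the set $t^*(\mu)$ in $\|\cdot\|_\infty$.

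The key step is then to invoke Lemma~\ref{app:lemm:ConvexWt} with the convex set $A=t^*(\mu)$ (convexity holds for $\mu\in\mathcal M^o$ by Lemma~\ref{lem:Optw}; nonemptiness since $t^*(\mu)$ is the argmax of a continuous function over the compact simplex, hence in fact a compact set): averaging over $s\in\{l_0(T),\dots,n-1\}$ keeps us within $\epsilon'$ of $t^*(\mu)$. Picking, for this $n$, a point $t^*\in t^*(\mu)$ attaining the corresponding minimum, we conclude, for every arm $a$,
\[
\Bigl|\tfrac{N_a(n)}{n}-t^*_a\Bigr|\;\le\;\frac{K(1+\sqrt n)}{n}+\frac{2\,l_0(T)}{n}+\epsilon'.
\]
Finally, for $n\ge\sqrt T$ and $l_0(T)=T^{1/4}$ we have $l_0(T)/n\le T^{-1/4}$ and $K(1+\sqrt n)/n\le K/\sqrt T+K/\sqrt n\le 2K\,T^{-1/4}$ (for $T\ge1$), so the right-hand side is at most $(2K+2)T^{-1/4}+\epsilon'$; taking $T_{\epsilon'}=\bigl((K+1)/\epsilon'\bigr)^4$ makes the first term $\le2\epsilon'$ whenever $T\ge T_{\epsilon'}$, giving $\max_a|N_a(n)/n-t^*_a|\le3\epsilon'$ and hence the claimed bound on $\inf_{t^*\in t^*(\mu)}\max_a|N_a(n)/n-t^*_a|$. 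I would also recall at the outset that the radius $\zeta=\zeta(\epsilon')$ used to define $\mathcal I_{\epsilon'}$ exists precisely because $t^*$ is upper-hemicontinuous (Lemma~\ref{lem:Optw}).

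\textbf{Main obstacle.} There is no hard calculation here; the one genuine subtlety is that $t^*$ is set-valued and merely upper-hemicontinuous, so one cannot argue pointwise convergence $t(s)\to t^*(\mu)$. The proof instead leans essentially on convexity of $t^*(\mu)$ together with Cesàro averaging (Lemma~\ref{app:lemm:ConvexWt}) to pass from ``each oracle weight $t(s)$ is $\epsilon'$-close to the set'' to ``the time-average is $\epsilon'$-close to the set.'' Everything else --- splitting off the burn-in window, bounding the discarded mass by $O(T^{-1/4})$, and noting that $\mathcal G_T$ only provides information for time indices $\le T$ (so the statement must be read for $n\le T+1$) --- is routine bookkeeping.
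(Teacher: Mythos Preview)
Your proposal is correct and follows essentially the same route as the paper: triangle-inequality split via the C-tracking guarantee (Lemma~\ref{app:lemm:CTrack}), discard the burn-in window $\{0,\dots,l_0(T)-1\}$, then apply the convex-averaging Lemma~\ref{app:lemm:ConvexWt} with $A=t^*(\mu)$ on the remaining indices. The only cosmetic differences are that the paper keeps the $1/n$ normalisation rather than renormalising to $1/(n-l_0(T))$, which saves a factor of two on the burn-in term (yielding $l_0(T)/n$ instead of your $2l_0(T)/n$) and hence a marginally different $T_{\epsilon'}$; your explicit remark that $\mathcal G_T$ only constrains indices $s\le T$ (so the conclusion is used for $n\le T$) is a valid caveat the paper leaves implicit.
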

\begin{proof}
	The proof follows along the lines of \cite[Lemma 20]{garivier2016optimal} and \cite[Lemma 35]{NIPS2019_MultipleCorrectAns}. For any \( n\geq \sqrt{T} = l_0(T) \), using the Lemma \ref{app:lemm:CTrack}, for all a,
	\begin{align*}
	\inf\limits_{t \in t^*(\mu) }\max\limits_{a\in[K]}\abs{ \frac{N_a(n)}{n}  - t_a } &\leq \max\limits_{a\in[K]}\abs{ \frac{N_a(n)}{n}  - \frac{1}{n} \sum\limits_{s=0}^{n-1} t_a(s)  } + \inf\limits_{t \in t^*(\mu) } \max\limits_{a\in[K]}\abs{ \frac{1}{n} \sum\limits_{s=0}^{n-1} t_a(s) - t_a }\\
	& \leq \frac{K(1+\sqrt{n})}{n} + \frac{l_0(T)}{n} + \inf\limits_{t\in t^*(\mu)} \left\| \frac{1}{n} \sum\limits_{s=l_0(T)}^{n-1} \lrp{ t(s) - t }\right\|_\infty.
	\end{align*}
	
	On the set \( \mathcal{G}_T \), from the definition of this set, for all \(n\geq l_0(T)\), \( \forall t'\in t^*\lrp{\Pi(\hat{\mu}(n))}  \), \( \inf_{t\in t^*(\mu)}\| t' - t \|_\infty \leq \epsilon' \). Since \( t^*(\mu) \) is a convex set, by Lemma \ref{app:lemm:ConvexWt}, the last term in the expression above is at most \(\epsilon' \). Thus,
	\begin{align*}
	\inf\limits_{t \in t^*(\mu) }\max\limits_{a\in[K]}\abs{ \frac{N_a(n)}{n}  - t_a }
	&\leq  \frac{2K}{l_0(T)} + \frac{1}{l_0(T)} + \epsilon' \leq 3\epsilon',
	\end{align*}
	for \( T \geq ((2K+1)/2\epsilon')^4.\)
\end{proof}
\begin{lemma}
	\label{ProbOfCompOfGoodSet}
	$$\limsup\limits_{\delta\rightarrow 0} \frac{\sum\limits_{T=1}^{\infty} \mathbb{P}_{\mu}(\mathcal{G}^c_{T}(\epsilon'))}{\log\lrp{1/\delta}} = 0. $$
\end{lemma}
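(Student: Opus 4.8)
The plan is to prove the stronger statement that $\sum_{T\ge 1}\mathbb{P}_\mu(\mathcal{G}^c_T(\epsilon'))$ is a \emph{finite constant} independent of $\delta$; since $\log(1/\delta)\to\infty$ as $\delta\to 0$, the ratio in the lemma then tends to $0$. This reduction is valid because $\mathcal{G}_T(\epsilon')$, the level $l_0(T)=T^{1/4}$, and the radius $\zeta=\zeta(\epsilon')>0$ defining $\mathcal{I}_{\epsilon'}$ are all $\delta$-free. Unfolding the definitions,
\[
\mathcal{G}^c_T(\epsilon') \;=\; \bigcup_{n=l_0(T)}^{T}\ \bigcup_{i=1}^{K}\ \lrset{\,d_K\lrp{\hat{\mu}_i(n),\mu_i} > \zeta\,},
\]
so by a union bound it suffices to control $\mathbb{P}_\mu\lrp{d_K(\hat{\mu}_i(n),\mu_i)>\zeta}$ for a fixed arm $i$ and time $n\ge l_0(T)$, and then sum over $n$ and $i$.

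For a single arm I would combine forced exploration with the Dvoretzky--Kiefer--Wolfowitz(--Massart) inequality. By the C-tracking guarantee (Lemma~\ref{app:lemm:CTrack}), $N_i(n)\ge g(n):=\sqrt{n+K^2}-2K$ almost surely, and $g$ is nondecreasing. Let $\hat{\mu}_{i,m}$ denote the empirical measure of the first $m$ draws from arm $i$'s i.i.d.\ $\mu_i$ stream, so $\hat{\mu}_i(n)=\hat{\mu}_{i,N_i(n)}$. Peeling over the realized sample count and using $\mathbb{P}(A\cap B)\le\mathbb{P}(B)$,
\begin{align*}
\mathbb{P}_\mu\lrp{d_K(\hat{\mu}_i(n),\mu_i)>\zeta}
&=\sum_{m\ge g(n)}\mathbb{P}_\mu\lrp{N_i(n)=m,\ d_K(\hat{\mu}_{i,m},\mu_i)>\zeta}\\
&\le\sum_{m\ge g(n)}\mathbb{P}_\mu\lrp{d_K(\hat{\mu}_{i,m},\mu_i)>\zeta}.
\end{align*}
Since $\hat{\mu}_{i,m}$ is the empirical measure of $m$ i.i.d.\ samples, DKW gives $\mathbb{P}_\mu(d_K(\hat{\mu}_{i,m},\mu_i)>\zeta)\le 2e^{-2m\zeta^2}$ for every $\mu_i$ (atoms allowed, via the quantile transform), and summing the geometric tail yields $\mathbb{P}_\mu(d_K(\hat{\mu}_i(n),\mu_i)>\zeta)\le \tfrac{2}{1-e^{-2\zeta^2}}\,e^{-2g(n)\zeta^2}$.

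Putting the pieces together, $\mathbb{P}_\mu(\mathcal{G}^c_T(\epsilon'))\le \tfrac{2K}{1-e^{-2\zeta^2}}\sum_{n=l_0(T)}^{T}e^{-2g(n)\zeta^2}\le \tfrac{2K}{1-e^{-2\zeta^2}}\,T\,e^{-2g(l_0(T))\zeta^2}$, using monotonicity of $g$. Because $g(l_0(T))=\sqrt{T^{1/4}+K^2}-2K=\Theta(T^{1/8})$, the right-hand side is $O\lrp{T e^{-cT^{1/8}}}$ for a constant $c=c(\zeta)>0$, hence summable over $T$; combined with the trivial bound $\mathbb{P}_\mu(\mathcal{G}^c_T(\epsilon'))\le 1$ for the finitely many small $T$, this gives $\sum_{T\ge1}\mathbb{P}_\mu(\mathcal{G}^c_T(\epsilon'))<\infty$ uniformly in $\delta$, and the lemma follows. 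The only genuine subtlety — and hence the main obstacle — is that $\hat{\mu}_i(n)$ is built from the \emph{random} number of samples $N_i(n)$, so DKW cannot be invoked directly at time $n$; the peeling step above resolves this at the cost of only a geometric factor, exploiting that forced exploration forces $N_i(n)$ to grow at least polynomially in $n$.
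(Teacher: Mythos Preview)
Your proof is correct and follows essentially the same route as the paper: union bound over times and arms, the C-tracking forced-exploration lower bound on $N_i(n)$, peeling over the realized sample count, DKW on each count, and summing the resulting geometric tail to obtain $\mathbb{P}_\mu(\mathcal{G}^c_T(\epsilon'))=O(Te^{-cT^{1/8}})$, which is summable in $T$. If anything, you are slightly more careful than the paper's writeup---you use the radius $\zeta$ (rather than $\epsilon'$) in the Kolmogorov ball and the exact C-tracking bound $g(n)=\sqrt{n+K^2}-2K$---but these are cosmetic differences, not a different argument.
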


\begin{proof}
	Recall that for $T \in \mathbb{N}$, \(l_0(T) = T^{1/4}\), and 
	$$\mathcal{G}_T(\epsilon') = \bigcap\limits_{n=l_0(T)}^{T}\lrset{\hat{\mu}(n)\in\mathcal{I}_{\epsilon'}}.$$
	
	Using union bounds,  
	\begin{align*}
	\mathbb{P}_{\mu}(\mathcal{G}^{c}_{T}(\epsilon')) 
	&\leq \sum\limits_{l=l_0(T)}^{T} \mathbb{P}_{\mu}\lrp{\hat{\mu}(l)\not\in \mathcal{I}_{\epsilon'}} \leq \sum\limits_{l=l_0(T)}^{T} \sum\limits_{a=1}^K \mathbb{P}\lrp{\sup_x\abs{F_{\hat{\mu}_a(l)}(x) - F_a(x)} \geq \epsilon' } .
	\end{align*}
	From Lemma \ref{app:lemm:CTrack}, C-Tracking ensures at least \(\sqrt{l}/2 \) samples to each arm till time \(l\). Using this, each summand in the above can be bounded as follows:
	
	\[ \mathbb{P}\lrp{\sup_x\abs{F_{\hat{\mu}_a(l)}(x) - F_a(x)} \geq \epsilon' } \leq \mathbb{P}\lrp{\sup_x\abs{F_{\hat{\mu}_a(l)}(x) - F_a(x)} \geq \epsilon'; N_a(l) \geq \frac{\sqrt{l}}{2} } .\]
	R.h.s. in the above inequality can be bounded using union bound and DKW inequality by 
	\[ \sum\limits_{j = \sqrt{l}/2}^l  e^{-2j\epsilon^{'2}} = e^{-\epsilon^{'2}\sqrt{l}}\lrp{1-e^{-2\epsilon^{'2}}}\inv . \]
	Thus, 
	\[\mathbb{P}_{\mu}(\mathcal{G}^{c}_{T}(\epsilon)) \leq KTe^{-\epsilon^{'2}T^{1/8}}\lrp{1-e^{-2\epsilon^{'2}}}\inv , \]
	completing the proof. 
\end{proof}

\section{Computing the projection in Kolmogorov metric}\label{app:sec:ComputeProj}
In this section, we describe a method for computing the projection of $F$ (where the typical application has $F$ as the empirical CDF) onto $\mathcal L$, i.e.
\[
\argmin_{\substack{G :~ \E{G}{f(X)} \le B}}~ d_K(F,G)
.
\]
\begin{lemma}
	There exists \( z \geq 0 \) such that an optimal measure in \(\cal L\) has CDF of the following form:
	\[
	G_z(x)
	~=~
	\begin{cases}
	\max \lrset{0, F(x) - z}, &\text{ for }~ x < 0
	\\
	\min \lrset{1, F(x) + z}, &\text{ for }~ x \ge 0.
	\end{cases}
	\]
\end{lemma}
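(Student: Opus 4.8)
The plan is to pin down the optimal Kolmogorov radius and then exhibit the claimed $G_z$ as the minimiser at that radius. Write $d^\star \triangleq \inf_{G \in \mathcal L} d_K(F,G)$, and for $d \in [0,1]$ let $G_d$ be the CDF given by the displayed formula with $z$ replaced by $d$. First I would record two routine facts. (i) $G_d$ is a genuine CDF: it is non-decreasing, right-continuous since $F$ is, and has limits $0$ and $1$; the only non-trivial check is at the point $0$, where $G_d(0^-) = \max\{0,F(0^-)-d\} \le \min\{1,F(0)+d\} = G_d(0)$. (ii) $\sup_x \abs{F(x)-G_d(x)} \le d$, which is immediate from the two cases in the definition ($F(x)-d \le G_d(x) \le F(x)$ for $x<0$ and $F(x) \le G_d(x) \le F(x)+d$ for $x\ge 0$). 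Hence $G_d$ is an admissible competitor at radius $d$ whenever $G_d \in \mathcal L$.

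The core step is an extremal property of $G_d$: among all CDFs $G$ with $d_K(F,G) \le d$, the measure $G_d$ has the smallest $f$-moment, $f(x)=\abs{x}^{1+\epsilon}$. Indeed, such a $G$ satisfies $F(x)-d \le G(x) \le F(x)+d$ and $0 \le G(x) \le 1$, so $G(x) \ge \max\{0,F(x)-d\} = G_d(x)$ for $x<0$ and $G(x) \le \min\{1,F(x)+d\} = G_d(x)$ for $x \ge 0$. Plugging these comparisons into the layer-cake identity
\[
\E{G}{f(X)} \;=\; (1+\epsilon)\int_{-\infty}^0 (-t)^{\epsilon}\, G(t)\,dt \;+\; (1+\epsilon)\int_0^\infty t^{\epsilon}\bigl(1-G(t)\bigr)\,dt ,
\]
both integrands can only decrease when $G$ is replaced by $G_d$, giving $\E{G_d}{f(X)} \le \E{G}{f(X)}$. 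Set $m(d) \triangleq \E{G_d}{f(X)}$; the same pointwise comparison between $G_d$ and $G_{d'}$ for $d<d'$ shows $m$ is non-increasing on $[0,1]$, and since $G_1$ is the CDF of $\delta_0$ we have $m(1)=f(0)=0$.

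To finish, set $d_0 \triangleq \inf\{d \in [0,1] : m(d) \le B\}$, well defined since $m(1)=0$. On one hand, every feasible $G$ obeys $m\bigl(d_K(F,G)\bigr) \le \E{G}{f(X)} \le B$ by the extremal property, hence $d_K(F,G) \ge d_0$; taking the infimum over $G \in \mathcal L$ gives $d^\star \ge d_0$. On the other hand, $G_d \to G_{d_0}$ pointwise as $d \downarrow d_0$ (the defining expressions are continuous in $d$), hence weakly, so lower semicontinuity of $G \mapsto \E{G}{f(X)}$ (valid because $f\ge 0$ is continuous) together with monotonicity of $m$ forces $m(d_0) = \lim_{d\downarrow d_0} m(d) \le B$. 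Thus $G_{d_0} \in \mathcal L$, and since $d_K(F,G_{d_0}) \le d_0$ we get $d^\star \le d_0$; therefore $d^\star = d_0$, the infimum is attained, and the lemma holds with $z = d^\star$.

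The one genuinely delicate point is this last paragraph: one must argue that the best achievable radius is actually realised by a member of $\mathcal L$, not merely approached. This is exactly where the right-continuity of $m$ is used, which in turn rests on the weak convergence $G_d \Rightarrow G_{d_0}$ and lower semicontinuity of the moment functional under weak convergence. Everything else — that $G_d$ is a bona fide CDF, the uniform $\sup$-distance bound, and the monotone layer-cake comparison — is bookkeeping.
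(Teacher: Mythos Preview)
Your argument is correct and the heart of it — the layer-cake comparison showing that $G_d$ has the smallest $f$-moment among all CDFs in the Kolmogorov ball of radius $d$ — is exactly the paper's key step. The paper, however, takes a shortcut: it simply posits a minimiser $G^\star$, sets $z = d_K(F,G^\star)$, and then uses the same integral comparison to conclude $\E{G_z}{f(X)} \le \E{G^\star}{f(X)} \le B$, so $G_z$ is also optimal. No justification is given that a minimiser exists.

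Your route is more self-contained: by introducing $m(d)$, showing it is non-increasing and right-continuous (via weak convergence and lower semicontinuity of the moment), and identifying $d_0 = \inf\{d : m(d) \le B\}$, you actually \emph{prove} that the infimum $d^\star$ is attained, rather than assuming it. So both proofs share the same core idea; yours trades brevity for rigour on the existence question, which the paper leaves implicit.
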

\begin{proof}
	Let $G^*$ be a minimiser, and let $z = d_K(F,G^*)$. Clearly, \(G_z\) as defined above is a CDF, and $d_K(F, G_z) \le z$. For \(\epsilon > 0\), since $f(x) = \abs{x}^{1+\epsilon}$ is a non-negative function, and \( f\inv(c) := \max\lrset{y : f(y) = c} , \)
	\begin{align*} 
	\E{G_z}{f\lrp{X}} &= \int_{x\geq 0} \mathbb{P}_{G_z}\lrp{f\lrp{X} \geq x } dx\\
	&= \int_{x\geq 0}\mathbb{P}_{G_z}\lrp{X\geq {f\inv\lrp{x}}}dx + \int_{x\geq 0}\mathbb{P}_{G_z}\lrp{X\leq -{f\inv\lrp{x}}}  dx. \numberthis \label{eq:AverageGz}
	\end{align*}
	Since \( d_K(F,G^*)\leq z \), 
	\begin{equation*}
	\begin{aligned}
	G^*(x) \geq G_z(x)	, \text{ for } x < 0 \quad \text{ and }\quad G^*(x) \leq G_z(x)	,\text{ for } x \geq 0.
	\end{aligned} 
	\end{equation*}
	Using this in (\ref{eq:AverageGz}), we have that 
	\begin{align*}
	\E{G_z}{f\lrp{X}} \leq \int_{x\geq 0}\mathbb{P}_{G^*}\lrp{X\geq {f\inv\lrp{x}}}dx + \int_{x\geq 0}\mathbb{P}_{G^*}\lrp{X\leq -{f\inv\lrp{x}}}  dx = \E{G^*}{f(X)}, 
	\end{align*}
	which is bounded from above by \(B\), as desired. Thus, there is an optimizer which is a copy of $F$, but with all mass on extreme outcomes relocated to $0$.
\end{proof}

To compute the projection then, it remains to compute the smallest $z$ for which $G_z \in \mathcal L$ is feasible. We can see from the expression in \eqref{eq:AverageGz} that $\ex_{X \sim G_z}\sbr*{f(X)}$ is a convex, decreasing function of $z$, which is moreover piecewise linear for discrete (i.e.\ empirical) $F$. This means we can use many techniques to find the argument $z$ for which it first reaches $0$ (binary search, Newton, explicitly enumerating the segments/knots, etc.).

\section{Discussion on the VaR problem}\label{app:var}

In the main text, we mainly focused on the minimum CVaR arm identification problem. In this section we formally present the ideas for the analogous approach of the optimum VaR arm identification problem. As before, our treatment is based on the lower bound problem. In this section we will not (need to) impose a moment constraint, as the VaR lower bound is defined without it. The main object of study is the optimization problem that appears in the lower bound, given below:
\begin{equation}\label{eq:VaR.lbd}
V(\mu) = \sup\limits_{t\in\Sigma_K} \inf\limits_{\nu\in\mathcal{A}^c_1} \sum\limits_{a=1}^K t_a \KL(\mu_a,\nu_a), \text{ and } \mathcal{A}^c_j =\mathcal{M}\setminus \mathcal{A}_j
\end{equation}
where $\mathcal{M}$ is the set of all bandit models with a unique best \(\var\) arm, and $\mathcal{A}_j \subseteq \mathcal{M}$ is the set of bandit models with $j$ being the arm of lowest VaR. 

Recall that for a distribution \(\eta\) \(\var \) at quantile \(\pi\), denoted as \(x_\pi(\eta)\), is defined as 
\[x_\pi(\eta) = \inf\lrset{x\in\Re : F_\eta(x)\ge \pi}.\] 
As in the \( \cvar \) case, for \( \mu\in\cal M \), (\ref{eq:VaR.lbd}) can be shown to simplify as 
\[V(\mu) = \sup\limits_{t\in\Sigma_K} ~~\min\limits_{b\ne 1}~~ \inf\limits_{y} ~~\lrset{t_1~\KLinfU(\mu_1,y) + t_b~ \KLinfL(\mu_b,y)},  \]
where \( \KLinfL \) and \( \KLinfU \) are defined similar to (\ref{eq:KLinf}) earlier, with the \( \cvar \) constraints replaced with the corresponding \( \var \) constraints, and are given as: 

\[\KLinfU(\eta,y) := \min\limits_{\substack{\kappa\in \mathcal{P}(\Re):~ x_\pi(\kappa) \geq y}}~ \KL(\eta,\kappa) \quad \text{ and } \quad \KLinfL(\eta,y) := \min\limits_{\substack{\kappa\in \mathcal{P}(\Re):~ x_\pi(\kappa) \leq y}}~ \KL(\eta,\kappa). \numberthis \label{eq:KLinf_var} \]	

Let $ x^+_\pi(\eta) := \sup\lrset{x\in\Re: F_\eta(x) \le \pi  }$. Then the set of \( \pi^{th} \)-quantiles for the distribution \( \eta \) is given by \( (x_\pi(\eta),x^+_\pi(\eta)) \). 

\begin{remark}
	\emph{We assume that the given bandit problem, \( \mu \), has the set of \( \pi^{th}\)-quantiles disjoint from that of every other arm distribution
		, as otherwise the bandit instance is not learnable. To see this, fix \(\pi = 0.8\) and consider a two-armed bandit problem, \(\mu\), where \(\mu_1 = Ber(0.2)\) and \(\mu_2 = Ber(0.2 + \epsilon)\), for an arbitrarity small \(\epsilon> 0\). Then \(x_\pi(\mu_1) = 0\), \( x^+_\pi(\mu_1) = 1 \), and \(x_\pi(\mu_2) = 1\). Clearly \(V(\mu) = 0\), whence, \( \mu \) is unlearnable.}
\end{remark}

Let us now understand some structural properties of the \(\KLinf\) functionals which will be helpful for proving \( \delta \)-correctness and optimality of the proposed algorithm. For a probability measure \( \eta \), let 
\( F_\eta(x) := \eta((-\infty, x]) \), denote its CDF evaluated at \(x\) and $F^-_\eta(x) = \lim_{y \uparrow x} F_\eta(y)$ denote the left limit of the CDF of \( \eta \). Moreover, for  \(p,q\in(0,1)\) let \( d_2(p,q) \) denote the \( \KL \) divergence between the Bernoulli random variables with mean \(p\) and \(q\). 

\begin{lemma}[Restating Lemma \ref{lem:DualVarKLinfs}]\label{lem:dual:KLinfVar}
	For \( y \in\Re \),
	\[\KLinfL(\eta,y) = d_2(\min\lrset{F_\eta(y), \pi},\pi ) \quad \text{ and }\quad \KLinfU(\eta,y)  = d_2(\max\lrset{F^-_\eta(y), \pi},\pi).  \]
\end{lemma}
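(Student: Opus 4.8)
The plan is to compute both projections directly, using the observation that the VaR constraint $x_\pi(\kappa) \ge y$ (resp. $\le y$) is, by definition of $x_\pi$, equivalent to a single inequality on the CDF of $\kappa$ evaluated at (a left limit near) the fixed point $y$, and that the KL-divergence is minimized by a ``Bernoulli-type'' reweighting that only moves mass across the threshold $y$. I would first treat $\KLinfL$. Note $x_\pi(\kappa) \le y \iff F_\kappa(y) \ge \pi$, since $x_\pi(\kappa) = \inf\{x : F_\kappa(x) \ge \pi\}$ and $F_\kappa$ is non-decreasing and right-continuous. So $\KLinfL(\eta,y) = \min_{\kappa : F_\kappa(y) \ge \pi} \KL(\eta,\kappa)$. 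If $F_\eta(y) \ge \pi$ already, then $\eta$ is feasible and the value is $0 = d_2(\pi,\pi)$, consistent with $d_2(\min\{F_\eta(y),\pi\},\pi) = d_2(\pi,\pi)$. Otherwise $F_\eta(y) < \pi$, and I claim the optimal $\kappa$ keeps the conditional distributions of $\eta$ on $(-\infty,y]$ and on $(y,\infty)$ unchanged, only rescaling the two blocks so that $\kappa((-\infty,y]) = \pi$. This is the standard ``data-processing / partition'' argument: for fixed $\kappa((-\infty,y]) = p$, $\KL(\eta,\kappa) \ge d_2(F_\eta(y),p)$ by the log-sum inequality (grouping the real line into the two halves split at $y$), with equality when the conditionals agree; then minimizing $d_2(F_\eta(y),p)$ over $p \ge \pi$ gives $p = \pi$ (since $d_2(F_\eta(y),\cdot)$ is increasing on $[F_\eta(y),1]$ and $F_\eta(y) < \pi$), yielding $\KLinfL(\eta,y) = d_2(F_\eta(y),\pi) = d_2(\min\{F_\eta(y),\pi\},\pi)$.

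For $\KLinfU$ the argument is symmetric but uses the \emph{left} limit. Here $x_\pi(\kappa) \ge y \iff F_\kappa(x) < \pi$ for all $x < y \iff F^-_\kappa(y) \le \pi$ (using $F^-_\kappa(y) = \sup_{x<y} F_\kappa(x)$). So $\KLinfU(\eta,y) = \min_{\kappa : F^-_\kappa(y) \le \pi} \KL(\eta,\kappa)$. If $F^-_\eta(y) \le \pi$, $\eta$ is feasible and the value is $0$. Otherwise $F^-_\eta(y) > \pi$, and splitting the line at $y^-$ (i.e.\ into $(-\infty,y)$ and $[y,\infty)$), the same partition/log-sum argument gives $\KL(\eta,\kappa) \ge d_2(F^-_\eta(y), p)$ where $p = \kappa((-\infty,y)) = F^-_\kappa(y)$, minimized over $p \le \pi$ at $p = \pi$ (since $F^-_\eta(y) > \pi$ and $d_2(F^-_\eta(y),\cdot)$ is decreasing on $[0,F^-_\eta(y)]$), giving $\KLinfU(\eta,y) = d_2(F^-_\eta(y),\pi) = d_2(\max\{F^-_\eta(y),\pi\},\pi)$. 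One should also check the attainer exists: the reweighted measure is a genuine probability measure in $\mathcal P(\Re)$ (no moment constraint here), and if $\eta$ assigns zero mass to one of the two halves the corresponding $d_2$ term degenerates gracefully (e.g.\ $d_2(0,\pi) = -\log(1-\pi)$, $d_2(1,\pi) = -\log\pi$), which still matches the claimed formula and is finite — though one must be slightly careful to confirm feasibility is non-vacuous, e.g.\ for $\KLinfU$ when $F_\eta$ has an atom straddling $y$.

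The main obstacle, and the only genuinely delicate point, is the careful handling of left limits versus values of the CDF at the threshold — i.e.\ getting the $F_\eta(y)$ vs.\ $F^-_\eta(y)$ asymmetry exactly right, which is precisely where the jump points of $F_\eta$ enter and where (as the paper notes in Remark \ref{rem:discontinuityForVarKLinf}) joint continuity fails. I would be explicit that for $\KLinfL$ the relevant quantity is $F_\kappa(y)$ (closed interval $(-\infty,y]$, because $x_\pi$ is an infimum of a right-continuous level set), whereas for $\KLinfU$ it is $F^-_\kappa(y)$ (open interval $(-\infty,y)$, because the constraint $x_\pi(\kappa)\ge y$ forbids mass accumulating strictly below $y$ up to level $\pi$). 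The rest — the log-sum / partition lower bound and the monotonicity of $d_2(a,\cdot)$ — is routine.
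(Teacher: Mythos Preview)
Your proposal is correct and follows essentially the same approach as the paper's proof: both rewrite the VaR constraints as the CDF inequalities $F_\kappa(y)\ge\pi$ (for $\KLinfL$) and $F^-_\kappa(y)\le\pi$ (for $\KLinfU$), prove the lower bound $\KL(\eta,\kappa)\ge d_2(F_\eta(y),F_\kappa(y))$ by grouping the line into the two half-lines at $y$ (the paper writes out the Jensen step explicitly, you call it the log-sum/data-processing inequality, which is the same argument), and then exhibit the block-reweighting of $\eta$ as the attainer. The paper is similarly brief about the edge case where one half carries no $\eta$-mass, handling it by allowing $\kappa^*$ to place arbitrary nonnegative mass there.
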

\begin{proof}
	Recall that \( \KLinfL(\eta,y) \) and \( \KLinfU(\eta,y) \) equal the optimal values of the following problems, respectively:
	\[ \min~~ \KL(\eta,\kappa) \quad \text{ s.t.}\quad \kappa\in\mathcal{P}(\Re), ~~~F_\kappa(y) \ge \pi, ~~~ 1- F_\kappa(y)   \le 1-\pi,  \numberthis \label{eq:KLinfL_var} \]
	and
	\[ \min~~ \KL(\eta,\kappa) \quad \text{ s.t.}\quad \kappa\in\mathcal{P}(\Re), ~~~F^-_\kappa(y) \le \pi, ~~~ 1- F^-_\kappa(y)   \ge 1-\pi .\numberthis \label{eq:KLinfU_var} \]
	Clearly,
	\[ \KL(\eta,\kappa) = \int\limits_{-\infty}^{y} \lrp{ \frac{d\eta}{d\kappa}(x)\log\frac{d\eta}{d\kappa}(x)}  d\kappa(x) + \int\limits_{y^+}^\infty \lrp{ \frac{d\eta}{d\kappa}(x)\log\frac{d\eta}{d\kappa}(x)}  d\kappa(x), \]
	where the first term in the summation above equals 
	\[ F_\eta(y) \int\limits_{-\infty}^{y} \frac{d\eta / F_\eta(y)}{d\kappa / F_\kappa(y)}(x) \log\lrp{\frac{d\eta / F_\eta(y)}{d\kappa / F_\kappa(y)}(x) } \frac{d\kappa(x)}{F_\kappa(y)} + F_\eta(y) \log \frac{F_\eta(y)}{F_\kappa(y)}, \]
	which can be lower bounded using Jensen's ineqality by
	\[ F_\eta(y)  \log \frac{F_\eta(y)}{F_\kappa(y)}.   \]
	Similarly, the second term in the definition of \( \KL(\eta,\kappa) \) above can be lower bounded by 
	\[ (1-F_\eta(y)) \log \frac{1-F_\eta(y)}{1-F_\kappa(y)},\]
	giving
	\[ \KL(\eta,\kappa) \geq d_2(F_\eta(y) , F_\kappa(y) ). \numberthis\label{eq:lbKL} \]
	Let \(\Supp(\eta) \) denote the support of measure \( \eta\). Consider \( \kappa^* \) defined below. 
	
	\[  \kappa^*(x)  \triangleq 
	\begin{cases} 
	\eta(x){\pi}\lrp{\min\lrset{\pi,F_\eta(y)}}\inv, & \text{ for } x\in \lrset{x : x\leq y},  \\
	\eta(x)\lrp{1-\pi}\lrp{1-\min\lrset{\pi,F_\eta(y)}}\inv, &\text{ for } x\in \lrset{x:x > y}, 
	\end{cases}  \]
	and \( \kappa^*(x) \ge 0 \) for \( x \leq y \) such that \(  x \notin \Supp(\eta) \), if \( \lrset{x:x\leq y}\cap \Supp(\eta) = \emptyset \). 
	
	Clearly, \( \kappa^*  \) satisfies the constraints of \(\ref{eq:KLinfL_var}\) and is feasible to the \( \KLinfL(\eta,y) \) problem. Moreover, 
	\[ \KL(\eta,\kappa^*) = d_2(\min\lrset{\pi,F_\eta(y)}, \pi) \le d_2(F_\eta(y), F_{\kappa^*}(y)), \]
	where the inequality above follows from the monotonicity of \(d_2\) in the second argument. This, along with the bound in (\ref{eq:lbKL}) implies that the above inequality holds as an equality. Whence, \( \kappa^* \) is optimal for \( \KLinfL(\eta,y) \) problem, and we get  the desired equality for \( \KLinfL(\eta,y) \). 
	
	The equality for the \( \KLinfU(\eta,y) \) problem can be argued similarly using \( \zeta^* \) defined below:
	
	\[ \zeta^*(x) = \begin{cases}
	\eta(x){\pi}\lrp{\max\lrset{\pi,F^-_\eta(y)}}\inv &\text{ for } x\in \lrset{x: x < y}\\
	\eta(x) \lrp{1-\pi}\lrp{1-\max\lrset{\pi,F^-_\eta(y_2)}}\inv &\text{ for } x\in \lrset{x : x \ge y},
	\end{cases} \]
	and \( \zeta^*(x)  \geq 0 \) for \( x\ge y \) and \(x\notin \Supp(\eta)\), if \( \lrset{x : x\geq y} \cap \Supp(\eta) = \emptyset  \). 
\end{proof}

Thus, \(V(\mu)\) in the lower bound equals 
\[
\sup\limits_{t\in\Sigma_K}~~ \min\limits_{b\ne 1} ~~\inf_{y}~~\lrset{
	t_1~ d_2(\max\lrset{\pi,F^-_{\mu_1}(y)}, \pi)
	+
	t_b ~d_2(\min\lrset{\pi,F_{\mu_b}(y)}, \pi)},
\]
which can also be shown to equal 
\[\sup\limits_{t\in\Sigma_K}~~ \min\limits_{b\ne 1} ~~\inf_{y\in [x^+_\pi(\mu_1),x_\pi(\mu_b)] }~~\lrset{
	t_1~ d_2(\max\lrset{\pi,F^-_{\mu_1}(y)}, \pi)
	+
	t_b ~d_2(\min\lrset{\pi,F_{\mu_b}(y)}, \pi)}. \numberthis \label{eq:app_varlb} \]

\begin{lemma}\label{lem:app_lsc}
	For fixed \(\eta\) and \( \pi \), \( \KLinfL(\eta, y) \) and \( \KLinfU(\eta,y) \) are lower-semicontinuous in \(y\). 
\end{lemma}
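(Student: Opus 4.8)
The plan is to reduce the claim to elementary one-dimensional facts, exploiting the closed forms from Lemma~\ref{lem:dual:KLinfVar}, namely $\KLinfL(\eta,y) = d_2(\min\{F_\eta(y),\pi\},\pi)$ and $\KLinfU(\eta,y) = d_2(\max\{F^-_\eta(y),\pi\},\pi)$. First I would record the behaviour of the one-variable map $p \mapsto d_2(p,\pi)$: since $\pi \in (0,1)$ it is continuous on the closed interval $[0,1]$ (the boundary values $d_2(0,\pi) = \log\frac{1}{1-\pi}$ and $d_2(1,\pi) = \log\frac{1}{\pi}$ are finite), it is non-increasing on $[0,\pi]$, and it is non-decreasing on $[\pi,1]$ (its derivative in $p$ is $\log\frac{p(1-\pi)}{\pi(1-p)}$, which vanishes at $p=\pi$).

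Next I would pin down the regularity of the inner maps. The CDF $F_\eta$ is non-decreasing and right-continuous, so $y \mapsto \min\{F_\eta(y),\pi\}$ is non-decreasing, right-continuous, and valued in $[0,\pi]$. Dually, $F^-_\eta(y) = \sup_{z<y} F_\eta(z)$ is non-decreasing, and it is left-continuous because $\bigcup_n \{z : z < y_n\} = \{z : z < y_0\}$ whenever $y_n \uparrow y_0$; hence $y \mapsto \max\{F^-_\eta(y),\pi\}$ is non-decreasing, left-continuous, and valued in $[\pi,1]$.

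Composing, $\KLinfL(\eta,\cdot)$ is $d_2(\cdot,\pi)$ restricted to $[0,\pi]$ --- a continuous non-increasing function --- applied to a non-decreasing right-continuous map, hence $\KLinfL(\eta,\cdot)$ is itself non-increasing and right-continuous on $\Re$. Likewise $\KLinfU(\eta,\cdot)$ is $d_2(\cdot,\pi)$ restricted to $[\pi,1]$ --- a continuous non-decreasing function --- applied to a non-decreasing left-continuous map, hence $\KLinfU(\eta,\cdot)$ is non-decreasing and left-continuous on $\Re$. The last step is the general observation that a monotone function on $\Re$ that is right-continuous while non-increasing, or left-continuous while non-decreasing, is lower-semicontinuous: at any point $y_0$ both one-sided limits exist by monotonicity, the assumed one-sided continuity makes one of them equal to the value at $y_0$, and the other is at least the value at $y_0$ by monotonicity, so $\liminf_{y \to y_0}$ is bounded below by the value at $y_0$.

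The argument is essentially bookkeeping, so there is no real obstacle; the one point that must be handled with care is aligning the side of continuity of $F_\eta$ (right) and of $F^-_\eta$ (left) with the direction of monotonicity of $d_2(\cdot,\pi)$ on the branches $[0,\pi]$ and $[\pi,1]$, so that the composed functions inherit one-sided continuity on the side that yields lower- rather than upper-semicontinuity. (Joint continuity in $(\eta,y)$ genuinely fails at jumps of $F_\eta$, cf.\ Remark~\ref{rem:discontinuityForVarKLinf}, so semicontinuity in $y$ is the appropriate statement here.)
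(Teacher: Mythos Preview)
Your proposal is correct and takes essentially the same approach as the paper: the paper's proof is a two-sentence sketch invoking continuity of $d_2(\cdot,\pi)$ and the closed forms from Lemma~\ref{lem:dual:KLinfVar}, then noting that at jump points of $F_\eta$ the functions $\KLinfL$ and $\KLinfU$ ``can only decrease in value.'' Your argument is a fleshed-out version of exactly this, making explicit the monotonicity of $d_2(\cdot,\pi)$ on $[0,\pi]$ and $[\pi,1]$, the one-sided continuity of $F_\eta$ and $F^-_\eta$, and the general fact that non-increasing right-continuous (resp.\ non-decreasing left-continuous) functions are lower-semicontinuous.
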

The proof of the above lemma follows from continuity of \(d_2(.,\pi)\) and dual formulations for \( \KLinfL \) and \( \KLinfU \). At the points of discontinuity of \( F_\eta \), \( \KLinfL \) and \( \KLinfU \) can only decrease in value, whence, lower-semicontinuous. 

\begin{remark}\label{rem:discontinuityForVarKLinf}
	\emph{Let \( \eta_n = 0.25\delta_0 + 0.25\delta_1 + 0.25\delta_2 + 0.25\delta_3 \), \( \pi = 0.6 \), \(y_n = 1-\frac{1}{n}\), and \(y = 1\). With these, \( \eta= \eta_n \), and \( y_n \rightarrow y\). Using Lemma \ref{lem:dual:KLinfVar}, \( \KLinfL(\eta_n, y_n) = d_2(0.25,0.6), \) while \( \KLinfL(\eta,y) =  d_2(0.5,0.6)\), thus showing that \( \KLinfL \) is not jointly continuous. Similar example can be constructed for \( \KLinfU \).}
\end{remark}

\begin{lemma}Let \(\cal X \) and \( \cal Y \) be metric spaces with \(d_{\cal X}\) and \(d_{\cal Y}\) being the respective metics. Let \(\tilde{f}:\cal X \times \cal Y \rightarrow \Re \) be such that \( \forall \tilde{\epsilon} > 0 \), \( \exists \tilde{\delta}\) such that \( \forall x \in \cal X \), we have 
	\[ \forall x': d_{\mathcal{X}}(x,x')\le \delta \implies \sup\limits_{y\in\cal Y} ~\abs{\tilde{f}(x,y)-\tilde{f}(x',y)} \le \tilde{\epsilon}.\]
	Furthermore, for a fixed \(x\), \(\tilde{f}(x,y)\) is lower-semicontinuous function of \(y\). Then, $ ~\inf\nolimits_{y \in \mathcal{C}(x)} \tilde{f}(x,y) $ is continuous in \(x\), where  \(\mathcal{C}: \mathcal{X} \rightarrow 2^{\cal Y}\) is a compact set-valued map.  	
\end{lemma}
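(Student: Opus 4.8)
The statement is a "uniform-in-$y$ continuity of the parametrised infimum" lemma, and I would prove it by combining a Berge-type argument with the extra quantitative (uniform) hypothesis to upgrade the usual Berge conclusion (which would only give upper-semicontinuity of $\inf$ when the objective is l.s.c.\ and the correspondence is u.h.c.) into full continuity. Concretely, fix $x\in\mathcal X$ and $\tilde\epsilon>0$, and let $\tilde\delta$ be the modulus furnished by the hypothesis. I would show separately that
\[
\liminf_{x'\to x}\;\inf_{y\in\mathcal C(x')}\tilde f(x',y)\;\ge\;\inf_{y\in\mathcal C(x)}\tilde f(x,y)-\tilde\epsilon
\qquad\text{and}\qquad
\limsup_{x'\to x}\;\inf_{y\in\mathcal C(x')}\tilde f(x',y)\;\le\;\inf_{y\in\mathcal C(x)}\tilde f(x,y)+\tilde\epsilon,
\]
and then let $\tilde\epsilon\to0$.

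\textbf{Step 1 (the easy half, using only the uniform modulus).} For $x'$ with $d_{\mathcal X}(x,x')\le\tilde\delta$ and any $y$, we have $\tilde f(x',y)\ge \tilde f(x,y)-\tilde\epsilon$ and $\tilde f(x',y)\le \tilde f(x,y)+\tilde\epsilon$. Taking infima is delicate because the feasible set changes with $x'$, so I would use u.h.c.\ of $\mathcal C$ here. For the $\liminf$ bound: pick $x_n\to x$ realising the $\liminf$ and $y_n\in\mathcal C(x_n)$ with $\tilde f(x_n,y_n)\le \inf_{y\in\mathcal C(x_n)}\tilde f(x_n,y)+1/n$; since $\mathcal C$ is compact-valued and u.h.c., along a subsequence $y_n\to \bar y\in\mathcal C(x)$ (using the standard sequential characterisation of u.h.c.\ into a compact space — exactly as invoked via \cite[Proposition 9.8]{SundaramOpt1996} elsewhere in the paper — after first checking $\{y_n\}$ lies in a compact set, which follows because $\mathcal C(x)$ is compact and $\mathcal C$ is ``locally bounded'' near $x$ in the u.h.c.\ sense). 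Then by the uniform hypothesis $\tilde f(x_n,y_n)\ge \tilde f(x,y_n)-\tilde\epsilon$ eventually, and by lower-semicontinuity of $\tilde f(x,\cdot)$, $\liminf_n \tilde f(x,y_n)\ge \tilde f(x,\bar y)\ge \inf_{y\in\mathcal C(x)}\tilde f(x,y)$. Chaining these gives the $\liminf$ inequality.

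\textbf{Step 2 (the other half).} For the $\limsup$ bound: by lower-semicontinuity of $\tilde f(x,\cdot)$ on the compact set $\mathcal C(x)$, the infimum $\inf_{y\in\mathcal C(x)}\tilde f(x,y)$ is attained at some $y^\star\in\mathcal C(x)$. I need to produce, for $x'$ close to $x$, a feasible point $y'\in\mathcal C(x')$ with $\tilde f(x',y')$ close to $\tilde f(x,y^\star)$. If $\mathcal C$ is also lower-hemicontinuous at $x$ (which is the natural reading, since the ambient use is $\mathcal C(x)=[x^+_\pi(\mu_1),x_\pi(\mu_b)]$, an interval with endpoints continuous in the underlying model, hence a continuous correspondence), then there exist $y'\in\mathcal C(x')$ with $y'\to y^\star$; the uniform hypothesis gives $\tilde f(x',y')\le \tilde f(x,y')+\tilde\epsilon$ once $d_{\mathcal X}(x,x')\le\tilde\delta$. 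Here lower-semicontinuity of $\tilde f(x,\cdot)$ does \emph{not} immediately bound $\tilde f(x,y')$ from above by $\tilde f(x,y^\star)+o(1)$ (l.s.c.\ controls $\liminf$, not $\limsup$), so this is where care is needed — see the obstacle below. Combining with Step 1 and sending $\tilde\epsilon\to0$ yields continuity of $x\mapsto\inf_{y\in\mathcal C(x)}\tilde f(x,y)$.

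\textbf{The main obstacle.} The crux is the $\limsup$ direction, specifically reconciling the mere \emph{lower}-semicontinuity of $\tilde f(x,\cdot)$ with the need to approximate the minimiser $y^\star$ from nearby feasible points $y'$. In the VaR application $\tilde f(x,\cdot)=t_1 d_2(\max\{\pi,F^-_{\mu_1}(\cdot)\},\pi)+t_b d_2(\min\{\pi,F_{\mu_b}(\cdot)\},\pi)$ is a composition of $d_2(\cdot,\pi)$ (continuous, and monotone away from $\pi$) with CDFs, so the discontinuities are isolated jump points and, crucially, at a jump the value can only \emph{drop}; hence one can choose the approximating $y'$ on the ``good side'' of any jump so that $\tilde f(x,y')\to\tilde f(x,y^\star)$ from above, or use that $y^\star$ can be taken to be a point of continuity / an endpoint of the interval where one-sided limits behave well. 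The honest way to finish is therefore to exploit this one-sided structure rather than pure l.s.c.: either (i) observe $\tilde f(x,\cdot)$ is also right-continuous (from the right-continuity of CDFs) so that on the interval $\mathcal C(x)$ the infimum is approached by points one can perturb safely, or (ii) note $\inf_{y\in\mathcal C(x)}\tilde f(x,y)=\inf_{y\in\mathcal C(x)}\bar f(x,y)$ where $\bar f$ is a genuinely continuous modification agreeing with $\tilde f$ except at jumps (replacing $F_{\mu_b}$ by its right-continuous version is already built in, and $F^-_{\mu_1}$ is left-continuous), and run the clean Berge argument on $\bar f$. I expect to spend most of the proof establishing this reduction; the two semicontinuity chains above are then routine.
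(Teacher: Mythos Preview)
Your plan is correct but substantially more involved than the paper's. The paper gives a four-line argument that sidesteps your Step~2 obstacle entirely by \emph{not approximating} the minimiser. Write $y_n$ for a minimiser of $\tilde f(x_n,\cdot)$ on $\mathcal C(x_n)$ and $y_0$ for one of $\tilde f(x_0,\cdot)$ on $\mathcal C(x_0)$ (existence by l.s.c.\ and compactness --- and this is the \emph{only} use of l.s.c.). The paper then simply cross-evaluates:
\[
\tilde f(x_0,y_0)-\tilde f(x_n,y_n)\ \le\ \tilde f(x_0,y_n)-\tilde f(x_n,y_n)\ \le\ \tilde\epsilon,
\qquad
\tilde f(x_n,y_n)-\tilde f(x_0,y_0)\ \le\ \tilde f(x_n,y_0)-\tilde f(x_0,y_0)\ \le\ \tilde\epsilon,
\]
both right-hand bounds coming from the uniform-in-$y$ hypothesis applied at the fixed points $y_n$ and $y_0$. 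No subsequences, no hemicontinuity, and crucially no need to compare $\tilde f(x,y')$ with $\tilde f(x,y^\star)$ for $y'\to y^\star$: the l.s.c.\ worry you describe is self-inflicted by your choice to approximate $y^\star$ from $\mathcal C(x')$ rather than evaluate at $y^\star$ itself. Your proposed workarounds (one-sided CDF structure, a continuous modification $\bar f$) would succeed, but they are unnecessary once you see this trick.

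That said, your instinct about $\mathcal C$ is well-placed: the first inequalities above silently use $y_n\in\mathcal C(x_0)$ and $y_0\in\mathcal C(x_n)$, so the paper is effectively treating $\mathcal C$ as a fixed compact set (and indeed the lemma as stated, for an arbitrary compact-valued $\mathcal C$ with no continuity, is not literally true --- take $\mathcal C(x)=\{x\}$ and $\tilde f(x,y)=\mathbf 1\{y>1/2\}$). In the VaR application this is harmless: the infimum over $[x^+_\pi(\mu_1),x_\pi(\mu_b)]$ coincides with the infimum over any fixed sufficiently large compact interval, since outside that range one $d_2$ term vanishes while the other only increases. The clean route is therefore to fix $\mathcal C$ and run the two-line cross-evaluation, rather than chase hemicontinuity and then fight the l.s.c.\ obstacle it creates.
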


\begin{proof}
	Consider a sequence \( x_n \) such that \( d_K(x_n,x_0) \rightarrow 0 \) as \( n \rightarrow \infty \). Let \(y_n\) be the minimizer for \(\tilde{f}(x_n,y)\), i.e., \( \inf_{y\in \mathcal{C}(x_n)} \tilde{f}(x_n,y) = \tilde{f}(x_n,y_n) \) and let \( y_0\) be that for \(\tilde{f}(x_0,y)\). Existence of \(y_n\) and \(y_0\) is guaranteed by lower-semicontinuity of \(\tilde{f}(x,y)\) for fixed \(x\), and compactness of the map \(\cal C\). Then, for a fixed \(n\), 
	\[ \tilde{f}(x_0,y_0) - \tilde{f}(x_n,y_n) \le \tilde{f}(x_0,y_n) - \tilde{f}(x_n,y_n), \]
	which, for \(n\) large enough so that \(d_{\cal X}(x_n,x_0) \le \tilde{\delta} \), is bounded by \( \tilde{\epsilon }\). Similarly, 
	\[ \tilde{f}(x_n,y_n) - \tilde{f}(x_0,y_0) \le \tilde{f}(x_n,y_0) - \tilde{f}(x_0, y_0), \]
	which is again bounded by \( \tilde{\epsilon} \) for large enough \(n\), concluding the proof.
\end{proof}

For \(\mu\in\cal M\) and \(t\in\Sigma_K\), define \( g_b(\mu,t, y) := t_1\KLinfU(\mu_1, \pi) + t_b \KLinfL(\mu_b, \pi) \), and let 
\[ h(\mu,t)~ = ~\min_{b\ne 1}~ \inf\limits_{y\in [x^+_\pi(\mu_1), x_\pi(\mu_b)]}~ g_b(\mu,t,y) .\]
From Lemma \ref{lem:app_lsc}, for fixed \(\mu, t\), \(g_b\) is lower-semicontinuous function of \( y \). Furthermore, \( d_2(.,\pi) \) being a continuous function on bounded support, is a uniformly continuous function. Whence, given \(\epsilon > 0\), there exists \( \delta  > 0\) such that 
\[  \forall (\tilde{\mu},\tilde{t}): \sum_a \lrp{d_K(\tilde{\mu}_a,\mu_a) + d(\tilde{t}_a,t_a)} \le \delta \implies  \sup\limits_{y}~\abs{g_b(\mu,t,y) - g_b(\tilde{\mu},\tilde{t},y)} \le \epsilon, \]
where \(d\) is a metric on \( \Re \). 

\begin{corollary}\label{cor:cont_inner_inf}
	\( h(\mu,t)\) is a jointly continuous function.  
\end{corollary}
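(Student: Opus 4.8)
The plan is to reduce Corollary~\ref{cor:cont_inner_inf} to the preceding abstract lemma on continuity of $x\mapsto\inf_{y\in\mathcal C(x)}\tilde f(x,y)$, applied once for each competitor arm $b\neq 1$, and then to take a finite minimum. Throughout, ``arm $1$'' denotes the unique best-$\var$ arm, and we work on a neighbourhood of a fixed $(\mu,t)\in\mathcal M\times\Sigma_K$ on which arm $1$ remains the unique best arm and the $\pi^{th}$-quantile sets of the arms remain pairwise disjoint. Fix $b\neq 1$. The first step is to eliminate the $\mu$-dependent domain of the inner infimum: as in the passage leading to \eqref{eq:app_varlb}, for every admissible $(\mu',t')$ one has $\inf_{y\in\Re} g_b(\mu',t',y)=\inf_{y\in[x^+_\pi(\mu'_1),\,x_\pi(\mu'_b)]} g_b(\mu',t',y)$, because for $y<x^+_\pi(\mu'_1)$ one has $F_{\mu'_1}(y)\le\pi$, hence $\KLinfU(\mu'_1,y)=d_2(\pi,\pi)=0$, while $y\mapsto\KLinfL(\mu'_b,y)$ is non-increasing, so pushing $y$ left only increases $g_b$, and symmetrically on the right. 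Using the Kolmogorov sandwich $\sup_y\abs{F_{\mu'_a}(y)-F_{\mu_a}(y)}\le d_K(\mu'_a,\mu_a)$ one checks that the endpoints $x^+_\pi(\mu'_1)$, $x_\pi(\mu'_b)$ stay inside a fixed bounded interval $I$ for all $(\mu',t')$ in a small enough $d_K$-neighbourhood $\mathcal X$ of $(\mu,t)$. Hence on $\mathcal X$ the map $\phi_b(\mu',t'):=\inf_{y\in[x^+_\pi(\mu'_1),\,x_\pi(\mu'_b)]} g_b(\mu',t',y)$ equals $\inf_{y\in I} g_b(\mu',t',y)$, an infimum over a \emph{fixed} compact set.

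Next I would invoke the abstract lemma with domain $\mathcal X$, $\mathcal Y=I$, $\tilde f=g_b$, and the constant compact-valued map $\mathcal C\equiv I$. Its two hypotheses are in hand. (i) The required uniform-in-$y$ modulus of continuity of $g_b$ in $(\mu',t')$ is exactly the estimate displayed just before the corollary; it follows from Lemma~\ref{lem:dual:KLinfVar} (so that $\KLinfL(\eta,y)=d_2(\min\{F_\eta(y),\pi\},\pi)$ and $\KLinfU(\eta,y)=d_2(\max\{F^-_\eta(y),\pi\},\pi)$), the $1$-Lipschitzness of $s\mapsto\min\{s,\pi\}$ and $s\mapsto\max\{s,\pi\}$, the uniform continuity of $d_2(\cdot,\pi)$ on $[0,1]$ (it is finite, hence continuous, at both endpoints), the bound $\sup_y\abs{F_{\mu'_a}(y)-F_{\mu_a}(y)}\le d_K(\mu'_a,\mu_a)$ and the analogous bound for left limits, together with $t_a,t'_a\in[0,1]$. (ii) For fixed $(\mu',t')$, $y\mapsto g_b(\mu',t',y)$ is lower-semicontinuous by Lemma~\ref{lem:app_lsc}. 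The lemma then gives that $\phi_b$ is continuous on $\mathcal X$, in particular at $(\mu,t)$; since $(\mu,t)$ was arbitrary, each $\phi_b$ is continuous on the whole domain, and $h=\min_{b\neq 1}\phi_b$ is a finite minimum of continuous functions, hence continuous.

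The only genuinely delicate point is the domain reduction carried out in the first paragraph: the endpoints $x^+_\pi(\mu'_1)$, $x_\pi(\mu'_b)$ of the inner infimum depend \emph{discontinuously} on $\mu'$ (quantiles jump at atoms), so the abstract lemma cannot be applied with a varying domain. The resolution is precisely the equality of the infimum over that interval with the infimum over all of $\Re$, plus local boundedness of the quantiles, which together let us freeze the domain to a fixed compact set on a neighbourhood. After that substitution, the uniform modulus, the lower semicontinuity, and the stability of $\min$ under finite intersection are all routine.
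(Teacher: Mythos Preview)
Your proposal is correct and follows essentially the same route as the paper: verify the uniform-in-$y$ modulus of continuity of $g_b$ in $(\mu,t)$ and the lower semicontinuity of $g_b(\mu,t,\cdot)$ in $y$, apply the abstract lemma, and then take a finite minimum over $b\neq 1$. Your explicit domain reduction to a fixed compact interval $I$ is in fact more careful than the paper's presentation, which simply applies the lemma with the $\mu$-dependent interval $[x^+_\pi(\mu_1),x_\pi(\mu_b)]$ even though the lemma's proof, as written, uses $\tilde f(x_0,y_0)\le \tilde f(x_0,y_n)$ and $\tilde f(x_n,y_n)\le \tilde f(x_n,y_0)$, i.e.\ implicitly requires $y_n\in\mathcal C(x_0)$ and $y_0\in\mathcal C(x_n)$---a condition your freezing step guarantees.
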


As we did in Section~\ref{Sec:Alg} for \( \CVaR \), we will use the maximiser, $t^*$, evaluated at the empirical distribution vector to drive our sampling rule. Observe that, unlike $\cvar$, we do not need to project the empirical distribution in this setting. The algorithm stops at the first time, \(n\), when 
\[ \max\limits_a ~\min\limits_{b\ne a} ~\inf\limits_{x\in [x^+_\pi(\hat{\mu}_a(n)), x_\pi(\hat{\mu}_b(n))]}~~ N_a(n) \KLinfU(\hat{\mu}_a(n), x) + N_b(n) \KLinfL(\hat{\mu}_b(n),x)  \ge \beta(n,\delta), \]
where
\[ \beta(t,\delta) = 6\log\lrp{1+\log \frac{t}{2}} + \log \frac{K-1}{\delta} + 8 \log\lrp{1+\log\frac{K-1}{\delta}}. \]

All in all, the algorithm for finding the best \( \var \) arm is similar to that for \( \cvar \), with the correct definition of \( \KLinfL \) and \( \KLinfU \) used at all places. 

\begin{theorem}[Formal statement of Theorem \ref{th:var}]\label{th:varoptimality}
	For $\mu\in\mathcal (P(\Re))^K$, the proposed algorithm for finding the best \( \var \)-arm is \( \delta\)-correct, and satisfies 
	\[ \limsup\limits_{\delta \rightarrow 0}\frac{\Exp{\tau_\delta}}{\log\lrp{1/\delta}} \leq \frac{1}{V(\mu)}. \numberthis\label{eq:sample_complexity_var}  \]
\end{theorem}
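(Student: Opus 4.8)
I would follow the CVaR template of Appendix~\ref{App:Proof:th:DominatingMM}. Writing arm~$1$ for the true minimum-VaR arm, the first step is to bound the error probability by a union, over suboptimal arms $j$, of the probability that the generalized-likelihood-ratio statistic for the pair $(j,1)$ ever crosses $\beta(n,\delta)$; since that statistic is an infimum over the (empirical) interval $[x^+_\pi(\hat\mu_j(n)),x_\pi(\hat\mu_1(n))]$, a short argument — using that on the error event the empirical $\pi$-quantiles of arms $1$ and $j$ have swapped order relative to the true ones, which by the disjointness hypothesis on $\mu$ are strictly separated — lets me replace it by its value at a fixed point $y_j$ lying strictly between the true quantile sets of arms~$1$ and~$j$. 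Plugging in the closed forms of Lemma~\ref{lem:DualVarKLinfs} reduces each summand to a time-uniform deviation for a weighted sum of two \emph{one-sided} binomial divergences, $N_1(n)\,d_2\lrp{F_{\hat\mu_1(n)}(y_j),\pi}$ (active only when $F_{\hat\mu_1(n)}(y_j)<\pi$, a downward deviation of a proportion whose mean exceeds $\pi$) and $N_j(n)\,d_2\lrp{F^-_{\hat\mu_j(n)}(y_j),\pi}$ (active only on an upward deviation). For each I would construct, by the method of mixtures, a non-negative super-martingale dominating $\exp\lrp{N_i(n)\,d_2(\cdot,\pi)}$ up to the $\log$-polynomial factors baked into $\beta$: integrate the product of Bernoulli-$(r)$-to-Bernoulli-$(\pi)$ likelihood ratios, evaluated at the indicators that the samples of arm $i$ fall on the relevant side of $y_j$, against a conjugate (Beta-type) prior on $r$ chosen to make the mixture integrable. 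Multiplying the two arms' super-martingales and applying Ville's inequality makes each summand at most $\delta/(K-1)$; this is exactly what forces the iterated-logarithm shape $\beta(t,\delta)=6\log\lrp{1+\log\tfrac t2}+\log\tfrac{K-1}{\delta}+8\log\lrp{1+\log\tfrac{K-1}{\delta}}$.

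\textbf{Plan for the sample-complexity bound.} Here I would reproduce the Track-and-Stop argument of Appendix~\ref{app:SampleComplexity} (itself modelled on \cite{garivier2016optimal}), which is in fact \emph{simpler} in the VaR setting because the empirical distributions already lie in $\mathcal P(\Re)$, so no projection step is needed. The ingredients are: (i) $t^*$ is an upper-hemicontinuous, convex-valued correspondence — this follows from Corollary~\ref{cor:cont_inner_inf} (joint continuity of $h(\mu,t)=\min_{b\neq1}\inf_{y\in[x^+_\pi(\mu_1),x_\pi(\mu_b)]}\lrset{t_1\KLinfU(\mu_1,y)+t_b\KLinfL(\mu_b,y)}$) together with Berge's theorem and concavity of $h(\mu,\cdot)$; (ii) a ``good event'' $\mathcal G_T=\bigcap_{n=T^{1/4}}^{T}\lrset{\max_a d_K(\hat\mu_a(n),\mu_a)\le\zeta}$, whose complement is summable in $T$ by the DKW inequality and the forced-exploration guarantee $N_a(n)\ge\sqrt n/2$, and on which the C-tracking rule keeps $N_a(n)/n$ within $3\epsilon'$ of $t^*(\mu)$ for all $n\ge\sqrt T$; (iii) on $\mathcal G_T$ the stopping statistic obeys $Z(n)\ge n\,C^*_{\epsilon'}(\mu)$, where $C^*_{\epsilon'}(\mu)=\inf\lrset{h(\mu',t'):\ \max_a d_K(\mu'_a,\mu_a)\le 2\zeta,\ \inf_{t\in t^*(\mu)}\|t'-t\|_\infty\le4\epsilon'}$, and $C^*_{\epsilon'}(\mu)\to V(\mu)$ as $\epsilon'\to0$ by continuity of $h$. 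Assembling these exactly as in (\ref{eq:minTauT})--(\ref{eq:ExpStopTime}) gives $\Exp{\tau_\delta}\le T_0(\delta)+T_{\epsilon'}+\sum_T\mathbb P(\mathcal G_T^c)$ with $T_0(\delta)$ the first $n$ with $\sqrt n+\beta(n,\delta)/C^*_{\epsilon'}(\mu)\le n$, so $\limsup_{\delta\to0}\Exp{\tau_\delta}/\log(1/\delta)\le 1/C^*_{\epsilon'}(\mu)$; letting $\epsilon'\to0$ finishes.

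\textbf{The main obstacle.} The delicate point — the reason this is not a verbatim transcription of the CVaR proof — is that $\KLinfL$ and $\KLinfU$ are only lower-semicontinuous in $y$ (Lemma~\ref{lem:app_lsc}) and genuinely discontinuous at the jump points of $F_\mu$ (Remark~\ref{rem:discontinuityForVarKLinf}), and $\mu\mapsto F_\mu(y)$ is itself discontinuous in the weak (Lévy) topology at atoms. The way around this is to measure proximity of bandit instances in the \emph{Kolmogorov} metric $d_K$ rather than the Lévy metric: DKW makes $d_K(\hat\mu_a(n),\mu_a)$ small uniformly over $y$ with high probability, and since $d_2(\cdot,\pi)$ is uniformly continuous on $[0,1]$ this upgrades to $\sup_y|g_b(\mu,t,y)-g_b(\mu',t',y)|$ being small whenever $(\mu',t')$ is $d_K$-close to $(\mu,t)$. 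Feeding that uniform bound, together with lower-semicontinuity of $g_b(\mu,t,\cdot)$ in $y$ (jumps can only \emph{decrease} the value, which is harmless under an infimum), into the continuity lemma preceding Corollary~\ref{cor:cont_inner_inf} is precisely what yields joint continuity of the infimum over the moving compact interval $[x^+_\pi(\mu_1),x_\pi(\mu_b)]$ — hence Corollary~\ref{cor:cont_inner_inf}, hence the upper-hemicontinuity of $t^*$ and the step $C^*_{\epsilon'}\to V$ on which the whole sample-complexity argument rests. Getting this right — including the degenerate configurations where a $\pi$-quantile interval collapses to a point or where the quantile intervals of two arms abut, for which one must again invoke the disjointness hypothesis on $\mu$ — is the crux; by comparison the concentration step, once the closed forms of Lemma~\ref{lem:DualVarKLinfs} are in hand, is a fairly routine method-of-mixtures computation.
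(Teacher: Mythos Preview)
Your proposal is correct and follows essentially the same route as the paper. The paper is in fact terser than you: for $\delta$-correctness it plugs the true VaRs into the statistic, notes that $F_{\hat\mu_a(t)}(x_\pi(\mu_a))$ is an empirical average of i.i.d.\ Bernoullis, and then simply cites the off-the-shelf self-normalised deviation inequalities of \cite{kaufmann2018mixture} and \cite{howard2019sequential} rather than rebuilding the Beta-mixture super-martingale by hand; for sample complexity it states that upper-hemicontinuity and convexity of $t^*$ in the Kolmogorov metric follow from Corollary~\ref{cor:cont_inner_inf} plus Berge, and that the rest of the CVaR proof carries over verbatim (with no projection needed), which is exactly your plan.
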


For the proof of the Theorem \ref{th:varoptimality} above, we need to discuss two things: upper-hemicontinuity of $t^*$, which is needed for the proof of (\ref{eq:sample_complexity_var}) and deviation inequalities for the stopping statistic, which is needed for the \(\delta\)-correctness. 

To prove \( \delta \)-correctness of the algorithm, we would like to show  that 
\[ \mathbb{P}\lrp{\exists n\in \mathbb{N} : \max\limits_{a\ne 1} \min\limits_{b\ne a} \lrset{ N_a(n) \KLinfU(\hat{\mu}_a, x_\pi(\mu_a) ) + N_b(n) \KLinfL( \hat{\mu}_1, x_\pi(\mu_1) )} \geq \beta(n,\delta)  } \]
is at most \(\delta\). Towards this, it is sufficient to show that the following event has probability at most \( \delta \):
\[ \lrset{\exists n \in \mathbb{N} : \max\limits_{a\ne 1}~ \lrset{ N_a(n)d_2(F_{ \hat{\mu}_a(n) }, \pi ) + N_1(n) d_2(F_{ \hat{\mu}_1(n) }, \pi) } \geq \beta(n,\delta)  }. \]

The above deviation inequality follows from the observation (see e.g.\ \cite{howard2019sequential}) that for each arm \(a\), $F_{\hat{\mu}_a(t)}(x_\pi(\mu_a))$ is an average of i.i.d.\ Bernoulli random variables with bias $\pi$. This means that we can employ standard uniform deviation inequalities for sums of self-normalised variables (see, \cite[Section 6.1]{kaufmann2018mixture}). 

Recall that the sample complexity proof for the best \(\cvar\)-arm problem required continuity (upper-hemicontinuity) of \(t^*\) only in the Kolmogorov metric which, for measures \( \eta_1 \) and \( \eta_2 \) is defined as 
\[ d_K(\eta_1, \eta_2) = \sup\limits_{x\in\Re} \abs{F_{\eta_1}(x) - F_{\eta_2}(x)}. \]
Upper-hemicontinuity of $t^*(\mu)$ (in the Kolmogorov metric, as a function of $\mu$) follows from Corollary \ref{cor:cont_inner_inf} together with Berge's Maximum theorem. Furthermore, the set of maximizers, \(t^*\), is convex. 

In conclusion, we see that asymptotically optimal algorithms for the minimum $\var$ arm identification problem lie squarely in the convex hull of existing ideas.

One challenge with the \( \var \) objective is that the objective inside the $\inf_{x_0}$ in the expression for \(V(\mu)\) is not (quasi) convex. Our current best computational approach is to do a one-dimensional grid search over candidates $x_0$. Once we have the oracle for computing the inner optimization problem for a given $t\in\Sigma_K$, we can compute the inner \(\inf_{x_0}\) problem for the best-looking arm vs all alternatives. We may then further wrap this oracle in an outer optimisation (for example by the Ellipsoid method) to find $t^*$.

\section{Discussion on the mean-CVaR problem}\label{app:Mean-CVaR}
Recall that we have $K$ arms, each associated with a distribution, which may, for example, correspond to a loss in a financial investment. When an arm is selected, an independent sample from the associated distribution is revealed. In the mean-CVaR problem, the performance-metric associated with a distribution $\eta$ is $o(\eta): =\alpha_1 m(\eta) + \alpha_2 c_\pi(\eta)$, where $\alpha_1 > 0$, $\alpha_2 > 0$, $m(\eta)$ and $c_\pi(\eta)$ denote the mean and  CVaR at $\pi^{th}$-quantile for distribution $\eta$. We are interested in finding the arm with minimum value of this metric in a $\delta$-correct BAI framework. We point out that this problem is conceptually and technically similar to the CVaR-BAI problem, described in detail in the main text. Hence, we state the results directly, while omitting the proofs.

In this section, we again need to restrict arm-distributions to class $\mathcal L = \{\eta\in\mathcal P(Re): \mathbb{E}_{\eta}(\abs{X}^{1+\epsilon}) \le B \}$, otherwise the problem is un-learnable. This imposes restrictions on possible values of $o(\eta)$, $c_\pi(\eta)$, and $x_\pi(\eta)$, which are stated next. 

\begin{lemma}\label{lem:bound1} For $\alpha> 0$, $\beta> 0$, $\pi\in (0,1)$
	\[ \min\limits_{\eta\in\cal L} ~~ \alpha_1 m(\eta) + \alpha_2 c_\pi(\eta) + \alpha_2 x_\pi(\eta) \frac{\pi}{1-\pi} ~~=~~ - B^\frac{1}{1+\epsilon} \lrp{\alpha_1 + \frac{\alpha_2}{1-\pi}} . \]
\end{lemma}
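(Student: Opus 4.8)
The plan is to collapse the three-term objective into a single expectation, bound that expectation below by the mean of $\eta$, and then control the mean with the moment constraint; an explicit point mass will show the bound is attained. Throughout I would work only with the minimisation representation $c_\pi(\eta) = \min_{x_0}\{x_0 + \tfrac{1}{1-\pi}\mathbb E_\eta[(X-x_0)_+]\}$ from Section~\ref{Sec:Background}, which is unambiguous.

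First I would rewrite the objective. The set of minimisers of $x_0 \mapsto x_0 + \tfrac{1}{1-\pi}\mathbb E_\eta[(X-x_0)_+]$ is the closed interval of $\pi$-quantiles of $\eta$, whose left endpoint is $x_\pi(\eta)$, so $x_\pi(\eta)$ itself attains the minimum and $c_\pi(\eta) = x_\pi(\eta) + \tfrac{1}{1-\pi}\mathbb E_\eta[(X-x_\pi(\eta))_+]$. Substituting this into the objective, collecting the $x_\pi(\eta)$-terms via $1+\tfrac{\pi}{1-\pi} = \tfrac{1}{1-\pi}$, and using the pointwise identity $x_\pi(\eta)+(X-x_\pi(\eta))_+ = \max\{x_\pi(\eta),X\}$, I expect to reach
\[ \alpha_1 m(\eta) + \alpha_2 c_\pi(\eta) + \alpha_2 x_\pi(\eta)\tfrac{\pi}{1-\pi} \;=\; \alpha_1\,\mathbb E_\eta[X] + \tfrac{\alpha_2}{1-\pi}\,\mathbb E_\eta\big[\max\{x_\pi(\eta),X\}\big]. \]

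Next, for the lower bound: since $\max\{x_\pi(\eta),X\}\ge X$ pointwise and $\alpha_1,\tfrac{\alpha_2}{1-\pi}>0$, the right-hand side is $\ge \big(\alpha_1+\tfrac{\alpha_2}{1-\pi}\big)m(\eta)$; and by convexity of $t\mapsto t^{1+\epsilon}$ together with $\eta\in\mathcal L$, Jensen's inequality gives $m(\eta)\ge -\mathbb E_\eta[|X|] \ge -\big(\mathbb E_\eta[|X|^{1+\epsilon}]\big)^{1/(1+\epsilon)} \ge -B^{1/(1+\epsilon)}$. Hence the objective is $\ge -B^{1/(1+\epsilon)}\big(\alpha_1+\tfrac{\alpha_2}{1-\pi}\big)$ for every $\eta\in\mathcal L$. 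For the reverse inequality I would evaluate at the point mass $\eta^\star=\delta_{-B^{1/(1+\epsilon)}}$, which lies in $\mathcal L$ (its $(1+\epsilon)$-moment equals $B$) and has $m(\eta^\star)=x_\pi(\eta^\star)=c_\pi(\eta^\star)=-B^{1/(1+\epsilon)}$, so the objective there equals $-B^{1/(1+\epsilon)}\big(\alpha_1+\alpha_2+\tfrac{\alpha_2\pi}{1-\pi}\big) = -B^{1/(1+\epsilon)}\big(\alpha_1+\tfrac{\alpha_2}{1-\pi}\big)$, matching the bound; the minimum is thus attained at this value.

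The argument is largely routine; the one step needing care is the first, where I need the exact equality $c_\pi(\eta)=x_\pi(\eta)+\tfrac{1}{1-\pi}\mathbb E_\eta[(X-x_\pi(\eta))_+]$ rather than merely the ``$\le$'' that the $\min$-representation of $c_\pi$ gives for free — i.e.\ the fact that the VaR is an actual minimiser of the Rockafellar--Uryasev functional (standard, but precisely what makes the telescoping exact). A minor verification is also needed that $\eta^\star$ simultaneously realises the extremal values of $m$, $x_\pi$ and $c_\pi$, which follows from the same $\min$-representation applied to a point mass.
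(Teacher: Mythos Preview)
Your proof is correct and takes a genuinely different route from the paper's. The paper first asserts (without justification) that it suffices to restrict to two-point distributions with masses $\pi$ and $1-\pi$ at points $x\le y$; it then rewrites the objective as $(\alpha_1+\tfrac{\alpha_2}{1-\pi})(\pi x+(1-\pi)y)$ and minimises this linear form over the moment ball, landing at $x=y=-B^{1/(1+\epsilon)}$. You instead work with general $\eta\in\mathcal L$ throughout: after recognising the three-term expression as $\alpha_1\,m(\eta)+\tfrac{\alpha_2}{1-\pi}\,\mathbb E_\eta[\max\{x_\pi(\eta),X\}]$, you chain the pointwise bound $\max\{x_\pi(\eta),X\}\ge X$ with Jensen on the moment constraint, and then exhibit the point mass at $-B^{1/(1+\epsilon)}$ as the minimiser. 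Your argument is cleaner in that it never invokes the two-point reduction the paper leaves unproved, and it makes transparent why the optimum degenerates to a single atom (both inequalities are tight precisely there). The paper's parametrisation, on the other hand, is the natural common scaffolding for the three companion lemmas in that appendix, some of whose optima are genuinely two-point.
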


\begin{proof}
	First, observe that it suffices to consider distributions supported on only 2-points. Then the following is an equivalent problem:
	\[ \min\limits_{x\le y} ~~ \alpha_1 x \pi + (1-\pi) \alpha_1 y + \alpha_2 y + \alpha_2 x \frac{\pi}{1-\pi} \qquad \text{ s.t. }\qquad \pi \abs{x}^{1+\epsilon} + (1-\pi) \abs{y}^{1+\epsilon} \le B.  \]
	
	The objective function can be re-written as
	$ \lrp{\alpha_1 + \frac{\alpha_2}{1-\pi}} \lrp{x\pi + (1-\pi) y}, $ 
	which clearly is minimized at $x=y=-B^\frac{1}{1+\epsilon}$, proving the desired equality. 
\end{proof}

\begin{lemma}For $\alpha> 0$, $\beta> 0$, $\pi\in (0,1)$
	\[ \max\limits_{\eta\in \mathcal L }~~ \alpha_1 m(\eta) + \alpha_2 c_\pi(\eta) ~~=~~ B^\frac{1}{1+\epsilon} \alpha_1 \lrp{\pi+ (1-\pi) \lrp{1+ \frac{\alpha_2}{\alpha_1(1-\pi)} }^{1+\frac{1}{\epsilon}} }^\frac{\epsilon}{1+\epsilon}. \]
\end{lemma}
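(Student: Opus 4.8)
The plan is to rewrite both the objective and the moment constraint in terms of the quantile function and then to finish with a single application of Hölder's inequality. Write $x_p(\eta)$ for the $p$-quantile of $\eta$ and recall the spectral representations $m(\eta) = \int_0^1 x_p(\eta)\,dp$ and, from the first identity in (\ref{eq:cvar_min&eq:cvar_acerbi}), $c_\pi(\eta) = \frac{1}{1-\pi}\int_\pi^1 x_p(\eta)\,dp$. Hence, setting $\phi(p) := \alpha_1 + \frac{\alpha_2}{1-\pi}\,\mathbbm{1}\lrset{p \ge \pi}$, one has $\alpha_1 m(\eta) + \alpha_2 c_\pi(\eta) = \int_0^1 x_p(\eta)\,\phi(p)\,dp$. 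Since $X\sim\eta$ has the same law as $x_U(\eta)$ for $U$ uniform on $(0,1)$, the defining constraint $\E{\eta}{\abs{X}^{1+\epsilon}}\le B$ of $\mathcal L$ reads $\int_0^1 \abs{x_p(\eta)}^{1+\epsilon}\,dp \le B$. Because $\eta\mapsto x_\cdot(\eta)$ is (up to null sets) a bijection between $\mathcal P(\Re)$ and the non-decreasing right-continuous functions on $(0,1)$, the quantity to compute becomes $\max \int_0^1 x_p\,\phi(p)\,dp$ over non-decreasing $x_\cdot$ with $\int_0^1 \abs{x_p}^{1+\epsilon}\,dp \le B$.

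Next I would apply Hölder's inequality with conjugate exponents $1+\epsilon$ and $\frac{1+\epsilon}{\epsilon}$: for any feasible $x_\cdot$,
\begin{align*}
\int_0^1 x_p\,\phi(p)\,dp
&~\le~ \int_0^1 \abs{x_p}\,\phi(p)\,dp
~\le~ \lrp{\int_0^1 \abs{x_p}^{1+\epsilon}\,dp}^{\frac{1}{1+\epsilon}}\lrp{\int_0^1 \phi(p)^{\frac{1+\epsilon}{\epsilon}}\,dp}^{\frac{\epsilon}{1+\epsilon}}\\
&~\le~ B^{\frac{1}{1+\epsilon}}\lrp{\int_0^1 \phi(p)^{\frac{1+\epsilon}{\epsilon}}\,dp}^{\frac{\epsilon}{1+\epsilon}}.
\end{align*}
For the matching lower bound I would exhibit the maximizer explicitly: take $x_p^{\ast} = c\,\phi(p)^{1/\epsilon}$ with $c := \big(B\,/\,\int_0^1 \phi(p)^{(1+\epsilon)/\epsilon}\,dp\big)^{1/(1+\epsilon)}$. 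Since $\phi$ is non-decreasing and $c>0$, $x_\cdot^{\ast}$ is non-decreasing, hence the quantile function of a genuine distribution — concretely the two-point law $\eta^{\ast} = \pi\,\delta_a + (1-\pi)\,\delta_b$ with $a = c\,\alpha_1^{1/\epsilon}$ and $b = c\,\big(\alpha_1 + \tfrac{\alpha_2}{1-\pi}\big)^{1/\epsilon}$. The choice of $c$ makes $\int_0^1\abs{x_p^{\ast}}^{1+\epsilon}dp = B$, so $\eta^{\ast}\in\mathcal L$, and since $x_p^{\ast}\ge 0$ with $\abs{x_p^{\ast}}^{1+\epsilon}$ proportional to $\phi(p)^{(1+\epsilon)/\epsilon}$, all three inequalities above become equalities at $x_\cdot^{\ast}$. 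Thus the maximum equals $B^{1/(1+\epsilon)}\big(\int_0^1 \phi(p)^{(1+\epsilon)/\epsilon}\,dp\big)^{\epsilon/(1+\epsilon)}$.

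It then remains to simplify. Since $\phi\equiv\alpha_1$ on $[0,\pi)$ and $\phi\equiv\alpha_1+\frac{\alpha_2}{1-\pi}$ on $[\pi,1]$,
\begin{align*}
\int_0^1 \phi(p)^{\frac{1+\epsilon}{\epsilon}}\,dp
&~=~ \pi\,\alpha_1^{\frac{1+\epsilon}{\epsilon}} + (1-\pi)\lrp{\alpha_1 + \tfrac{\alpha_2}{1-\pi}}^{\frac{1+\epsilon}{\epsilon}}\\
&~=~ \alpha_1^{\frac{1+\epsilon}{\epsilon}}\lrp{\pi + (1-\pi)\lrp{1 + \tfrac{\alpha_2}{\alpha_1(1-\pi)}}^{1+\frac{1}{\epsilon}}},
\end{align*}
and raising to the power $\frac{\epsilon}{1+\epsilon}$, together with $\big(\alpha_1^{(1+\epsilon)/\epsilon}\big)^{\epsilon/(1+\epsilon)} = \alpha_1$, recovers exactly $B^{\frac{1}{1+\epsilon}}\alpha_1\lrp{\pi + (1-\pi)\lrp{1 + \frac{\alpha_2}{\alpha_1(1-\pi)}}^{1+\frac{1}{\epsilon}}}^{\frac{\epsilon}{1+\epsilon}}$, which is the claim.

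The whole argument is short; modulo routine algebra, the only steps that need genuine care are the two quantile-function identities invoked in the first paragraph (in particular that $c_\pi(\eta^{\ast})$ really equals $\frac{1}{1-\pi}\int_\pi^1 x_p^{\ast}\,dp$) and the verification that the Hölder optimizer $x_\cdot^{\ast}$ is monotone — this is the point guaranteeing that the upper bound is attained inside $\mathcal L$ rather than only over the monotonicity-free relaxation, and it is where $\alpha_2>0$ (so that $\phi$ is non-decreasing) enters. I do not expect a serious obstacle beyond this.
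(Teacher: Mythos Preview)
Your argument is correct and cleaner than the paper's. The paper proceeds by first asserting (without proof) that it suffices to restrict to two-point distributions $\pi\delta_x + (1-\pi)\delta_y$ with $x\le y$, then parameterises the boundary of the moment constraint by $y$, substitutes $x^* = \big((B-(1-\pi)y^{1+\epsilon})/\pi\big)^{1/(1+\epsilon)}$, and finishes with a one-variable calculus optimisation in $y$. Your route via the quantile representation and H\"older's inequality is genuinely different: it gives the upper bound and the extremiser in a single stroke, avoids any calculus, and — more importantly — makes the two-point reduction a \emph{consequence} (the H\"older equality case forces $x_p^\ast$ to be a step function) rather than an unproven premise. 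The monotonicity check you flag is exactly the right thing to watch; it is where $\alpha_2>0$ enters, and it is what ensures the relaxed optimum lies in $\mathcal L$. Both routes produce the same two-point maximiser, so the answers agree.
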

\begin{proof}
	First, observe that it is sufficient to consider distributions supported only on 2 points. Thus, the problem is equivalent to
	\[ \max\limits_{x\le y}\quad  \alpha_1 \pi x + \alpha_1 (1-\pi) y + \alpha_2 y \quad\text{ s.t. }\quad \pi \abs{x}^{1+\epsilon} + (1-\pi)\abs{y}^{1+\epsilon} \le B. \]
	The objective function above can be re-written as
	$ \alpha_1 \pi x + (1-\pi) y \lrp{\frac{\alpha_2}{1-\pi} + \alpha_1}.  $ 
	Since we are optimizing a linear function over a convex set, the optimal will occur at a boundary. In particular, the moment-constraint will hold as equality, and 
	\[ x^* = \lrp{\frac{B-(1-\pi) y^{1+\epsilon}}{\pi}}^\frac{1}{1+\epsilon} \]
	will be satisfied. At this point, the problem reduces to 
	\[ \max\limits_{\lrp{\frac{B}{1-\pi}}^\frac{1}{1+\epsilon} \ge  y\ge 0}~~ \alpha_1 \pi^\frac{\epsilon}{1+\epsilon} \lrp{B - (1-\pi) y^{1+\epsilon}} + (1-\pi) y \lrp{\frac{\alpha_2}{1-\pi} + \alpha_1}. \]
	Differentiating with respect to $y$, and setting derivative to 0, we get 
	\[ y^* =  \lrp{1+\frac{\alpha_2}{\alpha_1 (1-\pi)}}^\frac{1}{\epsilon}  \lrp{\frac{B}{\theta+(1-\theta) \lrp{1+\frac{\alpha_1}{\alpha_1(1-\theta)}}^{1+\frac{1}{\epsilon}} }}^\frac{1}{1+\epsilon}.\]
	
	Furthermore, it can be verified that $y^* < \lrp{\frac{B}{1-\pi}}$. Hence, substituting this into the objective function gives the desired result. 
	
\end{proof}

\begin{lemma}For $\alpha> 0$, $\beta> 0$, $\pi\in (0,1)$
	\[\min\limits_{\eta\in\mathcal L} ~~ \alpha_1 m(\eta) + \alpha_2 c_\pi(\eta) ~~=~~ -B^\frac{1}{1+\epsilon} (\alpha_1 + \alpha_2). \]
\end{lemma}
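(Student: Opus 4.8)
The plan is to sandwich $\min_{\eta \in \mathcal L}\lrp{\alpha_1 m(\eta) + \alpha_2 c_\pi(\eta)}$ between a matching upper and lower bound. For the upper bound I would exhibit an explicit feasible minimiser: take $\eta = \delta_{-B^{1/(1+\epsilon)}}$, the unit mass at $-B^{1/(1+\epsilon)}$. Then $\E{\eta}{\abs{X}^{1+\epsilon}} = B$, so $\eta \in \mathcal L$, and since both the mean and the CVaR at level $\pi$ of a point mass at $c$ equal $c$, this $\eta$ attains the value $-B^{1/(1+\epsilon)}(\alpha_1+\alpha_2)$. Hence the minimum is at most $-B^{1/(1+\epsilon)}(\alpha_1+\alpha_2)$.

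For the lower bound, the key step is the elementary fact that CVaR dominates the mean: $c_\pi(\eta) \ge m(\eta)$ for every $\eta \in \mathcal P(\Re)$. I would derive this from the min-representation $c_\pi(\eta) = \inf_{x_0 \in \Re}\lrset{x_0 + \frac{1}{1-\pi}\E{\eta}{(X-x_0)_+}}$: for any $x_0$, Jensen's inequality applied to the convex map $(\cdot)_+$ gives $\E{\eta}{(X-x_0)_+} \ge \lrp{m(\eta)-x_0}_+ \ge m(\eta)-x_0$, and since $\frac{1}{1-\pi} \ge 1$ it follows that $x_0 + \frac{1}{1-\pi}\E{\eta}{(X-x_0)_+} \ge m(\eta)$; taking the infimum over $x_0$ yields $c_\pi(\eta) \ge m(\eta)$. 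Because $\alpha_2 > 0$, this gives $\alpha_1 m(\eta) + \alpha_2 c_\pi(\eta) \ge (\alpha_1+\alpha_2) m(\eta)$.

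It then remains to lower-bound $m(\eta)$ over $\mathcal L$. Jensen's inequality for the convex function $x \mapsto \abs{x}^{1+\epsilon}$ gives $\abs{m(\eta)} = \abs{\E{\eta}{X}} \le \lrp{\E{\eta}{\abs{X}^{1+\epsilon}}}^{1/(1+\epsilon)} \le B^{1/(1+\epsilon)}$, so $m(\eta) \ge -B^{1/(1+\epsilon)}$; combined with the previous estimate, $\alpha_1 m(\eta) + \alpha_2 c_\pi(\eta) \ge -(\alpha_1+\alpha_2)B^{1/(1+\epsilon)}$ for every $\eta \in \mathcal L$. Together with the upper bound this proves the claimed equality. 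There is no genuinely hard step here; the only point worth spelling out is the inequality $c_\pi \ge m$ (standard, since CVaR is a coherent risk measure dominating the mean). An alternative in the style of the two preceding lemmas is to first reduce to distributions supported on at most two points and then optimise the resulting two-variable expression, but that route needs a small case split for the CVaR of a two-point law (according to whether the lighter atom lies above or below the $\pi$-quantile), which the argument above avoids.
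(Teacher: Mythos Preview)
Your proof is correct and is actually cleaner than the paper's. The paper proceeds in the style of the two neighbouring lemmas: it reduces to two-point distributions with masses $\pi$ and $1-\pi$ at $x\le y$, writes the objective as $\alpha_1\pi x+(1-\pi)y\bigl(\alpha_1+\tfrac{\alpha_2}{1-\pi}\bigr)$, pushes the moment constraint to equality to eliminate $x$, then does a one-variable calculus argument showing the optimum lands at the degenerate point $x=y=-B^{1/(1+\epsilon)}$. That route works but carries some overhead (justifying the two-point reduction and the specific mass split, plus a case analysis in the one-variable step), and the paper's write-up even contains a few copy-paste slips from the preceding maximisation lemma.

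Your approach replaces all of that with two clean inequalities: the coherence bound $c_\pi(\eta)\ge m(\eta)$, derived directly from the min-representation of CVaR, and the Jensen bound $|m(\eta)|\le B^{1/(1+\epsilon)}$. This identifies the minimiser and the minimum value in one stroke, with no reduction step and no case split. The only thing one might add for a fully self-contained presentation is the intermediate inequality $\tfrac{1}{1-\pi}\E{\eta}{(X-x_0)_+}\ge \E{\eta}{(X-x_0)_+}$ (using non-negativity of $(X-x_0)_+$), which you invoke implicitly when using $\tfrac{1}{1-\pi}\ge 1$; it is obvious, but spelling it out makes the chain of inequalities $x_0+\tfrac{1}{1-\pi}\E{\eta}{(X-x_0)_+}\ge x_0+\E{\eta}{(X-x_0)_+}\ge x_0+\bigl(m(\eta)-x_0\bigr)=m(\eta)$ completely transparent.
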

\begin{proof}
	First, observe that it is sufficient to consider distributions supported only on 2 points. Thus, the problem is equivalent to
	\[ \max\limits_{x\le y}\quad  \alpha_1 \pi x + \alpha_1 (1-\pi) y + \alpha_2 y \quad\text{ s.t. }\quad \pi \abs{x}^{1+\epsilon} + (1-\pi)\abs{y}^{1+\epsilon} \le B. \]
	The objective function above can be re-written as
	$ \alpha_1 \pi x + (1-\pi) y \lrp{\frac{\alpha_2}{1-\pi} + \alpha_1}.  $
	Since we are optimizing a linear function over a convex set, the optimal will occur at a boundary. In particular, the moment-constraint will hold as equality, and 
	\[ x^* = -\lrp{\frac{B-(1-\pi) y^{1+\epsilon}}{\pi}}^\frac{1}{1+\epsilon} \]
	will be satisfied. At this point, the problem reduces to 
	\[ \max\limits_{-B^\frac{1}{1+\epsilon} \le y \le 0}~~ -\alpha_1 \pi^\frac{\epsilon}{1+\epsilon} \lrp{B - (1-\pi) (-y)^{1+\epsilon}} + (1-\pi) y \lrp{\frac{\alpha_2}{1-\pi} + \alpha_1}. \]
	Differentiating with respect to $y$, and setting derivative to 0, we get 
	\[ y^* =  -\lrp{1+\frac{\alpha_2}{\alpha_1 (1-\pi)}}^\frac{1}{\epsilon}  \lrp{\frac{B}{\theta+(1-\theta) \lrp{1+\frac{\alpha_1}{\alpha_1(1-\theta)}}^{1+\frac{1}{\epsilon}} }}^\frac{1}{1+\epsilon}.\]
	
	Observe that $y^* < -B^\frac{1}{1+\epsilon}$. Substituting $y^* = 0$ in the objective function, we get the desired bound.
\end{proof}

\begin{lemma}For $\alpha> 0$, $\beta> 0$, $\pi\in (0,1)$
	\[ \min\limits_{\eta \in \mathcal L }~~ \alpha_1 m(\eta) + \alpha_2 c_\pi(\eta) - \alpha_2 x_\pi(\eta) ~~=~~ -\alpha_1 B^\frac{1}{1+\epsilon}.  \]
\end{lemma}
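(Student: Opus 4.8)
The plan is to rewrite the objective so that the two sources of ``badness'' decouple, and then bound each piece separately. The key identity is that, by the minimisation representation of $c_\pi$ and the fact that $x_\pi(\eta)$ is the smallest minimiser of $x_0 \mapsto x_0 + \frac{1}{1-\pi}\E{\eta}{(X-x_0)_+}$ (the same standard fact used in Appendix~\ref{app:boundOnVarAndCVarForL}), one has
\[
c_\pi(\eta) - x_\pi(\eta) \;=\; \frac{1}{1-\pi}\,\E{\eta}{(X - x_\pi(\eta))_+} \;\ge\; 0 .
\]
Consequently $\alpha_1 m(\eta) + \alpha_2 c_\pi(\eta) - \alpha_2 x_\pi(\eta) = \alpha_1 m(\eta) + \frac{\alpha_2}{1-\pi}\E{\eta}{(X - x_\pi(\eta))_+}$, which is a mean term plus a nonnegative term. (Both $c_\pi(\eta)$ and $x_\pi(\eta)$ are finite on $\mathcal L$ by Lemma~\ref{lem:boundOnVarAndCVarForL}, so this manipulation is legitimate.)

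For the lower bound I would argue as follows. The second term is $\ge 0$ for every $\eta$. For the first term, since $t \mapsto t^{1+\epsilon}$ is convex, Jensen's inequality gives, for $\eta \in \mathcal L$,
\[
m(\eta) \;\ge\; -\E{\eta}{\abs{X}} \;\ge\; -\big(\E{\eta}{\abs{X}^{1+\epsilon}}\big)^{\frac{1}{1+\epsilon}} \;\ge\; -B^{\frac{1}{1+\epsilon}} .
\]
Multiplying by $\alpha_1>0$ and $\alpha_2>0$ respectively and adding yields $\alpha_1 m(\eta) + \alpha_2 c_\pi(\eta) - \alpha_2 x_\pi(\eta) \ge -\alpha_1 B^{1/(1+\epsilon)}$ uniformly over $\mathcal L$.

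For attainment I would simply exhibit the Dirac measure $\eta^\star = \delta_{-B^{1/(1+\epsilon)}}$: it lies in $\mathcal L$ with the moment constraint tight, and for a point mass $m(\eta^\star) = x_\pi(\eta^\star) = c_\pi(\eta^\star) = -B^{1/(1+\epsilon)}$ (the $(X-x_\pi)_+$ term vanishes), so the objective equals $-\alpha_1 B^{1/(1+\epsilon)}$ and meets the lower bound, giving the claimed equality. I do not anticipate a genuine obstacle here: the only point needing care is the identity in the first display for arbitrary (possibly non-atomless) $\eta$, and that is precisely the fact already invoked earlier in the paper; unlike the neighbouring lemmas, this one requires neither a reduction to two-point distributions nor an optimisation over a boundary.
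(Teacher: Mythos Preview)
Your proof is correct and follows essentially the same approach as the paper: rewrite $c_\pi(\eta)-x_\pi(\eta)$ as the nonnegative tail-expectation term, bound $m(\eta)\ge -B^{1/(1+\epsilon)}$, and exhibit the Dirac mass at $-B^{1/(1+\epsilon)}$ to attain equality. You are in fact slightly more careful than the paper, which drops the $\tfrac{1}{1-\pi}$ factor in the rewrite and leaves the Jensen step implicit.
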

\begin{proof}
	First, observe that the given problem can be re-written as 
	\[ \min\limits_{\eta\in\mathcal L} ~~~ \alpha_1 m(\eta) + \alpha_2 \E{\eta}{\lrp{X-x_\pi(\eta)}_+}.   \]
	Since the second term above is non-negative, and the first term is minimized by $\eta^* $ which is a point mass at $-B^\frac{1}{1+\epsilon}$, with second term being 0, $\eta^*$ is optimal, proving the desired equality. 
\end{proof}

As earlier, for $\eta\in\mathcal P(\Re)$ and $x \in \Re$, the following two quantities will be crucial for the algorithm and its analysis:
\[\KLinfU(\eta,x) =  \min\limits_{\substack{\kappa
		\in\cal L\\ o(\kappa) \ge x}} \KL(\eta,\kappa) \quad\text{ and } \quad \KLinfL(\eta,x) =  \min\limits_{\substack{\kappa
		\in\cal L\\ o(\kappa) \le x}} \KL(\eta,\kappa).
\]

\subsection{Mean-CVaR: algorithm and results}
For $x\in \Re $, $z\in\Re$, and $\eta\in\mathcal P(\Re)$, let \(\mathcal S_2(z,x)\)  equal 
\[ \lrset{\lambda_1\ge 0, \lambda_2\ge 0:  \min\limits_{y\in\Re} 1-\lambda_1(B-\abs{y}^{1+\epsilon}) - \lambda_2\lrp{x-\alpha_1 y -\alpha_2 z-\frac{\alpha_2}{1-\pi}(y-z)_+}  \ge 0  },\]
\[ \mathcal Z(x) := \left[ -\lrp{\frac{B}{\pi}}^\frac{1}{1+\epsilon}, \frac{x +\alpha_1 B^\frac{1}{1+\epsilon}}{\alpha_2}  \right], \]
\[O = \left[ -B^\frac{1}{1+\epsilon}(\alpha_1+\alpha_2), B^\frac{1}{1+\epsilon} \alpha_1 \lrp{\pi + (1-\pi)\lrp{1+\frac{\alpha_2}{\alpha_1 (1-\pi)}}^{1+\frac{1}{\epsilon}}}^\frac{\epsilon}{1+\epsilon}  \right], \]
and let $\mathcal S_5$ be set of all $(\rho_1,\rho_2,\rho_4)$ such that $\rho_1\ge 0, \rho_2\ge 0, \rho_4\in\Re$, and 
\[ \min\limits_{y\in\Re}~~ 1-\rho_1(B-\abs{y}^{1+\epsilon}) + \rho_2\lrp{x-\alpha_1 y}-\rho_4(1-\pi) - \lrp{\frac{\rho_2\alpha_2 y}{1-\pi}  - \rho_4}_+ \ge 0. \]

For $\eta\in\mathcal P(\Re)$ and $x\in O^o, $ $\KLinfL(\eta,x)$ equals 
$$\min\limits_{z\in \mathcal Z(x)} \max\limits_{(\lambda_1,\lambda_2)\in\mathcal S_2} \E{\eta}{\log\lrp{1-\lambda_1(B-\abs{X}^{1+\epsilon}) - \lambda_2\lrp{x-\alpha_1 X-\alpha_2 z-\frac{\alpha_2}{1-\pi}(X-z)_+} }}, $$

and  \( \KLinfU(\eta,x) \) equals 

\[ \max\limits_{\bm{\rho}\in\mathcal S_5} ~ \E{\eta}{\log\lrp{1-\rho_1(B-\abs{X}^{1+\epsilon}) + \rho_2\lrp{x-\alpha_1 X}-\rho_4(1-\pi) - \lrp{\frac{\rho_2\alpha_2 X}{1-\pi}  - \rho_4}_+ }}. \]

As earlier, these are precisely the dual representations for the KL-projection functionals, crucial for the algorithm, and its analysis. Compactness of the dual regions, $\mathcal S_2$ and $\mathcal{S}_5$, can be argued as in Section \ref{app:Sec:CompactRegions}. Joint-continuity of these $\KL$-projection functionals can also be established by mimicking the arguments in Section \ref{app:ContKLinf}, which is required for the sample complexity proof. Concentration inequalities similar to those in Proposition \ref{prop:Deviations_of_sums} can be developed for the empirical versions of these mean-CVaR KL-projection functionals, and the algorithm of Section \ref{Sec:Alg}, with \( \KLinfU \) and \( \KLinfL\) as defined in this section, and $\beta$ as in (\ref{eq:StoppingRule}), gives a plug-n-play algorithm for the mean-CVaR BAI problem. 

\begin{theorem}[Formal statement of Theorem \ref{th:mean-CVaR}]
	For $\mu\in \mathcal M^o$, the proposed algorithm for finding the best mean-CVaR-arm is $\delta$-correct, and satisfies
	\[ \limsup\limits_{\delta\rightarrow 0} \frac{\Exp{\tau_\delta}}{\log\lrp{1/\delta}} \le \frac{1}{V(\mu)}.\]
\end{theorem}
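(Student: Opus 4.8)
The plan is to follow the CVaR template of Theorem~\ref{th:DeltaAndSample} essentially verbatim, substituting the mean-CVaR objective $o(\eta)=\alpha_1 m(\eta)+\alpha_2 c_\pi(\eta)$ for $c_\pi(\eta)$ throughout. First I would record the lower-bound simplification: exactly as in Lemma~\ref{lem:LBSimplification}, the inner infimum in $V(\mu)$ over $\mathcal A_1^c$ decomposes over the suboptimal arm $j$ and reduces to $\inf_{x\le y}\lrset{t_1\KLinfU(\mu_1,y)+t_j\KLinfL(\mu_j,x)}$ with the new $o$-constrained functionals, because $o(\cdot)$ is again concave (a linear functional plus the concave $c_\pi$), so the $\KLinfU$ feasible set is convex while the $\KLinfL$ feasible set is not. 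The range constraints come from the four moment lemmas (Lemma~\ref{lem:bound1} and the three that follow it), which confine $o(\eta)$, $x_\pi(\eta)$, $c_\pi(\eta)$ to bounded intervals for $\eta\in\cal L$, yielding the interval $O$ and the box $\mathcal Z(x)$; without the $(1+\epsilon)$-moment restriction the problem is unlearnable exactly as in Remark~\ref{remark:unbounded_sc}.

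For $\delta$-correctness I would repeat the supermartingale construction. The dual representations of $\KLinfU(\eta,x)$ and $\KLinfL(\eta,x)$ stated in Appendix~\ref{app:Mean-CVaR} are obtained by the same Lagrangian-duality argument as Theorem~\ref{th:klinfDual}: the $\cvar$ max-formulation~\eqref{eq:cvar_max} handles the $\KLinfU$ constraint, and the $\cvar$ min-formulation~\eqref{eq:cvar_min&eq:cvar_acerbi} turns the non-convex $\KLinfL$ constraint into the outer $\min_z$ over a one-dimensional family of linear constraints, leaving a convex inner problem. From these, for fixed dual variables the empirical objective is a sum of logs of random variables with mean at most $1$, so its exponential is a non-negative supermartingale; mixing these candidates over uniform priors on the compact dual regions $\mathcal S_5$ and $\mathcal S_2$ (compactness argued as in Appendix~\ref{app:Sec:CompactRegions}, redone with the extra $\alpha_1,\alpha_2$ terms as in Lemmas~\ref{lem:CompactShat}--\ref{lem:CompactR2}), invoking the exp-concave regret bound Lemma~\ref{app:lem:exp_reg_bound} and Ville's inequality, gives the analog of Proposition~\ref{prop:Deviations_of_sums}. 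A union bound over arms with $\beta$ from~\eqref{eq:StoppingRule} then yields $\mathbb P(\cal E)\le\delta$.

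For the sample-complexity bound I would mimic the Garivier--Kaufmann argument used in Appendix~\ref{app:SampleComplexity}. The essential inputs are: (i) joint continuity of $\KLinfL$ and $\KLinfU$ on $\cal L\times O^o$ in the weak topology, proved by establishing joint lower- and upper-semicontinuity separately exactly as in Lemmas~\ref{lem:lscklinf}--\ref{lem:USCGEQ}, using that $\cal L$ is weakly compact and uniformly integrable, that $o(\cdot)$ is weakly continuous on $\cal L$ (since both $m$ and $c_\pi$ are, via the generalized dominated convergence argument of Lemma~\ref{lem:contcpiL}), and Berge's theorem applied to the dual formulations; (ii) upper-hemicontinuity and convexity of the set $t^*(\mu)$ of optimal weights, as in Lemma~\ref{lem:Optw}; and (iii) that the empirical distribution, after the Kolmogorov projection $\Pi$ onto $\cal L$, converges to $\mu$, so the tracked weights converge to $t^*(\mu)$. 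With these, defining the good event $\mathcal G_T$ as in Appendix~\ref{app:SampleComplexity}, forced exploration plus C-tracking guarantee $\Omega(\sqrt n)$ samples per arm, the stopping statistic is at least $n\,C^*_{\epsilon'}(\mu)$ on $\mathcal G_T$, and the DKW inequality together with the Borel--Cantelli-type bound of Lemma~\ref{ProbOfCompOfGoodSet} controls $\mathbb P(\mathcal G_T^c)$; letting $\epsilon'\to 0$ and using lower-semicontinuity of the lower-bound functional gives $\limsup_{\delta\to 0}\Exp{\tau_\delta}/\log(1/\delta)\le 1/V(\mu)$.

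The main obstacle I expect is (i): establishing joint continuity of the mean-CVaR $\KL$-projection functionals, and in particular the upper-semicontinuity at $x=o(\eta)$ (the zero-crossing), where one must exhibit an explicit feasible perturbation $\kappa_n$ with $\KL(\eta_n,\kappa_n)\to 0$ — the mean term adds one more constraint to balance in that construction — as well as the boundary behaviour at the two extreme values of $O$, where continuity fails (analog of Remark~\ref{rem:DiscontinuityOfKLinf}), which is precisely why $\mu$ is restricted to $\mathcal M^o$. Everything else is conceptually identical to the CVaR analysis; only the dual algebra (the $\alpha_1,\alpha_2$-weighted objectives and the regions $\mathcal S_2$, $\mathcal S_5$, $\mathcal Z(x)$) needs to be carried through carefully.
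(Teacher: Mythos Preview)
Your proposal is correct and mirrors the paper's own treatment: the paper explicitly states that the proof parallels the CVaR case, and after recording the dual representations over $\mathcal S_2$, $\mathcal S_5$, $\mathcal Z(x)$ and the interval $O$, it simply asserts that compactness of the dual regions, joint continuity of the $\KL$-projection functionals, the concentration inequality analogous to Proposition~\ref{prop:Deviations_of_sums}, and the sample-complexity argument all go through by mimicking the corresponding CVaR sections. Your identification of the upper-semicontinuity at $x=o(\eta)$ and at the endpoints of $O$ as the delicate step is apt, and is exactly what the restriction $\mu\in\mathcal M^o$ is there to handle.
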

\section{Discussion of $\KLinf$-based confidence intervals for CVaR}\label{app:confidence_intervals}
In this section we construct $\KLinf$-based confidence intervals for CVaR, and compare them to those obtained from the traditional concentration and clipping arguments. We will show how the traditional argument can be recovered from the $\KLinf$ concentration at minor overhead, but with built-in anytime validity.

Given $n$ samples from a distribution $\eta \in \cal L$, $U_n$ defined below is an upper bound on the true CVaR of $\eta$ at its $\pi^{th}$ quantile, where
\[ U_n = \max\lrset{x\in \Re : ~ n\KLinfU(\hat{\eta}_n, x) \le C},\numberthis\label{eq:index} \]
for an appropriately chosen threshold $C$, so that $U_n \ge c_\pi(\eta)$ with probability at least $1-\delta$. This follows from Proposition \ref{prop:Deviations_of_sums}. 

This upper bound can be re-formulated as 
\[ U_n = \max\lrset{c_\pi(\kappa): ~ \kappa \in \mathcal L, ~ n\KL(\hat{\eta}_n, \kappa) \le C} .\]

Using the Donsker-Varadhan variational representation for the $\KL$ divergence, $U_n$ is at most
\[ \max\limits_{\kappa\in\cal L} ~~ c_\pi(\kappa)\quad \text{s.t.} \quad n\E{\hat{\eta}(n)}{g(X)}  - n\log \E{\kappa}{e^{g(X)}} \le C,\]
for all measurable functions, $g$, with a finite second term above. Let $x_\pi$ denote the $\pi^{th}$-quantile for $\kappa$. Then, for a sequence of thresholds \(u_n\), and $\theta > 0$, we define the function \[g_n(X) = -\frac{\theta}{1-\pi} X\mathbbm{1}\lrp{x_\pi \le {X}\le u_n}.\]
Substituting \(g_n\) for \(g\) in the above, and adding \(\frac{n\theta}{1-\pi}\E{\kappa}{X\mathbbm{1}\lrp{x_\pi\le X}}\) on both the sides, we get that $U_n$ is at most the maximum $c_\pi(\kappa)$ such that $\kappa \in \cal L$ and

\begin{align*} 
\frac{\theta}{1-\pi} \sum\limits_{i=1}^n &\lrp{\E{\kappa}{X\mathbbm{1}\lrp{x_\pi \le X}}-X_i\mathbbm{1}\lrp{x_\pi \le {X_i}\le u_n}} \\ &-n\log\E{\kappa}{e^{-\frac{\theta}{1-\pi} X\mathbbm{1}\lrp{x_\pi \le {X}\le u_n}}} \le C + \frac{n\theta}{1-\pi}\E{\kappa}{X\mathbbm{1}\lrp{x_\pi\le X}}.\numberthis\label{eq:indexbound}  
\end{align*}

Let \( Y_n =  X\mathbbm{1}\lrp{x_\pi\le {X}\le u_n} \), \( m_n = \E{\kappa}{X \mathbbm{1}\lrp{x_\pi\le {X}\le u_n}} \), and $\theta_\pi = \frac{\theta}{1-\pi}$. Then $\abs{Y_n} \le u_n$ and $\E{\kappa}{\theta^2_\pi Y^2_n} \le \theta^2_\pi Bu^{1-\epsilon}_n $. Using this, 
\begin{align*}
\E{\kappa}{e^{-\theta_\pi X\mathbbm{1}\lrp{x_\pi \le {X}\le u_n}}} &\le 1 -\theta_\pi m_n + \sum\limits_{j=2}^\infty \frac{\E{\kappa}{\abs{\theta_\pi Y_n}^j}}{j!} \le 1-\theta_\pi m_i + \frac{B}{u^{1+\epsilon}_n}\sum\limits_{j=2}^{\infty} \frac{(\theta_\pi u_n)^j}{j!}. \numberthis\label{eq:UniformMGF} 
\end{align*}

Thus, we have \(\E{\kappa}{e^{-\theta_\pi X\mathbbm{1}\lrp{x_\pi \le {X}\le u_n}}}  \le 1-\theta_\pi m_n + \frac{B}{u^{1+\epsilon}_n} \lrp{e^{\theta_\pi u_n } - \theta u_n -1}  \). Using $1+x\le e^x$ and (\ref{eq:UniformMGF}) in (\ref{eq:indexbound}), we get that $U_n$ is at most: \(
\max\limits_{\kappa\in\cal L}~~ c_\pi(\kappa) \) 
subject to 
\begin{align*} \theta_\pi \sum\limits_{i=1}^n &\lrp{\E{\kappa}{X\mathbbm{1}\lrp{x_\pi \le X}}-X_i\mathbbm{1}\lrp{x_\pi \le {X_i}\le u_n}} \\& \le C + n \lrp{\theta_\pi\E{\kappa}{X\mathbbm{1}\lrp{x_\pi\le X}} -\theta_\pi m_n + \frac{B}{u^{1+\epsilon}_n} \lrp{e^{\theta_\pi u_n}-\theta u_n -1} }. 
\end{align*}
Clearly, \( \E{\kappa}{X\mathbbm{1}\lrp{X \ge u_n}} \) is at most \( B/(u_n)^{\epsilon} \). The above  constraint can be relaxed  to 
\begin{align*}  \frac{1}{n}\sum\limits_{i=1}^n & \lrp{\E{\kappa}{X\mathbbm{1}\lrp{x_\pi\le X}}-X_i\mathbbm{1}\lrp{x_\pi \le {X_i}\le u_n}}\\ 
& \le \frac{B}{u^{\epsilon}_n} + \frac{1}{\theta_\pi} \lrp{\frac{C}{n}+ \frac{B}{u^{1+\epsilon}_n} \lrp{e^{\theta_\pi u_n}-\theta_\pi u_n -1}}.  \end{align*}

Choosing \( \theta_\pi = \frac{Cu^{\epsilon}_n}{nB} \), the above constraint is 

\begin{align*}
	 \frac{1}{n}\sum\limits_{i=1}^n &\lrp{\E{\kappa}{X\mathbbm{1}\lrp{x_\pi\le X}}-X_i\mathbbm{1}\lrp{x_\pi \le X_i \le u_n}}\le \frac{B}{u^\epsilon_n} + \frac{n B^2}{C u^{1+2\epsilon}} \lrp{e^{\frac{C u^{1+\epsilon}}{B n}}-1} . 
\end{align*}

Recall that 
$$c_\pi(\kappa) = \frac{1}{1-\pi} \E{\kappa}{X\mathbbm{1}\lrp{x_\pi \le X}} .$$ 

Setting \[u_n = \lrp{B n \lrp{\log\delta_0\inv}\inv}^{\frac{1}{1+\epsilon}}\quad\text{and}\quad \hat{c}_{\pi,n}(\delta_0) := \frac{1}{n(1-\pi)} \sum\limits_{i=1}^n X_i\mathbbm{1}\lrp{x_\pi \le X_i \le u_n}, \] for some parameter $\delta_0$ which will be chosen later, we get the following upper bound on $U_n$:
\[ \hat{c}_{\pi,n}(\delta_0) + \frac{{B}^{\frac{1}{1+\epsilon}}}{1-\pi} \lrp{\frac{\log\delta_0\inv}{n}}^{\frac{\epsilon}{1+\epsilon}}\lrp{1+ \lrp{e^{\frac{C}{\log\delta_0\inv}}-1}\frac{\log\delta_0\inv}{C}}. \numberthis \label{eq:Un} \]

Observe that $\hat{c}_{\pi,n}(\delta)$ is the popular truncation-based estimator. Now, if $\delta_0 = \delta$ and $C\approx \log \delta\inv$, then we obtain
\begin{equation}\label{eq:apx.ubd}
  U_n \le \hat{c}_{\pi,n}(\delta)  + \frac{4B^\frac{1}{1+\epsilon}}{1-\pi} \lrp{\frac{\log \delta\inv}{n}}^\frac{\epsilon}{1+\epsilon},
\end{equation}
which is a $(1-\delta)$-probability upper bound for the true CVaR, obtained using the truncated estimator, $\hat{c}_{\pi,n}(\delta)$ \cite[see, (29)]{pmlr-v119-l-a-20a}, assuming perfect estimation of VaR at the $\pi^{th}$ quantile. From Proposition \ref{prop:Deviations_of_sums}, however, the current best $C$ permitted is $\log\delta\inv$ with an additional 3 log(number of samples), which gives that our upper-bound will be worse. However, our confidence intervals are any-time (as Proposition~\ref{prop:Deviations_of_sums} is).

Typically, in applications to multi-armed bandit problems, we require high-probability upper bounds for a \emph{random} number of samples allocated to an arm. Our confidence intervals are any-time and can be directly used in these applications. The same is not true for the truncation-based intervals, where a union bound would instead be needed in the analysis.


For example, in the classical regret-minimization framework of MAB with CVaR as the (unobserved) loss, a UCB algorithm based on the truncation-based estimator in (\ref{eq:apx.ubd}) would choose $\delta = T^{-2}$ at time $T$, for constructing index for an arm with $n$ samples. With this choice of  $\delta$, r.h.s.\ of (\ref{eq:apx.ubd}) would correspond to index for such an arm at time $T$. On the other hand, since the bound for $\KLinfU$ is anytime, UCB constructed using it would require $\delta \approx (T \log^2(T))\inv$. This would correspond to setting  $C\approx \log T + 2\log\log T + 3 \log\lrp{\text{number of samples}}$ in (\ref{eq:index}).

\cite{agrawal2021regret} recently show that a UCB algorithm using an arm index similar to (\ref{eq:index}) ($\KLinf$-UCB) is asymptotically optimal for the mean objective in heavy-tailed bandits. Their algorithm, with $\KLinf$ replaced with $\KLinfU$ or $\KLinfL$ as appropriate, will be an optimal algorithm for regret-minimization with the CVaR-objective. Since for such an algorithm, sub-optimal arms are pulled approximately $\log T$ times till time $T$, this would correspond to setting
\[ C\approx \log T + 2 \log \log T + 3 \log\log T \]
for sub-optimal arms in this application. With this choice of C, (\ref{eq:Un}) is an upper bound on our index at time $T$, for all values of $\delta_0$. In particular, setting $\delta_0 = T^{-2}$, we get that our index for a sub-optimal arm is dominated by 
\[ \hat{c}_{\pi,n}(T^{-2}) + \frac{3 {B}^{\frac{1}{1+\epsilon}}}{1-\pi}  \lrp{\frac{2 \log T}{n}}^{\frac{\epsilon}{1+\epsilon}}, \]
which is smaller than the index of UCB with the truncated estimator. 

Furthermore, the comparison with (\ref{eq:Un}) doesn't account for error in estimating the VaR of the underlying distribution, which is also needed in the truncation-based CVaR estimator. We would also like to point out that the estimator of \cite{pmlr-v119-l-a-20a} is similar to the $\hat{c}_{\pi,n}(\delta)$ defined above, with the truncation level changing for each sample. However, using their analysis, it can be shown that both $\hat{c}_{\pi,n}(\delta)$ have the same guarantees.

\section{Batched algorithm and Sample complexity}\label{app:BatchedAlgo}
In this section, look at the computational complexity of our asymptotically-optimal algorithm for the CVaR or mean-CVaR BAI, and propose a modification which is optimal up to constants, but is computationally less expensive. 

We observe numerically that the computational cost of the \( \KL\)-projection functionals increases linearly in the number of samples of the empirical distribution. In particular, the computation of optimal weights increases linearly with number of samples.  Let this cost at time \(n\) be \(c_1+c_2 n\), where \(c_1\) and \(c_2\) are non-negative constants. Then, the over-all cost of the algorithm till time \( \tau_\delta \) is \( \lrp{c_1+\frac{c_2}{2}}\tau_\delta  + \frac{c_2}{2} \tau^2_\delta \), which is quadratic in the total number of samples.

Consider a modification in which we only check for stopping condition, and compute the weights at $(1+\eta)$-geometrically spaced times, for \(\eta > 0\), and use these weights to allocate the samples at all the intermediate times using any reasonable tracking rule. The $(1+\eta)$-batched algorithm with the randomized tracking rule of \cite{pmlr-v117-agrawal20a} is formally described in the next subsection. The algorithm makes an error if at the stopping time, its estimate for the best-arm is incorrect. As earlier, the error probability can be seen to be at most \ref{app:eq:ErrorProbability}, and \( \delta \)-correctness thus follows (Section \ref{app:sec:delta_batched}).  

\begin{theorem}\label{th:batched_sample_complexity}
	The \( \eta \)-batched algorithm with randomized tracking is \( \delta \)-correct, and satisfies: $$~\limsup\limits_{\delta \rightarrow 0 } \frac{\Exp{\tau_\delta}}{\log\lrp{{1}/{\delta}}} \le \frac{1+\eta}{V(\mu)}. $$
\end{theorem}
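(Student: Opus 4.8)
The plan is to follow the two ingredients of Appendix~\ref{App:DeltaCorrectnessAndSampleComplexity}, adapted to account for the fact that sampling weights are recomputed, and the stopping test is evaluated, only at the batch endpoints $n_k=\lceil(1+\eta)^k\rceil$.

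For $\delta$-correctness, I would observe that the recommendation rule and the stopping statistic $Z(n)$ are unchanged, and that evaluating the stopping condition at a sparse set of times can only shrink the error event. Hence the error event of the batched algorithm is contained in $\bigcup_{i\ne1}\{\exists n:\ N_i(n)\KLinfU(\hat\mu_i(n),c_\pi(\mu_i))+N_1(n)\KLinfL(\hat\mu_1(n),c_\pi(\mu_1))\ge\beta(n,\delta)\}$, exactly as in~(\ref{app:eq:ErrorProbability}), and this bound is insensitive to the sampling rule. Proposition~\ref{prop:Deviations_of_sums} applied with $x=\log\tfrac{K-1}{\delta}$ and $\beta$ as in~(\ref{eq:StoppingRule}) then gives $\mathbb P(\mathcal E)\le\delta$, verbatim from Appendix~\ref{App:Proof:th:DominatingMM}.

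For the sample complexity I would reuse the machinery of Appendix~\ref{app:SampleComplexity}: fix $\epsilon'>0$, take the same neighbourhood $\mathcal I_{\epsilon'}$ of $\mu$ and good set $\mathcal G_T(\epsilon')=\bigcap_{n=T^{1/4}}^T\{\hat\mu(n)\in\mathcal I_{\epsilon'}\}$, and note Lemma~\ref{ProbOfCompOfGoodSet} still applies since its proof only uses the $\Omega(\sqrt n)$ forced-exploration guarantee, which the randomized tracking of~\cite{pmlr-v117-agrawal20a} also enjoys. The single statement that needs a new proof is the batched analogue of Lemma~\ref{lem:app:ClosenessOfFracOfPulls}: on $\mathcal G_T$ and for $n\ge\sqrt T$, $\inf_{t^*\in t^*(\mu)}\|N(n)/n-t^*\|_\infty\le3\epsilon'$. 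I would prove this by splitting $N_a(n)$ over the $O(\log n)$ batches meeting $[1,n]$: batches ending before $T^{1/4}$ contribute $O(T^{1/4})=o(n)$ pulls; on the remaining batches $\mathcal G_T$ forces every recomputed weight $t(n_k)$ within $\epsilon'$ of $t^*(\mu)$, so by convexity of $t^*(\mu)$ (Lemma~\ref{lem:Optw}) and Lemma~\ref{app:lemm:ConvexWt} the sample-weighted average of the used weights is within $\epsilon'$ of $t^*(\mu)$; finally the randomized-tracking concentration of~\cite{pmlr-v117-agrawal20a} bounds the gap between $N(n)$ and this average by $O(\sqrt n)$. Joint lower-semicontinuity of $\KLinfU,\KLinfL$ (Lemma~\ref{lem:lscklinf}) and Berge's theorem then give $Z(n)\ge n\,C^*_{\epsilon'}(\mu)$ on $\mathcal G_T$ for all $n\in[\sqrt T,T]$, with $C^*_{\epsilon'}(\mu)$ as in Appendix~\ref{app:SampleComplexity}.

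The batching loss enters only through the stopping test. For $\delta$ small the map $l\mapsto l\,C^*_{\epsilon'}(\mu)-\beta(l,\delta)$ is eventually increasing, so once a time $n\ge\sqrt T$ with $n\,C^*_{\epsilon'}(\mu)\ge\beta(n,\delta)$ is reached, one has $Z(n_k)\ge n_kC^*_{\epsilon'}(\mu)\ge\beta(n_k,\delta)$ at the first batch endpoint $n_k\ge n$, and $n_k\le(1+\eta)n+1$; hence on $\mathcal G_T$ we get $\tau_\delta\le(1+\eta)\max\{\sqrt T,\widetilde T_0(\delta)\}+1$ for $T$ large, where $\widetilde T_0(\delta)=\inf\{l:\ l\,C^*_{\epsilon'}(\mu)\ge\beta(l,\delta)\}$. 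Choosing $T=T_\delta$ as the smallest $T$ with $(1+\eta)\max\{\sqrt T,\widetilde T_0(\delta)\}+1\le T$ (so $T_\delta\sim(1+\eta)\widetilde T_0(\delta)$, since $\sqrt T=o(T)$ and $\widetilde T_0(\delta)\to\infty$) gives $\mathcal G_{T_\delta}\subseteq\{\tau_\delta\le T_\delta\}$ and $\mathbb E_\mu(\tau_\delta)\le T_\delta+T_{\epsilon'}+\sum_{T>T_\delta}\mathbb P(\mathcal G_T^c)$. Since $\beta(l,\delta)=\log\tfrac{K-1}{\delta}+O(\log l)$ we have $\limsup_{\delta\to0}\widetilde T_0(\delta)/\log(1/\delta)=1/C^*_{\epsilon'}(\mu)$ after absorbing the $O(\log l)$ term into a factor $1+\tilde e$ with $\tilde e\downarrow0$; combining, $\limsup_{\delta\to0}\mathbb E_\mu(\tau_\delta)/\log(1/\delta)\le(1+\eta)(1+\tilde e)/C^*_{\epsilon'}(\mu)$, and letting $\tilde e\downarrow0$, then $\epsilon'\downarrow0$ with $\liminf_{\epsilon'\to0}C^*_{\epsilon'}(\mu)\ge V(\mu)$, yields the claimed bound. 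The main obstacle is the batched, randomized-tracking version of Lemma~\ref{lem:app:ClosenessOfFracOfPulls}, which must simultaneously juggle the $O(\log n)$ recomputations, the convex-combination argument over geometrically growing batches, and the tracking concentration; the $(1+\eta)$ factor itself is the easy part, being just the relative length of the trailing batch over which stopping is not tested.
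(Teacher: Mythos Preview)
Your $\delta$-correctness argument is fine and matches the paper verbatim. The overall shape of your sample-complexity argument is also right, and your identification of the $(1+\eta)$ factor as coming from the trailing batch is exactly the point. But there is a structural gap in how you couple the good set with the tracking concentration.

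You define $\mathcal G_T(\epsilon')=\bigcap_{n=T^{1/4}}^T\{\hat\mu(n)\in\mathcal I_{\epsilon'}\}$ and then assert, as the ``batched analogue of Lemma~\ref{lem:app:ClosenessOfFracOfPulls}'', that \emph{on $\mathcal G_T$} and for $n\ge\sqrt T$ one has $\inf_{t\in t^*(\mu)}\|N(n)/n-t\|_\infty\le3\epsilon'$. This cannot be a deterministic statement on $\mathcal G_T$ for randomized tracking: unlike C-tracking (Lemma~\ref{app:lemm:CTrack}), randomized tracking does not give a pathwise $O(\sqrt n)$ bound between $N_a(n)$ and the cumulative target; it only gives a high-probability bound via Azuma--Hoeffding on the martingale $\sum_{j\in M_n}(I_a(j)-t_a(j))$. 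Your own step (c), ``the randomized-tracking concentration \dots bounds the gap by $O(\sqrt n)$'', is a probabilistic statement that is not implied by $\mathcal G_T$ as you wrote it, so the lemma as stated is false and the chain $Z(n)\ge nC^*_{\epsilon'}(\mu)$ on $\mathcal G_T$ breaks.

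The paper's fix is to enlarge the good set rather than try to prove tracking on it. Concretely (Appendix~\ref{app:sec:sample_batched}) the good set is taken over \emph{batch times} and is the intersection of the empirical-closeness event with the tracking event itself:
\[
\mathcal G_T(\epsilon')=\bigcap_{l=l_0(T)}^{l_2(T)}\{\hat\mu(t_l)\in\mathcal I_{\epsilon'}\}\ \cap\ \bigcap_{l=l_1(T)}^{l_2(T)}\Bigl\{\max_a\bigl|\tfrac{N_a(t_l)}{t_l}-t^*_a(\mu)\bigr|\le4\epsilon'\Bigr\},
\]
with $l_0(T)=l(T^{1/4})$ and $l_1(T)=l(T^{3/4})+1$. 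The lower bound $Z(t_l)\ge t_lC^*_{\epsilon'}(\mu)$ then holds trivially on $\mathcal G_T$, and all the work moves to Lemma~\ref{ProbOfCompOfGoodSet_batched}, where the first intersection is handled by DKW plus the forced-exploration lower bound (Lemma~\ref{lem:min_samples_batchedAlgo}), and the second by Azuma--Hoeffding applied to the coin-flip martingale. Note the $T^{3/4}$ threshold for the tracking part (not $\sqrt T$): it is needed so that the early-batch and non-coin-flip contributions, of order $t_{l_0(T)}/t_l$ and $t_l^{-1/2}$, are each at most $\epsilon'$ for $l\ge l_1(T)$, and so that the Azuma bound $\exp(-\epsilon'^2 t_l/2)\le\exp(-\epsilon'^2 T^{3/4}/2)$ is summable. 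Your convex-combination step via Lemma~\ref{app:lemm:ConvexWt} and Lemma~\ref{lem:Optw} is exactly the right ingredient for the ``weights close $\Rightarrow$ average close'' part; it just belongs inside the probability estimate for $\mathcal G_T^c$, not as a deterministic claim on $\mathcal G_T$.
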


The proof of the Theorem \ref{th:batched_sample_complexity} is similar to that of sample complexity part of Theorem \ref{th:DeltaAndSample} and can be found in Appendix \ref{app:sec:sample_batched} below. Thus, if \( \tau_\delta \) denotes the stopping time of the original algorithm, the batched algorithm stops after at most \( (1+\eta)\tau_\delta \). Moreover, it computes optimal weights roughly at times \( (1+\eta)^i \), for \(i \in \lrset{0,1,\dots, \frac{\log\tau_\delta}{1+\eta}+1}\). Thus, the computational cost of this algorithm is at most $ \lrp{\frac{\log \tau_\delta}{\log(1+\eta)}+1} c_1 + (1+\eta)^2\tau_\delta\frac{c_2}{\eta} $, which is roughly linear in \( \tau_\delta\), the number of samples generated.

\subsection{Algorithm}
The algorithm proceeds in batches, as below. Let \(t_l\) denote the time of beginning of \(l^{th} \) batch, and let \( b_l \) denote its size. We use the randomized sampling rule, as in \cite{pmlr-v117-agrawal20a}. The stopping and recommendation rules are as earlier (see Section \ref{Sec:Alg}).
\begin{itemize}
	\item Pull each arm \(K\) times. Initialize \(l=1\), \(t_l=K^2+1\) and \(b_l = \max\lrset{1,\ceil{\tilde{\eta} (t_l - K^2)}}\). 
	\item At the beginning of each batch, $l$, compute the optimal weights \( t^*(\Pi(\hat{\mu}(t_l))) \), where \( \Pi \) is the map that projects the argument on to \( \mathcal L^K\) in the Kolmogorov metric (see Section \ref{Sec:Alg} for details of the projection map). Check if the stopping condition is met. If not,
	\begin{enumerate}
		\item Compute starvation of each arm \(a\), defined as \(s_a =\max\lrset{0, (t_l + b_l)^{1/2} - N_a(t_l)} \). 
		\item If \( \sum_a s_a \le b_l \), generate \(s_a\) many samples from arm \(a\), for all arms. Furthermore, generate \( b_l - \sum_a s_a \) i.i.d. samples from the weight distribution, \( t^*\lrp{\Pi\lrp{\hat{\mu}(t_l)}}  \). For each arm \(a\), count the number of times \(a\) appears in the generated samples and sample arm a that many times. 
		\item Else if \( b_l < \sum_a s_a \), then generate \( \hat{s}_a \) samples from arm \(a\), where \(\hat{s} = \lrset{\hat{s}_1, \dots \hat{s}_K }\) is the solution to the following load balancing problem: \( \min_{\hat{s}}\max_{a}\lrset{s_a-\hat{s}_a} \) such that  \(\hat{s}_a \in \mathbb{N}\), \( \hat{s}_a \in [0, s_a], \) and \( \sum_a \hat{s}_a = b_l \). 
		\item Set \( t_{l+1} = t_{l} + b_l \), \( b_{l+1} = \max\lrset{1,\ceil{\tilde{\eta} (t_{l+1}-K^2)}} \), and \(l = l+1\), and repeat.
	\end{enumerate}
	\item  If the stopping condition is met, declare the empirically-best \(\cvar\)-arm as the answer.
\end{itemize}

Clearly, \( K^2 + \frac{(1+\tilde{\eta})^l-1}{\tilde{\eta}} \ge t_l \ge K^2 + (1+\tilde{\eta})^{l-1} \) and \( (1+\tilde{\eta})^l \ge   b_l \ge \tilde{\eta} (1+ \tilde{\eta} )^{l-1} \), and are deterministic. Moreover, for \(t > 0\),  let \(l(t)\) denote the batch \(l\) such that \( t_l \le t \le t_{l+1} \).

\begin{lemma}\label{lem:min_samples_batchedAlgo}
	The \(\tilde{\eta}\)-batched algorithm ensures that for all \(l \ge 1\), \( N_a(t_l)  \ge \frac{t_l^{\frac{1}{2}}}{K}-1. \)
\end{lemma}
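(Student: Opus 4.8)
The plan is to prove the bound by induction on the batch index $l$, maintaining the invariant $N_a(t_l) \ge \sqrt{t_l}/K - 1$ for every arm $a$. For the base case $l=1$, at time $t_1 = K^2+1$ each arm has been pulled exactly $K$ times during initialization, so $N_a(t_1) = K$, and since $\sqrt{K^2+1}/K - 1 < (K+1)/K - 1 = 1/K \le K$, the claim holds. For the inductive step, assume $N_a(t_l) \ge \sqrt{t_l}/K - 1$ for all $a$ and consider batch $l$, which allocates $b_l$ samples and advances the clock to $t_{l+1} = t_l + b_l$, with starvations $s_a = \max\lrset{0, \sqrt{t_{l+1}} - N_a(t_l)}$.

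If $\sum_a s_a \le b_l$ (the ``easy'' case), then each arm $a$ receives at least $s_a$ samples (the surplus $b_l - \sum_a s_a$ from the weight distribution can only add more), so $N_a(t_{l+1}) \ge N_a(t_l) + s_a = \max\lrset{N_a(t_l), \sqrt{t_{l+1}}} \ge \sqrt{t_{l+1}}$ (when $s_a = 0$ we use that $N_a(t_l) \ge \sqrt{t_{l+1}}$ and $N_a$ is non-decreasing), which is certainly $\ge \sqrt{t_{l+1}}/K - 1$.

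The substantive case is $b_l < \sum_a s_a$. Here $\sum_a s_a > b_l \ge 1 > 0$, so at least one arm is starved, and since the budget cannot clear all the unmet demand, the optimal load-balancing value $M := \max_a(s_a - \hat s_a)$ satisfies $0 < M < s_{\max} := \max_a s_a$. From the water-filling structure of the load-balancing assignment one reads off that every arm with $s_a \ge M$ — in particular the maximiser $a^* = \argmax_a s_a$ — finishes the batch at $N_{a^*}(t_l) + \hat s_{a^*} = \sqrt{t_{l+1}} - M$, while arms with $0 < s_a < M$ stay at $N_a(t_l) = \sqrt{t_{l+1}} - s_a > \sqrt{t_{l+1}} - M$ and unstarved arms remain $\ge \sqrt{t_{l+1}}$; hence $\min_a N_a(t_{l+1}) = \sqrt{t_{l+1}} - M = N_{a^*}(t_l) + \hat s_{a^*}$. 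Since at most $K$ arms contribute to $b_l = \sum_{a:\, s_a > M}(s_a - M)$, each by at most $s_{\max} - M = \hat s_{a^*}$, we obtain $\hat s_{a^*} \ge b_l/K$. Combining with the inductive hypothesis $N_{a^*}(t_l) \ge \sqrt{t_l}/K - 1$,
\[
\min_a N_a(t_{l+1}) \;\ge\; \frac{\sqrt{t_l} + b_l}{K} - 1 \;\ge\; \frac{\sqrt{t_l + b_l}}{K} - 1 \;=\; \frac{\sqrt{t_{l+1}}}{K} - 1,
\]
where the middle step uses $\lrp{\sqrt{t_l} + b_l}^2 = t_l + 2 b_l \sqrt{t_l} + b_l^2 \ge t_l + b_l$, valid since $b_l \ge 1$. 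This closes the induction.

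The main obstacle is precisely this second case: one must extract enough structure of the load-balancing solution to conclude \emph{simultaneously} that the minimum post-batch count is attained at the most-starved arm and that this arm receives at least a $1/K$-fraction of the batch's $b_l$ samples. A secondary technical point I would address is integrality of the $\hat s_a$ (and the rounding implicit in ``generate $s_a$ samples''); the exchange argument showing the highest-demand arm gets the most samples handles the former, and in the easy case any rounding loss is at most $1$, which is absorbed by the additive $-1$ slack in the claimed bound.
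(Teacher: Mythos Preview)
Your proof is correct and follows the same inductive skeleton as the paper, resting on the identical key inequality $\sqrt{t_l} + b_l \ge \sqrt{t_{l+1}}$ (equivalently $\sqrt{t_{l+1}} - \sqrt{t_l} \le b_l$). The paper's version is quite terse: it simply asserts that $\sqrt{t_{l_0+1}} - \sqrt{t_{l_0}}$ is ``the maximum number of samples the algorithm will need to allocate in order to ensure the inequality,'' leaving implicit why the load-balancing rule actually directs enough of the $b_l$ budget to the worst arm. Your explicit water-filling analysis---showing that $\min_a N_a(t_{l+1}) = N_{a^*}(t_l) + \hat s_{a^*}$ and that $\hat s_{a^*} \ge b_l/K$ because $b_l = \sum_{a:\,s_a > M}(s_a - M) \le K(s_{\max} - M)$---fills exactly this gap, so your argument is in fact more complete than the paper's on this point. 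Your caveat about integrality is well placed; the paper is equally informal there.
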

\begin{proof}
	Clearly, for \(l=1\), \(t_1 = K^2 + 1\), and each arm has \( K \ge \frac{t_1^{\frac{1}{2}}}{K}-1 \) samples. 
	Let the given statement be true for all \(l \le l_0\), for some \( l_0 \in \mathbb{N} \). Then, for \(l = l_0+1\)  the statement will be true if 
	$({ {t^{\frac{1}{2}}_{l_0 + 1}} - {t^{\frac{1}{2}}_{l_0}} }) \le \ceil{\tilde{\eta} (t_{l_0}-K^2)},$
	where r.h.s. is the number of samples available with the algorithm in the batch \( l_0+1 \), and l.h.s. is the maximum number of samples the algorithm will need to allocate in order to ensure the inequality in the lemma. The above is equivalent to showing  that \( \lrp{ t_{l_0} + \ceil{\tilde{\eta}(t_{l_0}- K^2)}}^{\frac{1}{2}} - t_{l_0}^{\frac{1}{2}}  < \ceil{\tilde{\eta} {t_{l_0}}-\tilde{\eta}K^2}. \) For positive \(a\) and \(b\), \( a^{\frac{1}{2}} + b^{\frac{1}{2}} \ge (a+b)^{\frac{1}{2}} \). Hence,
	$({ t_{l_0} + \ceil{\tilde{\eta}(t_{l_0}- K^2)}})^{\frac{1}{2}} - t_{l_0}^{\frac{1}{2}} \le \ceil{\tilde{\eta}t_{l_0}-\tilde{\eta} K^2 }^{\frac{1}{2}} \le \ceil{\tilde{\eta}t_{l_0}-\tilde{\eta} K^2 },$
	proving the desired inequality. 
\end{proof}

\subsection{\( \delta \)-correctness}\label{app:sec:delta_batched}
As in Section \ref{Sec:Alg}, our stopping rule corresponds to thresholding the \(Z\) statistic (see (\ref{eq:StoppingRule})). However, instead of checking this at each time, we do this only at the beginning of each batch. Formally, the stopping time, \( \tau_\delta \), lies in \( \lrset{t_l: l\in\mathbb{N}} \), where \( t_l \) corresponds to time of beginning of \(l^{th}\) batch. As earlier, error occurs when at time \( \tau_\delta \), the estimated best-arm is not arm \(1\). Thus, the error event is contained in
\[\lrset{\exists n: \bigcup\limits_{i\ne 1}~\lrset{\inf\limits_{x\leq y}~\lrset{ N_i(n)\KLinfU(\hat{\mu}_i(n),y) + N_1(n)\KLinfL(\hat{\mu}_1(n),x)}  \geq \beta;\; \mathcal{E}_n(i)}},\]
which can be bounded using Proposition \ref{prop:Deviations_of_sums}, as in Section \ref{Sec:Alg}. We omit the details here, and refer the reader to Section \ref{Sec:Alg}. 

\subsection{Sample complexity}\label{app:sec:sample_batched}

We now prove that the sample complexity of the batched-algorithm matches the lower bound upto a factor of \(1+\tilde{\eta}\), asymptotically as \(\delta\rightarrow 0 \), i.e., it satisfies 
\[ \limsup\limits_{\delta\rightarrow 0} \frac{\E{\mu}{\tau_{\delta}}}{\log\frac{1}{\delta}} \leq \frac{1+\tilde{\eta}}{V(\mu)}.\]

As in Section \ref{app:SampleComplexity}, we use the projections in the Kolmorogov metric, i.e., \( \Pi = \lrp{\Pi_1, \Pi_2,\dots, \Pi_K} \), where
\[ \Pi_i(\eta) \in \argmin\limits_{\kappa \in \cal L}~ d_K(\kappa,\eta),\quad \text{ and } \quad  d_K(\kappa,\eta) = \sup\limits_{x\in\Re}\abs{F_\kappa(x)-F_\eta(x)} ,\]
and \( F_{\kappa} \) and \( F_{\eta} \) denote the CDF functions for the measures \(\eta \) and \(\kappa \).  As earlier (Section \ref{app:SampleComplexity}), define  $\mathcal{I}_{\epsilon'} \triangleq B_{\zeta}(\mu_1)\times B_{\zeta}(\mu_2)\times\ldots\times B_{\zeta}(\mu_K), $ where  $B_{\zeta}(\mu_i) = \lrset{\kappa\in\mathcal{P}(\Re): ~d_K(\kappa,\mu_i) \leq \zeta},$ and \( \zeta > 0\) is chosen to satisfy the following:  
\begin{equation*}
\mu' \in \mathcal{I}_{\epsilon'} \implies \forall t'\in t^*\lrp{\Pi(\mu')}, ~ \exists t\in t^*\lrp{\mu}\text{ s.t. } \|t'-t\|_{\infty}\leq \epsilon'.
\end{equation*}

Recall that \(\mu\in\mathcal{M}\) is such that \( -f\inv\lrp{B} < c_\pi(\mu_1) < \max_{j\ne 1}c_\pi(\mu_j) < f\inv\lrp{\frac{B}{1-\pi}} \), where \(f\inv(c):= \max\lrset{y : f(y) = c}\). For $T \in \mathbb{N}$, define \(l_0(T) =  l(T^{\frac{1}{4}})   \), \( l_1(T) =l(T^{\frac{3}{4}}) +1\), \( l_2(T) = \max\lrset{l_1(T), l(T)-1} \), where for \( n \in \mathbb{N} \), \( l(n) \) denotes \(l\) such that \( t_{l} \le t\le t_{l+1} \) and \(t_l\) denotes time of beginning of \(l^{th}\) batch. Furthermore, let
$$\mathcal{G}_T(\epsilon') = \bigcap\limits_{l=l_0(T)}^{l_2(T)}\lrset{\hat{\mu}(t_l)\in\mathcal{I}_{\epsilon'}} \bigcap\limits_{l=l_1(T)}^{l_2(T)}\lrset{\max\limits_{a\in[K]} \abs{\frac{N_a(t_l)}{t_l} -t^*_a(\mu) } \le 4\epsilon' } .$$

Let $\mu'$ be a vector of \(K\), \(1\)-dimensional distributions from \(\mathcal{P}(\Re)\), \([K]=\lrset{1,\dots,K}\), and let $t'\in \Sigma_K$. Define 

\begin{equation*}
g(\mu',t') \triangleq \max_{a\in[K]}\min\limits_{b \ne a}~ \inf\limits_{x\in\left[ -f\inv\lrp{B}, f\inv\lrp{\frac{B}{1-\pi}} \right]}~ \lrp{t'_a \KLinfU(\mu'_1,x)+t'_b \KLinfL(\mu'_b,x)}.
\end{equation*}

Note that, for \(\mu\in \lrp{\mathcal{P}(\Re)}^K \),  from Lemma \ref{lem:lscklinf} and Berge's Theorem (see, \cite[Theorem 2, Page 116]{berge1997topological}), \(g(\mu,t)\) is a jointly lower-semicontinuous function of $(\mu, t)$. Let \(\|.\|_{\infty}\) be the maximum norm in \(\Re^K, \) and 
\begin{equation*}
C^*_{\epsilon'}(\mu) \triangleq \inf\limits_{\substack{\mu'\in\mathcal{I}_{\epsilon'} \\ t': \inf_{t\in t^*(\mu)}\|t'-t\|_{\infty}\leq 4\epsilon'}} g(\mu',t').
\end{equation*}

Recall that for \(n \in \mathbb{N}\), the modified log generalized likelihood ratio statistic for \(\hat{\mu}(n)\), used in the stopping rule, is given by \(Z(n) = \max\nolimits_{a}\min\nolimits_{b\ne a} Z_{a,b}(n) \), where
\begin{equation*}
Z_{a,b}(n) = n\inf\limits_{x\in\left[ -f\inv\lrp{B}, f\inv\lrp{\frac{B}{1-\pi}} \right]}~ \lrp{\frac{N_a(n)}{n} \KLinfU(\hat{\mu}_a(n),x)+\frac{N_b(n)}{n} \KLinfL(\hat{\mu}_b(n),x)}.
\end{equation*}

On \(\mathcal{G}_T(\epsilon') \), for \( T\geq K+1 \) and \(l\in\mathbb{N}\) such that  \( l_2(T) \ge l \geq l_1(T) \), 
\begin{equation}
\begin{aligned}
\label{eq:StoppingChar}
Z(t_l) &= t_l ~ \max_a\min\limits_{b \ne a} \inf\limits_{x\in\left[ -f\inv\lrp{B}, f\inv\lrp{\frac{B}{1-\pi}} \right]}~  \lrp{\frac{N_a(t_l)}{t_l} \KLinfU(\hat{\mu}_a(t_l),x)+ \frac{N_b(t_l)}{t_l} \KLinfL(\hat{\mu}_b(n),x)}\\
&= t_l ~ g\lrp{\hat{\mu}(t_l),\lrset{\frac{N_1(t_l)}{t_l},\dots, \frac{N_K(t_l)}{t_l}} }\\
&\geq t_l ~ C^*_{\epsilon'}(\mu).
\end{aligned}
\end{equation}

Furthermore, for \(T \geq  K^2+1 \), on $\mathcal{G}_{T}(\epsilon')$, 
\begin{equation}
\label{eq:minTauT_batched}
\begin{aligned}
\min\{\tau_{\delta}, T\}
&\leq t_{l_1(T)} + \sum\limits_{l = l_1(T)+1}^{l_2(T)} 
b_{l}\bm{1}\lrp{t_l < \tau_{\delta}}\\
&= t_{l_1(T)} + \sum\limits_{l = l_1(T)+1}^{l_2(T)} b_l \bm{1}\lrp{Z\lrp{t_l}< \beta\lrp{t_l,\delta}}\\
&\leq t_{l_1(T)} + \sum\limits_{l = l_1(T)+1}^{l_2(T)} b_l \bm{1}\lrp{t_l<\frac{ \beta\lrp{t_l,\delta}}{C^*_{\epsilon'}(\mu)}}\\
&\leq t_{l_1(T)} + \frac{\beta(t_{l_2(T)},\delta)}{C^*_{\epsilon'}(\mu)} + b_{l_2(T)}\\
&\leq t_{l_1(T)} + (1+\tilde{\eta}) \frac{\beta(T,\delta)}{C^*_{\epsilon'}(\mu)}+1,
\end{aligned}
\end{equation}

where for the last inequality, we use monotonicity of \( \beta(\cdot,\cdot) \) in the first argument, and that \( b_{l_2(T)} \le \tilde{\eta}\frac{\beta(T,\delta)}{C^*_{\epsilon'}(\mu)} + 1. \)
Next, define 
$$T_0(\delta) = \inf\lrset{n \in \mathbb{N} : t_{l_1(n)}+ (1+\tilde{\eta})\frac{\beta(n,\delta)}{C^*_{\epsilon'}(\mu)} +1 \leq n}.$$ 

On \(\mathcal{G}_T\), for \(T\geq \max\lrset{T_0(\delta), K^2+1}\), from (\ref{eq:minTauT_batched}) and definition of \(T_0(\delta)\),
\[\min\lrset{\tau_{\delta},T}\leq t_{l_1(T)} + (1+\tilde{\eta})\frac{\beta(T,\delta)}{C^*_{\epsilon'}(\mu)} \leq T, \]
which gives that for such a \(T,\) \(\tau_\delta \leq T \). Thus, for $T\geq \max\lrset{T_0(\delta), K^2+1}$, we have $\mathcal{G}_{T}(\epsilon')\subset \lrset{\tau_{\delta} \leq T}$ and hence, $\mathbb{P}_{\mu}\lrp{\tau_{\delta} > T}\leq \mathbb{P}_{\mu}(\mathcal{G}^{c}_{T})$. Moreover, for a constant \(T_{\epsilon'}\), Lemma \ref{ProbOfCompOfGoodSet} bounds the probability of \( \mathcal{G}^c_T \) for \( T\ge T_{\epsilon'} \). 
Since \(\tau_{\delta}\geq 0\), 
\begin{equation}
\label{eq:ExpStopTime_batched}
\begin{aligned}
\mathbb{E}_{\mu}(\tau_{\delta}) \leq T_0(\delta) + K^2+1 + T_{\epsilon'}+ \sum\limits_{T=T_0(\delta) +K^2+1+T_{\epsilon'}}^{\infty}\mathbb{P}_{\mu}\lrp{\mathcal{G}^c_T(\epsilon')}.
\end{aligned}
\end{equation}

For \(\tilde{e}>0\), it can be shown that  
\[\limsup\limits_{\delta \longrightarrow 0}\frac{T_{0}(\delta)}{\log{\lrp{1/\delta}}} \leq \frac{(1+\tilde{\eta})(1+\tilde{e})}{C^*_{\epsilon'}(\mu)}. \numberthis \label{eq:boundOnT0_batched} \]

Then, from (\ref{eq:ExpStopTime_batched}), (\ref{eq:boundOnT0_batched}), and Lemma \ref{ProbOfCompOfGoodSet_batched}, 
\begin{equation*}
\limsup\limits_{\delta\rightarrow 0}\frac{\mathbb{E}_{\mu}(\tau_{\delta})}{\log\lrp{1/\delta}} \leq \frac{(1+\tilde{\eta})(1+\tilde{e})}{C^*_{\epsilon'}(\mu)}.
\end{equation*}

From lower-semicontinuity of \( g(\mu',t' ) \) in \((\mu',t')\) for \(\mu'\in \lrp{\mathcal{P}(\Re)}^K\) , it follows that \( \liminf\limits_{n\rightarrow \infty} C^*_{\epsilon'}(\mu) \geq  V(\mu).\)  First letting \(\tilde{e}\rightarrow 0 \) and then letting \(\epsilon'\rightarrow 0 \), we get the desired inequality. 

\begin{lemma}
	\label{ProbOfCompOfGoodSet_batched}
	Let \( T_{\epsilon'} = {\epsilon^{'-8/3}} \). Then, 
	$\limsup\limits_{\delta\rightarrow 0} \frac{\sum\limits_{T=T_{\epsilon'}}^{\infty} \mathbb{P}_{\mu}(\mathcal{G}^c_{T}(\epsilon'))}{\log\lrp{1/\delta}} = 0. $
\end{lemma}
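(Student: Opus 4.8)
The plan is to show that the series $\sum_{T = T_{\epsilon'}}^{\infty} \mathbb{P}_{\mu}(\mathcal G^c_T(\epsilon'))$ converges to a finite constant depending only on $\mu,\epsilon',K$ and $\tilde\eta$; the claim is then immediate, since dividing a fixed finite quantity by $\log(1/\delta)$ and letting $\delta \to 0$ gives the limit $0$. The starting point is to decompose the bad event: by a union bound over the two intersections that define $\mathcal G_T(\epsilon')$,
\[
\mathbb{P}_{\mu}(\mathcal G^c_T(\epsilon'))
~\le~ \sum_{l = l_0(T)}^{l_2(T)} \mathbb{P}_{\mu}\lrp{\hat\mu(t_l) \notin \mathcal I_{\epsilon'}}
~+~ \sum_{l = l_1(T)}^{l_2(T)} \mathbb{P}_{\mu}\lrp{\max_{a \in [K]} \abs{\tfrac{N_a(t_l)}{t_l} - t^*_a(\mu)} > 4\epsilon'}.
\]
I would bound the two sums separately and then interchange the order of summation over $T$ and $l$.

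\textbf{The estimation term.} Here I would repeat the argument of the proof of Lemma \ref{ProbOfCompOfGoodSet}. Lemma \ref{lem:min_samples_batchedAlgo} guarantees $N_a(t_l) \ge \sqrt{t_l}/K - 1$ for every arm $a$ and every batch $l$, so a union bound over arms together with the DKW inequality yields, for the $\zeta = \zeta(\epsilon')$ that defines $\mathcal I_{\epsilon'}$,
\[
\mathbb{P}_{\mu}\lrp{\hat\mu(t_l) \notin \mathcal I_{\epsilon'}}
~\le~ K \sum_{j \ge \sqrt{t_l}/K - 1} 2 e^{-2 j \zeta^2}
~\le~ c_1 e^{-c_2 \sqrt{t_l}},
\]
for constants $c_1,c_2 > 0$. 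Since $t_l \ge K^2 + (1+\tilde\eta)^{l-1}$, this probability decays doubly exponentially in $l$.

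\textbf{The tracking term (main obstacle).} This step has no counterpart in Lemma \ref{ProbOfCompOfGoodSet}: in the non-batched proof, allocation concentration was a \emph{deterministic} consequence of $\mathcal G_T$ via C-tracking (Lemma \ref{lem:app:ClosenessOfFracOfPulls}), whereas the batched algorithm uses the \emph{randomized} tracking of \cite{pmlr-v117-agrawal20a}, so the concentration must be proved probabilistically. I would condition on $\bigcap_{l' = l_0(T)}^{l}\lrset{\hat\mu(t_{l'}) \in \mathcal I_{\epsilon'}}$: on this event, by the choice of $\zeta$, every weight vector $t^*(\Pi(\hat\mu(t_{l'})))$ computed at a batch start lies within $\epsilon'$ of $t^*(\mu)$, and since $t^*(\mu)$ is convex (Lemma \ref{lem:Optw}), the batch-size-weighted running average of these vectors is again within $\epsilon'$ of $t^*(\mu)$ (the batched analogue of Lemma \ref{app:lemm:ConvexWt}). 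The realized counts $N_a(t_l)$ differ from this average only by (i) the $\sqrt{\cdot}$ starvation/forced-exploration corrections, which contribute $O(K\sqrt{t_l})$ pulls in total, and (ii) the fluctuation of the i.i.d.\ multinomial draws around their conditional mean, which forms a bounded-increment martingale and hence, by Azuma--Hoeffding, is at most $\epsilon' t_l$ except on an event of probability $\le c_3 e^{-c_4 \epsilon'^2 t_l}$. Collecting these, once $t_l$ exceeds a threshold of order $(K/\epsilon')^2$ — which, because $t_{l_1(T)}$ is of order $T^{3/4}$, corresponds to $T \gtrsim (K/\epsilon')^{8/3}$, explaining the choice $T_{\epsilon'} = \epsilon'^{-8/3}$ — the total deviation is at most $4\epsilon'$, and the failure probability is bounded by $c_1 e^{-c_2 \sqrt{t_l}} + c_3 e^{-c_4 \epsilon'^2 t_l}$, again doubly exponential in $l$.

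\textbf{Summation over $T$.} Finally, a fixed batch index $l$ lies in the range $[l_0(T),l_2(T)]$ only for $T$ in an interval of length at most $\mathrm{poly}(t_l)$ (indeed $l \le l_2(T) \le l(T)$ forces $t_l \le T$, while $l \ge l_0(T) = l(T^{1/4})$ forces $t_l \gtrsim T^{1/4}/(1+\tilde\eta)$, so $T \lesssim t_l^4$). Hence
\[
\sum_{T = T_{\epsilon'}}^{\infty} \mathbb{P}_{\mu}(\mathcal G^c_T(\epsilon'))
~\le~ \sum_{l} C\, t_l^4 \lrp{c_1 e^{-c_2 \sqrt{t_l}} + c_3 e^{-c_4 \epsilon'^2 t_l}},
\]
which converges because $t_l$ grows geometrically in $l$ (so $t_l \gtrsim (1+\tilde\eta)^l$) while the bracketed term decays doubly exponentially in $l$; this finiteness proves the lemma. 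I expect the tracking term to be the crux — setting up the correct martingale for the across-batch multinomial draws, correctly absorbing the starvation corrections, and calibrating the threshold $T_{\epsilon'} = \epsilon'^{-8/3}$ — while the estimation term and the summation are routine.
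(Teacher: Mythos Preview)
Your proposal is correct and follows essentially the same approach as the paper: the same union-bound decomposition into an estimation term (handled via DKW and Lemma~\ref{lem:min_samples_batchedAlgo}) and a tracking term (handled by intersecting with the good estimation event, bounding the forced-exploration and weight-drift contributions deterministically by $\epsilon'$ each once $T \ge \epsilon'^{-8/3}$, and controlling the martingale fluctuation of the randomized draws via Azuma--Hoeffding). The only cosmetic difference is in the final summation: the paper bounds the inner $l$-sum by a factor $T$ and sums a $\mathrm{poly}(T)\,e^{-c\,T^{\alpha}}$ tail directly over $T$, whereas you interchange the $T$- and $l$-sums and count $\#\{T : l \in [l_0(T),l_2(T)]\} \le \mathrm{poly}(t_l)$ --- both give a finite, $\delta$-independent bound.
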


\begin{proof}
	The proof of this is similar to that in \cite[Lemma 32]{pmlr-v117-agrawal20a}. However, the batch sizes in our algorithm may not be constant. We modify the proof to allow for this flexibility. 
	
	Recall that for $T \in \mathbb{N}$ and \(T > K^2\), \(l_0(T) =  l(T^{\frac{1}{4}})   \), \( l_1(T) =l(T^{\frac{3}{4}}) +1\), \( l_2(T)\) equal \(\max\lrset{l_1(T),l(T)-1)} \), for \(l\in\mathbb{N}\), \(t_l\) denotes the beginning of \(l^{th}\) batch, and
	$$\mathcal{G}_T(\epsilon') = \bigcap\limits_{l=l_0(T)}^{l_2(T)}\lrset{\hat{\mu}(t_l)\in\mathcal{I}_{\epsilon'}} \bigcap\limits_{l=l_1(T)}^{l_2(T)}\lrset{\max\limits_{a\in[K]} \abs{\frac{N_i(t_l)}{t_l} -t^*_a(\mu) } \le 4\epsilon' } .$$
	Let 
	$$\mathcal{G}^1_T(\epsilon') \triangleq \bigcap\limits_{l=l_0(T)}^{l_2(T)}\lrset{\hat{\mu}(t_l)\in\mathcal{I}_{\epsilon'}}.$$	
	Using union bounds,  
	\begin{align*}
	\mathbb{P}_{\mu}(\mathcal{G}^{c}_{T}(\epsilon')) 
	&\leq \sum\limits_{l=l_0(T)}^{l_2(T)} \mathbb{P}_{\mu}\lrp{\hat{\mu}(t_l)\not\in \mathcal{I}_{\epsilon'}}  + \sum\limits_{l=l_1(T)}^{l_2(T)} \sum\limits_{i=1}^K \mathbb{P}\lrp{\abs{\frac{N_a(t_l)}{t_l} - t^*_i(\mu) } \ge 4\epsilon', \mathcal{G}^1_T(\epsilon') }.\numberthis \label{eq:Tobound}
	\end{align*}
	The first term above can be bounded by 
	$$ \sum\limits_{l=l_0(T)}^{l_2(T)} \sum\limits_{a=1}^K \mathbb{P}\lrp{\sup_x\abs{F_{\hat{\mu}_a(t_l)}(x) - F_a(x)} \geq \epsilon' } .$$
	From Lemma \ref{lem:min_samples_batchedAlgo}, the algorithm ensures at least \(\frac{\sqrt{t_l}}{K} - 1 \ge \sqrt{t_l}/(2K) \) samples to each arm till time \(t_l\). Using this, each summand in the bound above can be bounded as follows:
	\[ \mathbb{P}\lrp{\sup_x\abs{F_{\hat{\mu}_a(t_l)}(x) - F_a(x)} \geq \epsilon' } \leq \mathbb{P}\lrp{\sup_x\abs{F_{\hat{\mu}_a(t_l)}(x) - F_a(x)} \geq \epsilon'; N_a(t_l) \geq \frac{\sqrt{t_l}}{2K} } .\]
	R.h.s. in the above inequality can be bounded using union bound and DKW inequality by 
	\[ \sum\limits_{j = \sqrt{t_l}/(2K)}^{t_l}  e^{-2j\epsilon^{'2}} \le e^{-\epsilon^{'2}\frac{\sqrt{t_l}}{K}}\lrp{1-e^{-2\epsilon^{'2}}}\inv . \]
	Thus, the first term in (\ref{eq:Tobound}) is bounded by 
	$ KTe^{-\epsilon^{'2}\frac{T^{1/8}}{K}}\lrp{1-e^{-2\epsilon^{'2}}}\inv. $

	To bound the other term in (\ref{eq:Tobound}), for \( l \in \lrset{l_1(T), \dots, l_2(T)} \), let \( M_{t_l} \) denote the set of times in \( \lrset{1,\dots, t_l} \) when the algorithm flipped coins to decide which arm to pull. Define 
	\[ A_2 \triangleq \frac{1}{t_l} \sum\limits_{j \in M_{t_l}} \abs{t^*_i(\Pi\lrp{\hat{\mu}(j)}) - t^*_i(\mu) }, \quad \text{ and }\quad A_3 \triangleq \frac{1}{t_l} \sum\limits_{j \notin M_{t_l} } \abs{I_i(j) - t^*_i(\mu) },   \]
	where \( I_i(j )\) is the indicator that \(i^{th}\) arm is pulled on \(j^{th}\) time step, and \( \hat{\mu}(j) \) denotes the empirical distribution vector at the beginning of the batch to which the time \(j\) belongs. Using these,
	\[ \mathbb{P}\lrp{ \abs{\frac{N_i(t_l)}{t_l} - t^*_i(\mu) } \geq 4\epsilon', \mathcal{G}^1_T} \leq   \mathbb{P}\lrp{\frac{1}{t_l} \abs{\sum\limits_{j\in M_{t_l}} (I_i(j) - t^*_i(\Pi\lrp{\hat{\mu}(j)})) } + A_2 + A_3 \geq 4\epsilon', \mathcal{G}^1_T  }.  \]
	
	Since \( \abs{I_i(j) - t^*_i(\mu)} \le 1 \), and from Lemma \ref{lem:min_samples_batchedAlgo} we have that \( t_l - \abs{M_{t_l}} \le {t^{1/2}_l}.\) For \(T \ge T_{\epsilon'}\) and \(l \ge l_1(T) \), \(A_3\) above satisfies
	\[ A_3 \le \frac{\sqrt{t_l}}{t_l} \le \frac{1}{\sqrt{t_{l_1(T)}}} \le  \frac{1}{T^{3/8}} \le \epsilon'. \]
	Next, 
	\[ A_2 = \frac{1}{t_l} \sum\limits_{\substack{j\in M_{t_l}\\ j < t_{l_0(T)} }} \abs{t^*_i(\Pi\lrp{\hat{\mu}(j)}) - t^*_i(\mu) } + \frac{1}{t_l} \sum\limits_{\substack{j\in M_{t_l}\\ j \ge t_{l_0(T)} }}\abs{t^*_i(\Pi\lrp{\hat{\mu}(j)}) - t^*_i(\mu) }. \]
	
	If \( t_{l_0(T)} \le K^2 \), then the first term above is \(0\) since in this case, \( M_{t_l} \cap \lrset{1,\dots , t_{l_0(T)}}  \) is empty, as the algorithm does not flip any coins in this period. Otherwise, the first term is bounded by $\frac{t_{l_0(T)}}{t_{l_1(T)}}$, which is further bounded by $ \frac{1}{T^{1/2}} $, which for \( T \ge T_{\epsilon'} \), is bounded by \(\epsilon' \). 
	
	On \( \mathcal{G}^1_T(\epsilon') \), the second term in \(A_2\) is atmost \( \epsilon' \), since for \( j\ge t_{l_0(T)} \) \( \hat{\mu}(j) \in \mathcal{I}_{\epsilon'} \). Thus, \(A_2 \le 2\epsilon' \). 	Thus, for \( T\ge T_{\epsilon'} = \frac{1}{\epsilon^{'8/3}} \), and for \( l \ge l_1(T) \), 
	\begin{align*}
	\mathbb{P}\lrp{ \abs{\frac{N_i(t_l)}{t_l} - t^*_i(\mu) } \geq 4\epsilon', \mathcal{G}^1_T} \leq   \mathbb{P}\lrp{\abs{\sum\limits_{j\in M_{t_l}} (I_i(j) - t^*_i(\Pi\lrp{\hat{\mu}(j)})) }  \geq t_l\epsilon', \mathcal{G}^1_T  }.
	\end{align*}
	
	Let \( S_n = \sum_{j\in M_{n}} (I_i(j) - t^*_i(\mu)) \). Clearly, \(S_n\) is a sum of \(0\)-mean random variables. Whence, it is a martingale, and satisfies \( \abs{S_{n+1}-S_n} \le 1 \). Azuma-Hoeffding inequality then gives,
	\begin{align*}
	\mathbb{P}\lrp{ \abs{\frac{N_i(t_l)}{t_l} - t^*_i(\mu) } \geq 4\epsilon', \mathcal{G}^1_T} \leq  2 \exp\lrp{-\frac{\epsilon^{'2} t^2_l }{2 \abs{M_{t_l}} }} \le 2 \exp\lrp{-\frac{\epsilon^{'2} t_l }{2}} \le 2 \exp\lrp{-\frac{\epsilon^{'2} T^{3/4} }{2}},
	\end{align*} 
	where for the last inequality, we used that \(l \ge l_1(T) \). Summing this over \( l \) and \(i\), the second term in (\ref{eq:Tobound}) is bounded by 
	\[ 2KT \exp\lrp{-\frac{\epsilon^{'2} T^{3/4}}{2}}. \]
\end{proof}

\section{Details on the Experiments}\label{sec:experiment.details}

In this section we report the numerical studies undertaken to validate our methods. We are interested in the question whether the asymptotic sample complexity result of Theorem~\ref{th:DeltaAndSample} is representative at reasonable confidence $\delta$. Whether this is the case or not differs greatly between pure exploration setups. \cite{garivier2016optimal} see state-of-the-art numerical results in Bernoulli BAI for Track-and-Stop with $\delta=0.1$, while \cite{DBLP:conf/nips/DegenneKM19} present a Minimum Threshold problem instance where the Track-and-Stop asymptotics have not kicked in yet at $\delta = 10^{-20}$.\footnote{\cite[Figure~2]{DBLP:conf/nips/DegenneK19} show that (unmodified) Track-and-Stop is not asymptotically optimal for problems with multiple correct answers including $(\epsilon,\delta)$-BAI. They have to go out to $\delta = e^{-80}$ to see the suboptimal asymptotics.}
The latter work suggests the difference may very well lie in the specifics of the lower-bound optimisation problem for each task, with the good case arising when the optimal solution $\bm{t}^*$ to the lower bound puts positive mass on all arms, so that convergence of estimates does not require forced exploration. Our heavy-tailed best CVaR problem \eqref{eq:LB} indeed has full support, and our experiments confirm that the approach is practical at moderate $\delta$.

To focus on the heavy-tailed regime, we select arm distributions for which higher moments do not exist. In particular, we choose Fisher-Tippett ($F(\mu,\sigma,\gamma)$), Pareto ($P(\mu,\sigma,\gamma)$), and mixtures of Fisher-Tippett arms (these heavy tailed distributions arise in extreme value theory). The standard Fisher-Tippet distribution with shape parameter $\gamma$ has CDF $F^F_\gamma(x) = e^{-(1+\gamma x)^{-1/\gamma}}$ (continuously extended to $\gamma = 0$), and this is lifted to three parameters $F^F_{\mu,\sigma,\gamma}(x) = F^F_\gamma(\frac{x-\mu}{\sigma})$ by adding a location $\mu$ and scale $\sigma$. The $m$-th moment of $F_\gamma$ exists iff $\gamma < 1/m$. Similarly, CDF for $P(\mu,\sigma,\gamma)$ is given by $F^P_{\mu,\sigma,\gamma}(x) := 1-(1+\gamma\lrp{\frac{x-\mu}{\sigma}})^{-1-1/\gamma}$. 
For $\gamma > 0$, both $F(\mu,\sigma,\gamma)$ and $P(\mu,\sigma,\gamma)$, have unbounded support on the positive axis. $F(\mu,\sigma,\gamma)$ has unbounded support on the negative axis for $\gamma < 0$. We create interesting two-sided distributions by taking (binary) mixtures of these.

In our first experiment, we look at the distribution of the stopping time of the algorithm as a function of $\delta$. In this setup, there are three arms: arm 1 is a uniform mixture of $F(-1, 0.5, 0.4)$ and $F(-3, 0.5, -0.4)$, arm 2 is $P(0,0.2,0.55)$ and arm 3 is $F(-0.5, 1, 0.1)$ with respective CVaRs at quantile $0.6$ being $-0.1428$, $0.974$ and $1.547$. We select $\epsilon = 0.7$ and $B=4.5$. This is a moderately hard problem of complexity $V\inv(\mu)^* = 49.7$. The arm-densities are shown in Figure \ref{fig:apparmdens}.

\begin{figure}
	\centering
	\subfloat[Histogram of stopping time\label{fig:apphists}]{
	\includegraphics[width=.5\textwidth]{sample_complexity.pdf}
	}%
	\subfloat[Arm densities\label{fig:apparmdens}]{
	\includegraphics[width=.5\textwidth]{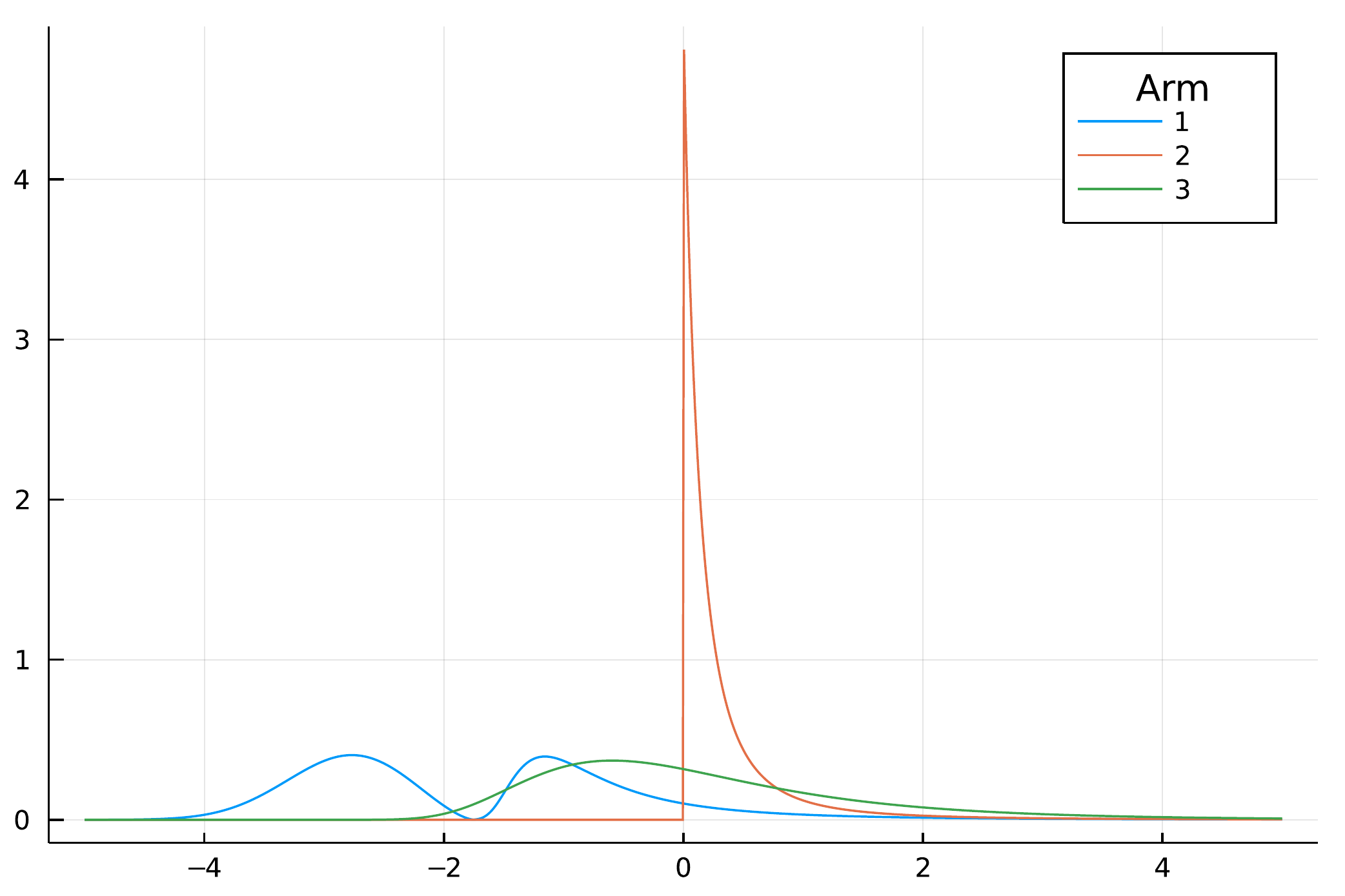}
	}
	\caption{Histogram of stopping times based among 1000 runs on 3 arms with heavy-tailed distributions, with densities shown in (b), as a function of confidence $\delta$. Vertical bars indicate the lower bound \eqref{eq:LB} (solid), and a version adjusted to our stopping threshold \eqref{eq:StoppingRule} (dashed).}
\end{figure}

Figure \ref{fig:apphists} shows histograms of the sample complexity, together with the lower bound (solid vertical line) and a second reference point (dashed line) which is the $n$ that solves $n = V(\mu)\inv \beta(n, \delta)$, i.e.\ the time by which our stopping threshold activates for the optimal sampling allocation. We see that, for a range of practical $\delta$, the actual stopping time is very close to it. In particular, this means that the algorithms learns to approximate the optimal sampling strategy. We thus conclude that even at moderate $\delta$ the average sample complexity closely matches the lower bound, especially after adjusting it for the lower-order terms in the employed stopping threshold $\beta(n, \delta)$. This demonstrates that our asymptotic optimality is in fact indicative of the performance in practice.

\begin{figure}
	\centering
	\includegraphics[height=0.5\textwidth]{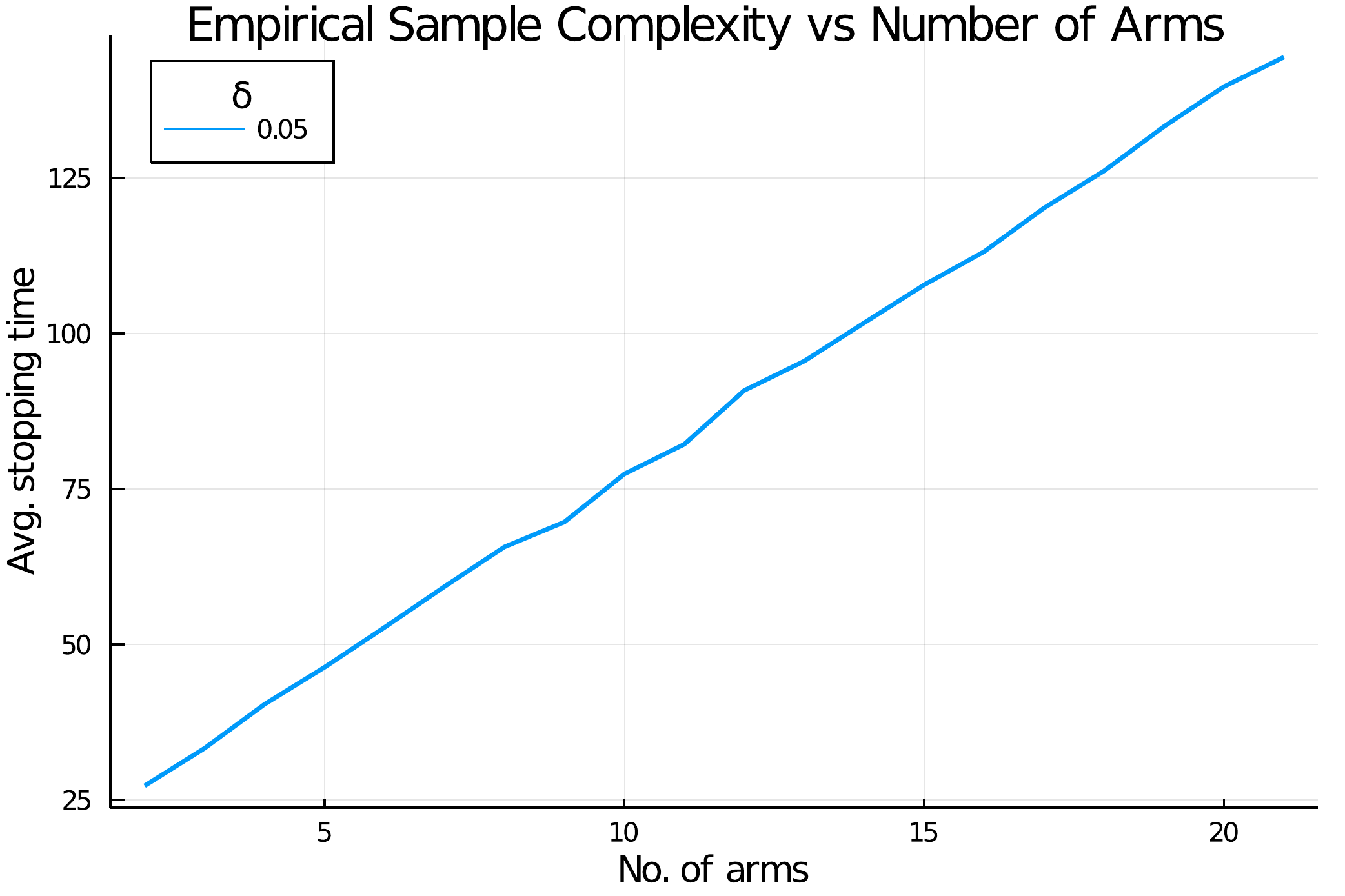}
	\caption{Stopping time (empirical sample complexity) of the algorithm at $\delta = 0.05$ as a function of number of arms. Each data point is an average of 1000 independent runs.}\label{fig:app_no_arms}
\end{figure}

In our second experiment, we let $\cal L$ be the collection of all distributions with $1.7^{th}$-moment bounded by $4.5$. We demonstrate in Figure \ref{fig:app_no_arms} that the stopping time of the algorithm (empirical sample complexity), at $\delta = 0.05$, increases linearly with the number of arms, though currently theory shows a dependence of $K^4$, where $K $ is number of arms, in the lower-order terms (see the very last line of Lemma~\ref{lem:app:ClosenessOfFracOfPulls}). The experiment suggests that this $K^4$ dependency is an artefact, as it does not materialise in practise. For this experiment, we start with a 2-armed bandit: arm 1 being a uniform mixture of $F(-1, 0.5,.4)$ and $F(-3, 0.5, -.4)$, and arm 2 being $P(2.25,0.1,0.01)$. The CVaRs for these arms at $0.6^{th}$ quantile are $-0.1428$ and $2.4439$, respectively. Here, arm 2 is sub-optimal (recall that we are interested in the arm with minimum CVaR). We then keep adding more arms which are replicas of arm 2, thus minimizing the effect of other factors on the sample complexity.

In our final experiment, we look at the dependence of empirical sample complexity on the parameter $B$. The arms are the same as in our previous experiment, i.e., arm 1 is a uniform mixture of $F(-1, 0.5,.4)$ and $F(-3, 0.5, -.4)$, and arm 2 is $P(2.25,0.1,0.01)$. In this experiment, we change the comparator class $\cal L$ by changing only $B$. $\epsilon$ is set to one of $0.5, 0.7$, or $1.1$, and we start with $B=7.5$, and increase it upto $20$, in steps of 0.5. Figure \ref{fig:app_VB} plots the stopping time of the algorithm as a function of $B$, for the 3 different values of $\epsilon$. It demonstrates that for $\epsilon < 1$, the dependence is convex, approaching to linear-dependence as $\epsilon \rightarrow 1$. We in fact sketch approximate lower and upper bounds for $V(\mu)$ (Section \ref{App:LBA}), a quantity that characterizes the asymptotic sample complexity. These bounds show that sample complexity scales as $B^\frac{1}{\epsilon}$ for $\epsilon < 1$, and scales linearly for $\epsilon > 1$. This is clearly visible from the graph in Figure \ref{fig:app_VB}, where the blue curve, which corresponds to $\epsilon = 0.5$, can be checked to be quadratically increasing, and the curve in Figure \ref{fig:app_VB_1_1} demonstrates a linear dependence for $\epsilon = 1.1$. 

\begin{figure}
  \centering
  \subfloat[Stopping time for 3 different $\epsilon$.\label{fig:app_VB}]{%
    \includegraphics[width=.5\textwidth]{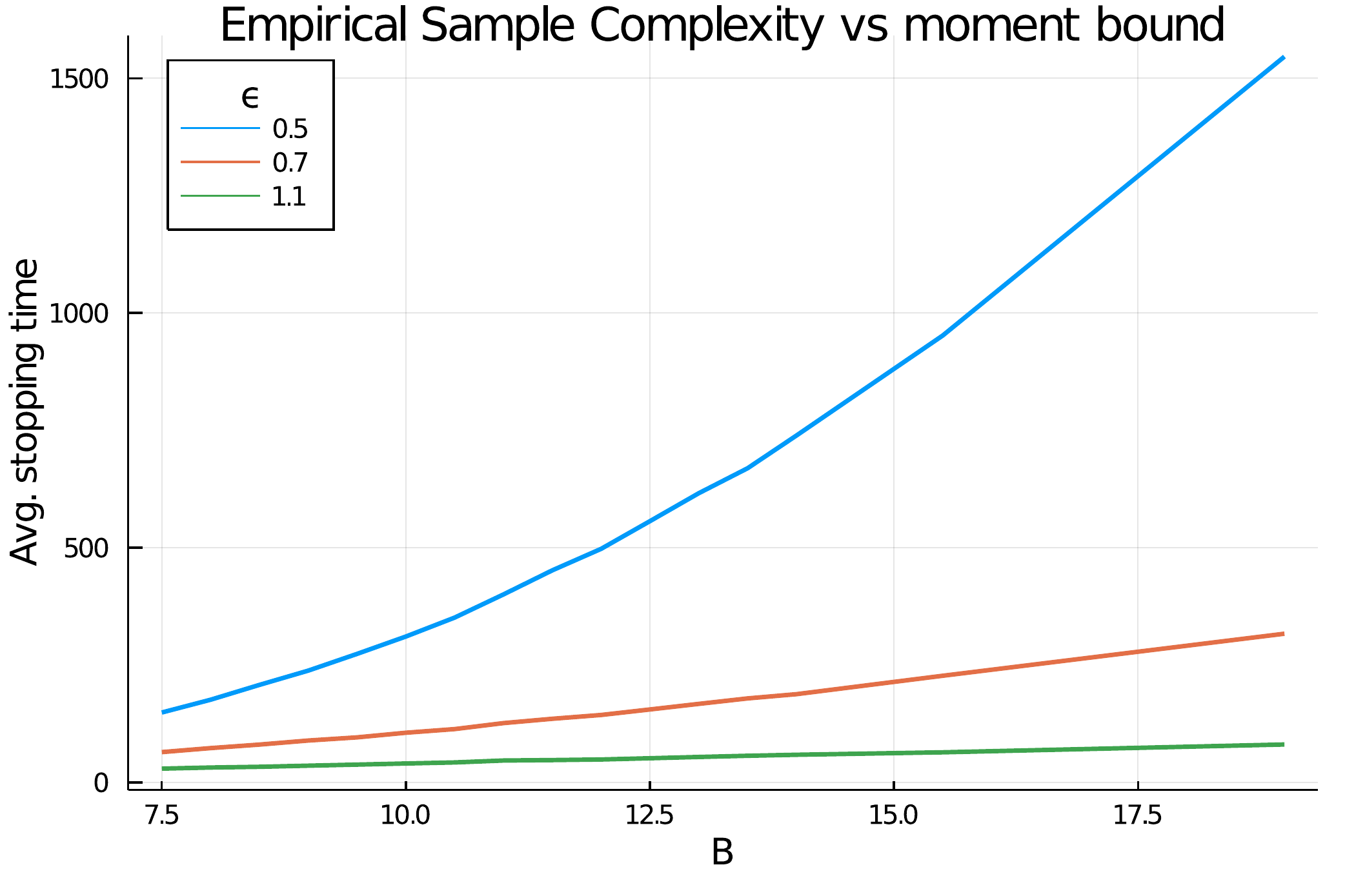}
  }%
  \subfloat[Zoomed in for $\epsilon = 1.1$\label{fig:app_VB_1_1}]{%
    \includegraphics[width=.5\textwidth]{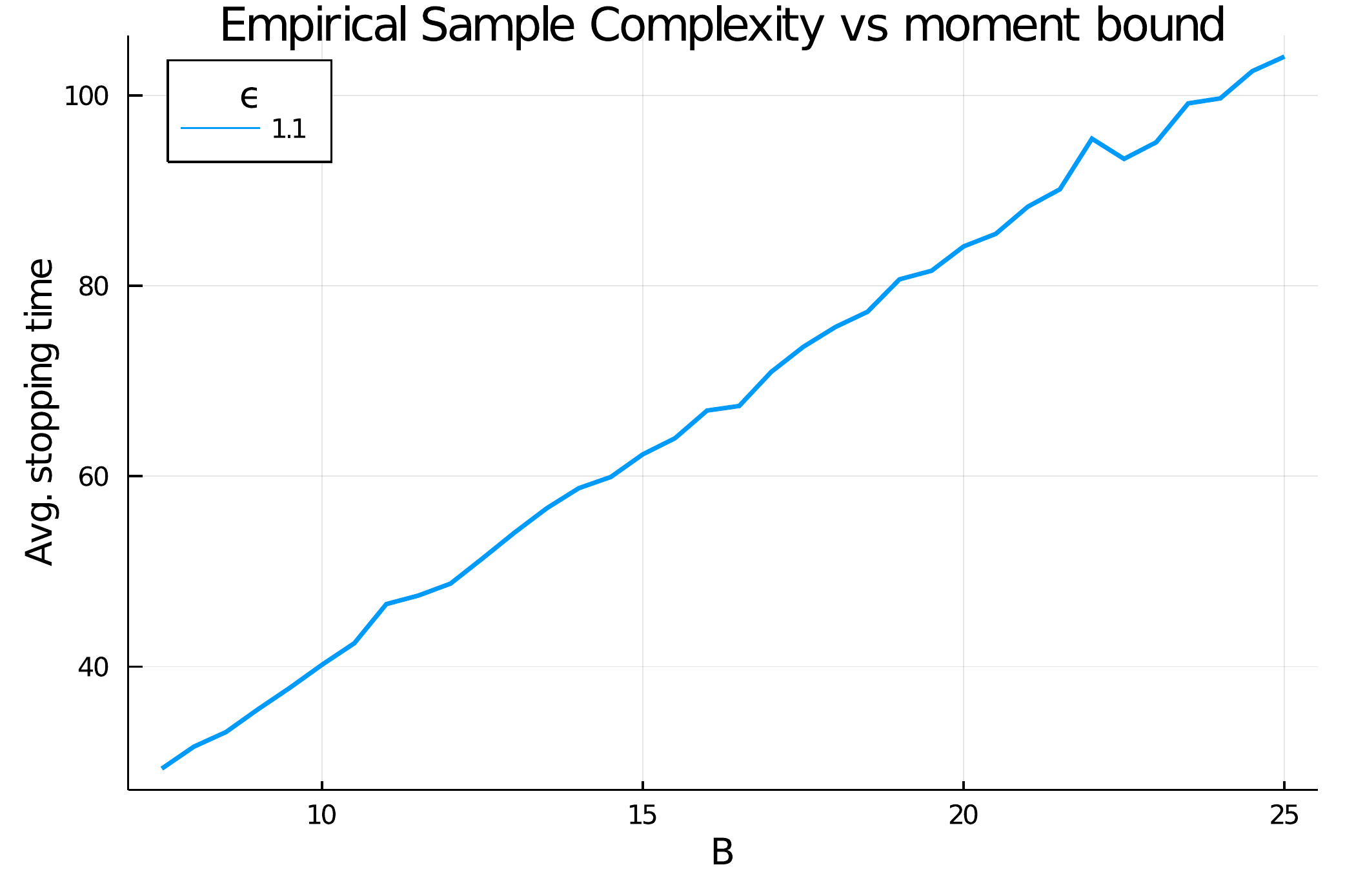}
  }
  \caption{Stopping time (empirical sample complexity) of the algorithm as a function of the moment bound, B. The graph shows the dependence for 3 values of $\epsilon$: 0.5, 0.7, and 1.1. We observe that for $\epsilon < 1$, the sample complexity is a convex function of $B$, and is linear for $\epsilon > 1$.  Each data point is an average of 1000 independent runs.}
\end{figure}



In each case we perform 1000 independent replications. We use the stylised threshold $\beta(n, \delta) = \log\frac{1+\log(n)}{\delta}$. This threshold is not currently allowed by theory. Yet we find that it is still conservative, as we do not observe a single mistake. Finally, instead of computing $\bm{t}^*(\hat{\bm{\eta}}_n)$ at each round, we make use of a technique recently introduced by \cite{DBLP:conf/nips/DegenneKM19} to reduce computation: namely after each round we perform a single step of an iterative saddle point solver for $\bm{t}^*$. We do not make use of their optimistic gradients, instead relying on classical $\sqrt{n}$-forced exploration. We use C-tracking from \cite{garivier2016optimal}.

Finally, the computation of the stopping statistic (GLRT) and also the gradient involves an optimisation over $x_0$ as in the optimization problem in Proposition \ref{prop:jointproj}. We use bisection search to find the minimum in $x_0$. Even though this is not licensed by theory, we consistently observe in practice that after a few rounds all these minimisation problems are in fact quasiconvex in $x_0$. We use the ellipsoid method for the inner minimisation problem. As the number of terms grows by one each round, the overall run-time is $O(K n)$ in round $n$.

We conclude that even at moderate $\delta$ the average sample complexity closely matches the lower bound (with adjusted stopping threshold $\beta(n, \delta)$). This demonstrates that our asymptotic optimality is in fact indicative of the performance in practice. The stopping time of the algorithm increases linearly with the number of arms. Moreover, for $\epsilon < 1$, the stopping time is a convex function of the class parameter, $B$, indicating that it is important to correctly estimate this parameter for smaller $\epsilon$.

\section{Interpretable Lower Bound Approximation}\label{App:LBA}
In this section we consider an approximate version of the lower bound problem. Even though it is heuristic, it is worthwhile as it gives an interpretable result. We take as our starting point \eqref{eq:apx.ubd}, which we may invert to give us
\begin{align*}
  \KLinfU(\eta,x)
  &~\approx~
    \del*{\frac{4}{1-\pi}}^{1+1/\epsilon} B^{-1/\epsilon} (x - c_\pi(\eta))_+^{1+1/\epsilon}
  \\
  \KLinfL(\eta,x)
  &~\approx~
  \del*{\frac{4}{1-\pi}}^{1+1/\epsilon} B^{-1/\epsilon} (c_\pi(\eta) - x)_+^{1+1/\epsilon}
\end{align*}
Let $\mu_1$ be the best CVaR arm, in that $c_\pi(\mu_1) < c_\pi(\mu_j)$ for all $j > 1$. The lower bound problem (see Lemma~\ref{lem:LBSimplification}) then requires solving the approximate problem (denoted by a tilde)
\begin{equation}\label{eq:appxprobl}
  \tilde V(\mu) \df \sup\limits_{t\in\Sigma_K}~\min\limits_{j\neq 1}~ \inf\limits_x~ \del*{\frac{4}{1-\pi}}^{1+1/\epsilon} B^{-1/\epsilon} \lrset{
    t_1  (x - c_\pi(\mu_1))_+^{1+1/\epsilon}
    + t_j  (c_\pi(\mu_j) - x)_+^{1+1/\epsilon}
  }.
\end{equation}
Plugging in the optimiser $
x = \frac{t_1^\epsilon c_\pi(\mu_1) + t_j^\epsilon c_\pi(\mu_j)}{t_1^\epsilon + t_j^\epsilon}
$, which is the midpoint under the renormalised $\epsilon$-powered weights, results in
\[
  \tilde V(\mu)
  =
  \del*{\frac{4}{1-\pi}}^{1+1/\epsilon} B^{-1/\epsilon}
  \sup\limits_{t\in\Sigma_K}~\min\limits_{j\neq 1}~
  \frac{
    \Delta_j^{1+1/\epsilon}
  }{\del*{t_1^{-\epsilon} + t_j^{-\epsilon}}^{1/\epsilon}},
\]
where we abbreviated $\Delta_j = c_\pi(\mu_j) - c_\pi(\mu_1)$ for $j\ne 1$ and $\Delta_1 := \min_{j\ne 1}\Delta_j$.
From this point we can already see that the characteristic time, $1/\tilde V(\mu)$, scales with $B^{1/\epsilon}$, which is clearly visible e.g.\ the blue line in in Figure~\ref{fig:app_VB}, corresponding with $\epsilon=1/2$, and which matches a quadratic (quadrupling when $B$ doubles).

At this point we can follow \cite[Appendix~A.4]{garivier2016optimal} and obtain an interpretable sandwich on $\tilde V(\mu)$ with a multiplicative factor $2^{1/\epsilon}$.

\begin{lemma}
  \[
    \del*{\frac{1-\pi}{4}}^{1+1/\epsilon} B^{1/\epsilon}
    \sum_j \frac{1}{ \Delta_j^{1+1/\epsilon}}
    ~\le~
    \tilde V(\mu)^{-1}
    ~\le~
    2^{1/\epsilon} \del*{\frac{1-\pi}{4}}^{1+1/\epsilon} B^{1/\epsilon}
    \sum_j \frac{
      1
    }{
      \Delta_j^{1+1/\epsilon}
    }.
  \]
\end{lemma}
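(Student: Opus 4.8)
The plan is to follow the standard Garivier--Kaufmann-style argument (see their Appendix~A.4), adapted to the $\epsilon$-powered weights that appear here. Starting from the formula
\[
  \tilde V(\mu)
  =
  \del*{\frac{4}{1-\pi}}^{1+1/\epsilon} B^{-1/\epsilon}
  \sup_{t\in\Sigma_K}~\min_{j\neq 1}~
  \frac{\Delta_j^{1+1/\epsilon}}{\del*{t_1^{-\epsilon} + t_j^{-\epsilon}}^{1/\epsilon}},
\]
it suffices to sandwich the scale-free quantity
\[
  W(\mu) := \sup_{t\in\Sigma_K}~\min_{j\neq 1}~
  \frac{\Delta_j^{1+1/\epsilon}}{\del*{t_1^{-\epsilon} + t_j^{-\epsilon}}^{1/\epsilon}}
\]
between $\del*{\sum_j \Delta_j^{-(1+1/\epsilon)}}^{-1}$ and $2^{1/\epsilon}$ times the same, since then the claimed two-sided bound on $\tilde V(\mu)^{-1}$ follows by multiplying through by $\del*{\tfrac{1-\pi}{4}}^{1+1/\epsilon}B^{1/\epsilon}$.

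\emph{Upper bound on $W(\mu)$ (lower bound on $\tilde V(\mu)^{-1}$).}
First I would exhibit a feasible $t$. The natural choice is to take $t_j \propto \Delta_j^{-(1+1/\epsilon)}$ for $j\ne 1$ and $t_1$ equal to some fixed fraction of the mass (e.g.\ $t_1 = 1/2$, putting the remaining $1/2$ on the suboptimal arms proportionally to $\Delta_j^{-(1+1/\epsilon)}$), and then bound $\del*{t_1^{-\epsilon}+t_j^{-\epsilon}}^{1/\epsilon} \le \del*{2 \max\{t_1^{-\epsilon},t_j^{-\epsilon}\}}^{1/\epsilon} = 2^{1/\epsilon}\max\{t_1^{-1},t_j^{-1}\}$. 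Since $t_j \le t_1 = 1/2$ for every $j$, this is $2^{1/\epsilon} t_j^{-1}$, so each term $\frac{\Delta_j^{1+1/\epsilon}}{\del*{t_1^{-\epsilon}+t_j^{-\epsilon}}^{1/\epsilon}} \ge 2^{-1/\epsilon}\,\Delta_j^{1+1/\epsilon} t_j$. Plugging in $t_j = \tfrac12 \Delta_j^{-(1+1/\epsilon)}/\sum_{k\ne1}\Delta_k^{-(1+1/\epsilon)}$ makes every term equal to $2^{-1/\epsilon}\tfrac12 / \sum_{k\ne1}\Delta_k^{-(1+1/\epsilon)}$; replacing $\sum_{k\ne 1}$ by $\sum_k$ (which includes the extra $j=1$ term with $\Delta_1 = \min_{j\ne1}\Delta_j$, at most doubling the sum) and absorbing the factor of $2$ into the $2^{1/\epsilon}$ shows $W(\mu) \ge 2^{-1/\epsilon}\del*{\sum_j \Delta_j^{-(1+1/\epsilon)}}^{-1}$, which is exactly the desired upper bound on $\tilde V(\mu)^{-1}$.

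\emph{Lower bound on $W(\mu)$ (upper bound on $\tilde V(\mu)^{-1}$).}
Here I would bound the $\sup$ from above by dropping the coupling between arms. For any fixed $t$, drop the $t_j^{-\epsilon}$ term inside the bracket for one index at a time: $\del*{t_1^{-\epsilon}+t_j^{-\epsilon}}^{1/\epsilon} \ge t_1^{-1}$ gives $\min_{j\ne 1}\frac{\Delta_j^{1+1/\epsilon}}{(\cdots)^{1/\epsilon}} \le \min_j \Delta_j^{1+1/\epsilon} t_1$, but this by itself is too crude. The cleaner route is the weighted-harmonic-mean trick: for any nonnegative weights $w_j$ summing to $1$, $\min_j a_j \le \sum_j w_j a_j$; choosing the $a_j$ to be the per-arm terms and optimizing over $t$ subject to $\sum_j t_j = 1$ (using that $\del*{t_1^{-\epsilon}+t_j^{-\epsilon}}^{1/\epsilon} \ge t_j^{-1}$, so each term is at most $\Delta_j^{1+1/\epsilon} t_j$), one gets $W(\mu) \le \sup_t \min_j \Delta_j^{1+1/\epsilon} t_j \le \sup_t \sum_j \frac{t_j}{\sum_k \Delta_k^{-(1+1/\epsilon)}} \Delta_j^{1+1/\epsilon}\Delta_j^{-(1+1/\epsilon)}$... more directly, the bound $\min_j \Delta_j^{1+1/\epsilon}t_j$ over the simplex is maximized by equalizing, giving exactly $\del*{\sum_j \Delta_j^{-(1+1/\epsilon)}}^{-1}$; replacing $\sum_{j\ne 1}$ by $\sum_j$ only makes the right side smaller, which is fine for an upper bound on $W$. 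This yields $W(\mu) \le \del*{\sum_j \Delta_j^{-(1+1/\epsilon)}}^{-1}$ and hence the lower bound on $\tilde V(\mu)^{-1}$.

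\emph{Main obstacle.} The technical care goes into the bookkeeping of the index set: the excerpt defines $\Delta_1 := \min_{j\ne 1}\Delta_j$ so that $\sum_j$ runs over all $K$ arms including arm~$1$, whereas the natural min/sup in \eqref{eq:appxprobl} runs only over $j\ne 1$. Getting the constants exactly right (ensuring the $j=1$ term contributes at most a factor $2$, which is then swallowed by $2^{1/\epsilon}\ge 2$ for $\epsilon\le1$, or needs a separate argument for $\epsilon>1$) and handling the non-convexity of $t\mapsto (t_1^{-\epsilon}+t_j^{-\epsilon})^{1/\epsilon}$ cleanly are the only places one must be attentive; the core estimates $\del*{a+b}^{1/\epsilon}\lessgtr 2^{1/\epsilon}\max\{a,b\}^{1/\epsilon}$ and equalization of $\min_j \Delta_j^{1+1/\epsilon}t_j$ over the simplex are routine.
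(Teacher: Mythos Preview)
Your lower-bound argument (the upper bound on $W(\mu)$) is fine and coincides with the paper's: dropping $t_1^{-\epsilon}$ from the bracket is exactly what taking $x=c_\pi(\mu_1)$ in \eqref{eq:appxprobl} does, and equalising $\Delta_j^{1+1/\epsilon}t_j$ over the simplex is the same computation.

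The genuine gap is in your upper bound on $\tilde V(\mu)^{-1}$. With your choice $t_1=\tfrac12$ and $t_j=\tfrac12\,\Delta_j^{-(1+1/\epsilon)}/\sum_{k\ne 1}\Delta_k^{-(1+1/\epsilon)}$ you correctly arrive at
\[
W(\mu)~\ge~2^{-1/\epsilon}\cdot\frac{1}{2}\cdot\Bigl(\sum_{k\ne 1}\Delta_k^{-(1+1/\epsilon)}\Bigr)^{-1},
\]
but the extra factor $\tfrac12$ does \emph{not} disappear when you pass to $\sum_k$. Since $\sum_{k\ne 1}\le\sum_k$, you only get $W(\mu)\ge 2^{-1-1/\epsilon}\bigl(\sum_k\Delta_k^{-(1+1/\epsilon)}\bigr)^{-1}$, i.e.\ an upper bound on $\tilde V(\mu)^{-1}$ with constant $2^{1+1/\epsilon}$ rather than $2^{1/\epsilon}$. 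Your remark that this factor is ``swallowed by $2^{1/\epsilon}\ge 2$'' is not right: the target bound already \emph{has} the $2^{1/\epsilon}$, and you are multiplying it by a further $2$.

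The paper avoids this loss by a cleaner choice of weights: take $t_j\propto\Delta_j^{-(1+1/\epsilon)}$ for \emph{all} $j$, including $j=1$ with the convention $\Delta_1=\min_{k\ne 1}\Delta_k$. Then a direct substitution gives, for each $j\ne 1$,
\[
\frac{\Delta_j^{1+1/\epsilon}}{\bigl(t_1^{-\epsilon}+t_j^{-\epsilon}\bigr)^{1/\epsilon}}
~=~
\frac{1}{\sum_k\Delta_k^{-(1+1/\epsilon)}}\cdot\frac{1}{\bigl((\Delta_1/\Delta_j)^{1+\epsilon}+1\bigr)^{1/\epsilon}}
~\ge~
\frac{2^{-1/\epsilon}}{\sum_k\Delta_k^{-(1+1/\epsilon)}},
\]
since $\Delta_1\le\Delta_j$. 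This is exactly the desired lower bound on $W(\mu)$, with no stray factor of~$2$. The point is that letting $t_1$ scale with $\Delta_1$ (rather than freezing it at $1/2$) balances the two terms inside the bracket in just the right way; your crude bound $(a+b)^{1/\epsilon}\le 2^{1/\epsilon}\max\{a,b\}^{1/\epsilon}$ applied after a suboptimal $t_1$ cannot recover this.
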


\begin{proof}
  Let $C = \del*{\frac{1-\pi}{4}}^{1+1/\epsilon}$.
First, by plugging in the sub-optimal choice for $t$ given by
\[
  t_j ~=~ \frac{\Delta_j^{-1-1/\epsilon}}{\sum_j \Delta_j^{-1-1/\epsilon}},
\]
where we interpret $\Delta_1 = \min_{j\ne 1}\Delta_j$. We then find
\[
  \tilde V(\mu)^{-1} ~\le~
  C B^{1/\epsilon}
  \del*{
    \sum_j \Delta_j^{-1-1/\epsilon}
  }
  \max_{j \neq 1}~
  \del*{\del*{
      \frac{
        \Delta_1
      }{
        \Delta_j
      }
    }^{1+\epsilon} + 1}^{1/\epsilon}
  ~\le~
  \sum_j \frac{
    2^{1/\epsilon} C B^{1/\epsilon}
  }{
    \Delta_j^{1+1/\epsilon}
  }.
\]
We may also obtain a lower bound on the characteristic time of the same order by considering the sub-optimal choice $x = c_\pi(\mu_1)$ in \eqref{eq:appxprobl} instead. We obtain
\[
  \tilde V(\mu^*)
  ~\le~
  \sup\limits_{t\in\Sigma_K}~\min\limits_{j\neq 1}~
  t_j C^{-1} B^{-1/\epsilon} \Delta_j^{1+1/\epsilon}
  ~=~
  \frac{1}{
    \sum_j \frac{C B^{1/\epsilon}}{ \Delta_j^{1+1/\epsilon}}
  }.
\]
Taking the reciprocal gives the result.
\end{proof}

\end{document}